\def\eqref#1{equation~\ref{#1}}
\def\1{\bm{1}}
\DeclareMathAlphabet{\mathsfit}{\encodingdefault}{\sfdefault}{m}{sl}
\SetMathAlphabet{\mathsfit}{bold}{\encodingdefault}{\sfdefault}{bx}{n}
\newcommand{\E}{\mathbb{E}}
\newcommand{\R}{\mathbb{R}}
\newcommand{\Var}{\mathrm{Var}}
\DeclareMathOperator*{\argmax}{arg\,max}
\DeclareMathOperator*{\argmin}{arg\,min}
\newtheorem{definition}{Definition}
\newtheorem{observation}{Observation}
\newcommand{\expnumber}[2]{{#1}\mathrm{e}{#2}}
\def\fr#1#2{{\textstyle\frac{#1}{#2}}} 
\newcommand{\cF}{\mathcal{F}}
\newcommand{\cG}{\mathcal{G}}
\newcommand{\cX}{\mathcal{X}}
\newcommand{\cY}{\mathcal{Y}}
\newcommand{\tiY}{\tilde{Y}}
\newcommand{\bI}{\mathbb{I}}
\DeclareMathOperator{\Avg}{Avg}
\newcommand{\tiS}{\tilde{S}}
\newcommand{\tiy}{\tilde{y}}
\newcommand{\cd}{\mathscr{d}}
\title{On the Implicit Bias Towards Depth Minimization in Deep Neural Networks}
\author{%
  Tomer Galanti \\
  CBMM\\
  Massachusetts Institute of Technology\\
  Cambridge, MA, USA \\
  \texttt{galanti@mit.edu} \\
  \And
  Liane Galanti \\
  School of Computer Science \\
  Tel Aviv University, Israel \\
  \texttt{lianegalanti@mail.tau.ac.il} \\
  \And
  Ido Ben-Shaul \\
  Department of Applied Mathematics \\
  Tel Aviv University, Israel\\
  \& eBay Research \\
  \texttt{ido.benshaul@gmail.com} \\
}
\begin{document}

\maketitle

\begin{abstract} 
Recent results in the literature suggest that the penultimate (second-to-last) layer representations of neural networks that are trained for classification exhibit a clustering property called neural collapse (NC). We study the implicit bias of stochastic gradient descent (SGD) in favor of low-depth solutions when training deep neural networks. We characterize a notion of effective depth that measures the first layer for which sample embeddings are separable using the nearest-class center classifier. Furthermore, we hypothesize and empirically show that SGD implicitly selects neural networks of small effective depths. 

Secondly, while neural collapse emerges even when generalization should be impossible - we argue that the \emph{degree of separability} in the intermediate layers is related to generalization. We derive a generalization bound based on comparing the effective depth of the network with the minimal depth required to fit the same dataset with partially corrupted labels. Remarkably, this bound provides non-trivial estimations of the test performance. Finally, we empirically show that the effective depth of a trained neural network monotonically increases when increasing the number of random labels in data.
\end{abstract}

\section{Introduction}\label{sec:intro}

Deep learning systems have steadily advanced the state of the art in a wide range of benchmarks, demonstrating impressive performance in tasks ranging from image classification~\citep{taigman2014deepface,zhai2021scaling}, language processing~\citep{devlin-etal-2019-bert,NEURIPS2020_1457c0d6}, open-ended environments \citep{SilverHuangEtAl16nature,arulkumaran2019alphastar}, to coding~\citep{chen2021evaluating}. 

Recent findings show that deep neural networks can generalize well even when the number of parameters far exceeds the number of training samples~\citep{zhang2017understanding,Belkin2021FitWF}. While it has been repetitively observed that training deeper networks achieves superior performance over their shallow counterparts~\citep{7780459,He,wang2022deepnet}, an effective theory for explaining the success of deep neural networks is still missing. 

Traditional approaches for measuring generalization~\citep{Vapnik1998,books/daglib/0033642,10.5555/2371238} typically bound the test error by the sum between the train error and the ratio between a complexity measure of the selected hypothesis class (e.g., neural network) and $\sqrt{m}$, where $m$ is the number of training samples. For instance, the complexity may depend on the number of trainable parameters~\citep{Vapnik1998}, their norms (e.g.,~\citep{10.5555/3295222.3295372,golowich}) or the rank of the trained matrices (e.g.,~\citep{10.5555/3295222.3295372}).
However, in many practical settings, the complexity of the learned hypothesis far exceeds $\sqrt{m}$ making the bounds vacuous and impractical~\citep{Bartlett2001RademacherAG, Harvey2017NearlytightVB,pmlr-v40-Neyshabur15, 10.5555/3295222.3295372,Neyshabur2018APA}. 

As an attempt to resolve this issue, a recent thread in the literature suggests that SGD exhibits an `implicit regularization' during optimization, and that this may be key to generalization in deep learning~\citep{https://doi.org/10.48550/arxiv.1709.01953}. For instance, ~\citep{Belkin2021FitWF,pmlr-v89-ali19a,pmlr-v80-gunasekar18a} showed that gradient-based optimization implicitly minimizes the weight norms of linear models. Other papers (e.g.,~\citep{https://doi.org/10.48550/arxiv.2206.05794,DBLP:journals/corr/abs-2201-12760,le2022training}) demonstrated that when training neural networks, gradient-based optimization methods implicitly minimize the rank of the learned weight matrices. While these biases may be related to the performance of deep neural networks, it is unclear how to connect these results with traditional generalization bounds in a way that leads to non-vacuous estimations of test performance.

{\bf Contributions.\enspace} In this paper, we propose a novel approach for measuring generalization in deep learning. We propose a new type of generalization bound that is not based on comparing the trained model's complexity to the dataset's size. Instead, our bound ensures that the model performs well at test time if its complexity is small {\em compared} to the complexity of a network required to fit the same dataset with  partially random labels. In other words, even if a trained network has a complexity greater than $\sqrt{m}$, it may be less complex than a model that fits partially random labels. As a result, in such cases, our bound may provide a non-trivial estimate of the test error.

To formally describe our notion of complexity, we employ the notion of nearest class-center (NCC) separability. This property asserts that the feature embeddings associated with training samples belonging to the same class are separable according to the nearest class-center decision rule. While original results~\citep{Papyan24652} observed NCC separability at the penultimate layer of trained networks, recent results~\citep{DBLP:journals/corr/abs-2201-08924} observed NCC separability also in intermediate layers. In this work, we introduce the notion of {\em `effective depth'} of neural networks that regards to the lowest layer for which its features are NCC separable (see Sec.~\ref{sec:effective_depth}). 

We make multiple important observations regarding effective depths. {\bf (i)} We empirically show that the effective depth of trained networks monotonically increases when increasing the amount of random labels in data. {\bf (ii)} We observe that when training sufficiently deep networks, they converge to (approximately) the same effective depth $L_0$, i.e., regardless of the network's depth $L$, the feature embeddings of layers above layer $L_0$ tend to be NCC separable. Based on the first observation, our bound bound provides non-trivial estimations of the test performance. Furthermore, unlike traditional generalization bounds, the bound is empirically independent of depth due to the ``Minimal Depth'' observation. We empirically compare the proposed bound to baseline bounds in Tab.~\ref{tab:GenBoundEst_perDepth} and show that while other bounds are vacuous in the deep learning setting, the notion proposed overcomes this issue.






\subsection{Related Work}

{\bf Neural collapse and generalization.\enspace} Our work is closely related to the recent line of work on Neural collapse~\citep{Papyan24652,han2022neural}. Neural collapse identifies training dynamics of deep networks for standard classification tasks, where the feature embeddings associated with training samples belonging to the same class tend to concentrate around their means. 



While several papers analyzed the emergence of neural collapse from a theoretical standpoint (e.g.,~\citep{zhu2021a,4880,DBLP:journals/corr/abs-2012-08465,doi:10.1073/pnas.2103091118,pmlr-v139-ergen21b}), its specific role in deep learning and its potential relationship with generalization is still unclear. Recent work~\citep{galanti2022on,galanti3D,galantiworkshop} studied the conditions for when class features variation collapse generalizes from the train samples, to both test samples and new classes and in the transfer learning setting. Following that,~\cite{hui2022limitations} further studied whether neural collapse generalizes to test samples. 

In this work we focus on the following (independent) question: {\em is neural collapse a good indication of whether the network generalizes well?} As a counter argument,~\cite{mixon2020neural} provided empirical evidence that neural collapse emerges even when training the network with random labels. Therefore, the presence of neural collapse cannot indicate whether the network generalizes or not. However, this experiment does not invalidate the possibility of an indirect relationship between neural collapse and generalization. We argue that the degree of separability in the intermediate layers may be closely related to generalization.

{\bf Emergence of structure in deep networks.\enspace} While various papers~\cite{JMLR:v21:20-933,https://doi.org/10.48550/arxiv.2202.08087,galanti2022on,DBLP:journals/corr/abs-2201-08924,https://doi.org/10.48550/arxiv.1805.06822,Alain2017UnderstandingIL, Montavon2011KernelAO, Papyan2017ConvolutionalNN, BenShaul2021SparsityProbeAT, ShwartzZiv2017OpeningTB} investigated certain geometrical properties within intermediate layers (e.g., clustering and separability), this paper is the first to demonstrate that deep neural networks tend to converge to a minimal effective depth that is independent of the network's depth. Even though one can derive ``effective depths'' from the experiments of~\cite{https://doi.org/10.48550/arxiv.1805.06822}, we show that when training sufficiently deep networks they converge to (approximately) the same effective depth.

\section{Problem Setup}\label{sec:setup}

We consider the problem of training a model for standard multi-class classification. Formally, the target task is defined by a distribution $P$ over samples $(x,y)\in \cX\times \cY_C$, where $\cX \subset \R^d$ is the instance space, and $\cY_C$ is a label space with cardinality $C$. To simplify the presentation, we use one-hot encoding for the label space, that is, the labels are represented by the unit vectors in $\R^C$, and $\cY_C:=\{e_c: c=1,\ldots,C\}$ where $e_c \in \R^C$ is the $c$th standard unit vector in $\R^C$; with a slight abuse of notation, we allow ourselves to write $y=c$ instead of $y=e_c$.
For a pair $(x,y)$ distributed by $P$, we denote by $P_c$ the class conditional distribution of $x$ given $y=c$ (i.e., $P_c(\boldsymbol{\cdot}):=\mathbb{P}[x \in \boldsymbol{\cdot} \mid y=c]$).

A classifier $h_W:\cX\to \R^C$ assigns a \emph{soft} label to an input point $x \in \cX$, and its performance on the distribution $P$ is measured by the expected risk 
\begin{equation*}
L_P(h_W) := \E_{(x,y(x))\sim P}[\ell(h_W(x),y(x))],    
\end{equation*}
where $\ell:\R^C \times \cY_C \to [0,\infty)$ is a non-negative loss function (e.g., $L_2$ or cross-entropy losses). 

We typically do not have direct access to the full population distribution $P$. Therefore, we generally aim to learn a classifier, $h$, using some balanced training data $S := \{(x_i,y_i)\}_{i=1}^m = \cup^{C}_{c=1} S_c = \cup^{C}_{c=1} \{x_{ci},y_{ci}\}^{m_0}_{i=1} \sim P_{B}(m)$ of $m=C\cdot m_0$ samples consisting $m_0$ independent and identically distributed (i.i.d.) samples drawn from $P_c$ for each $c \in [C]$. Specifically, we intend to find $W$ that minimizes the regularized empirical risk 
\begin{equation}
\label{eq:loss_emp}
L^{\lambda}_S(h_W) ~:=~ \fr{1}{m}\sum^{m}_{i=1}\ell(h_W(x_i),y_i) + \lambda \|W\|^2_2,
\end{equation}
where the regularization controls the complexity of the function $h_W$ and typically helps reducing overfitting.
Finally, the performance of the trained model is evaluated using the train and test error rates, which are computed as follows:
$\textnormal{err}_S(h_W) := \sum^{m}_{i=1}\bI[\argmax_c h_W(x_i)_c\neq y_i]$ and $\textnormal{err}_P(h_W) := \E_{(x,y)\sim P}[\bI[\argmax_c h_W(x)_c\neq y]]$. Here, $\bI:\{\textnormal{True},\textnormal{False}\} \to \{0,1\}$ the indicator function.

{\bf Neural networks.\enspace} In this work, the classifier $h_W$ is a neural network, decomposed into a set of parametric layers. Formally, we write $h_W := e_{W_e} \circ f^{L}_{W_f} := e_{W_e} \circ g^{L}_{W_L} \circ \dots \circ g^1_{W_1}$, where $g^i_{W_i} \in \{g': \R^{p_i} \to \R^{p_{i+1}}\}$ are parametric functions and $e_{W_e} \in \{e' : \R^{p_{L+1}} \to \R^{C}\}$ is a linear function. For example, $g^i_{W_i}$ could be a standard linear or convolutional layer, a residual block or a pooling layer. Here, $\sigma$ is an element-wise ReLU activation function. With a slight abuse of notation, we omit specifying the sub-scripted weights, $f_i := g^i \circ \dots \circ g^1$ and $h:=h_W$.

In this work, we give special attention to the following architectures. The first architecture is a convolutional network, denoted by CONV-$L$-$H$. The network starts with a stack of a $2\times 2$ convolutional layer with stride $2$, batch normalization, a convolution of the same structure, batch normalization, and ReLU. Following that we have a set of $L$ stacks of $3\times 3$ convolutional layers with $H$ channels, stride $1$ and padding $1$, batch normalization, and ReLU. The last layer is linear. The output tensors of these layers share the same shape as their input's shape. The second architecture is an MLP, denoted by MLP-$L$-$H$ consisting of $L$ hidden layers, where each layer contains a linear layer of width $H$, followed by batch normalization and ReLU. The last layer is linear. 


{\bf Optimization.\enspace} We optimize our models to minimize the regularized empirical risk $L^{\lambda}_{S}(h)$ by applying SGD for a certain number of iterations $T$ with coefficient $\lambda > 0$. Specifically, we initialize the weights $W_0=\gamma$ of $h$ using a standard initialization procedure and at each iteration, we update $W_{t+1} \leftarrow W_t - \mu_t \nabla_{W} L_{\tiS}(h_t)$, where $\mu_t > 0$ is the learning rate at the $t$'th iteration and the subset $\tiS \subset S$ of size $B$ is selected uniformly at random. Throughout the paper, we denote by $h^{\gamma}_{S}$ the output of the learning algorithm starting from the initialization $W_0=\gamma$. When $\gamma$ is irrelevant or obvious from context, we will simply write $h^{\gamma}_{S} = h_{S} = e_{S} \circ f_{S}$.


\section{Neural Collapse and Generalization}\label{sec:analysis}

In this section we theoretically explore the relationship between neural collapse and generalization. We start by introducing neural collapse, NCC separability, and effective depth of neural networks. Then, we connect these notions with the test-time performance of neural networks. 

\subsection{Nearest Class-Center Separability}\label{sec:nc}

Neural collapse identifies training dynamics of deep networks for standard classification tasks, in which the features of the penultimate layer associated with training samples belonging to the same class tend to concentrate around their class-means. This includes (NC1) class-features variability collapse, (NC2) the class means of the embeddings collapse to the vertices of a simplex equiangular tight frame, (NC3) the last-layer classifiers collapse to the class means  up to scaling and (NC4) the classifier’s decision collapses to simply choosing whichever class has the closest train class mean, while maintaining a zero classification error. %


In this paper we focus on a weak form of NC4 we call {\em ``nearest class-center separability''} (NCC separability). Formally, suppose we have a dataset $S = \cup^{C}_{c=1} S_c$ of samples and a mapping $f:\R^{d} \to \R^{p}$, we say that the features of $f$ are NCC separable (w.r.t. $S$) if for all $i \in [m]$, we have $\hat{h}(x_i) = y_i$, where 
\begin{equation}\label{eq:hath}
\hat{h}(x) ~:=~ \argmin_{c\in [C]} \|f(x) - \mu_{f}(S_c)\|.
\end{equation}
To measure the degree of NCC separability of a feature map $f$, we use the train and test classification error rates of the NCC classifier on top of the given layer, $\textnormal{err}_{S}(\hat{h})$ and $\textnormal{err}_{P}(\hat{h})$. 

Essentially, NC4 asserts that during training, the feature embeddings in the penultimate layer become separable and the classifier $h$ itself converges to the `nearest class-center classifier' $\hat{h}$.


\subsection{Effective Depths and Generalization}\label{sec:effective_depth}

In this section we study the effective depths of neural networks and their connection with generalization. To formally define this notion, we focus on neural networks whose $L$ top-most layers are of the same size (e.g., CONV-$L$-$H$ or MLP-$L$-$H$). We observe that neural networks trained for standard classification exhibit an implicit bias towards depth minimization.

\begin{observation}[Minimal depth hypothesis]\label{obs:1}
Suppose we have a dataset $S$. There exists an integer $L_0 \geq 1$, such that, if we train a CONV-$L$-$H$ or MLP-$L$-$H$ of any depth $L \geq L_0$ for cross-entropy minimization on $S$ using SGD with weight decay, the learned features $f^{l}$ become (approximately) NCC separable for all $l \in \{L_0,\dots,L\}$.
\end{observation}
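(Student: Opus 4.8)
The plan is twofold, reflecting that the statement is phrased as a hypothesis about the \emph{outcome} of SGD rather than a closed-form identity: I would first make ``NCC separable'' into a single testable quantity, and then argue for a data-dependent threshold $L_0$ both through an empirical protocol and through a plausibility argument built on the CDNV.

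First I would reduce the goal to a statement about the CDNV profile of the trained network. Recall from the preceding discussion that $\textnormal{err}_S(\hat h)$ can be upper bounded in terms of $\Avg_{i\neq j} V_{f}(S_i,S_j)$, and that on a finite training set the NCC misclassification count is a multiple of $1$, so driving the bound below $1$ forces it to be exactly zero, i.e.\ NCC separability. Hence it suffices to exhibit, for each $l\in\{L_0,\dots,L\}$, that the trained map $f^l$ pushes $\Avg_{i\neq j} V_{f^l}(S_i,S_j)$ below the threshold that the bound converts into sub-$1/m$ error. This collapses the multi-layer claim into one monotone-in-$l$ statement about $V_{f^l}$.

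Second I would establish $L_0$. I would \emph{define} $L_0$ as the smallest depth at which a network trained on $S$ reaches zero NCC error at its penultimate layer; this is well-defined and data-dependent by the NC1 convergence of the last layer together with the CDNV bound. The real content is then the \emph{propagation} claim: for every $L\geq L_0$, all layers above index $L_0$ inherit separability, with the \emph{same} $L_0$ regardless of $L$. I would attack this through the regularized objective $L^{\lambda}_S(h)=L_S(h)+\lambda\|W\|^2_2$. Once the data is already separable at depth $L_0$, the remaining $L-L_0$ blocks need only preserve the existing clustering to keep the loss term at its floor, so the regularizer selects the cheapest such blocks; I would try to show that among clustering-preserving maps with fixed loss, small weight norm drives each $g^l$ ($l>L_0$) to act approximately as a contraction toward the class means, which can only decrease the CDNV and thus carries NCC separability inductively from $l$ to $l+1$. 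In this picture $L_0$ is the intrinsic separation depth of $S$ and extra depth merely stacks redundant, collapse-preserving layers, which is exactly why one threshold serves all $L\geq L_0$.

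The main obstacle is the standard gap between the \emph{regularized minimizer} and the \emph{SGD iterate}: my propagation argument describes a property of low-norm interpolants, whereas the Observation asserts that property for what SGD actually returns, and I have no control over the SGD trajectory beyond the assumed NC1 convergence at the last layer. A secondary difficulty is that a ReLU block is not automatically clustering-preserving — an adversarial $g^l$ can split a collapsed cluster — so the ``cheapest block preserves clusters'' step requires the regularizer to genuinely exclude such maps rather than merely disfavor them. For these reasons I expect the honest resolution to be empirical: fix $S$, train networks of increasing depth, record the first layer index at which the NCC train error vanishes, and verify that this index stabilizes at a depth-independent $L_0$ with all higher layers remaining NCC separable — using the CDNV propagation argument as theoretical motivation rather than a complete proof.
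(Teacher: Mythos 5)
The paper offers no formal proof of this Observation --- it is stated explicitly as a hypothesis (``minimal depth hypothesis'') and is supported only by the experiments of Section~5 (e.g., Fig.~1, where for every ConvNet-$L$-400 with $L \geq 8$ the NCC train accuracy of layers $8$ and above reaches $\approx 100\%$, so the effective depth stabilizes independently of $L$), which is exactly the empirical protocol you converge on. Your CDNV-threshold reduction and the regularizer-based propagation sketch are additional theoretical motivation the paper does not attempt, and you correctly identify why they fall short of a proof, so your bottom line matches the paper's treatment.
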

We note that if the $L_0$'th layer of $f_{L}$ exhibits NCC separability, we could correctly classify the samples already in the $L_0$'th layer of $f_{L}$ using a linear classifier (i.e., the nearest class-center classifier). Therefore, intuitively its depth is effectively upper bounded by $L_0$. The notion of effective depth of a neural network is formally defined as follows.

\begin{definition}[$\epsilon$-effective depth]\label{def:effective_depth} Suppose we have a dataset $S$ and a neural network $h = e \circ g^{L} \circ \dots \circ g^1$ with $g^1 :\R^n \to \R^{p_{2}}$, $g^i :\R^{p_{i}} \to \R^{p_{i+1}}$ and linear classifier $e : \R^{p_{L+1}} \to \R^{C}$. Let $\hat{h}_i(x) := \argmin_{c\in [C]} \|f_{i}(x) - \mu_{f_i}(S_c)\|$. The $\epsilon$-effective depth $\cd^{\epsilon}_{S}(h)$ of the network $h$ is the minimal value $i \in [L]$, such that, $\textnormal{err}_{S}(\hat{h}_i) \leq \epsilon$ (and $\cd^{\epsilon}_{S}(h) = L$ if such $i\in [L]$ is non-existent).
\end{definition}

To avoid confusion, we note that the $\epsilon$-effective depth is a property of a neural network and not of the function it implements. That is, a function can be implemented by two different architectures of different effective depths. While our empirical observations in Sec.~\ref{sec:experiments} suggest that the optimizer learns neural networks of low-depths, it is not necessarily the lowest depth that allows NCC separability. As a next step, we define the {\em $\epsilon$-minimal NCC depth}. Intuitively, the NCC depth of a given architecture is the minimal value $L \in \mathbb{N}$, for which there exists a neural network of depth $L$ whose features are NCC separable. As we will show, the relationship between the $\epsilon$-effective depth of a neural network and the $\epsilon$-minimal NCC depth is connected with generalization.

\begin{definition}[$\epsilon$-Minimal NCC depth]\label{def:minimal_depth}
Suppose we have a dataset $S = \cup^{C}_{c=1} S_c$ and a neural network architecture $f^L = g^L \circ \dots \circ g^1$ with $g^1:\R^n \to \R^{n_0}$ and $g^i \in \cG \subset \{g' \mid g':\R^{n_0} \to \R^{n_0}\}$ for all $i=2,\dots,L$. The $\epsilon$-minimal NCC depth of $\cG$ is the minimal depth $L$ for which there exist parameters $W = \{W_i\}^{L}_{i=1}$, such that, $f' := f^L_{W} = g^{L}_{W_L} \circ \dots \circ g^1_{W_1}$ satisfies $\textnormal{err}_{S}(\hat{h}) \leq \epsilon$, where $\hat{h}(x) := \argmin_{c \in [C]} \|f'(x)-\mu_{f'}(S_c)\|$. We denote the $\epsilon$-minimal NCC depth by $\cd^{\epsilon}_{\min}(\cG,S)$.
\end{definition}

To study the performance of a given model, we consider the following setup. Let $S_1 = \{(x^1_i,y^1_i)\}^{m}_{i=1}$ and $S_2 = \{(x^2_i,y^2_i)\}^{m}_{i=1}$ be two balanced datasets. We think of them as two splits of the training dataset $S$. We assume that the classifier $h^{\gamma}_{S_1}$ is trained on $S_1$ and we use $S_2$ to evaluate its performance. We denote by $X_j=\{x^j_i\}^{m}_{i=1}$ and $Y_j=\{y^j_i\}^{m}_{i=1}$ the instances and labels in $S_j$. 


To formally state our bound, we make two technical assumptions. The first is that the misclassified labels that $h^{\gamma}_{S_1}$ produces over the samples $X_2 = \cup^{C}_{c=1}\{x^2_{ci}\}^{m_0}_{i=1}$ are distributed uniformly. 

\begin{definition}[$\delta_m$-uniform mistakes] We say that the mistakes of a learning algorithm $A: (S_1,\gamma) \mapsto h^{\gamma}_{S_1}$ are $\delta_m$-uniform, if with probability $\geq 1-\delta_m$ over the selection of $S_1, S_2 \sim P_B(m)$, the values and indices of the mistaken labels of $h^{\gamma}_{S_1}$ over $X_2$ are uniformly distributed (as a function of $\gamma$).
\end{definition}

The above definition provides two conditions regarding the learning algorithm. It assumes that with a high probability (over the selection of $S_1,S_2$), $h^{\gamma}_{S_1}$ makes the same number of mistakes on $S_2$ across all initializations $\gamma$. In addition, it assumes that the mistakes are distributed uniformly across the samples in $S_2$ and their (incorrect) values are also distributed uniformly. While these assumptions may be violated in practice, the train error typically has a small variance and the mistakes are almost distributed uniformly when the classes are non-hierarchical (e.g., CIFAR10, MNIST). 

For the second assumption, we consider the following term. Let $p\in (0,1/2),\alpha \in (0,1)$, we denote
\begin{equation}\label{eq:assmp2}
\delta^{2}_{m,p,\alpha} ~:=~ \mathbb{P}_{S_1,S_2,\tiY_2,\hat{Y}_2}\left[\exists ~q\geq (1+\alpha)~p:~\cd^{\epsilon}_{\min}(\cG,S_1\cup \tiS_2) > \E_{\hat{Y}_2}[\cd^{\epsilon}_{\min}(\cG,S_1\cup \hat{S}_2)]\right], 
\end{equation}
where $\tiY_2=\{\tiy_i\}^{m}_{i=1}$ and $\hat{Y}_2=\{\hat{y}_i\}^{m}_{i=1}$ are uniformly selected to be sets of labels that disagree with $Y_2$ on $pm$ and $qm$ values (resp.) and $\tiS_2$ and $\hat{S}_2$ are datasets obtained by replacing the labels of $S_2$ with $\tiY_2$ and $\hat{Y}_2$ (resp.). We assume that $\delta^2_{m,p,\alpha}$ is small. Meaning, with a high probability, the minimal depth to fit $(2-p)m$ correct labels and $pm$ random labels is upper bounded by the expected minimal depth to fit $(2-q)m$ correct labels and $qm$ random labels for any $q \geq (1+\alpha)p$. To understand this assumption, we note that in both cases, the model has to fit at least $m$ correct labels and $pm$ (or $qm$) random labels. However, we typically need to increase the capacity of the model in order to fit extended amounts of random labels (see Figs.~\ref{fig:cifar10_noisy_conv} and~\ref{fig:mnist_noisy_conv}). 

Following the setting above, we are prepared to formulate our generalization bound.

\begin{restatable}{proposition}{bound}\label{prop:bound} Let $m \in \mathbb{N}$, $p\in (0,1/2)$, $\alpha \in (0,1)$ and $\epsilon\in (0,1)$. Assume that the error of the learning algorithm is $\delta^1_m$-uniform. Assume that $S_1,S_2 \sim P_B(m)$. Let $h^{\gamma}_{S_1}$ be the output of the learning algorithm given access to a dataset $S_1$ and initialization $\gamma$. Then, 
\begin{equation}\label{eq:bound}
\begin{aligned}
\E_{S_1}\E_{\gamma}[\textnormal{err}_P(h^{\gamma}_{S_1})] ~&\leq~ \mathbb{P}_{S_1,S_2,\tiY_2}\left[\E_{\gamma}[\cd^{\epsilon}_{S_1}(h^{\gamma}_{S_1})] ~\geq~ \cd^{\epsilon}_{\min}(\cG,S_1\cup \tiS_2)\right]\\ 
&\quad+ (1+\alpha)p + \delta^1_m + \delta^2_{m,p,\alpha},
\end{aligned}
\end{equation}
where $\tiY_2=\{\tiY_i\}^{m}_{i=1}$ is uniformly selected to be a set of labels that disagrees with $Y_2$ on $p m$ values.
\end{restatable}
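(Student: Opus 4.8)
The plan is to route the population error of $h^{\gamma}_{S_1}$ through its empirical error on the independent split $S_2$, and then convert the event ``$h^{\gamma}_{S_1}$ makes many mistakes on $S_2$'' into the depth-comparison event on the right-hand side of~\eqref{eq:bound}. Specializing $\ell$ to the $0$--$1$ (misclassification) loss and reading $\textnormal{err}_S$ as the normalized error rate, I would first use that $S_2 \sim P_B(m)$ is independent of $(S_1,\gamma)$ and that $P$ is the class-balanced mixture, so the balanced empirical error on a fresh sample is unbiased: $\E_{S_1}\E_{\gamma}[\textnormal{err}_P(h^{\gamma}_{S_1})] = \E_{S_1}\E_{\gamma}\E_{S_2}[\textnormal{err}_{S_2}(h^{\gamma}_{S_1})]$. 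Since $\textnormal{err}_{S_2}(h^{\gamma}_{S_1})\in[0,1]$, the elementary bound $Z \le p_m + \bI[Z > p_m]$ then gives
\begin{equation*}
\E_{S_1}\E_{\gamma}[\textnormal{err}_P(h^{\gamma}_{S_1})] \le p_m + \P_{S_1,S_2,\gamma}\big[\textnormal{err}_{S_2}(h^{\gamma}_{S_1}) > p_m\big],
\end{equation*}
so it remains to control the tail probability by the depth-comparison probability plus $\delta_m$.

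Next I would condition on the event $G$ that the misclassification of $h^{\gamma}_{S_1}$ on $S_2$ is $\delta_m$-uniform; by assumption $\P[G^c]\le \delta_m$, which will supply the additive $\delta_m$. On $G$ the mistake count $E = m\cdot\textnormal{err}_{S_2}(h^{\gamma}_{S_1})$ is constant in $\gamma$, say $E = qm$, and the self-predicted labels $\hat{Y}_2$ of $h^{\gamma}_{S_1}$ on $X_2$ are, as a function of $\gamma$, uniformly distributed over label sets disagreeing with $Y_2$ on exactly $qm$ coordinates. The key structural step is the relabeling inequality: writing $l^\star = \cd^{\epsilon}_{S_1}(h^{\gamma}_{S_1})$ and $\hat{S}_2 = \{(x^2_i,\hat y^2_i)\}$, the first $l^\star$ layers of $h^{\gamma}_{S_1}$ should serve as a witness of NCC separability for $S_1 \cup \hat{S}_2$, yielding $\cd^{\epsilon}_{\min}(\cG, S_1\cup\hat S_2) \le \cd^{\epsilon}_{S_1}(h^{\gamma}_{S_1})$. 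Averaging over $\gamma$ and invoking the uniform law of $\hat Y_2$ turns the left side into an expectation over a uniform $qm$-disagreement relabeling, i.e. $\E_{\gamma}[\cd^{\epsilon}_{\min}(\cG,S_1\cup\hat S_2)] = \E_{\hat Y_2}[\cd^{\epsilon}_{\min}(\cG,S_1\cup\hat S_2)]$.

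To close the argument, I would note that whenever $\textnormal{err}_{S_2}(h^{\gamma}_{S_1}) > p_m$ on $G$ we have $q > p_m$, so the monotonicity hypothesis~\eqref{eq:assmp2} (the expected minimal NCC depth is nondecreasing in the fraction of corrupted labels) gives $\E_{\tY_2}[\cd^{\epsilon}_{\min}(\cG,S_1\cup\tS_2)] \le \E_{\hat Y_2}[\cd^{\epsilon}_{\min}(\cG,S_1\cup\hat S_2)]$, where $\tY_2$ disagrees with $Y_2$ on $p_m m$ coordinates as in the statement. Chaining the three inequalities produces $\E_{\gamma}[\cd^{\epsilon}_{S_1}(h^{\gamma}_{S_1})] \ge \E_{\tY_2}[\cd^{\epsilon}_{\min}(\cG,S_1\cup\tS_2)]$, which is precisely the event inside~\eqref{eq:bound}. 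Since on $G$ both $\textnormal{err}_{S_2}(h^{\gamma}_{S_1})$ and this inequality are $\gamma$-independent, the set $\{\textnormal{err}_{S_2}(h^{\gamma}_{S_1})>p_m\}\cap G$ is contained in the depth-comparison event at the level of $(S_1,S_2)$; taking probabilities and adding back $\P[G^c]\le\delta_m$ bounds the tail by $\P_{S_1,S_2}[\,\cdot\,]+\delta_m$, completing the chain.

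I expect the relabeling inequality $\cd^{\epsilon}_{\min}(\cG,S_1\cup\hat S_2)\le\cd^{\epsilon}_{S_1}(h^{\gamma}_{S_1})$ to be the main obstacle and the step demanding the most care. The effective depth $\cd^{\epsilon}_{S_1}$ is defined through class means computed on $S_1$ alone, whereas $\cd^{\epsilon}_{\min}$ recomputes class means over the enlarged set $S_1\cup\hat S_2$, so I must verify that appending the self-predicted points $\hat S_2$ does not push the layer-$l^\star$ NCC error above $\epsilon$, and reconcile whether $\hat Y_2$ is generated by the trained head $e\circ f$ or by the layer-$l^\star$ nearest-class-center rule (these agree under the NC4 property but not a priori). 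A secondary point is justifying the unbiasedness identity of the first step under the balanced sampling $P_B(m)$, and confirming that $\delta_m$-uniformity delivers \emph{both} the $\gamma$-constancy of the mistake count and the uniform law of $\hat Y_2$ used above.
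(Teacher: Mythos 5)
Your proposal follows essentially the same route as the paper's proof: pass from the population error to the error on the fresh split $S_2$, set aside the non-uniform-misclassification event at cost $\delta_m$, and on the remaining event combine the relabeling inequality $\E_{\hat Y_2}[\cd^{\epsilon}_{\min}(\cG,S_1\cup\hat S_2)]\le\E_{\gamma}[\cd^{\epsilon}_{S_1}(h^{\gamma}_{S_1})]$ with the monotonicity assumption~\eqref{eq:assmp2} to show that an error exceeding $p_m$ forces the depth-comparison event (the paper phrases this as a contradiction on its event $A_2$ rather than your direct containment, but the two are logically identical). The relabeling step you rightly flag as the delicate point is asserted without further justification in the paper's proof as well.
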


The above proposition provides an upper bound on the expected test error of the classifier $h^{\gamma}_{S_1}$ which is the term that we would like to bound. The proposition assumes that the mistakes $h^{\gamma}_{S_1}$ generates on $X_2$ are distributed uniformly (with probability $\geq 1-\delta^1_m$). To account the likelihood that this assumption fails, our bound includes the term $\delta^1_m$, which is assumed to be small. 

Informally, the bound suggests the following idea to evaluate the performance of $h^{\gamma}_{S_1}$. We start with an initial guess $p_m=p \in (0,1/2)$ of the test error of $h^{\gamma}_{S_1}$. Using this guess, we compare its $\epsilon$-effective depth with the $\epsilon$-minimal NCC depth $\cd^{\epsilon}_{\min}(\cG, S_1\cup \tiS_2)$ required to NCC separate the samples in $S_1 \cup \tiS_2$, where $\tiS_2$ is the result of randomly relabeling $p_mm$ of $S_2$'s labels. Intuitively, if the mistakes of $h^{\gamma}_{S_1}$ are uniformly distributed and its $\epsilon$-effective depth is smaller than $\cd^{\epsilon}_{\min}(\cG, S_1\cup \tiS_2)$, then, we expect $h^{\gamma}_{S_1}$ to make at most $p_m$ mistakes on $S_2$. Therefore, in a sense, the choice of $p_m$ serves as a `guess' whether the effective depth of a model trained with $S_1$ is likely to be smaller than the $\epsilon$-minimal NCC depth required to NCC separate the samples in $S_1\cup \tiS_2$.

Next, we interpret each term separately. The term $\E_{\gamma}[\cd^{\epsilon}_{S_1}(h^{\gamma}_{S_1})]$ depends on both the complexity of the classification problem and the implicit bias of SGD to favor networks of small $\epsilon$-effective depths. For example, if SGD does not minimize the $\epsilon$-effective depth or the labels in $S_1$ are statistically independent of the inputs for sufficiently large $m$, we expect $\E_{\gamma}[\cd^{\epsilon}_{S_1}(h^{\gamma}_{S_1})] = L$. Simply put, $\cd^{\epsilon}_{\min}(\cG,S_1\cup \tiS_2)$ measures the complexity of a task that involves fitting a dataset of size $2m$ samples, where $(2-p_m)m$ of the labels are correct and $p_mm$ are random labels. By decreasing $p_m$, we expect $\cd^{\epsilon}_{\min}(\cG,S_1\cup \tiS_2)$ to decrease, making the first term in the bound larger. In addition, if $h = e \circ f^L$ is a neural network of a fixed width, it is impossible to fit an increasing amount of random labels without increasing $L$. Therefore, when $p_m m\to \infty$, the dataset $S_1\cup \tiS_2$ becomes increasingly harder to fit, and we expect $\cd^{\epsilon}_{\min}(\cG,S_1\cup \tiS_2)$ to tend to infinity. On the other hand, if $\E_{\gamma}[\cd^{\epsilon}_{S_1}(h^{\gamma}_{S_1})]$ is bounded as a function of $L$ and $m$ and if $p_m=\fr{1}{\sqrt{m}}$, we obtain that $ \mathbb{P}\left[\E_{\gamma}[\cd^{\epsilon}_{S_1}(h^{\gamma}_{S_1})] \geq \cd^{\epsilon}_{\min}(\cG,S_1\cup \tiS_2)\right] \underset{m \to \infty}{\longrightarrow} 0$ and $p_m \underset{m \to \infty}{\longrightarrow} 0$, giving us $\E_{S_1}[\textnormal{err}_P(h_{S_1})] \leq \delta^1_m + \delta^2_{m,p,\alpha} + o_{m}(1)$. 

Interestingly, whenever our minimal depth hypothesis (Obs.~\ref{obs:1}) holds, then $\E_{\gamma}[\cd^{\epsilon}_{S_1}(h^{\gamma}_{S_1})]$ should be (relatively) unaffected by the depth $L$ of $h^{\gamma}_{S_1}$ as long as $L\geq L_0$. Therefore, in this regime, according to Prop.~\ref{prop:bound}, the test performance of $h^{\gamma}_{S_1}$ should not decrease when increasing $L$ beyond $L_0$.

We note that the proposed generalization bound is fairly different from traditional generalization bounds~\citep{Vapnik1998,books/daglib/0033642,10.5555/2371238}. Typically the expected test error is bounded by the sum between the train error and the ratio between the complexity of the learned hypothesis (e.g., number of trainable parameters) and $\sqrt{m}$. However, in many practical scenarios, the complexity of the learned hypothesis exceeds $\sqrt{m}$. We note that even in the presence of an implicit depth minimization, a standard parameter counting generalization bound would be vacuous. That is because, the overall number of parameters of the network after replacing the top, redundant, layers with a nearest class-center classifier would typically still exceed $\sqrt{m}$. On the other hand, Prop.~\ref{prop:bound} offers a different way to measure generalization. We do not require that the network's size be small in comparison to $\sqrt{m}$; rather, the bound guarantees generalization if the network's effective size is smaller than that of a network that fits partially random labels. 

In general, computing the expectation over $S_1,S_2$ in the bound is impossible, due to the limited access of the training data. However, instead, we empirically estimate this term using a set of $k$ pairs $(S^i_1,S^i_2)$ of $m$ samples, yielding an additional term that scales as $\mathcal{O}(1/\sqrt{k})$ to the bound (see Prop.~\ref{prop:bound2} in the appendix).

\section{Experiments}\label{sec:experiments}
\begin{figure*}[t]
  \centering
  \begin{tabular}{c@{}c@{}c@{}c@{}c@{}c}
     \rotatebox[origin=c]{90}{{\tiny NCC train acc}} &
     \raisebox{-0.5\height}{\includegraphics[width=0.2\linewidth]{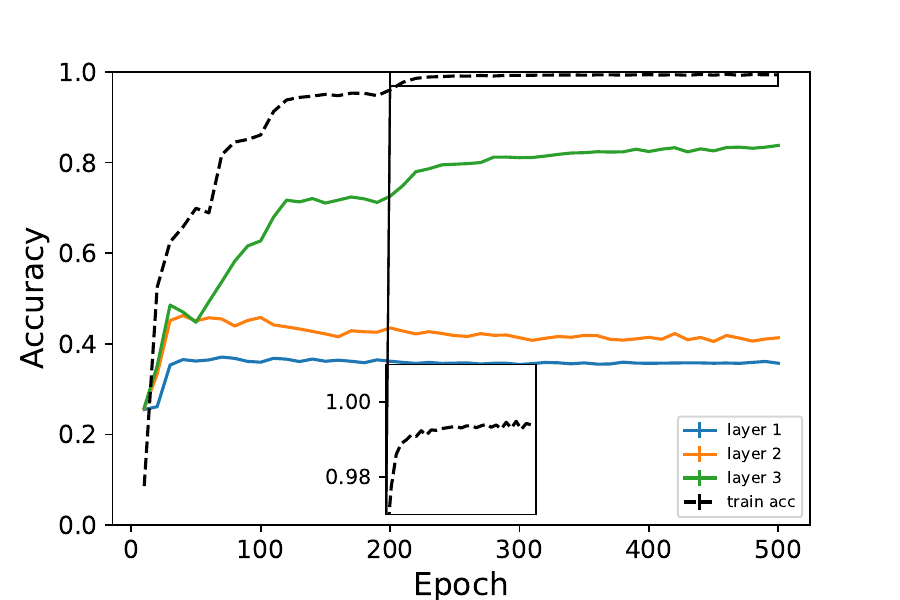}}  & \raisebox{-0.5\height}{\includegraphics[width=0.2\linewidth]{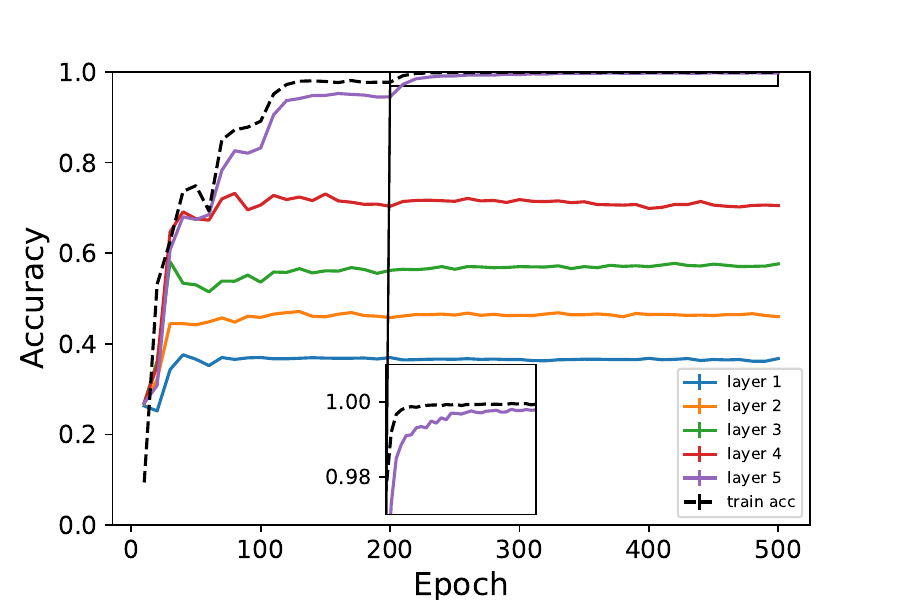}}  & \raisebox{-0.5\height}{\includegraphics[width=0.2\linewidth]{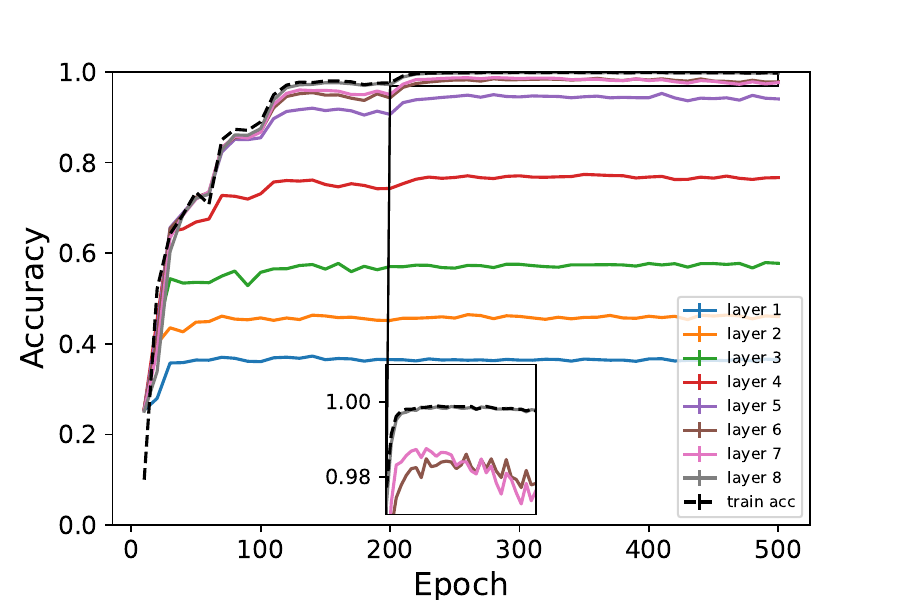}}  &
     \raisebox{-0.5\height}{\includegraphics[width=0.2\linewidth]{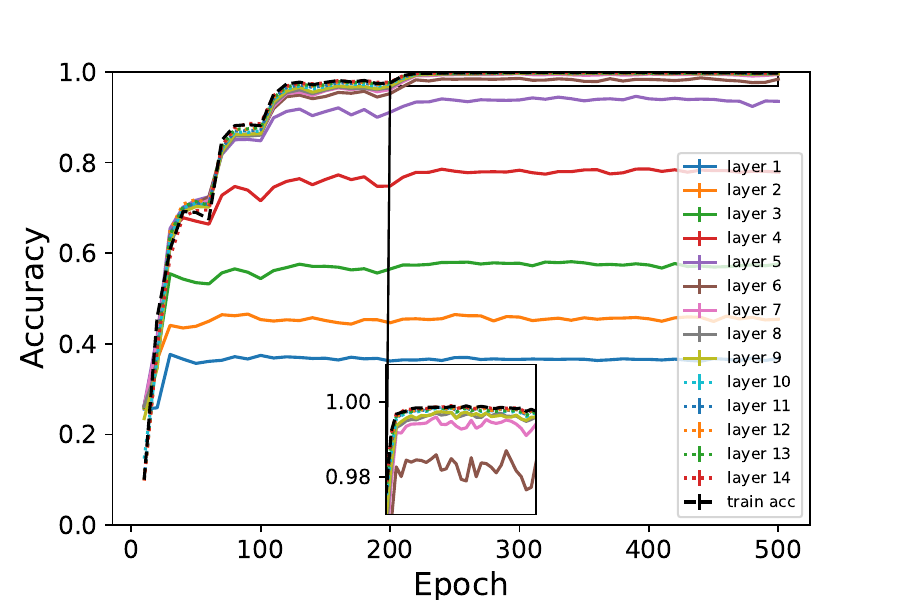}} &
     \raisebox{-0.5\height}{\includegraphics[width=0.2\linewidth]{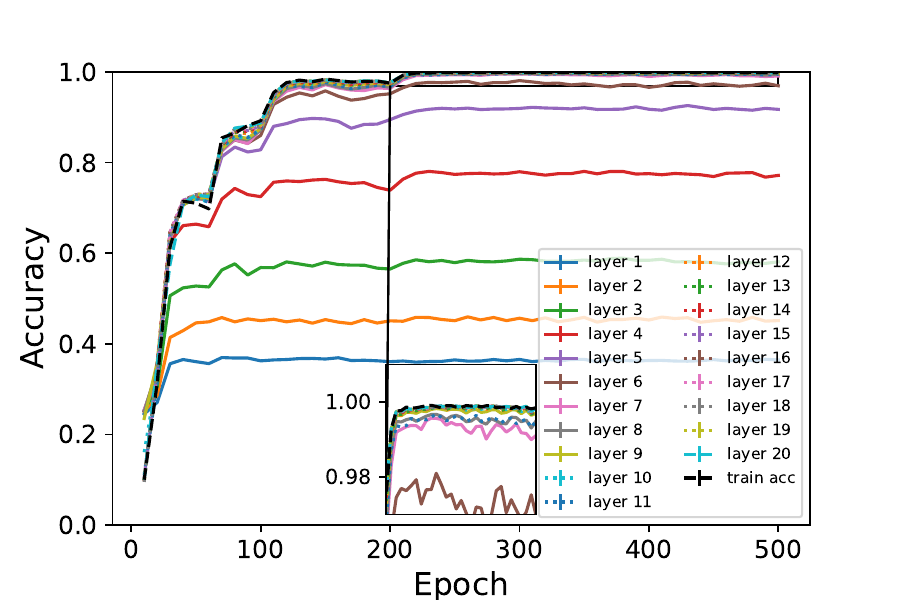}}
     \\
     \\
    \rotatebox[origin=c]{90}{{\tiny NCC test acc}} &
     \raisebox{-0.5\height}{\includegraphics[width=0.2\linewidth]{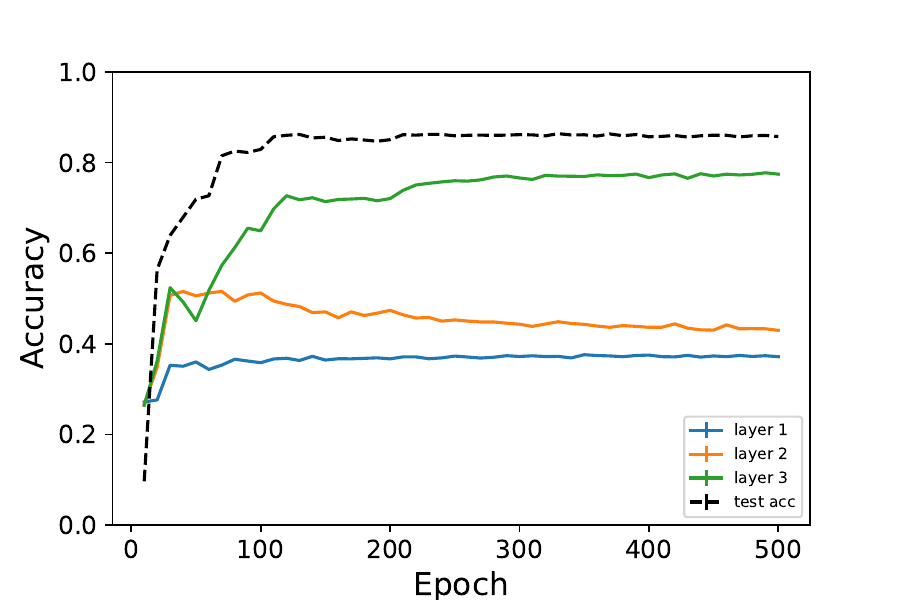}}  & \raisebox{-0.5\height}{\includegraphics[width=0.2\linewidth]{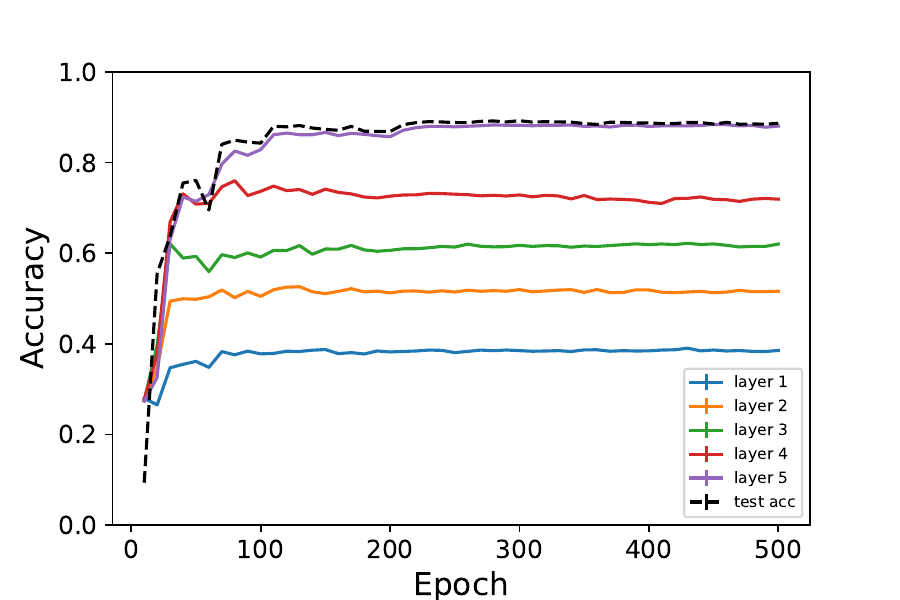}}  & \raisebox{-0.5\height}{\includegraphics[width=0.2\linewidth]{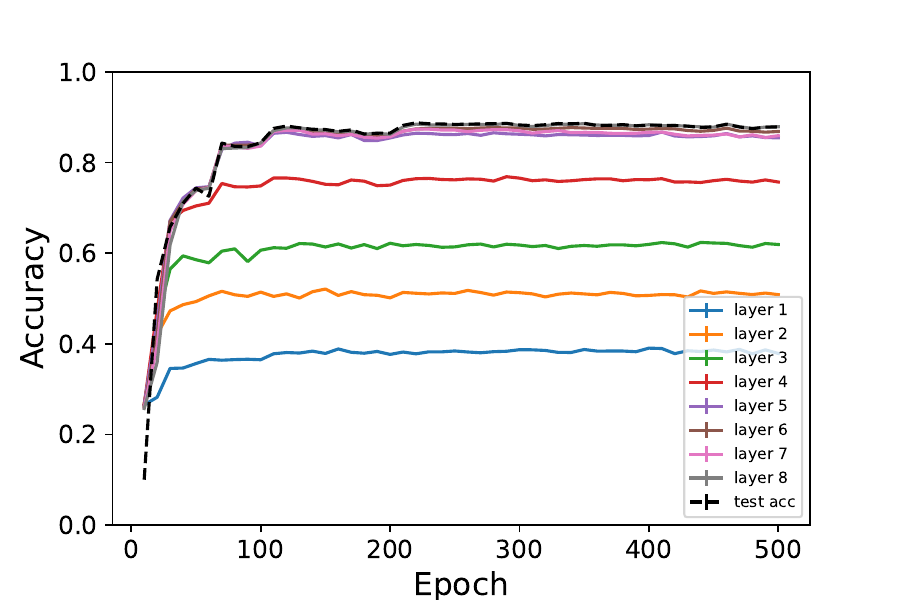}}  &
     \raisebox{-0.5\height}{\includegraphics[width=0.2\linewidth]{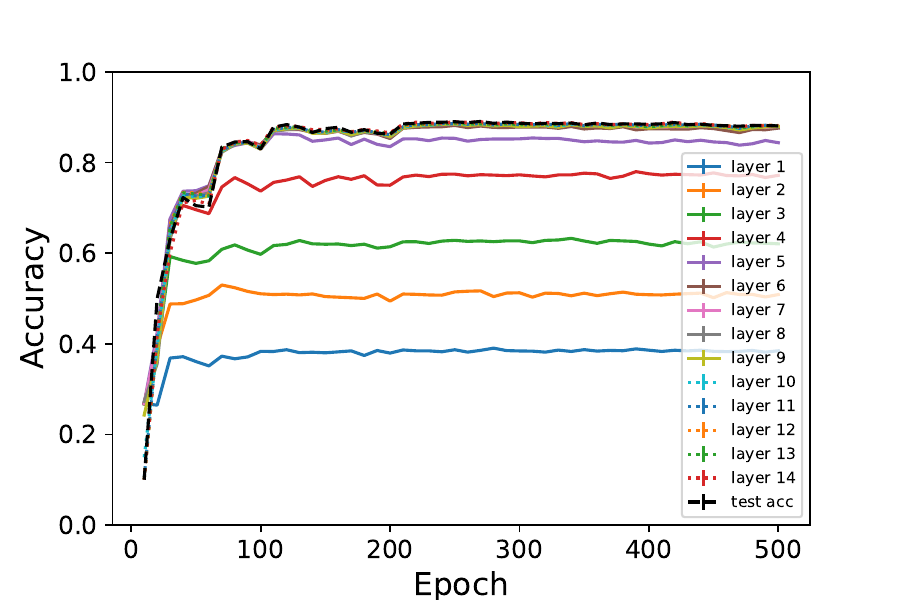}}  &
     \raisebox{-0.5\height}{\includegraphics[width=0.2\linewidth]{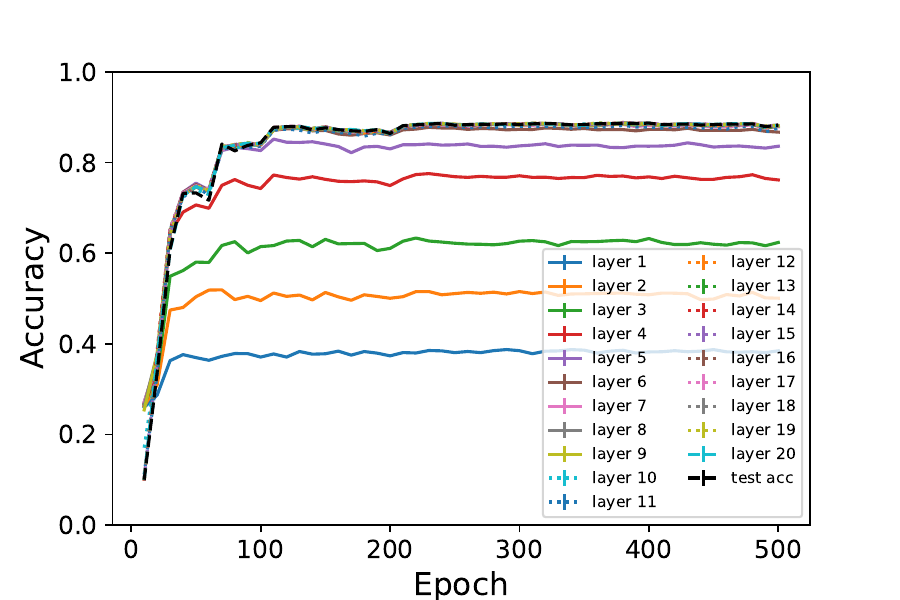}} 
    \\
    & {\tiny 3 layers}  & {\tiny 5 layers} & {\tiny 8 layers} & {\tiny 14 layers} & {\tiny 20 layers}
  \end{tabular}
  
 \caption{{\bf Intermediate neural collapse of CONV-$L$-400 trained on CIFAR10.} We plot the NCC train/test accuracy rates of neural networks with varying numbers of hidden layers evaluated on the train data (plotted in lin-log scale). Each curve stands for a different layer within the network. }
 \label{fig:cifar10_convnet_400}
\end{figure*} 

\begin{figure*}[t]
  \centering
  \begin{tabular}{c@{}c@{}c@{}c}
     \includegraphics[width=0.25\linewidth]{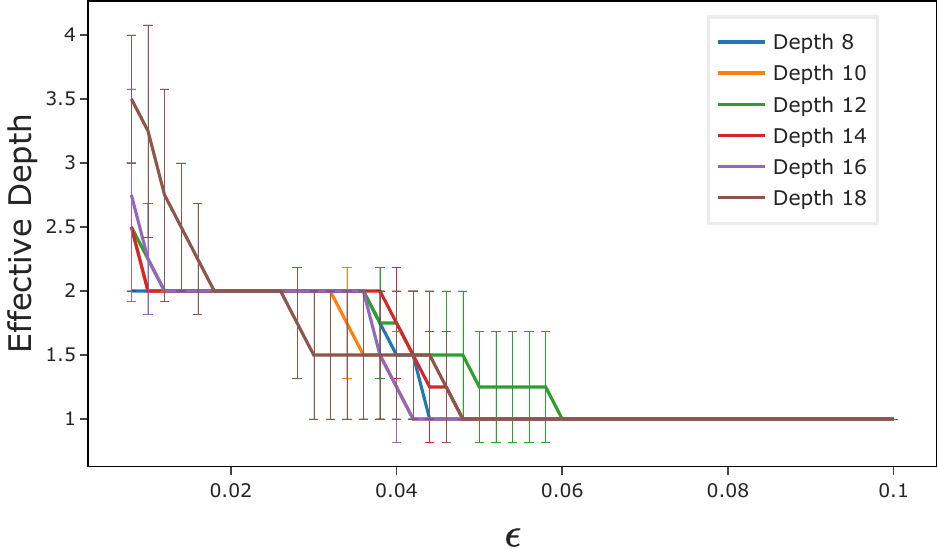}  &
     \includegraphics[width=0.25\linewidth]{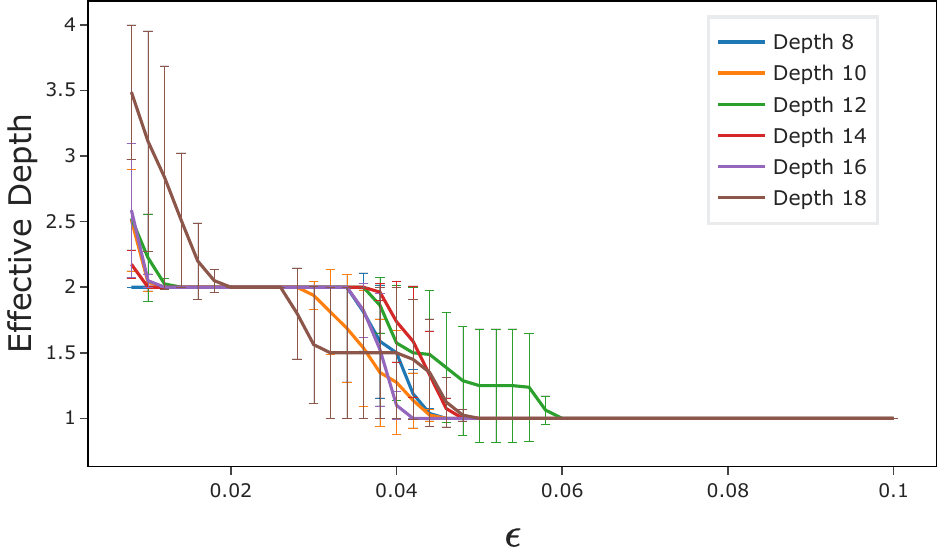}  &
     \includegraphics[width=0.25\linewidth]{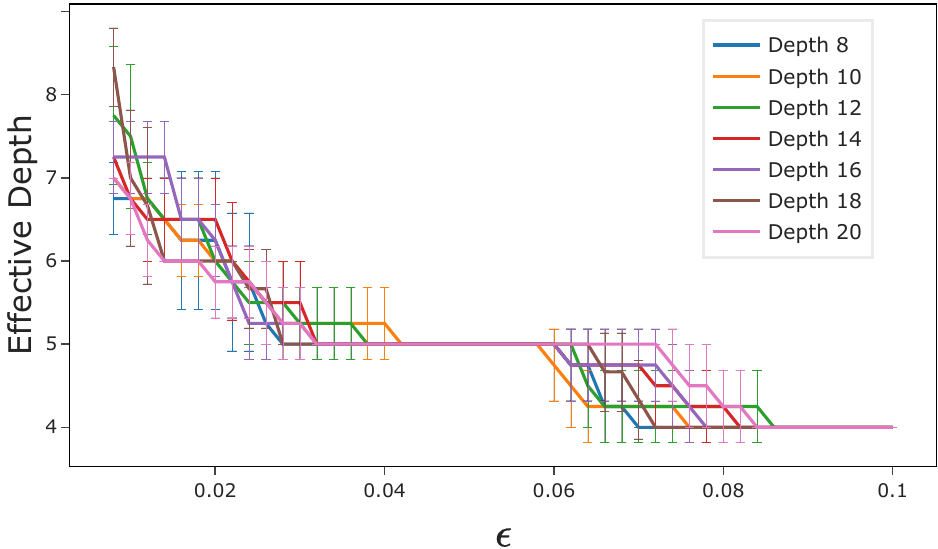}  &
     \includegraphics[width=0.25\linewidth]{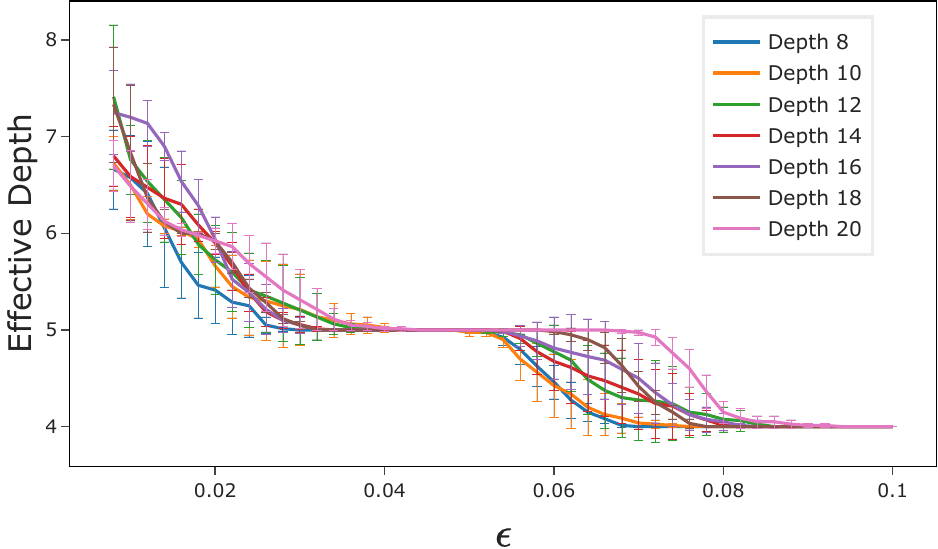}  \\
     {\tiny MNIST, $k=1$}  & {\tiny MNIST, $k=20$} & {\tiny CIFAR10, $k=1$} & {\tiny CIFAR10, $k=20$} \\
     {\tiny MLP-$L$-50}  & {\tiny MLP-$L$-50} & {\tiny CONV-$L$-400} & {\tiny CONV-$L$-400} \\
     
     \includegraphics[width=0.25\linewidth]{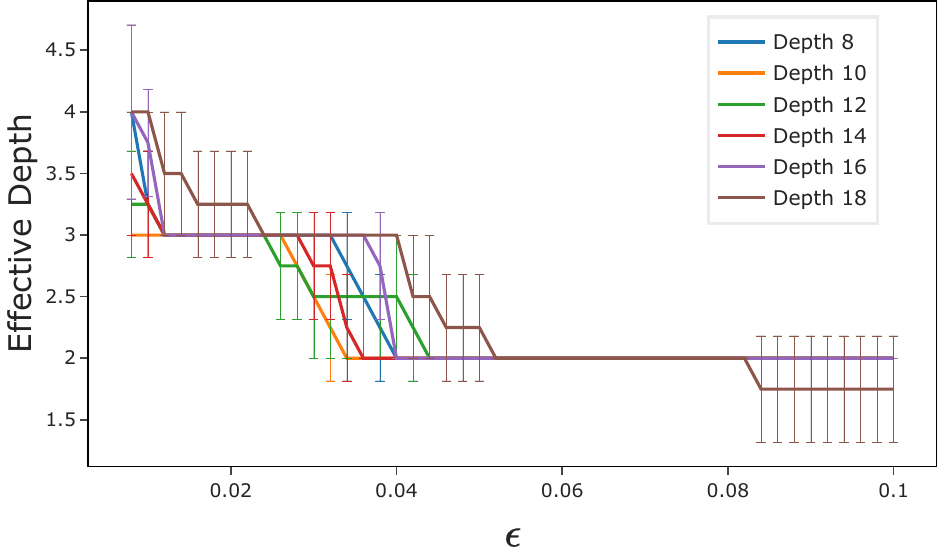}  &
     \includegraphics[width=0.25\linewidth]{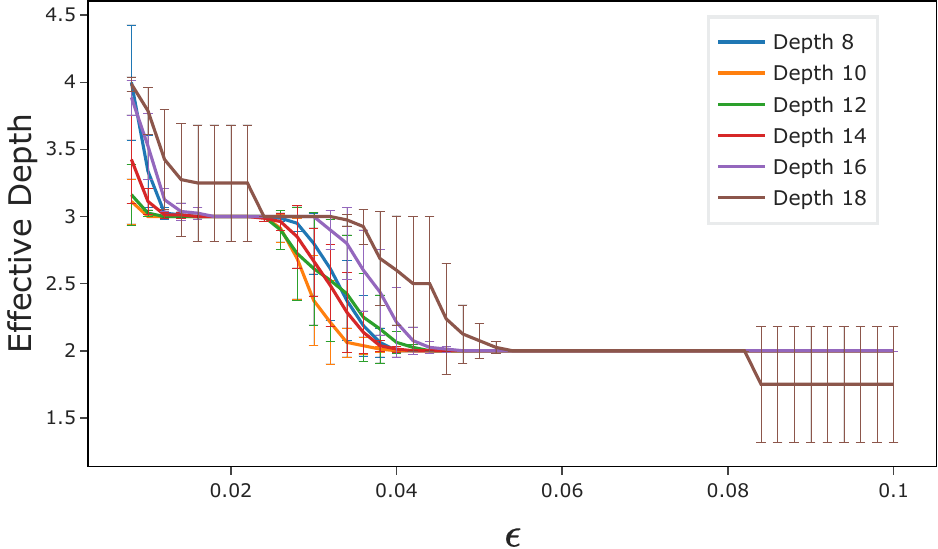}  &
     \includegraphics[width=0.25\linewidth]{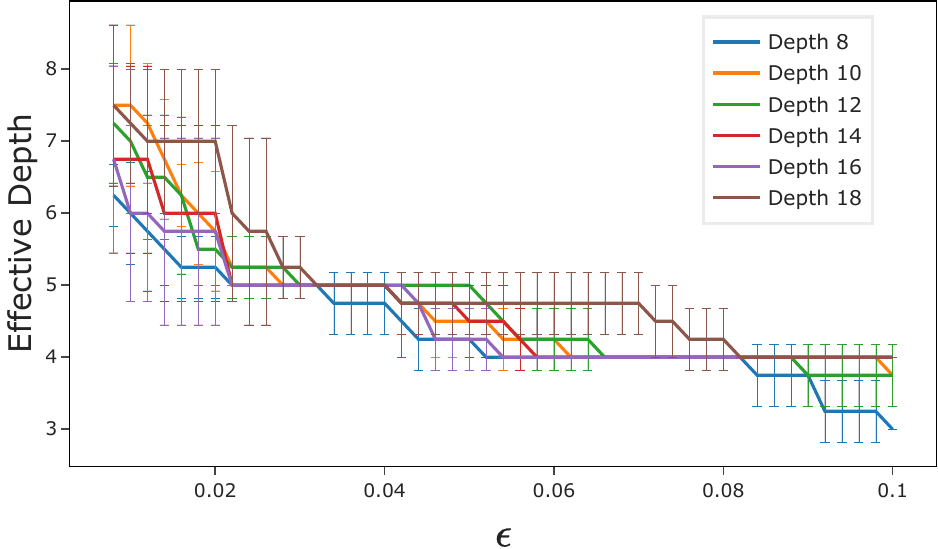}  &
     \includegraphics[width=0.25\linewidth]{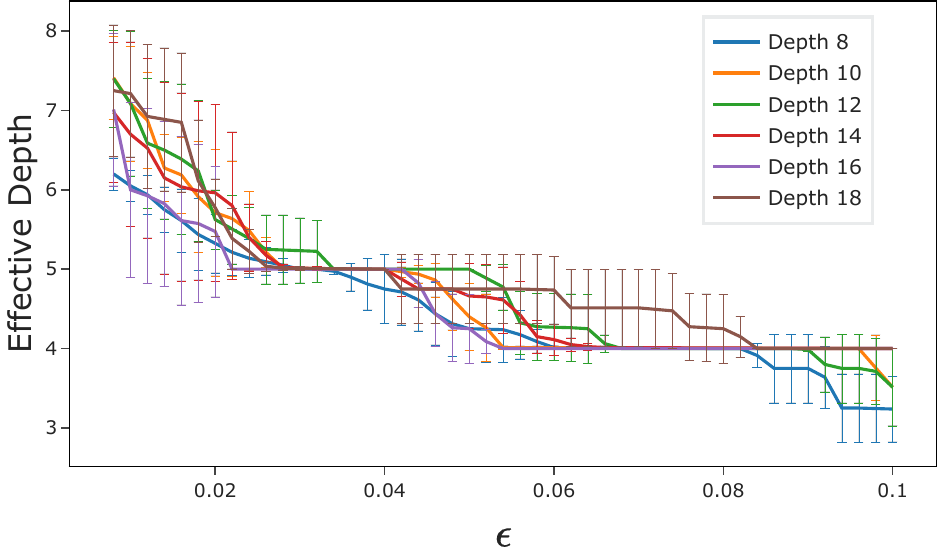}  \\
     {\tiny Fashion MNIST, $k=1$}  & {\tiny Fashion MNIST, $k=20$} & {\tiny Fashion MNIST, $k=1$} & {\tiny Fashion MNIST, $k=20$} \\
     {\tiny MLP-$L$-100}  & {\tiny MLP-$L$-100} & {\tiny CONV-$L$-100} & {\tiny CONV-$L$-100} \\
  \end{tabular}
 \caption{{\bf Averaged $\epsilon$-effective depths over the last few epochs.} We plot the $\epsilon$-effective depth (y-axis) as a function of $\epsilon$ (x-axis). Each line specifies the $\epsilon$-effective depth of a neural network of a certain depth $L$. We show the averaged $\epsilon$-effective depth over the last $k=1,20$ epochs across $5$ initializations. The network's architecture, dataset and $k$ are specified below each plot.}
 \label{fig:effective_depths}
\end{figure*} 

\begin{figure*}[t]
  \centering
  \begin{tabular}{c@{}c@{}c@{}c@{}c@{}c}
    
    \\
    
     \rotatebox[origin=c]{90}{{\tiny NCC train acc}}
     \raisebox{-0.5\height}{\includegraphics[width=0.2\linewidth]{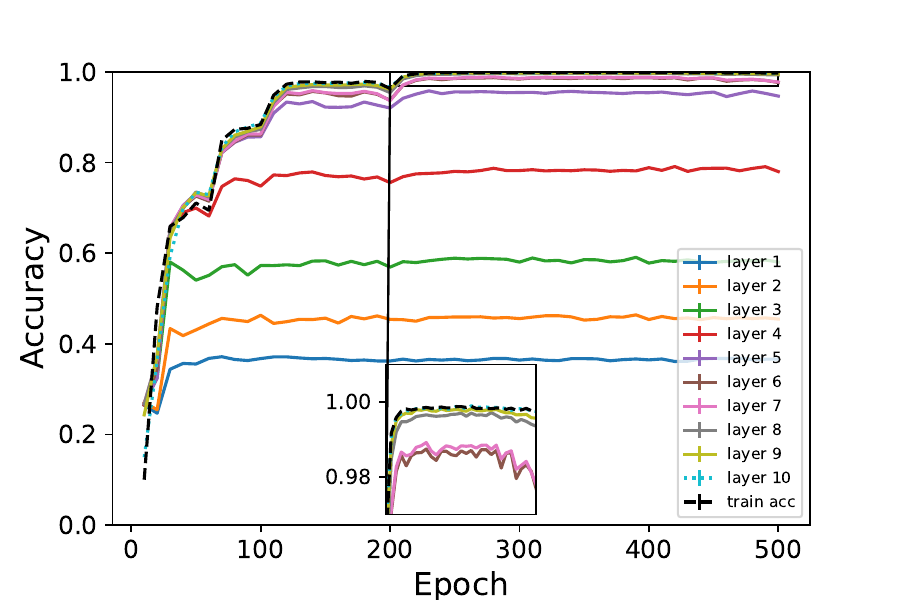}}  &
     \raisebox{-0.5\height}{\includegraphics[width=0.2\linewidth]{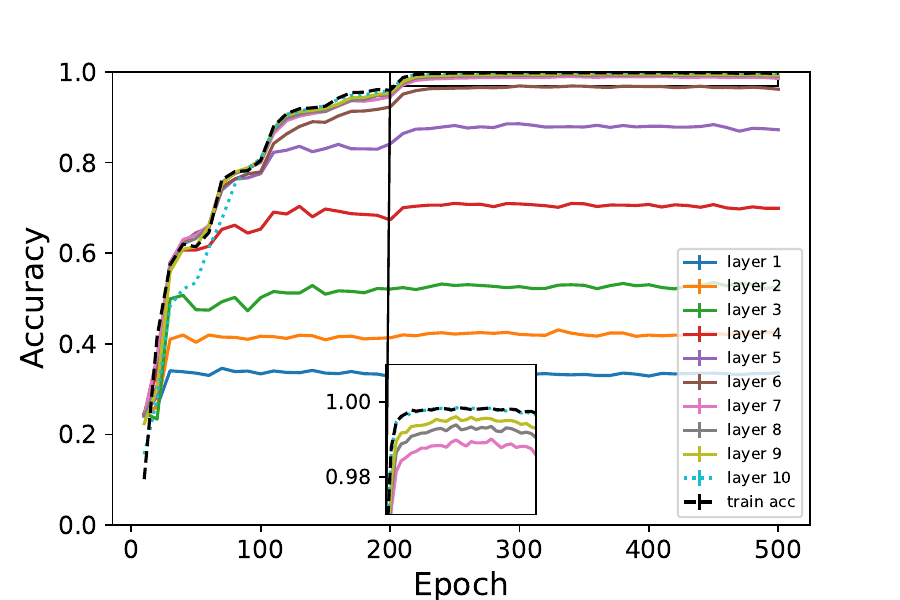}}  & \raisebox{-0.5\height}{\includegraphics[width=0.2\linewidth]{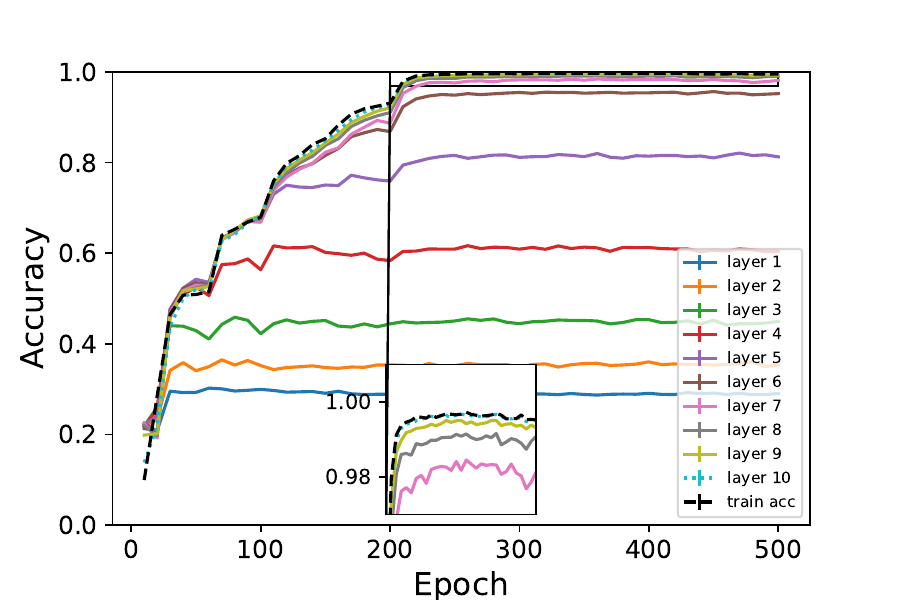}}& \raisebox{-0.5\height}{\includegraphics[width=0.2\linewidth]{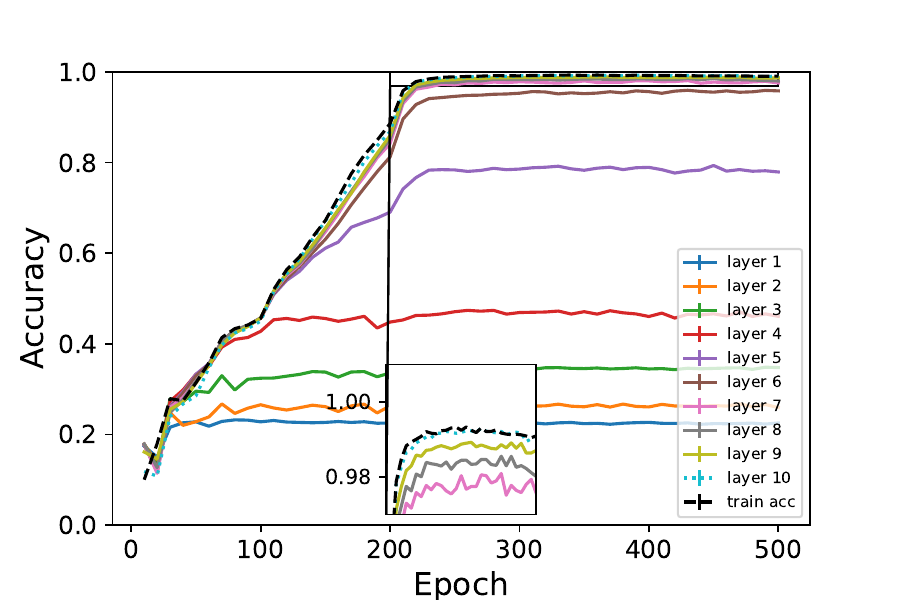}} &
     \raisebox{-0.5\height}{\includegraphics[width=0.2\linewidth]{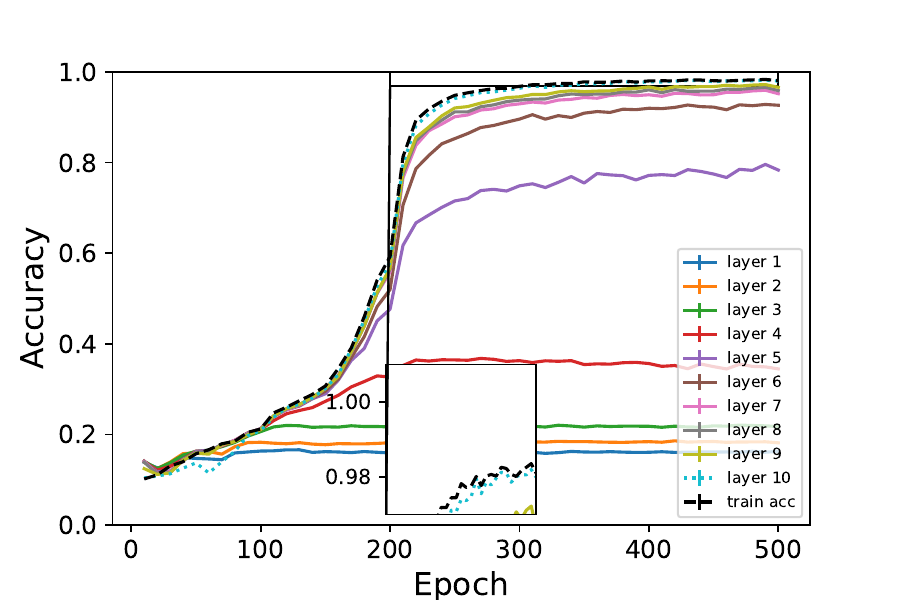}} 
     \\
    
     \rotatebox[origin=c]{90}{{\tiny NCC test acc}} 
     \raisebox{-0.5\height}{\includegraphics[width=0.2\linewidth]{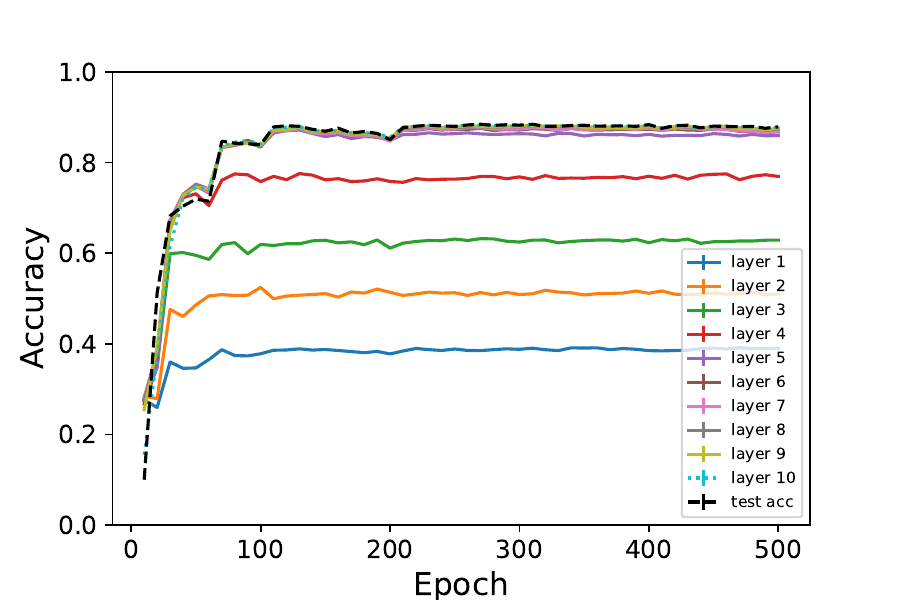}}  &
     \raisebox{-0.5\height}{\includegraphics[width=0.2\linewidth]{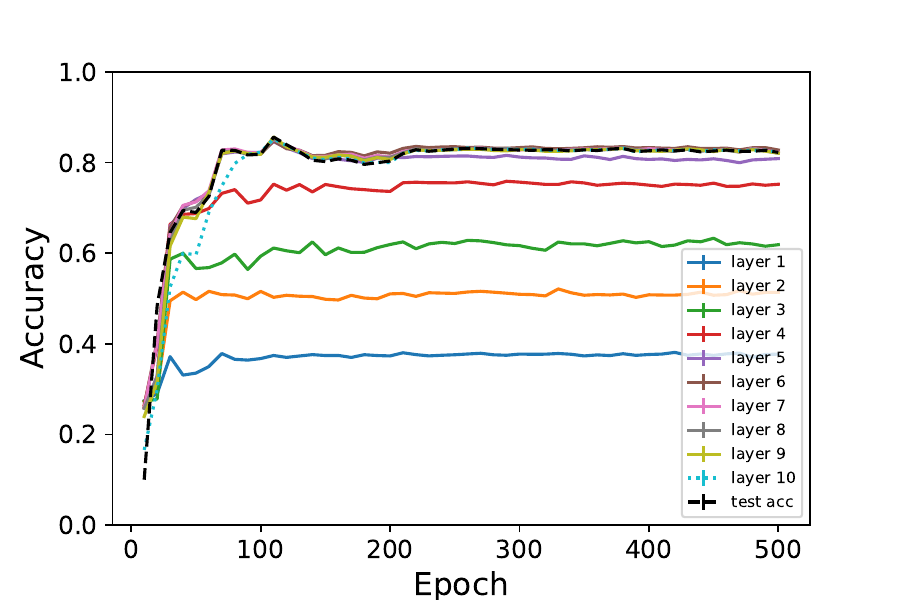}}  & \raisebox{-0.5\height}{\includegraphics[width=0.2\linewidth]{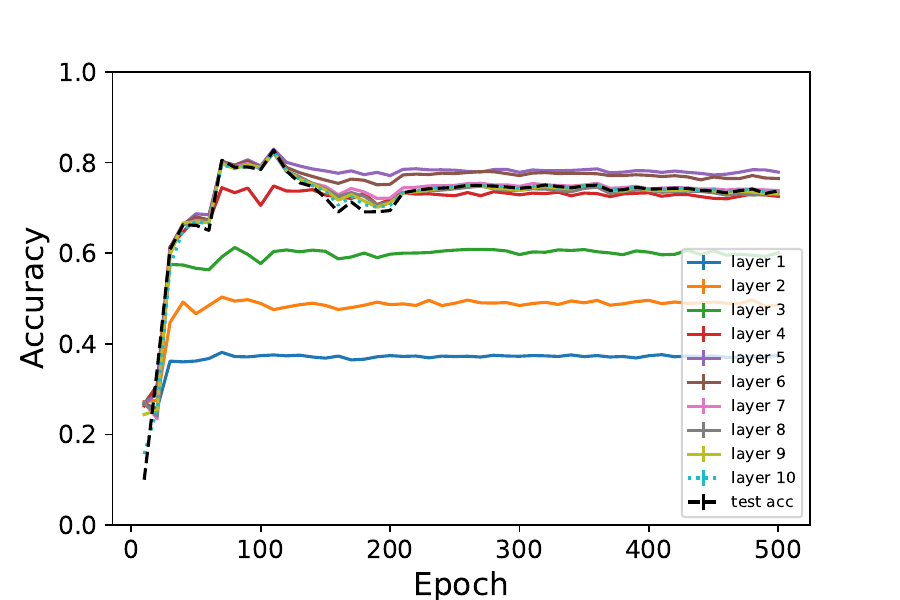}}& \raisebox{-0.5\height}{\includegraphics[width=0.2\linewidth]{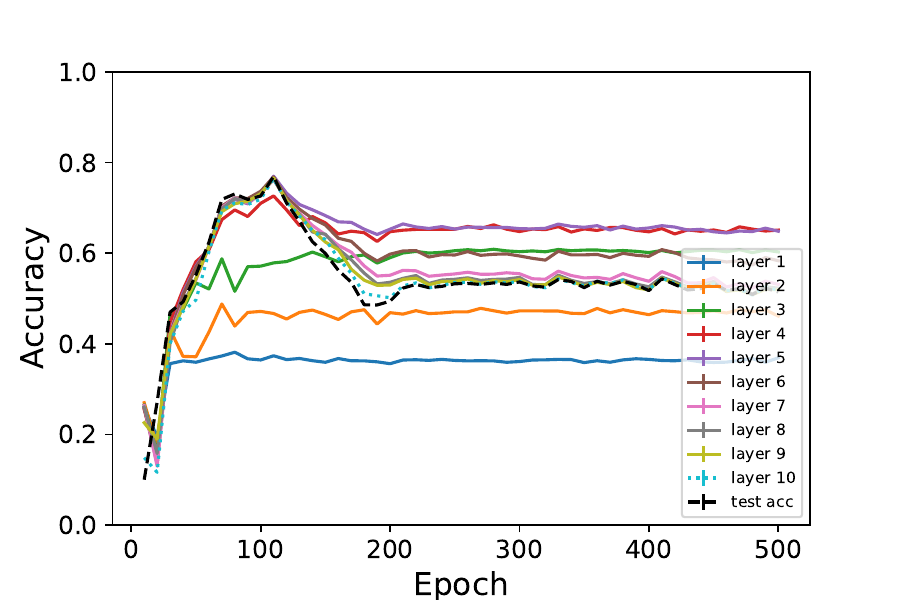}} &
     \raisebox{-0.5\height}{\includegraphics[width=0.2\linewidth]{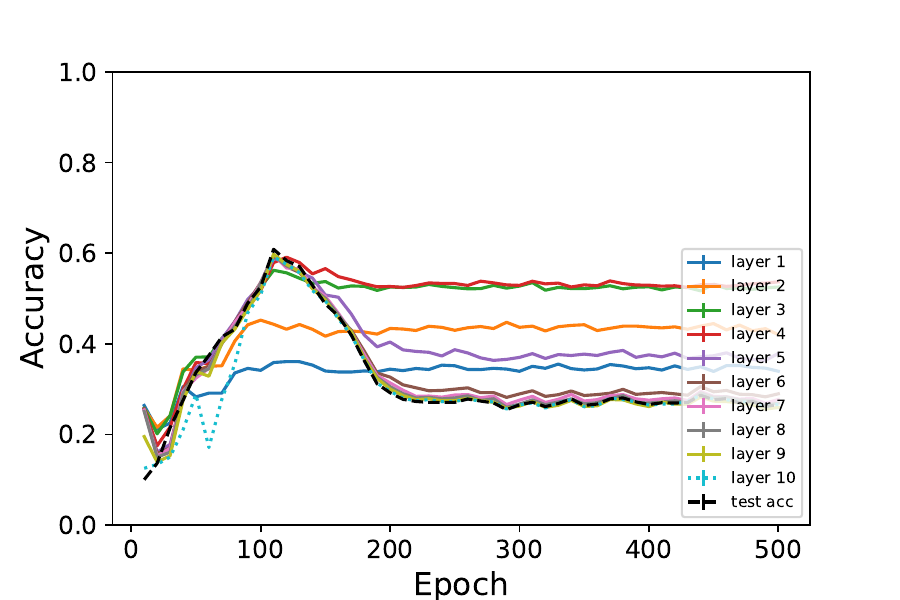}} 
     \\
     {\tiny $0\%$ noise} & {\tiny $10\%$ noise}  & {\tiny $25\%$ noise} & {\tiny $50\%$ noise} & {\tiny $75\%$ noise} \\
  \end{tabular}
  
 \caption{{\bf Intermediate neural collapse of CONV-10-400 trained on CIFAR10 with partially corrupted labels.} We plot the NCC train/test accuracy rates of the various layers of a network trained with a certain amount of corrupted labels (see titles).}
 \label{fig:cifar10_noisy_conv}
\end{figure*} 

\begin{figure*}[t]
  \centering
  \begin{tabular}{c@{}c@{}c@{}c@{}c@{}c}
     \rotatebox[origin=c]{90}{{\tiny NCC train acc}} &
     \raisebox{-0.5\height}{\includegraphics[width=0.2\linewidth]{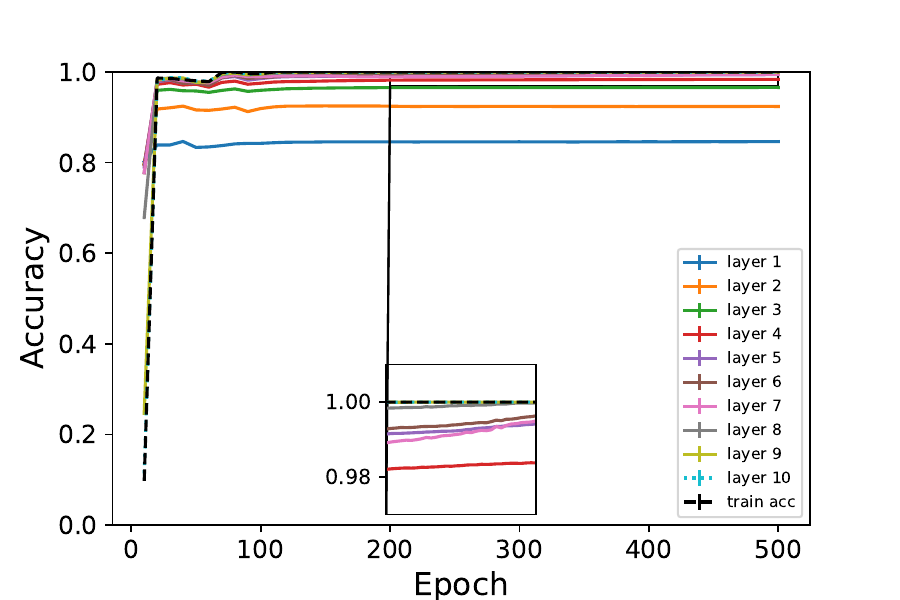}}  &
     \raisebox{-0.5\height}{\includegraphics[width=0.2\linewidth]{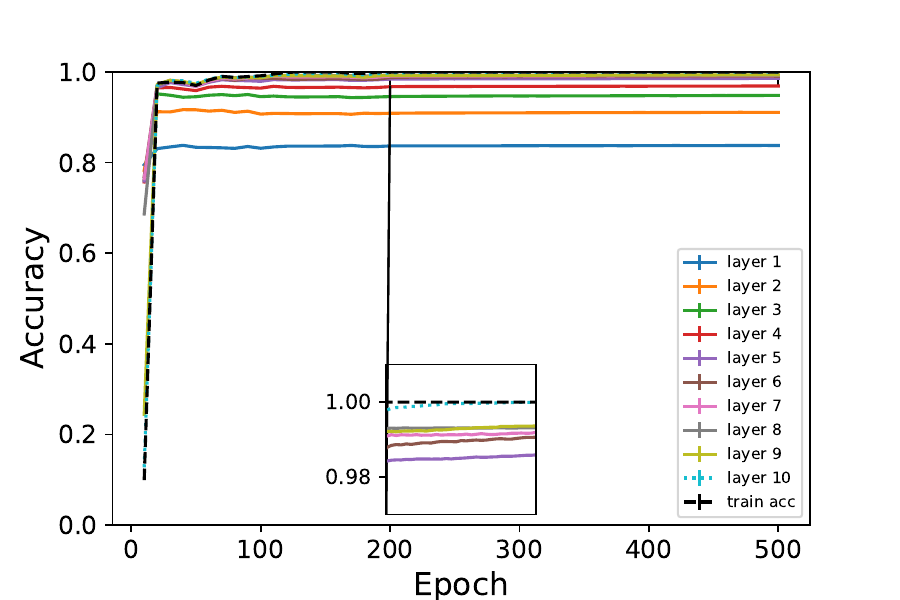}}  &
     \raisebox{-0.5\height}{\includegraphics[width=0.2\linewidth]{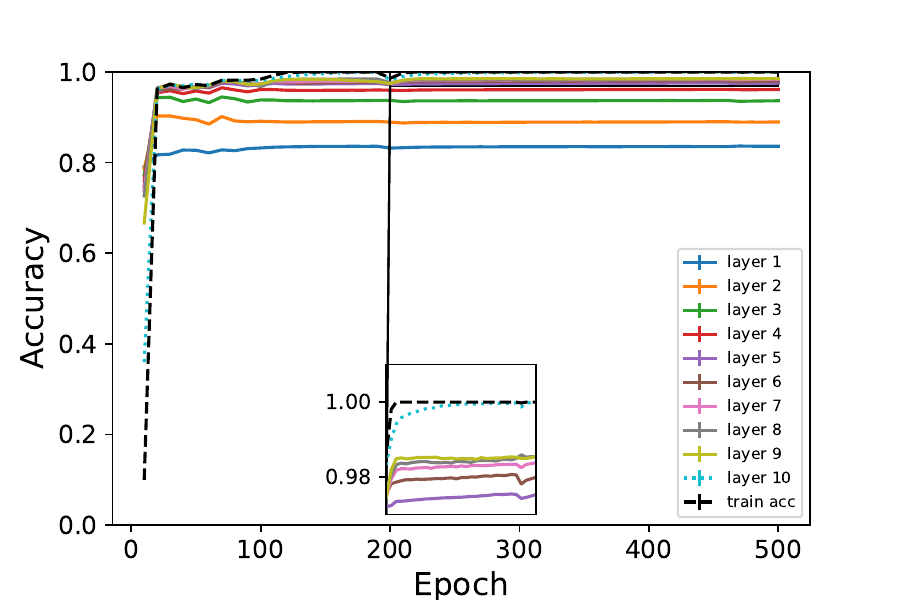}} &
     \raisebox{-0.5\height}{\includegraphics[width=0.2\linewidth]{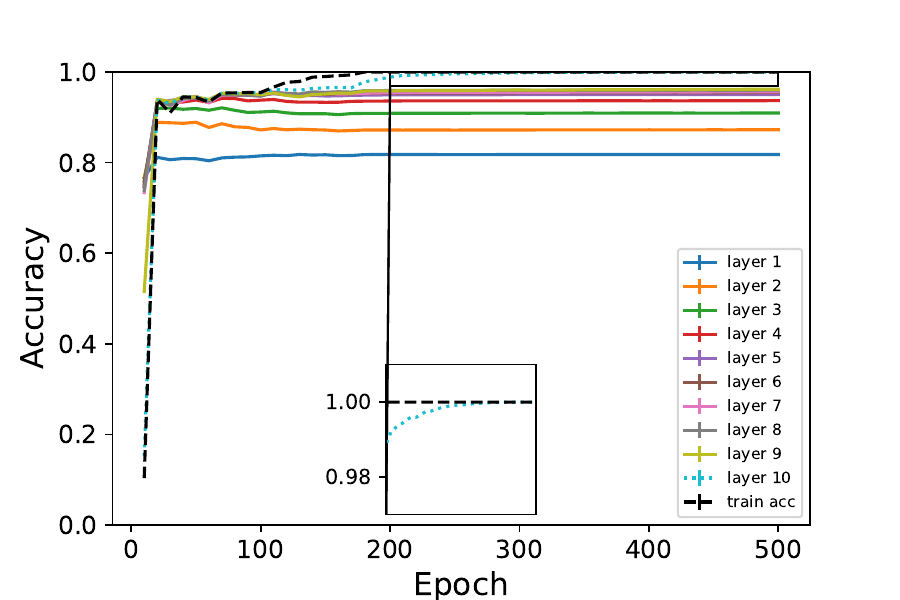}} & \raisebox{-0.5\height}{\includegraphics[width=0.2\linewidth]{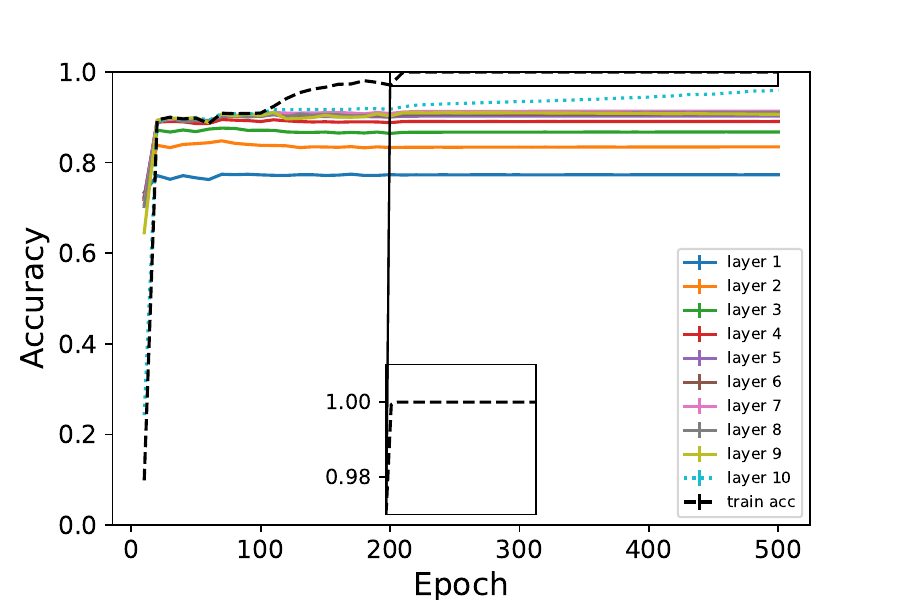}}
     
    \\
     \rotatebox[origin=c]{90}{{\tiny NCC test acc}} &
     \raisebox{-0.5\height}{\includegraphics[width=0.2\linewidth]{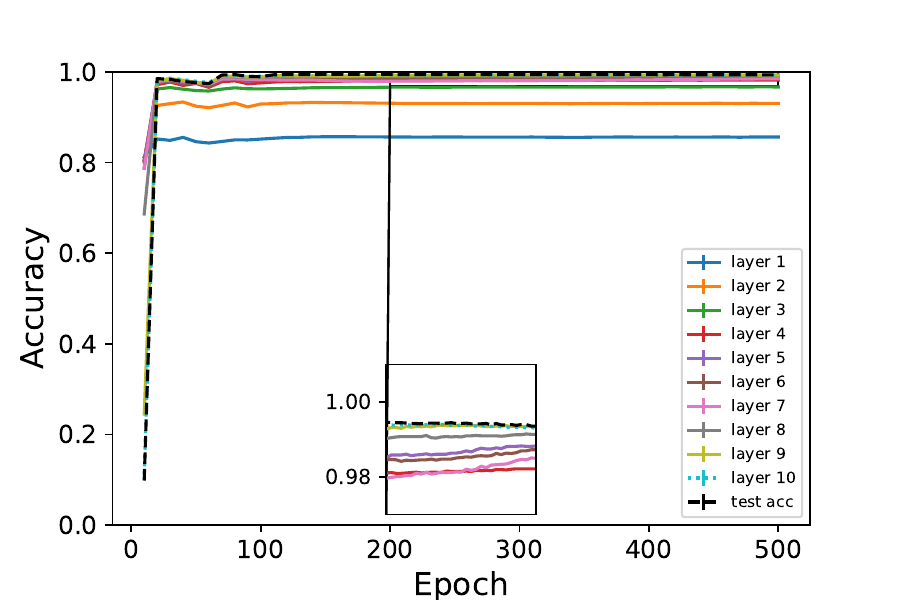}}  &
     \raisebox{-0.5\height}{\includegraphics[width=0.2\linewidth]{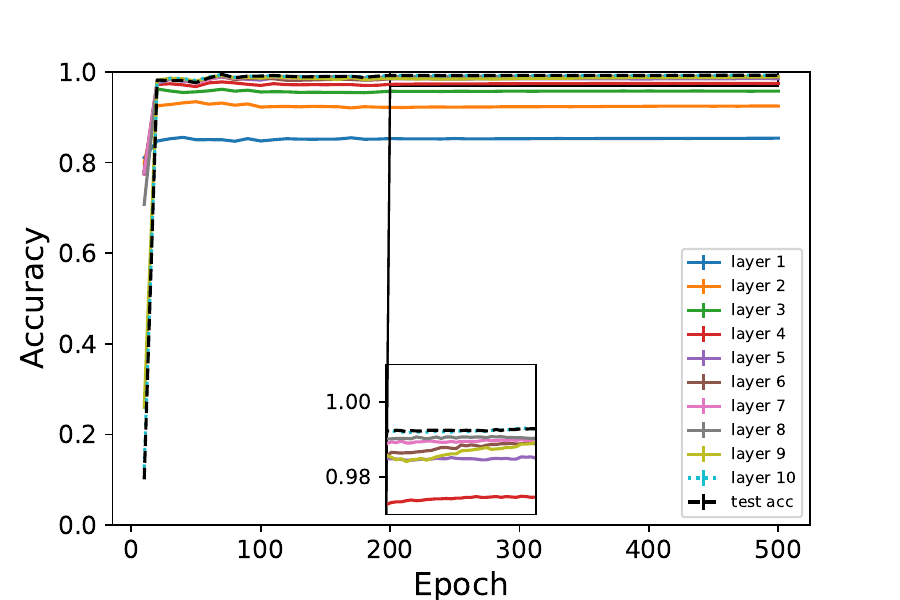}}  &
     \raisebox{-0.5\height}{\includegraphics[width=0.2\linewidth]{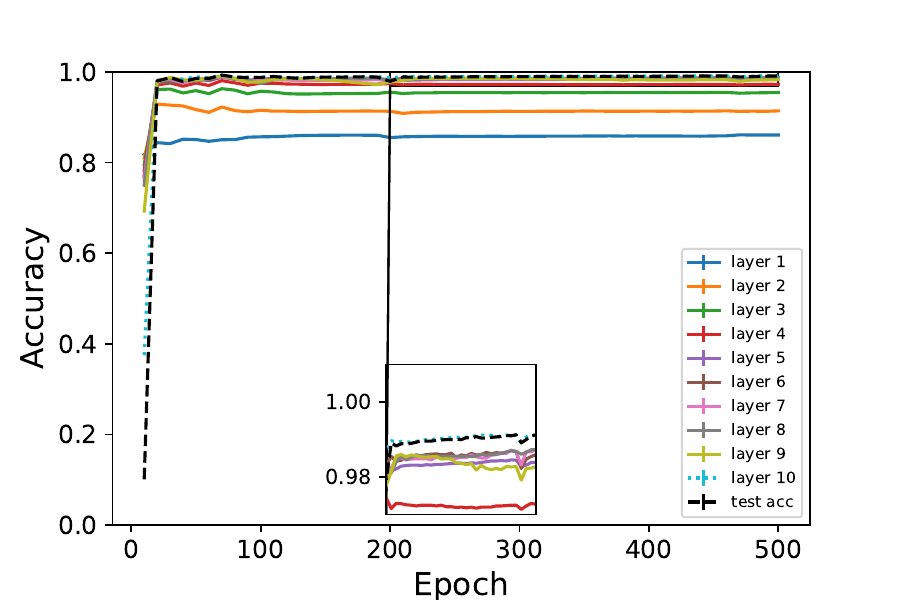}} &
     \raisebox{-0.5\height}{\includegraphics[width=0.2\linewidth]{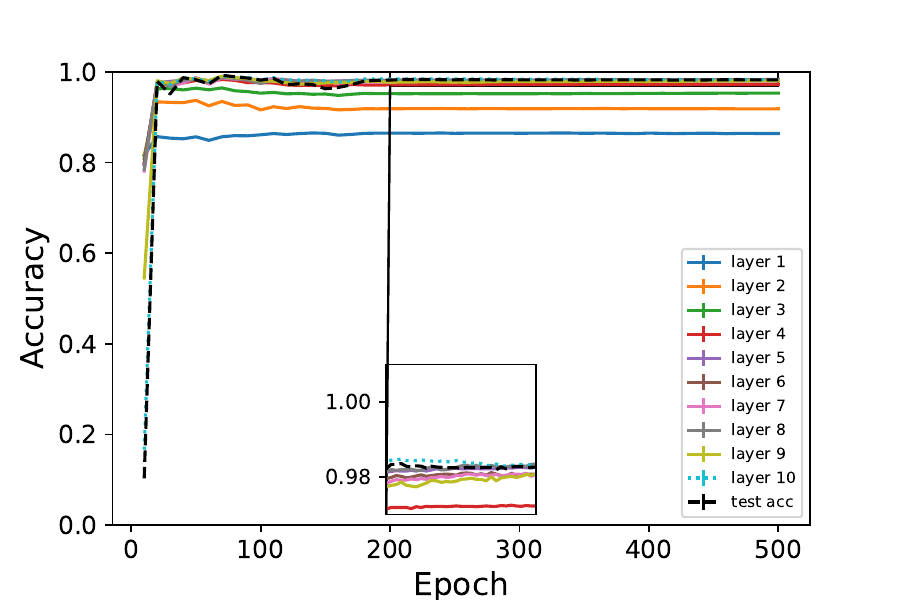}} & \raisebox{-0.5\height}{\includegraphics[width=0.2\linewidth]{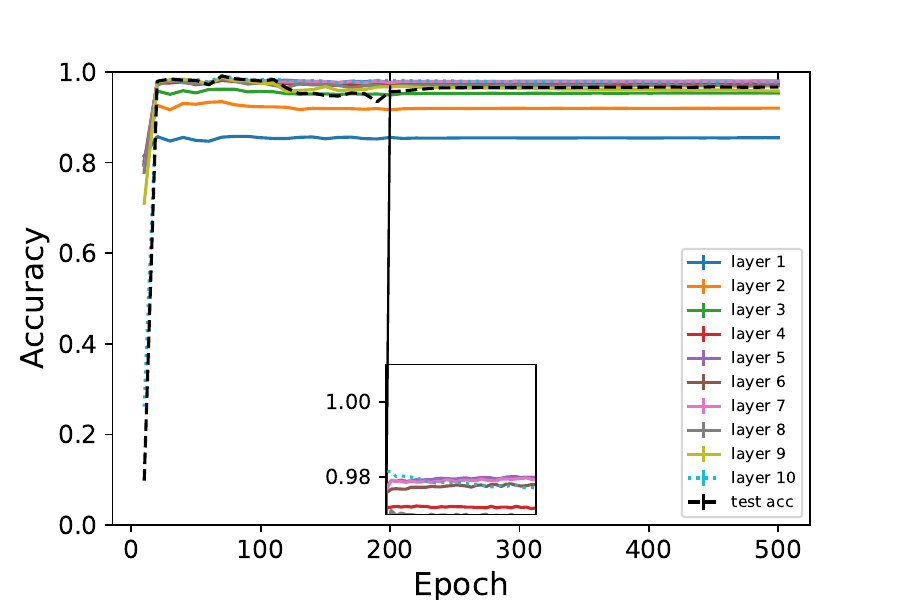}}
     \\
     & {\tiny $0\%$ noise} & {\tiny $1\%$ noise}  & {\tiny $2\%$ noise} & {\tiny $5\%$ noise} & {\tiny $10\%$ noise}
  \end{tabular}
  
 \caption{{\bf Intermediate neural collapse of CONV-10-50 trained on MNIST with partially corrupted labels.} See Fig.~\ref{fig:cifar10_noisy_conv} for details.}
 \label{fig:mnist_noisy_conv}
\end{figure*} 

In this section, we experimentally analyze the emergence of neural collapse in the intermediate layers of neural networks. First, we validate the ``Minimal Depth Hypothesis'' (Obs.~\ref{obs:1}). Following that, we look at how corrupted labels affect the extent of intermediate layer NCC separability and the $epsilon$-effective depth. We show that as the number of corrupted labels in the data increases, so does the $epsilon$-effective depth. Finally, using the bound in Prop.~\ref{prop:bound}, we provide non-trivial estimates of the test error. In Tab.~\ref{tab:GenBoundEst_perDepth}, we empirically compare our bound with relevant baselines and show that, unlike other bounds, it achieves non-vacuous estimations of the test error. Throughout the experiments, we used Tesla-k80 GPUs for several hundred runs. Each run took between 5-20 hours. For additional experiments, see the appendix. 

\subsection{Setup}

{\bf Training process.\enspace} We consider $k$-class classification problems (e.g., CIFAR10) and train multilayered neural networks $h = e \circ f^L = e \circ g^L \circ \dots \circ g^1 :\R^n \to \R^C$ on the corresponding training dataset $S$. The models are trained with SGD for cross-entropy loss minimization between its logits and the one-hot encodings of the labels. We consistently use batch size $128$, learning rate schedule with an initial learning rate $0.1$, decayed three times by a factor of $0.1$ at epochs 60, 120, and 160, momentum $0.9$ and weight decay $\expnumber{5}{-4}$. Each model is trained for 500 epochs. 



{\bf Datasets.\enspace} We consider various datasets: MNIST, Fashion MNIST, and CIFAR10. For CIFAR10 we used random cropping, random horizontal flips, and random rotations (by $15k$ degrees for $k$ uniformly sampled from $[24]$). All datasets were standardized.


\subsection{Results}

\begin{table}
  \centering
  \resizebox{\linewidth}{!}{
  \begin{tabular}{|c|ccc|ccc|ccc|}
  \hline
    Dataset & & MNIST & & & Fashion MNIST & &
    & CIFAR10 & \\
    \hline 
    Architecture & & CONV-10-50 & & & CONV-10-100 & &
    & CONV-16-100 & \\
    $\E_{S_1,\gamma}[\textnormal{err}_{P}(h^{\gamma}_{S_1})]$ & & 0.0075 & & & 0.0996 & & & 0.2676 &  
    \\
    \hline
    $p$ & 0.05 & 0.075 & 0.1 & 0.05 & 0.15 & 0.2 & 0.4 & 0.45 & 0.5 \\
    Bound & 1.05 & 0.475 & {\bf 0.1} & 1.05 & 0.75 & {\bf 0.2} & {\bf 0.66} & 0.72 & 0.7 \\
    \hline
  \end{tabular}
  }
  \caption{Estimating the bound in Prop.~\ref{prop:bound}. We used $\epsilon=0.005$ to measure the effective depths.}
  \label{tab:GenBoundEst}
\end{table}

  
    
    

\begin{table}[t]
  \centering
  \renewcommand{\arraystretch}{1.2}
  \resizebox{\linewidth}{!}{
  \begin{tabular}{|c|ccc|ccc|ccc|}
  \hline
    Dataset & & MNIST & & & Fashion MNIST & & & CIFAR10 &\\
    
    Architecture & & CONV-$L$-50 & & & CONV-$L$-100 & & & CONV-$L$-100 &  \\
    \hline 
    Depth ($L$) & 10 & 12 & 15 & 10 & 12 & 15 & 16 & 18 & 20 \\
    
    $\E_{S_1,\gamma}[\textnormal{err}_{P}(h^{\gamma}_{S_1})]$ & 0.0075 & 0.0074 & 0.0074 & 0.0996 & 0.0996 & 0.0996 & 0.2659 & 0.2653 & 0.2648\\
    \hline
    $p$ & 0.1 & 0.1 & 0.1 & 0.2 & 0.2 & 0.2 & 0.4 & 0.4 & 0.4 \\
    
    Proposed Bound & {\bf 0.1} & {\bf 0.1} & {\bf 0.1} & {\bf 0.2} & {\bf 0.2} & {\bf 0.2} & 0.66 & 0.66 & {\bf 0.53} \\
    \hline
    \hline
    $L_{1, \infty}$~\citep{journals/jmlr/BartlettM02} & {8.911+14} & {1.74e+17} & {2.13e+22} & {3.613e+17} & {9.145e+18} & {4.088e+22} & {1.076e+23} & {6.682e+28} & {2.758e+35} \\
    $L_{3,1.5}$~\citep{pmlr-v40-Neyshabur15} & {5.462e+05} & {1.6e+06 } & {1.308e+06} & {7.523e+07} & {6.997e+07} & {2.636e+08} & {4.633e+08} & {2.275e+09} & {5.061e+09} \\
    Frobenius~\citep{pmlr-v40-Neyshabur15} & {1.848e+06} & {8.194e+06} & {2.216e+07} & {2.486e+08} & {2.335e+08} & {1.585e+09} & {1.967e+09} & {1.442e+10} & {3.038e+11} \\
    Spec $L_1$~\citep{10.5555/3295222.3295372} & {2.861e+05} & {6.412e+05} & {9.566e+05} & {4.706e+06} & {3.516e+06} & {3.176e+06} & {1.19e+07} & {1.449e+08} & {1.272e+10} \\
    Spec Frob~\citep{Neyshabur2019TowardsUT} & {\bf 3948.414} & {\bf 11199.209} & {\bf 1.538e+04} & {\bf 40229.583} & {\bf 2.884e+04} & {\bf 2.543e+04} & {\bf 94833.424} & {\bf 1.011e+06} & {\bf 1.033e+08} \\
    \hline
    
  \end{tabular}
  }
  \caption{{\bf Comparing our bound with baseline bounds in the literature for networks of varying depths.} While traditional bounds are extremely vacuous, our bound provides non-trivial estimations of the test error. Furthermore, unlike traditional bounds, our bound is generally independent of the depth of the network.  This is consistent with the minimal depth hypothesis described in Obs.~\ref{obs:1}.} 
  \label{tab:GenBoundEst_perDepth}
\end{table}

{\bf Intermediate neural collapse.\enspace} To study the bias towards minimal depth, we trained a set of CONV-$L$-400 networks with varying depths. In Fig.~\ref{fig:cifar10_convnet_400} we report the train and test NCC classification accuracy rates for networks of varying depths $L$ on the CIFAR10 dataset. We make multiple interesting observations; {\bf (i)} For networks with 8 or higher hidden layers, the eighth and higher layers exhibit NCC train accuracy of approximately $100\%$, and therefore, are effectively of depth 7. {\bf (ii)} We observe that neural collapse strengthens when increasing the network's depth, on both train and test data. {\bf (iii)} The embeddings of the top layers become NCC separable approximately at the same epoch. {\bf (iv)} We also observe that the final epoch's NCC train/test accuracy rates of any intermediate layer converges when increasing $L$. The results of this experiment are substantially extended and repeated with different architectures and datasets in the appendix.

{\bf The effect of the depth on the $\epsilon$-effective depth.\enspace} In Obs.~\ref{obs:1} we claimed that the $\epsilon$-effective depth is insensitive to the actual depth of the network (once it exceeds a certain threshold). To validate this hypothesis we conducted the following experiments. We trained models on MNIST, Fashion MNIST and CIFAR10 with varying depth $L$. In Fig.~\ref{fig:effective_depths} we plotted the averaged $\epsilon$-effective depths of each network's last $k=1,20$ epochs as a function of $\epsilon$. We also average the results across $5$ different weight initializations and plot them along with error bar standard deviations. As can be seen, the $\epsilon$-effective depth is almost unaffected by the choice of $L$ for a given $\epsilon$. Remarkably, for each $\epsilon$, the averaged effective depth varies very little across the various networks. This means that the $\epsilon$-effective depths of two trained deep networks of different depths are more or less the same.



{\bf NCC separability with partially corrupted labels.\enspace} Simply put, Prop.~\ref{prop:bound} compares the depths required to fit correct labels and partially corrupt labels. To better understand the effect of corrupted labels on the complexity of the task, we compare the $\epsilon$-effective depths of models trained with varying amounts of corrupted labels. Namely, we study the {\em degree} of NCC separability in the intermediate layers of neural networks that are trained with varying amounts of corrupted labels. 
 
For this experiment we trained instances of CONV-10-400 for CIFAR10 classification with $0\%$, $10\%$, $25\%$, $50\%$ and $75\%$ of the labels corrupted (e.g., uniformly distributed random labels). We plot the degrees of NCC separation on the train and test sets, $1-\textnormal{err}_{S}(\hat{h}_i)$ and $1-\textnormal{err}_{P}(\hat{h}_i)$, across the intermediate layers of the neural networks during the optimization procedure. 


As can be seen in Fig.~\ref{fig:cifar10_noisy_conv}, we achieve NCC separability in the penultimate layer when training with or without corrupted labels, which is consistent with the experiments in~\citep{mixon2020neural}. However, we notice several differences between the two cases. For starters, a higher degree of NCC separability is achieved when training without corrupted labels across all layers. Furthermore, when training with $10\%$ or $25\%$ corrupted labels, the sixth layer's NCC accuracy rate drops lower than $98\%$, in comparison with training without corrupted labels that gives us $> 98\%$ accuracy. Therefore, the $\epsilon$-effective depth of the former network is 6 while the latter's is 5, when $\epsilon=0.02$ (see Def.~\ref{def:effective_depth}).

In Fig.~\ref{fig:mnist_noisy_conv} we repeat the experiment with CONV-10-50 trained on MNIST with $0\%$, $1\%$, $2\%$, $5\%$ and $10\%$ corrupted labels. We note that, as long as there at most $10\%$ corrupted labels, the models perfectly fit the training labels and achieve perfect NCC separability in their corresponding penultimate layers. On the other hand, by looking at the degree of intermediate NCC separability we can distinguish between the two training regimes (with/without corrupted labels). For example, the $\epsilon$-effective depth of the network trained with $0\%$ corrupted labels is 5 and for $2\%$ it is 10 (for $\epsilon=0.01$). 

As a side note, we also notice (Figs.~\ref{fig:cifar10_noisy_conv} and~\ref{fig:mnist_noisy_conv}, second row) that the NCC classifiers corresponding to intermediate layers tend to be more resilient to corrupted labels than the model itself.

{\bf Estimating the bound in \eqref{eq:bound}.\enspace} We estimate the bound in \eqref{eq:bound} for multiple architectures and datasets. In each case we used $\epsilon=0.005$ by default and employed different `guesses' $p$ (see Tab.~\ref{tab:GenBoundEst}) depending on the complexity of the learning task. We report an estimation of the expected test error of the models, $\E_{S_1,\gamma}[\textnormal{err}_{P}(h^{\gamma}_{S_1})]$ and an estimation of the bound for each selection of $p$. For concrete technical details, see Appendix~\ref{app:additional_experiments}.

As can be seen, for appropriate selections of $p$, we obtained non-trivial estimates to the test performance of the models, which are almost unheard of when it comes to standard bounds for deep neural networks. As expected, if the guess $p$ is overoptimistic (e.g., close to $\E_{S_1,\gamma}[\textnormal{err}_{P}(h^{\gamma}_{S_1})]$), then, the first term in the bound tends to be large compared to $\E_{S_1,\gamma}[\textnormal{err}_{P}(h^{\gamma}_{S_1})]$. 

Following that, given that the $\epsilon$-effective depth of sufficiently deep neural networks is generally insensitive to depth (see Fig.~\ref{fig:effective_depths}), we expect the bound to be insensitive to depth as well. We estimate the bound in \eqref{eq:bound} for CONV-$L$-50 trained on MNIST and CONV-$L$-100 trained on Fashion MNIST and CIFAR10 with $L=10, 12, 15$ for the first two and with $L=15, 18, 20$ for CIFAR10. As shown in Tab.~\ref{tab:GenBoundEst_perDepth}, we obtain similar bounds for each depth. Finally, we compare our bound to several baseline generalization bounds for deep networks to show that it outperforms traditional generalization bounds. We used the implementation of~\cite{Neyshabur2019TowardsUT} to compute the bounds. While our bound is non-vacuous and generally independent of depth, the traditional bounds are extremely vacuous and rapidly grow when increasing the depth. 


\section{Conclusions}\label{sec:conclusions}

Understanding the ability of SGD to generalize well when training overparameterized neural network is attributed as one of the major open problems in deep learning theory~\citep{zhang2017understanding}. In this paper we offer a new angle to study the role of depth in deep learning and the connection between neural collapse and generalization.

We characterize a notion of effective depth that measures the lowest layer that enjoys NCC separability. We introduce a novel generalization bound that measures the likelihood in which the effective depth of a trained neural network is (strictly) smaller than the minimal depth required to achieve NCC separability with partially corrupted labels. This criterion, as demonstrated empirically, is a good predictor of generalization. Furthermore, we characterize and empirically demonstrate that when sufficiently deep networks are trained, they converge to the same effective depth, implying that our bound does not worsen as the depth increases.

We hope that this work will spark further research into the generalization bounds discussed in this paper. It would be interesting to see if these bounds could be improved by replacing the effective depth as defined in this paper with a different notion of complexity. It would also be interesting to investigate the mathematical conditions under which Obs.~\ref{obs:1} holds.

\clearpage

\section*{Acknowledgements}

This work was supported by the Center for Brains, Minds and Machines (CBMM), funded by NSF STC award  CCF – 1231216.

The authors would like to thank Tomaso Poggio, Andr\'as Gy\"orgy, Lior Wolf, Andrzej Banburski, X. Y. Han and Shai Dekel for illuminating discussions during the preparation of this manuscript. 

\bibliographystyle{iclr2023_conference}
\bibliography{main}

\newpage

\appendix

\section{Additional Experiments}\label{app:additional_experiments}

\subsection{Estimating the Generalization Bound}\label{sec:est}


In this section we describe how we empirically estimate the bound in Prop.~\ref{prop:bound}. 


{\bf Estimating the bound.\enspace} We would like to estimate the first term in the bound,
\begin{equation}
\mathbb{P}_{S_1,S_2,\tiY_2}\left[\E_{\gamma}[\cd^{\epsilon}_{S_1}(h^{\gamma}_{S_1})] \geq \cd^{\epsilon}_{\min}(\cG,S_1\cup \tiS_2)\right].   
\end{equation}
According to Prop.~\ref{prop:bound2} in order to estimate this term we need to generate i.i.d. triplets $(S^i_1,S^i_2,\tiY^i_2)$. Since we have a limixted access to training data, we use a variation of cross-validation and generate $k_1=5$ i.i.d. disjoint splits $(S^i_1,S^i_2)$ of the training data $S$. For each one of these pairs, we generate $k_2=3$ corrupted labelings $\tiY^{ij}_2$. We denote by $\tiS^{ij}_2$ the set obtained by replacing the labels of $S^i_2$ with $\tiY^{ij}_2$ and $\tiS^{ij}_{3} := S^i_1 \cup \tiS^{ij}_2$. 

As a first step, we would like to estimate $\E_{\gamma}[\cd^{\epsilon}_{S^i_1}(h^{\gamma}_{S^i_1})]$ for each $i \in [k_1]$. For this purpose, we randomly select $T_1=5$ different initializations $\gamma_1,\dots,\gamma_{T_1}$ and for each one, we train the model $h^{\gamma_t}_{S^i_1}$ using the training protocol described in Sec.~4.1. Once trained, we compute $\cd^{\epsilon}_{S_1}(h^{\gamma_t}_{S^i_1})$ for each $t \in [T_1]$ (see Def.~1) and approximate $\E_{\gamma}[\cd^{\epsilon}_{S^i_1}(h^{\gamma}_{S^i_1})]$ using $d_i := \frac{1}{T_1} \sum^{T_1}_{t=1}\cd^{\epsilon}_{S^i_1}(h^{\gamma_t}_{S^i_1})$. 

As a next step, we would like to evaluate $\bI[d_i \geq \cd^{\epsilon}_{\min}(\cG, \tiS^{ij}_3)]$. We notice that $d_i \geq \cd^{\epsilon}_{\min}(\cG, S^i_1 \cup \tiS^i_2)$ if and only if there is a $d_i$-layered neural network $f = g^{d_i} \circ \dots \circ g^1$ for which $\textnormal{err}_{\tiS^{ij}_3}(\hat{h}) \leq \epsilon$, where $\hat{h}(x) := \argmin_{c \in [C]} \|f(x)-\mu_{f}(S_c)\|$. In general, computing this boolean value is computationally hard. Therefore, to estimate this boolean value, we simply train a $(d_i+1)$-layered network $h = e \circ f$ and check whether its penultimate layer is $\epsilon$-NCC separable, i.e., $\textnormal{err}_{\tiS^{ij}_3}(\hat{h}) \leq \epsilon$, where $\hat{h}(x) := \argmin_{c \in [C]} \|f(x)-\mu_{f}(S_c)\|$. If SGD implicitly optimizes neural networks to maximize NCC separability as observed in~\citep{Papyan24652} (and also in this paper), we should expect to obtain $\epsilon$-NCC separability in the penultimate layer if that is possible with a $d_i$-layered network. Since training might be non-optimal, to obtain a robust estimation, we train $T_2=5$ models $h_t = e_t \circ f_t$ of depth $d_i+1$ and pick the one with the best NCC separability in its penultimate layer. Namely, we replace $\cd^{\epsilon}_{\min}(\cG, \tiS^{ij}_3)$ with $\min_{t \in [T_2]} \cd^{\epsilon}_{\tiS^{ij}_3}(h_t)$ and estimate $\bI [d_i \geq \cd^{\epsilon}_{\min}(\cG, \tiS^{ij}_3)]$ using $\bI[d_i \geq \min_{t \in [T_2]} \cd^{\epsilon}_{\tiS^{ij}_3}(h_t)]$.

Our final estimation is the following
\begin{equation}
\label{eqn:prob_term_estimate}
\frac{1}{k_1}\sum^{k_1}_{i=1}\frac{1}{k_2} \sum^{k_2}_{j=1}\bI\left[d_i \geq \min_{t \in [T_2]} \cd^{\epsilon}_{\tiS^{ij}_3}(h_t)\right] \approx \mathbb{P}_{S_1,S_2,\tiY_2}\left[\E_{\gamma}[\cd^{\epsilon}_{S_1}(h^{\gamma}_{S_1})] \geq \cd^{\epsilon}_{\min}(\cG,S_1\cup \tiS_2)\right].
\end{equation}
In order to estimate the bound we assume that $\delta^1_{m}$ and $\delta^{2}_{m,p,\alpha}$ are negligible constants and that $\alpha=1$. The estimation of the bound is given by the sum of the left hand side in \eqref{eqn:prob_term_estimate} and $p$.

{\bf Estimating the mean test error.\enspace} To estimate the mean test error, $\E_{S_1,\gamma}[\textnormal{err}_{P}(h^{\gamma}_{S_1})]$, as typically done in machine learning, we replace the population distribution $P$ with the test set $S_{test}$ and we replace the expectation over $S_1$ and $\gamma$ with averages across the $k_1=5$ random selections of $\{S^i_1\}^{k_1}_{i=1}$ and $T_1=5$ random selections of $\{\gamma_t\}^{T_1}_{t=1}$. Namely, we compute the following $\frac{1}{k_1} \sum^{k_1}_{i=1} \frac{1}{T_1} \sum^{T_1}_{t=1} \textnormal{err}_{S_{test}}(h^{\gamma_t}_{S^i_1}) \approx \E_{S_1,\gamma}[\textnormal{err}_{P}(h^{\gamma}_{S_1})]$.


\subsection{Neural Collapse} To obtain a comprehensive analysis of collapse across layers, we also estimate the degree of NC1. 

To evaluate NC1, we follow the process suggested by~\cite{galanti2022on}, which is a simplified version of the original approach of~\cite{Papyan24652}. For a feature map $f:\R^d \to \R^p$ and two (class-conditional) distributions\footnote{The definition can be extended to finite sets $S_1, S_2 \subset \cX$ by defining $V_f(S_1,S_2)=V_f(U[S_1],U[S_2])$.} $Q_1,Q_2$ over $\cX \subset \R^d$, we define their \emph{class-distance normalized variance} (CDNV) to be
\begin{align*}
V_f(Q_1,Q_2) ~:=~ \frac{\Var_f(Q_1) + \Var_f(Q_2)}{2\|\mu_f(Q_1)-\mu_f(Q_2)\|^2}, \\[-0.7cm]
\end{align*}
where $\mu_u(Q) := \E_{x \sim Q}[u(x)]$ and by $\Var_u(Q) := \E_{x \sim Q}[\|u(x)-\mu_u(Q)\|^2]$ the mean and variance of $u(x)$ for $x\sim Q$. Essentially, this quantity measures to what extent the feature vectors of samples from $Q_1$ and $Q_2$ are separated and clustered in space. 

To demonstrate the gradual evolution of collapse across the layers, for each sub-architecture $f^i = g^{i} \circ \dots \circ g^1(x)$ we consider the train and test class features variations $\Avg_{c\neq c'}[V_{f^i}(S_c,S_{c'})]$ and $\Avg_{c\neq c'}[V_{f^i}(P_c,P_{c'})]$. The population distribution of each class, $P_c$, is replaced with the test samples of that class.


As shown by~\cite{galanti2022on}, this definition is essentially the same as that of~\cite{Papyan24652}. Furthermore, they showed that the NCC classification error rate can be upper bounded in terms of the CDNV. However, the NCC error can be zero in cases where the CDNV is larger than zero. For example, if the two classes are uniformly distributed over the 1-radius circles around the points $(-1,0)$ and $(1,0)$ in $\R^2$, then they are perfectly NCC separable while the CDNV between the two distributions is 0.25.



{\bf Auxiliary experiments on the effective depth.\enspace} In Figs.~\ref{fig:cifar10_convnet_400_ext}-\ref{fig:fashion_mlp_100} we plot the CDNV and the NCC accuracy rates of neural networks with varying numbers of hidden layers evaluated on the train and test data. Each curve stands for a different layer within the network. As can be seen, in all cases, for networks deeper than a threshold we obtain (near perfect) NCC separability in all of the top layers. Furthermore, the degree of neural collapse seems to improve with the network's depth.

{\bf Auxiliary experiments with noisy labels.\enspace} In Figs.~\ref{fig:cifar10_noisy_conv_appendix}-\ref{fig:fashion_noisy_conv} we repeat the experiment in Fig.~\ref{fig:cifar10_noisy_conv} and plot the results of the same experiment, with different networks and datasets (see captions). As can be seen, the effective NCC depth of a neural network tends to increase as we train with increasing amounts of corrupted labels.

\begin{figure*}[t]
  \centering
  \begin{tabular}{c@{}c@{}c@{}c@{}c}
    \hline \\
     && {\small\bf CDNV - Train} && \\
     \includegraphics[width=0.2\linewidth]{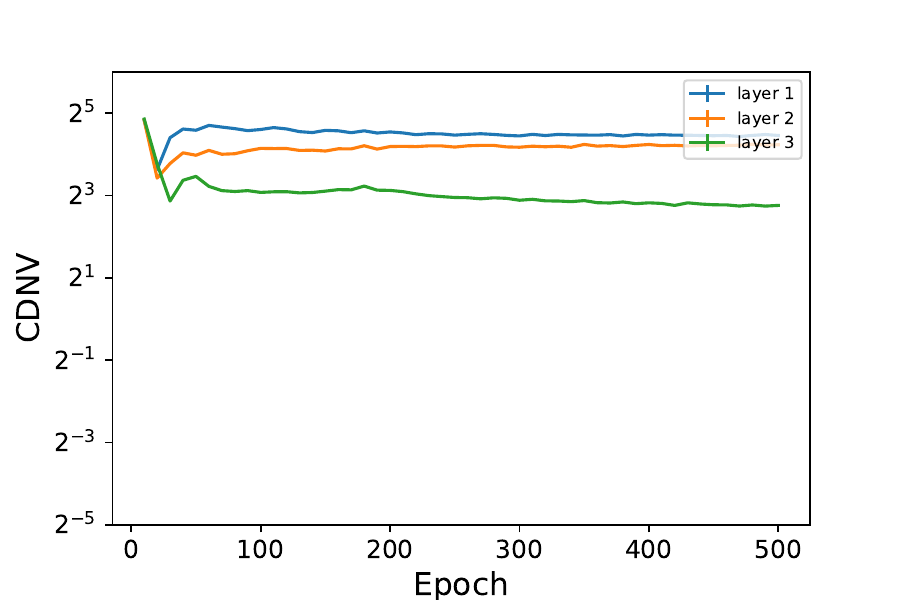}  & \includegraphics[width=0.2\linewidth]{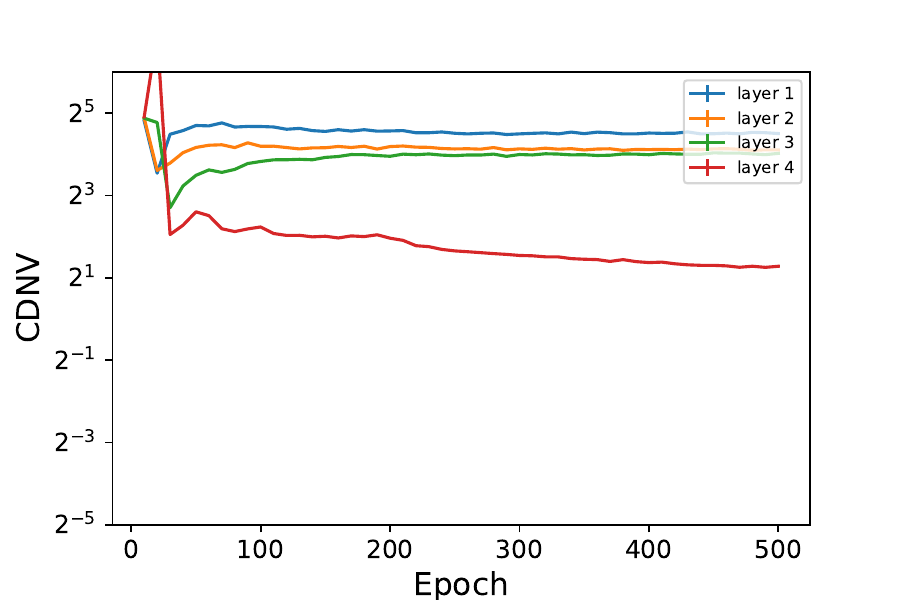}  & \includegraphics[width=0.2\linewidth]{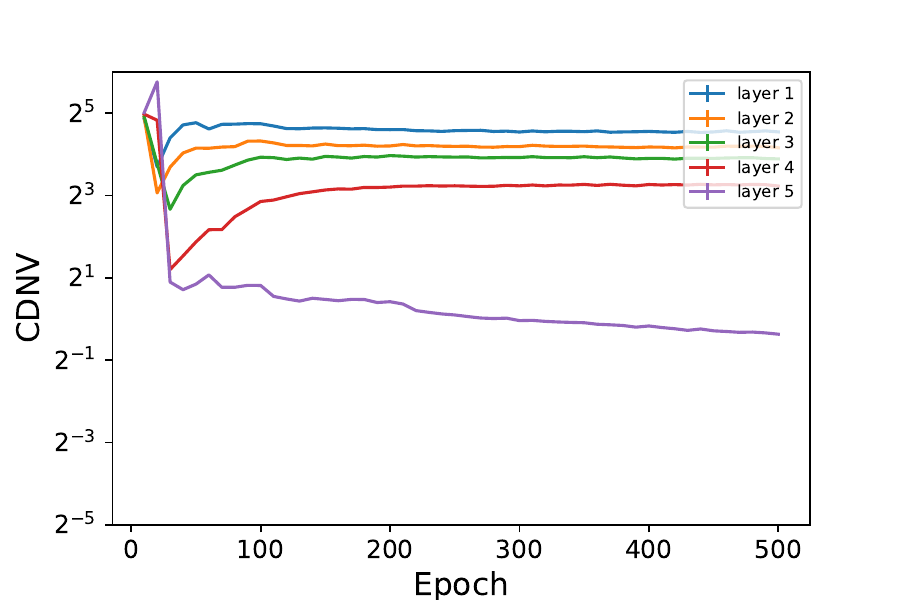}  &
     \includegraphics[width=0.2\linewidth]{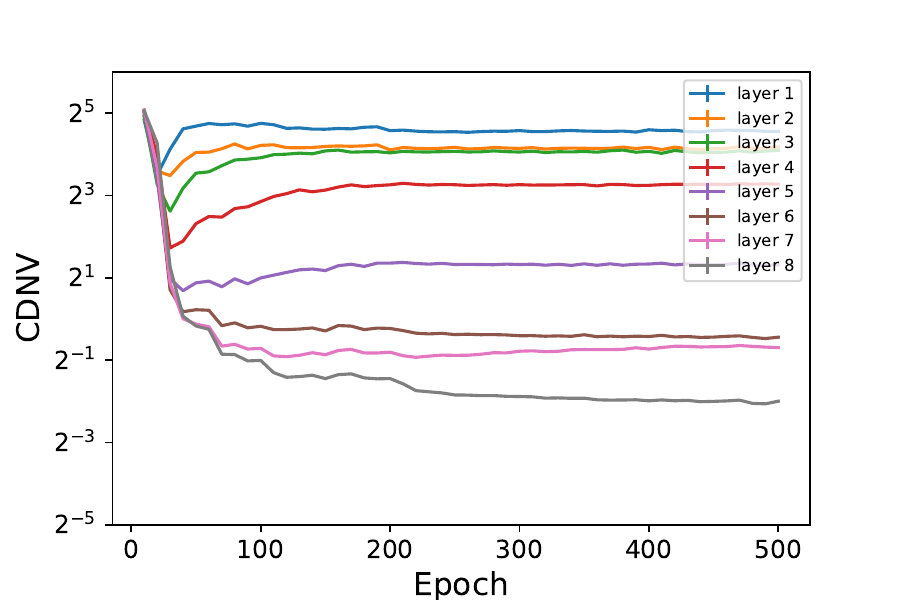}  &
     \includegraphics[width=0.2\linewidth]{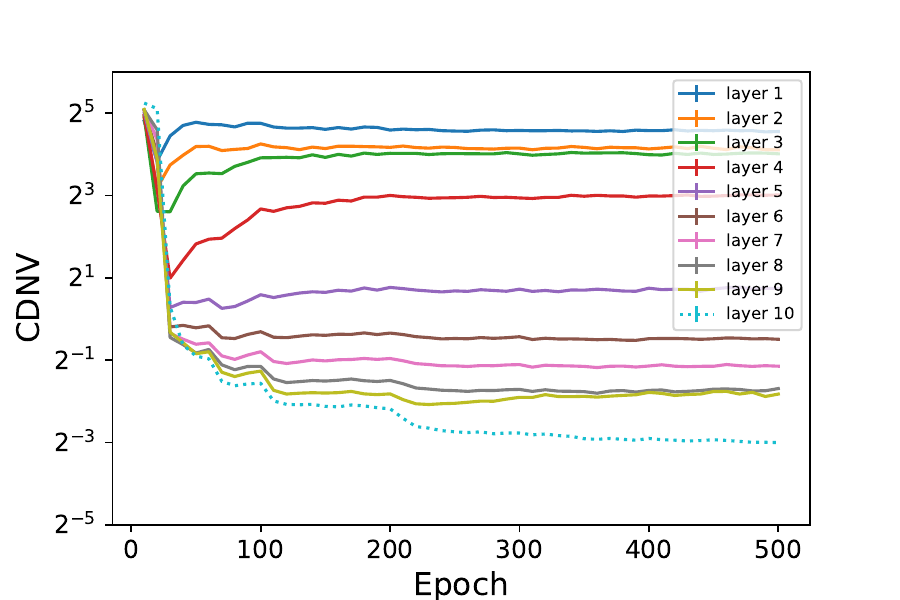}
     \\
     {\small$3$ layers}  & {\small$4$ layers} & {\small$5$ layers} & {\small$8$ layers} & {\small$10$ layers} \\
     \includegraphics[width=0.2\linewidth]{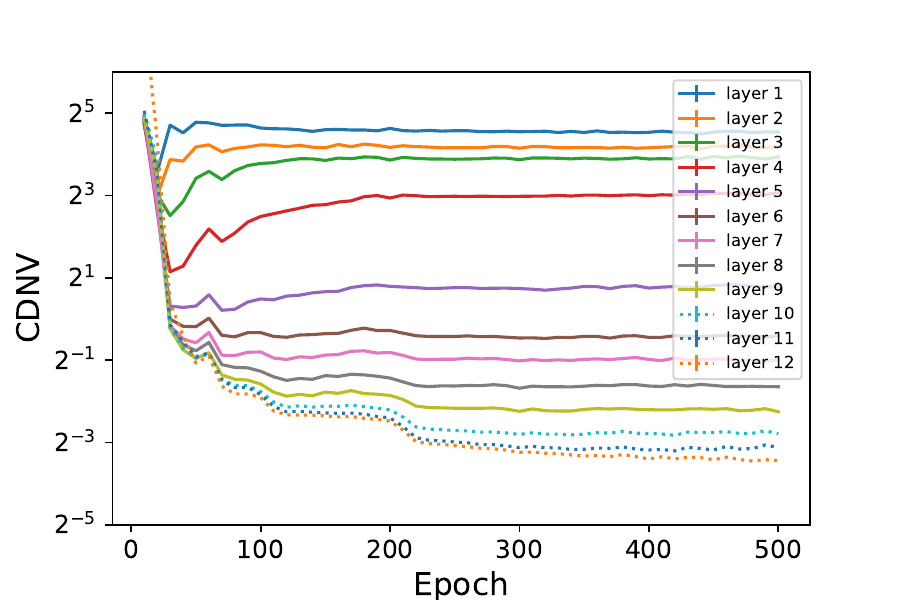}  & \includegraphics[width=0.2\linewidth]{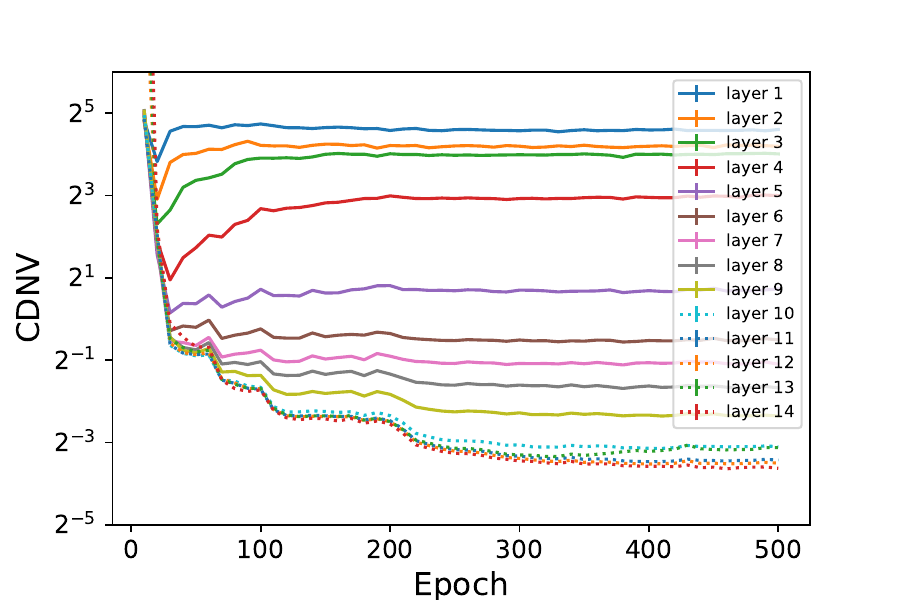}  & \includegraphics[width=0.2\linewidth]{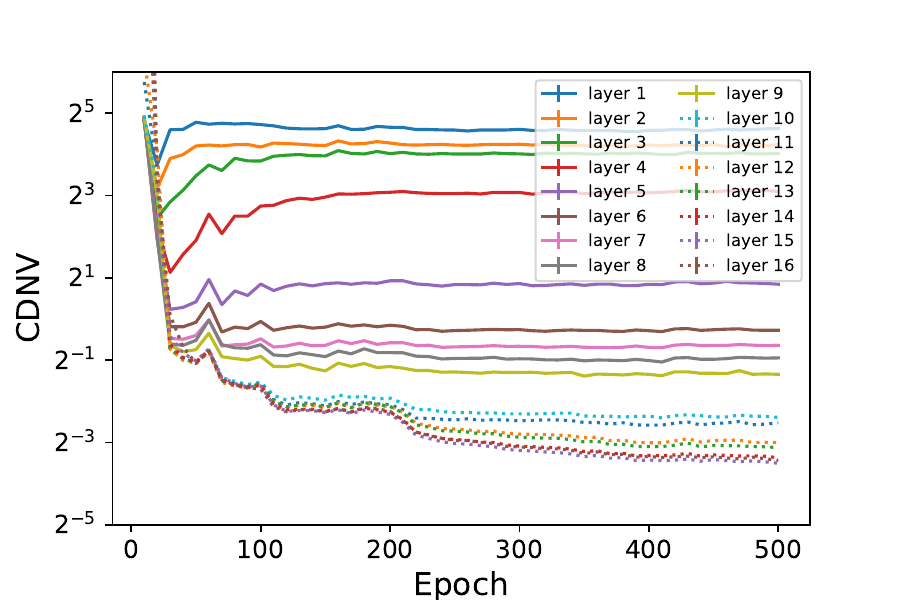}  &
     \includegraphics[width=0.2\linewidth]{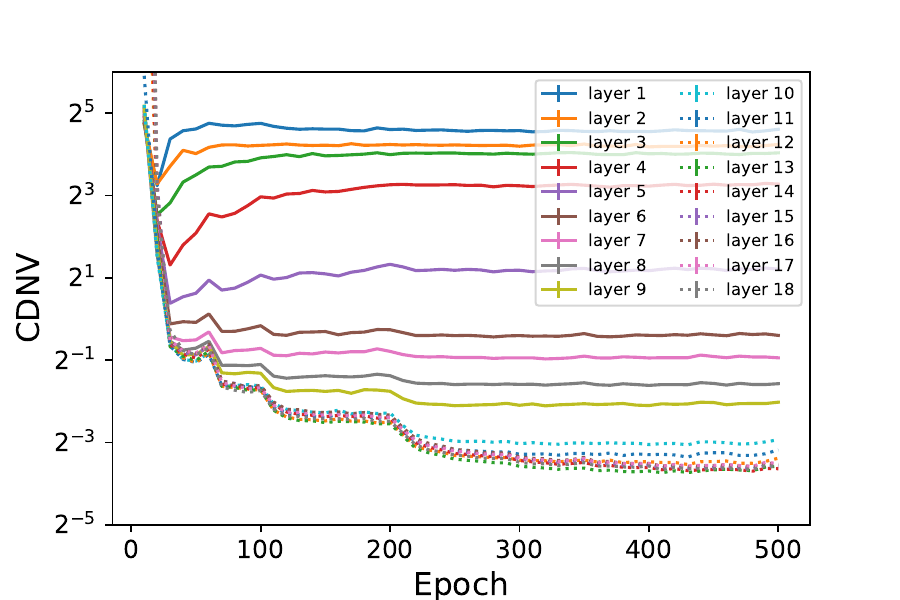}  &
     \includegraphics[width=0.2\linewidth]{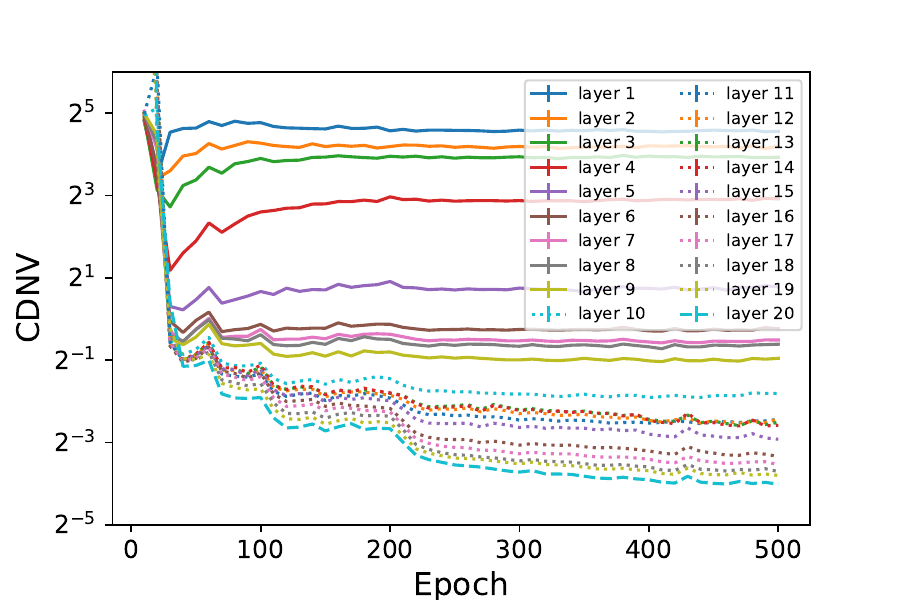}
     \\
     {\small$12$ layers}  & {\small$14$ layers} & {\small$16$ layers} & {\small$18$ layers} & {\small$20$ layers} \\
     \hline \\
     && {\small\bf NCC train accuracy} && \\
     \includegraphics[width=0.2\linewidth]{plots/cifar10/convnet_width_400/train_emb_acc_ncc_3_0.pdf}  & \includegraphics[width=0.2\linewidth]{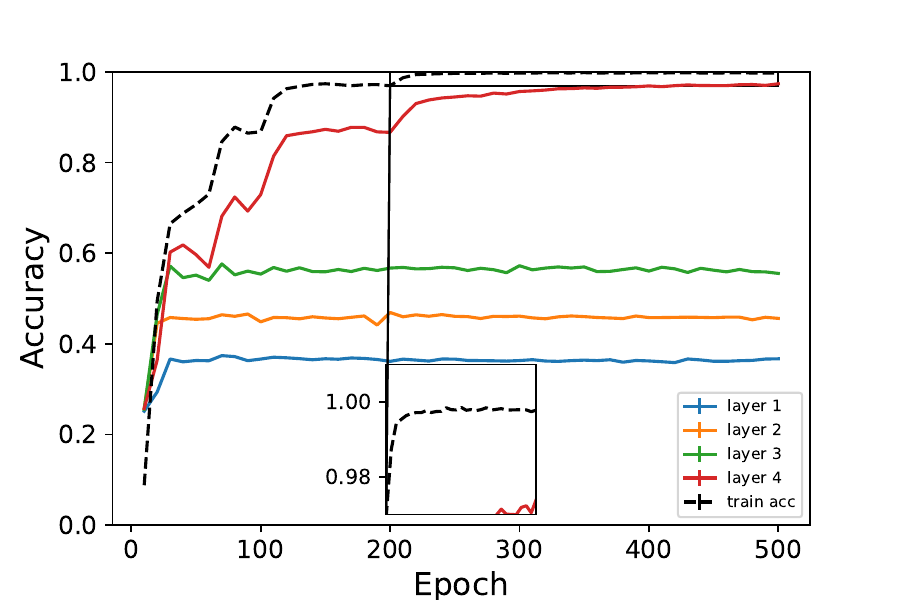} & \includegraphics[width=0.2\linewidth]{plots/cifar10/convnet_width_400/train_emb_acc_ncc_5_0.pdf}  &
     \includegraphics[width=0.2\linewidth]{plots/cifar10/convnet_width_400/train_emb_acc_ncc_8.pdf}  &
     \includegraphics[width=0.2\linewidth]{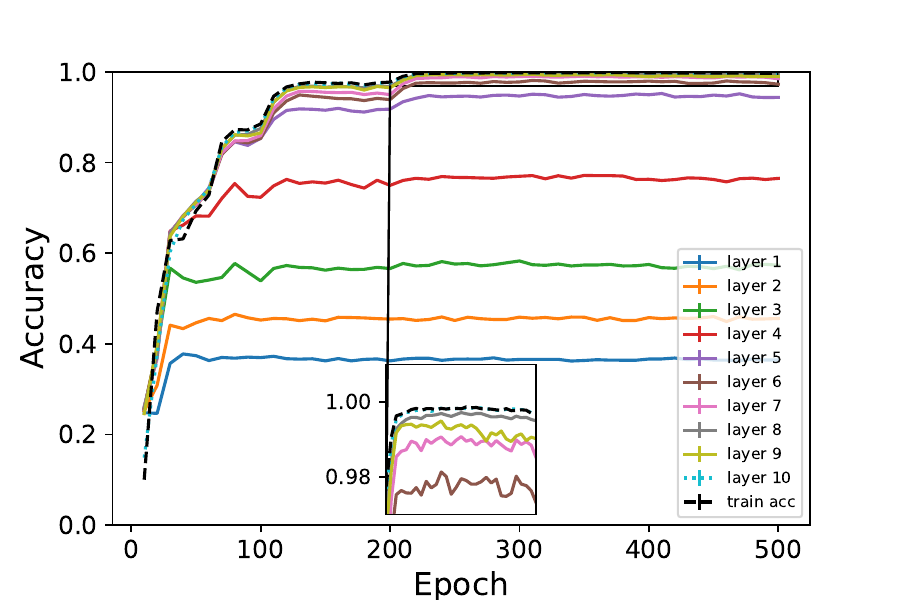} 
     \\
     {\small$3$ layers}  & {\small$4$ layers} & {\small$5$ layers} & {\small$8$ layers} & {\small$10$ layers} \\
    \includegraphics[width=0.2\linewidth]{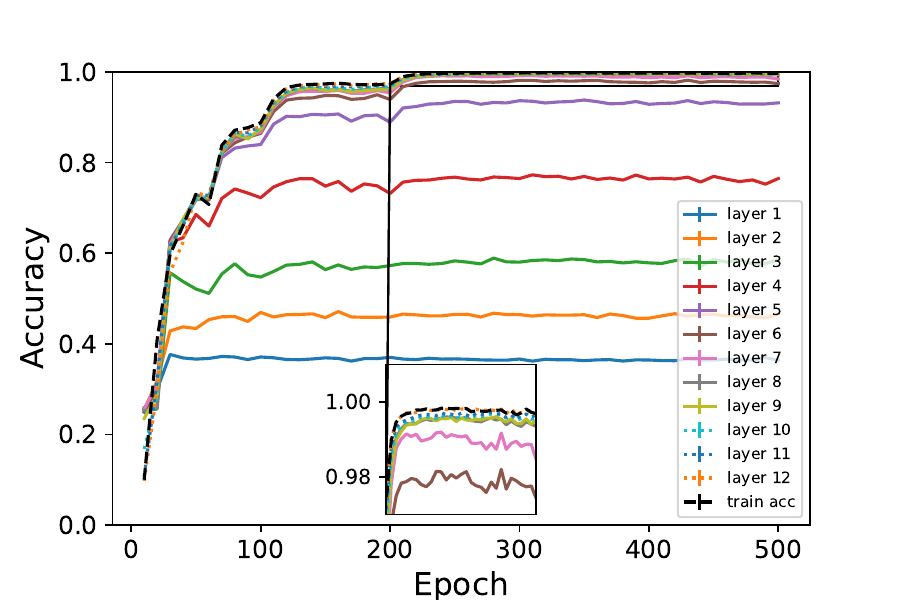}  & \includegraphics[width=0.2\linewidth]{plots/cifar10/convnet_width_400/train_emb_acc_ncc_14_0.pdf} & \includegraphics[width=0.2\linewidth]{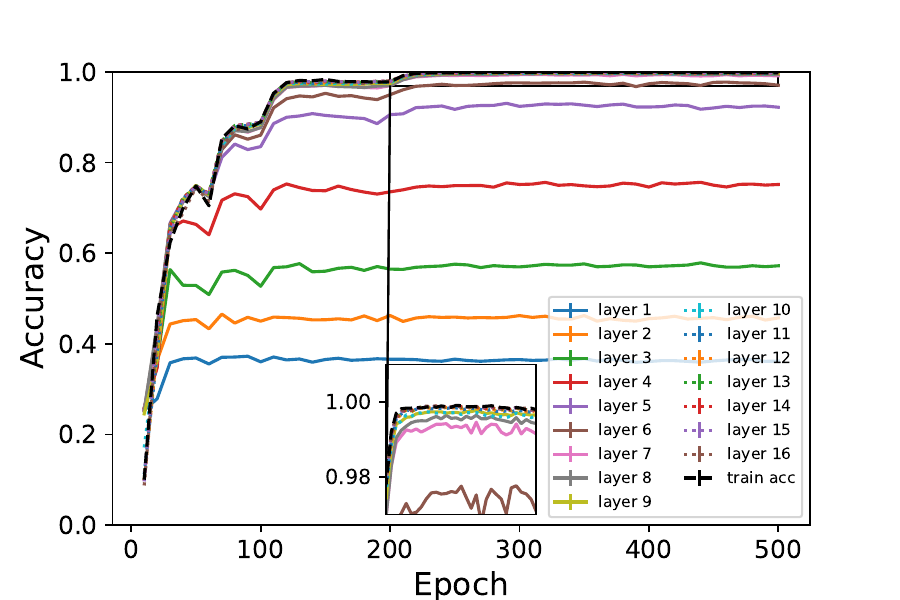}  &
     \includegraphics[width=0.2\linewidth]{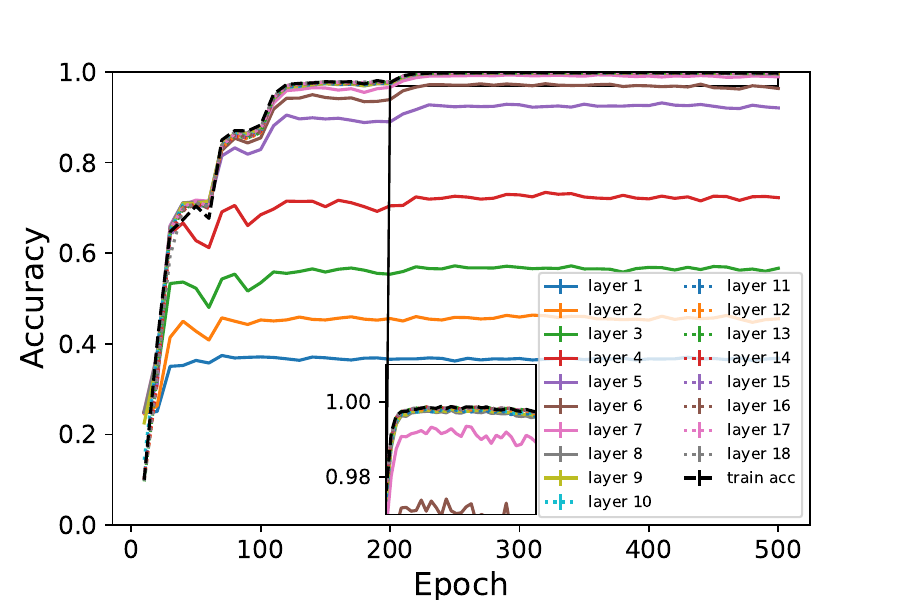}  &
     \includegraphics[width=0.2\linewidth]{plots/cifar10/convnet_width_400/train_emb_acc_ncc_20_0.pdf} 
     \\
     {\small$12$ layers}  & {\small$14$ layers} & {\small$16$ layers} & {\small$18$ layers} & {\small$20$ layers} \\
     \hline \\
     && {\small\bf CDNV - Test} && \\
  \includegraphics[width=0.2\linewidth]{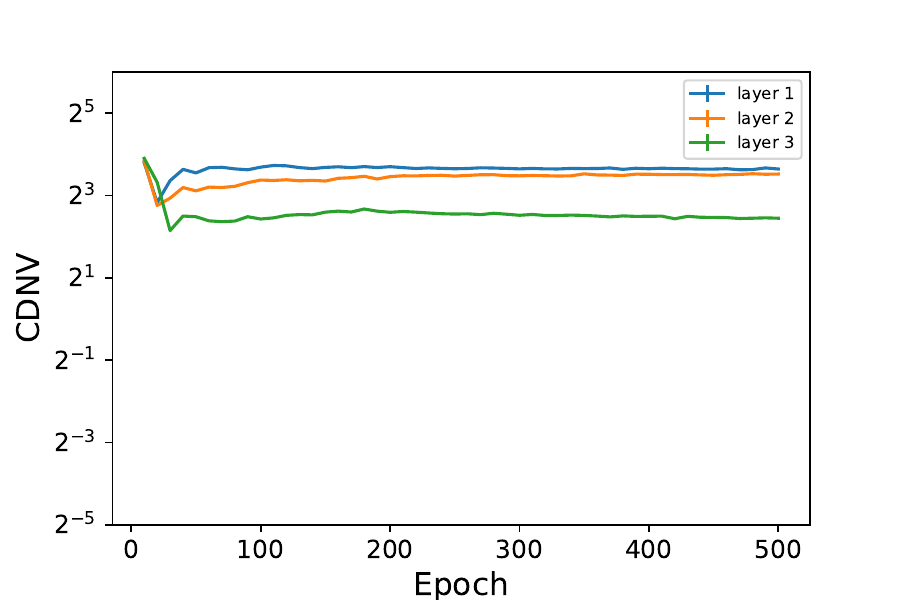}  & \includegraphics[width=0.2\linewidth]{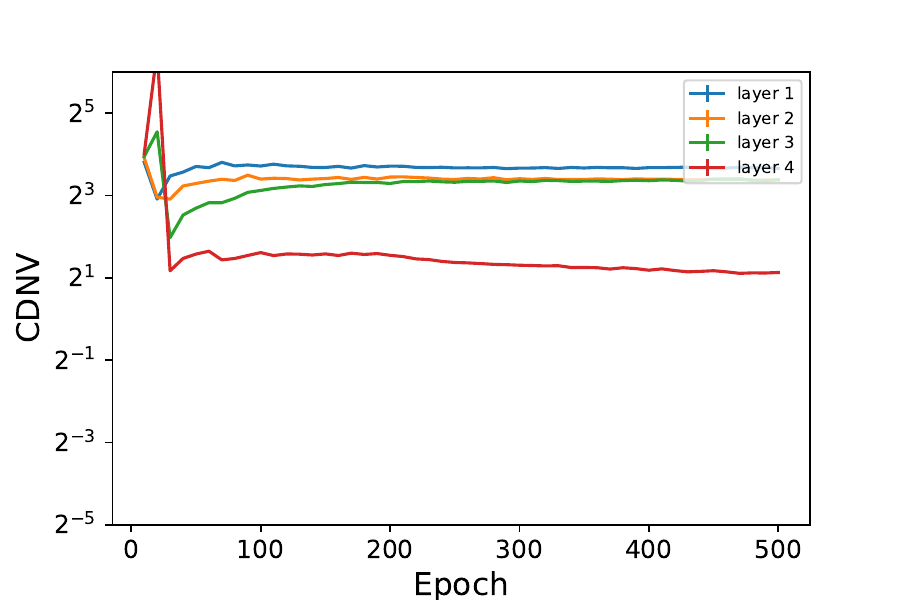}  & \includegraphics[width=0.2\linewidth]{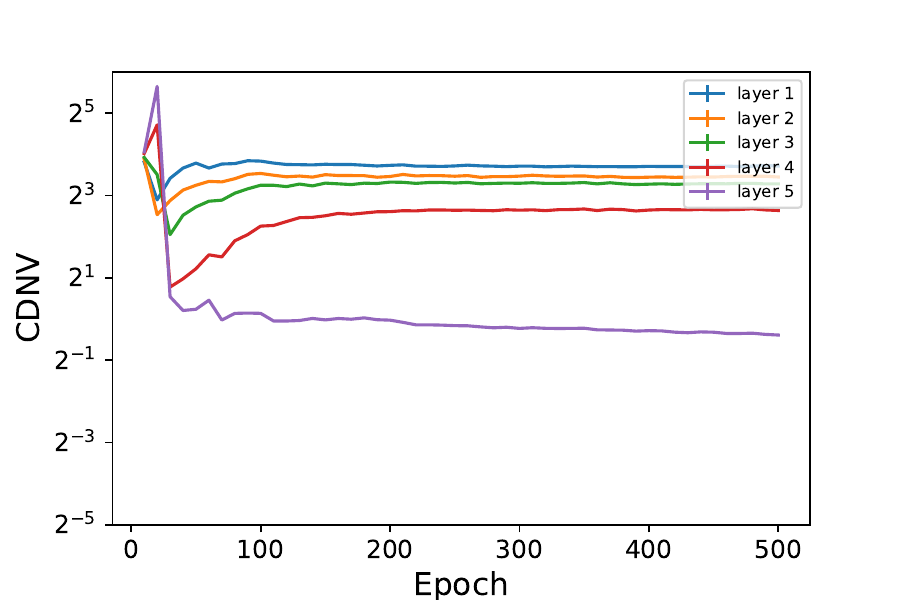}  &
     \includegraphics[width=0.2\linewidth]{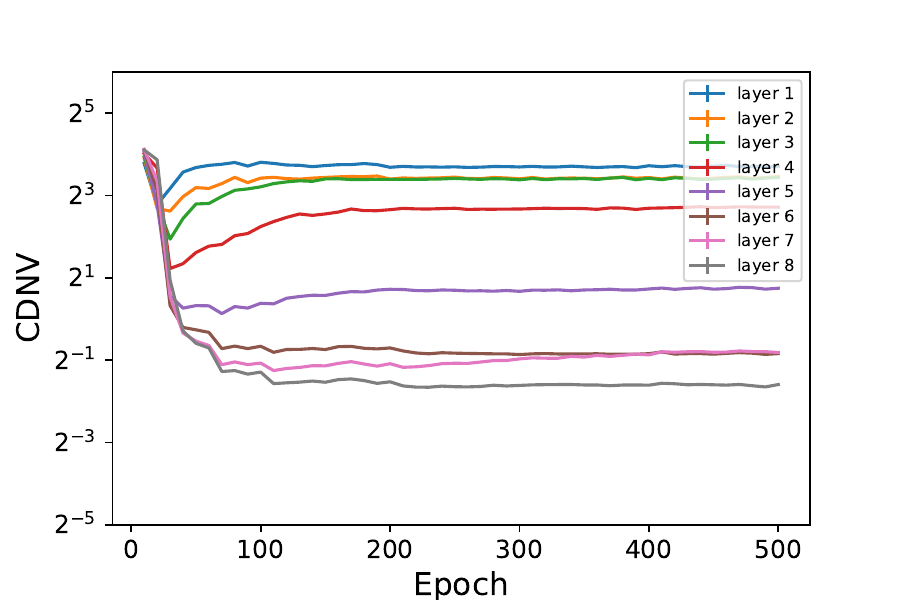}  &
     \includegraphics[width=0.2\linewidth]{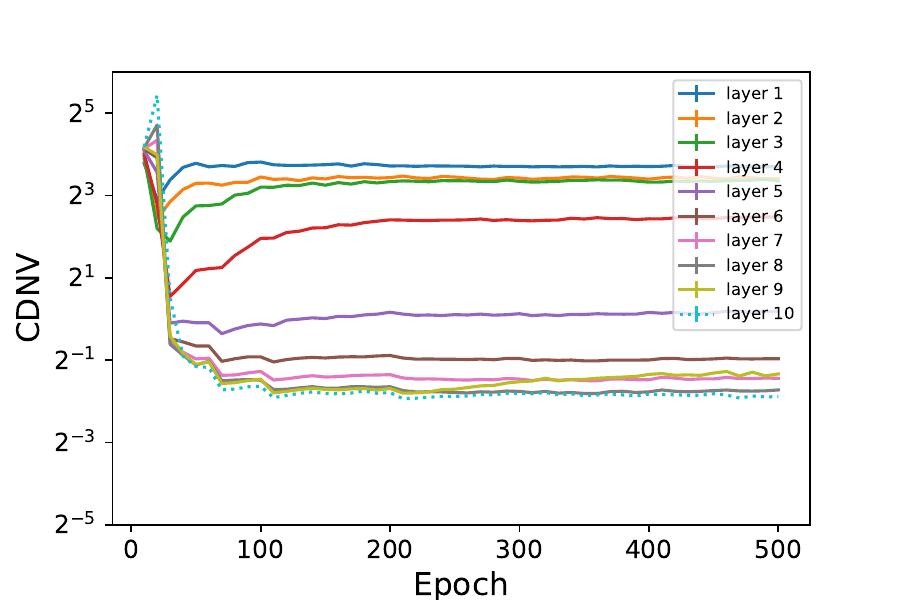}
     \\
     {\small$3$ layers}  & {\small$4$ layers} & {\small$5$ layers} & {\small$8$ layers} & {\small$10$ layers} \\
     \includegraphics[width=0.2\linewidth]{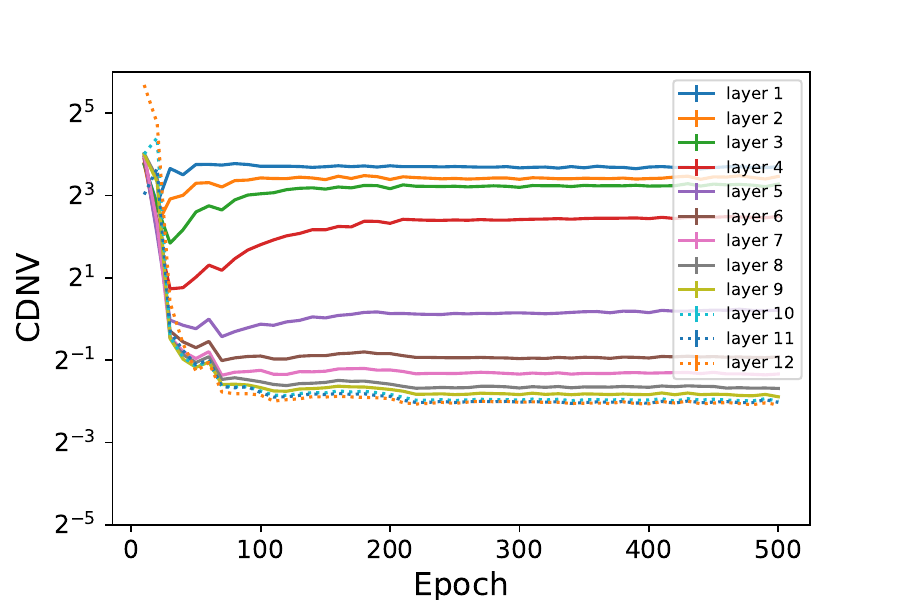}  & \includegraphics[width=0.2\linewidth]{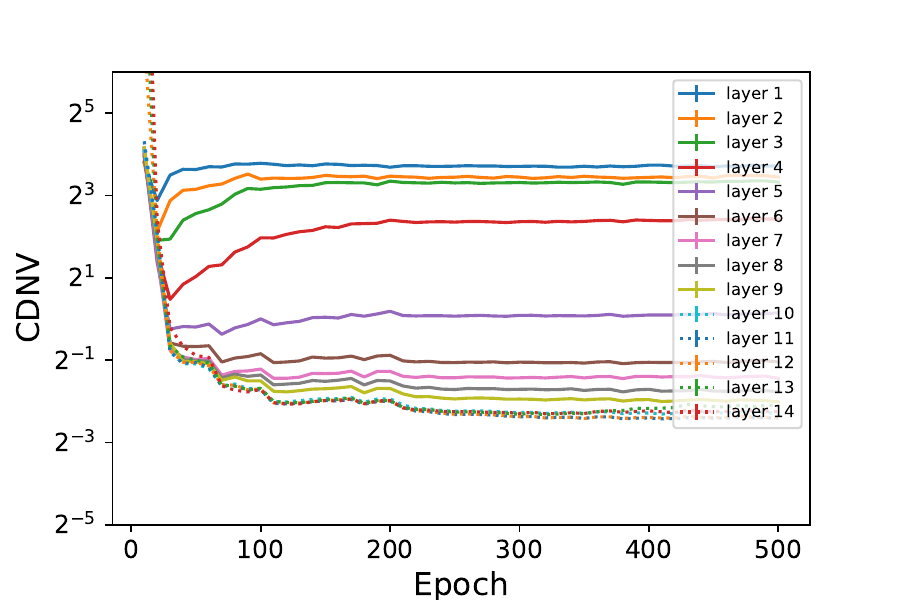}  & \includegraphics[width=0.2\linewidth]{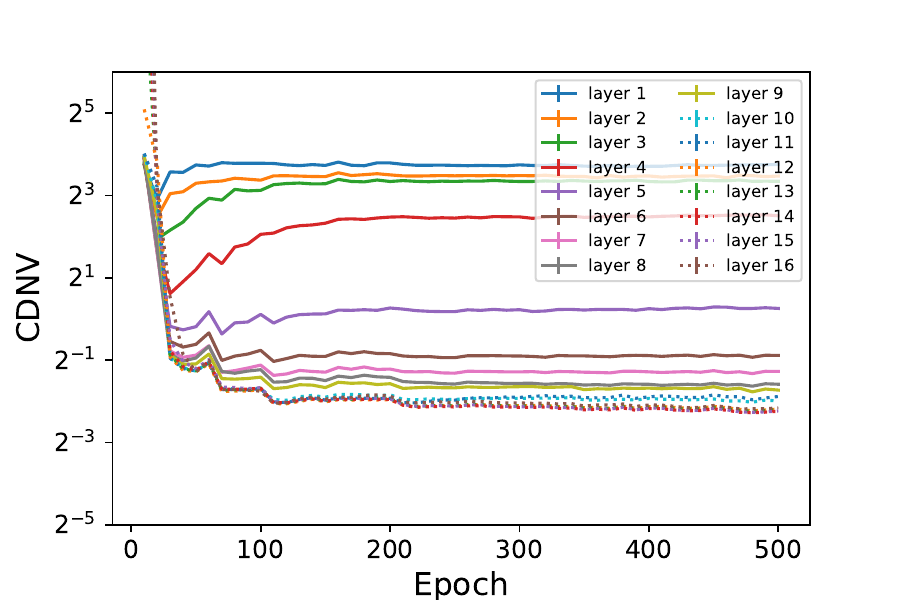}  &
     \includegraphics[width=0.2\linewidth]{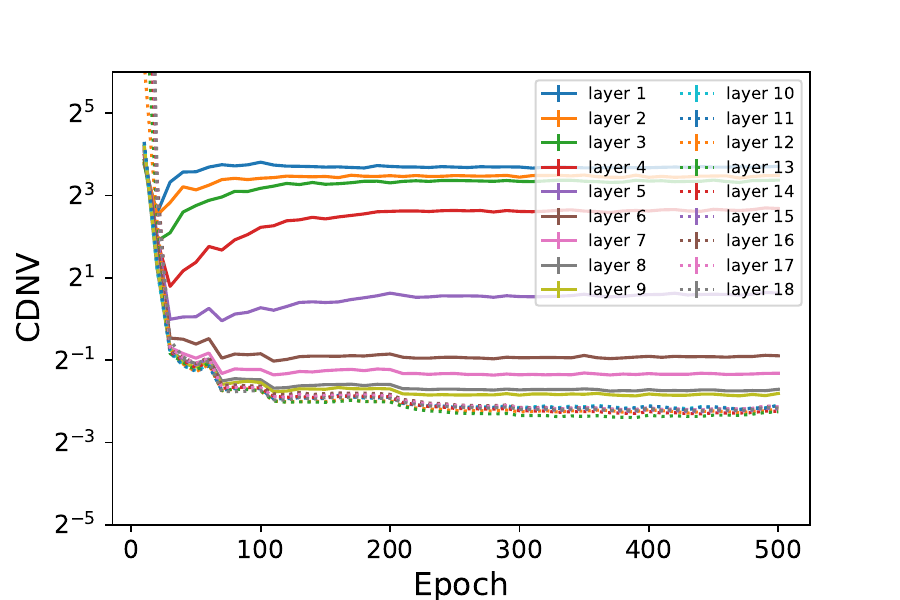}  &
     \includegraphics[width=0.2\linewidth]{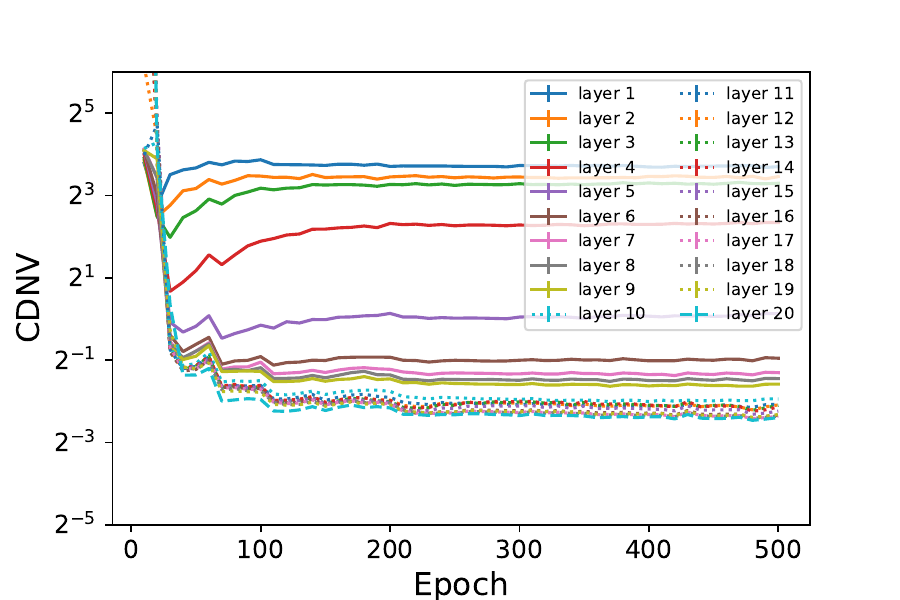}
     \\
     {\small$12$ layers}  & {\small$14$ layers} & {\small$16$ layers} & {\small$18$ layers} & {\small$20$ layers} \\
     \hline \\
     && {\small\bf NCC test accuracy} && \\
     \includegraphics[width=0.2\linewidth]{plots/cifar10/convnet_width_400/test_emb_acc_ncc_3_0.pdf}  & \includegraphics[width=0.2\linewidth]{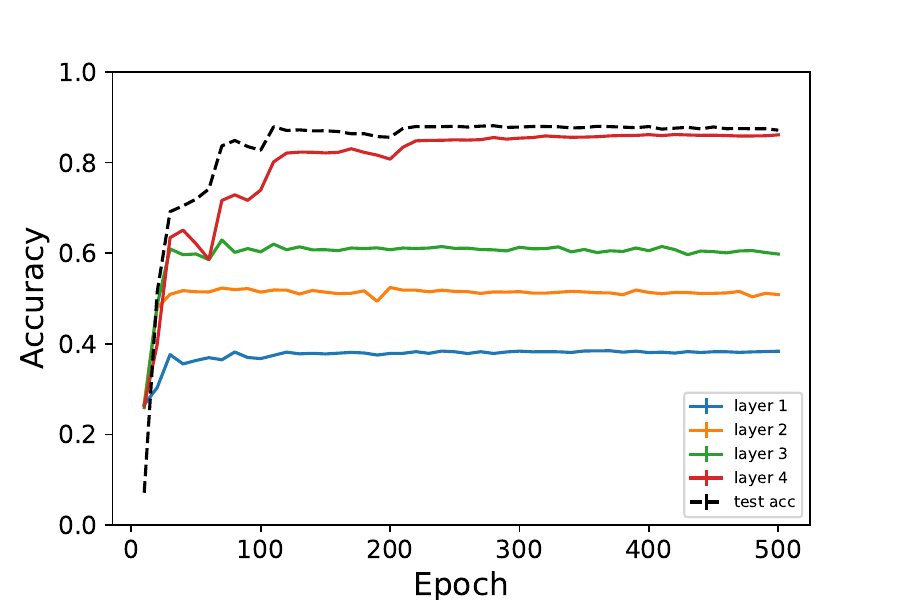} & \includegraphics[width=0.2\linewidth]{plots/cifar10/convnet_width_400/test_emb_acc_ncc_5_0.pdf}  &
     \includegraphics[width=0.2\linewidth]{plots/cifar10/convnet_width_400/test_emb_acc_ncc_8.pdf}  &
     \includegraphics[width=0.2\linewidth]{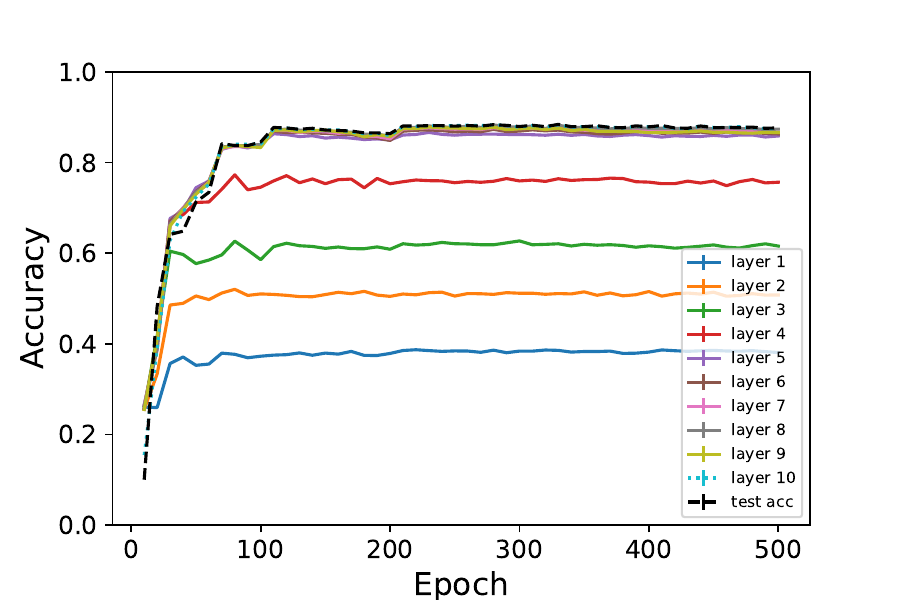} 
     \\
     {\small$3$ layers}  & {\small$4$ layers} & {\small$5$ layers} & {\small$8$ layers} & {\small$10$ layers} \\
    \includegraphics[width=0.2\linewidth]{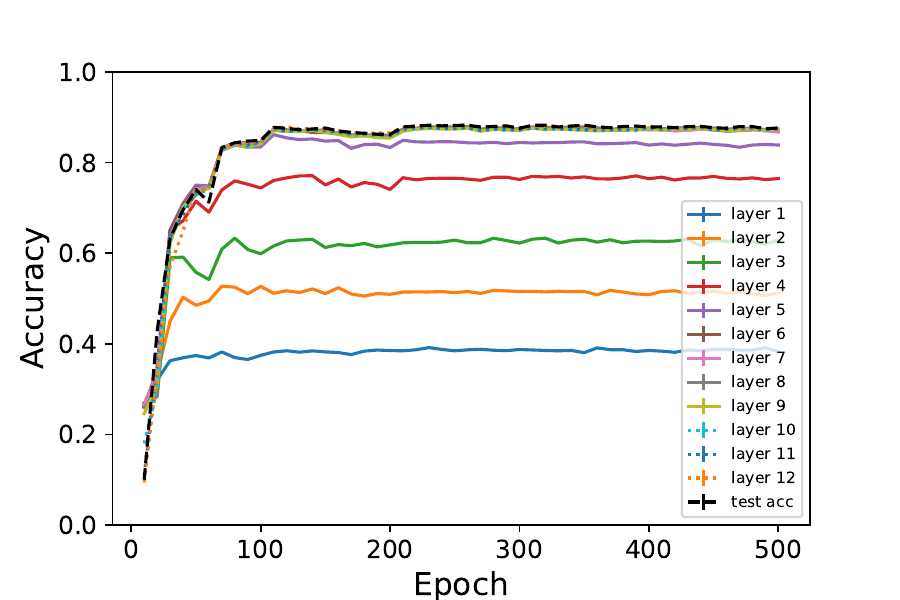}  & \includegraphics[width=0.2\linewidth]{plots/cifar10/convnet_width_400/test_emb_acc_ncc_14_0.pdf} & \includegraphics[width=0.2\linewidth]{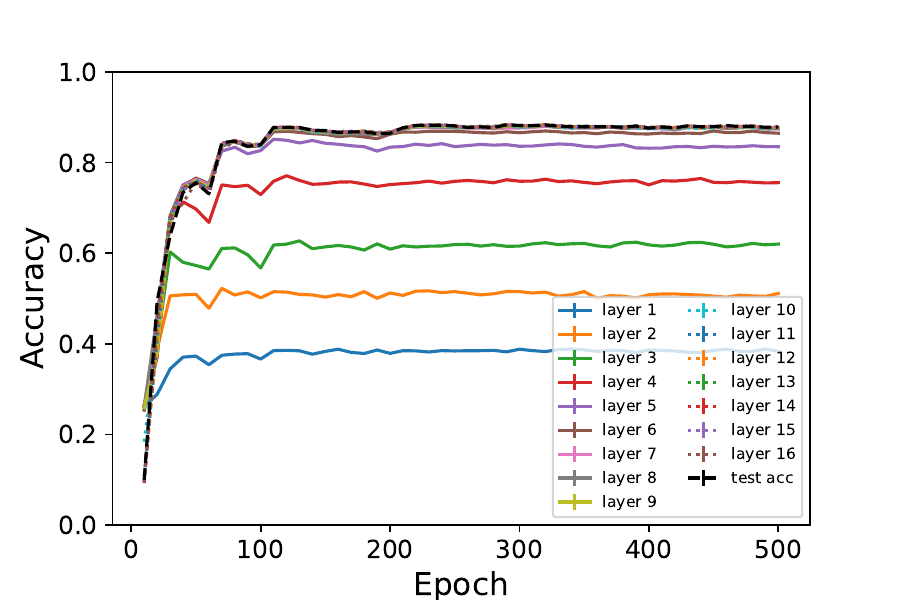}  &
     \includegraphics[width=0.2\linewidth]{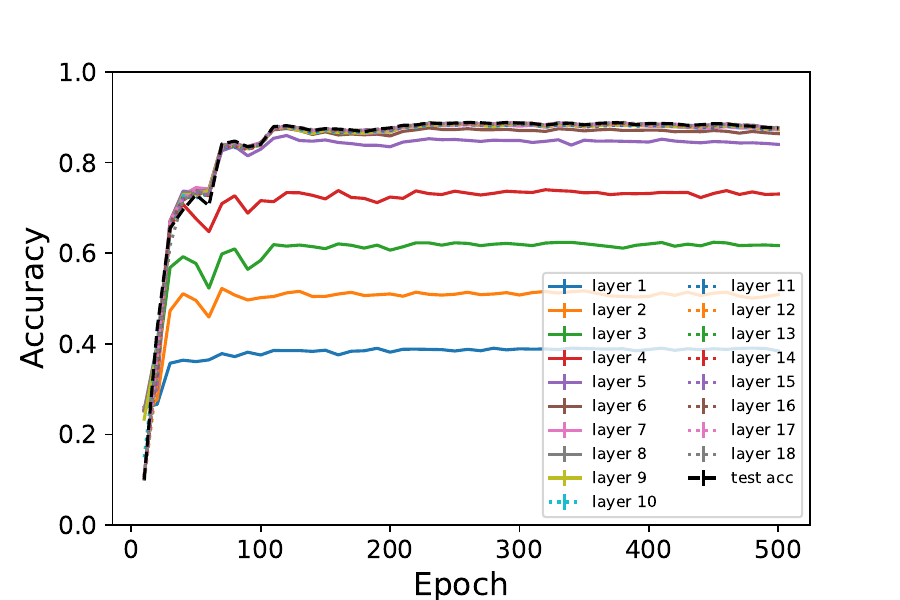}  &
     \includegraphics[width=0.2\linewidth]{plots/cifar10/convnet_width_400/test_emb_acc_ncc_20_0.pdf} 
     \\
     {\small$12$ layers}  & {\small$14$ layers} & {\small$16$ layers} & {\small$18$ layers} & {\small$20$ layers} \\
     \hline
  \end{tabular}
  
 \caption{{\bf Intermediate neural collapse of CONV-$L$-400 trained on CIFAR10.} See Fig.~\ref{fig:cifar10_convnet_400} in the main text for details.}
 \label{fig:cifar10_convnet_400_ext}
\end{figure*}

\begin{figure*}[t]
  \centering
  \begin{tabular}{c@{}c@{}c@{}c@{}c}
    \hline \\
     && {\small\bf CDNV - Train} && \\
     \includegraphics[width=0.2\linewidth]{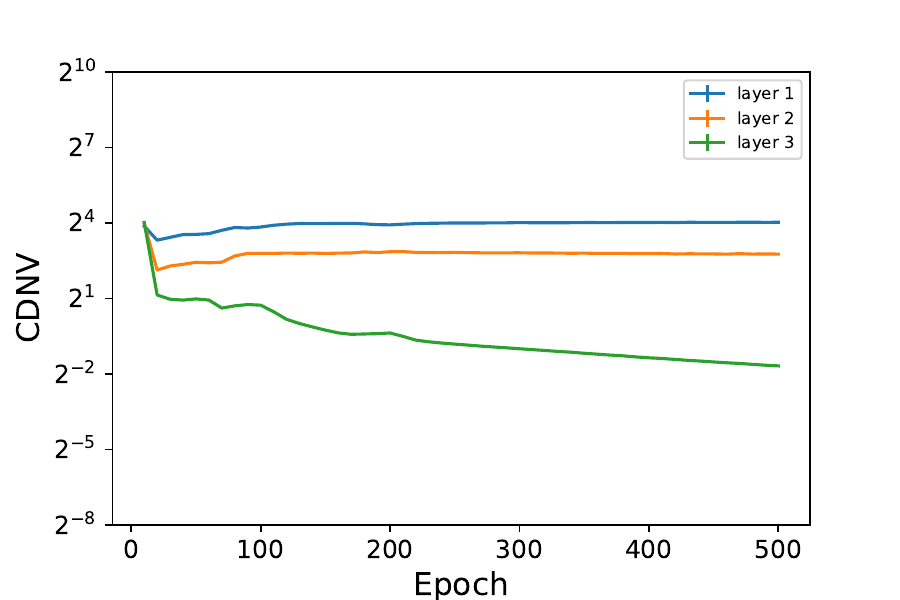}  & \includegraphics[width=0.2\linewidth]{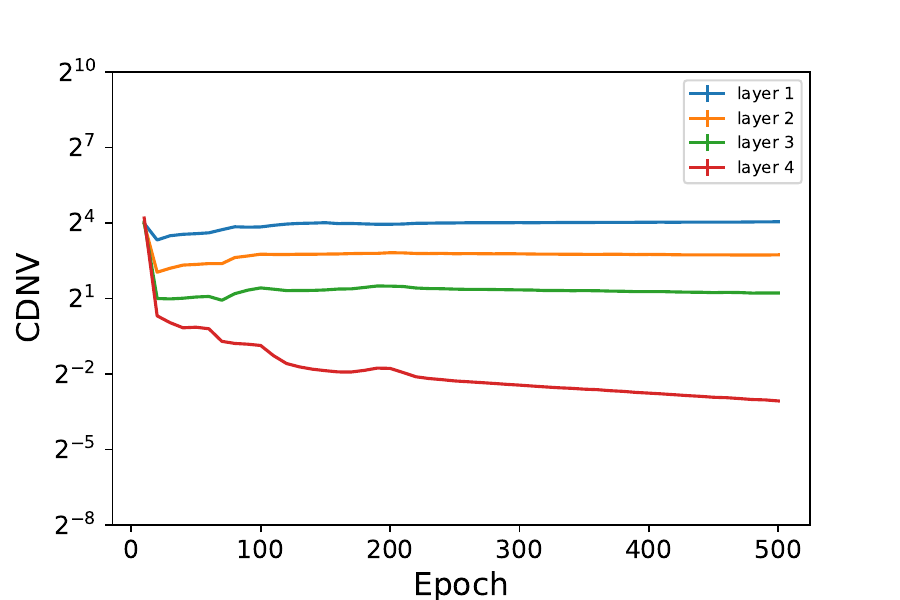}  & \includegraphics[width=0.2\linewidth]{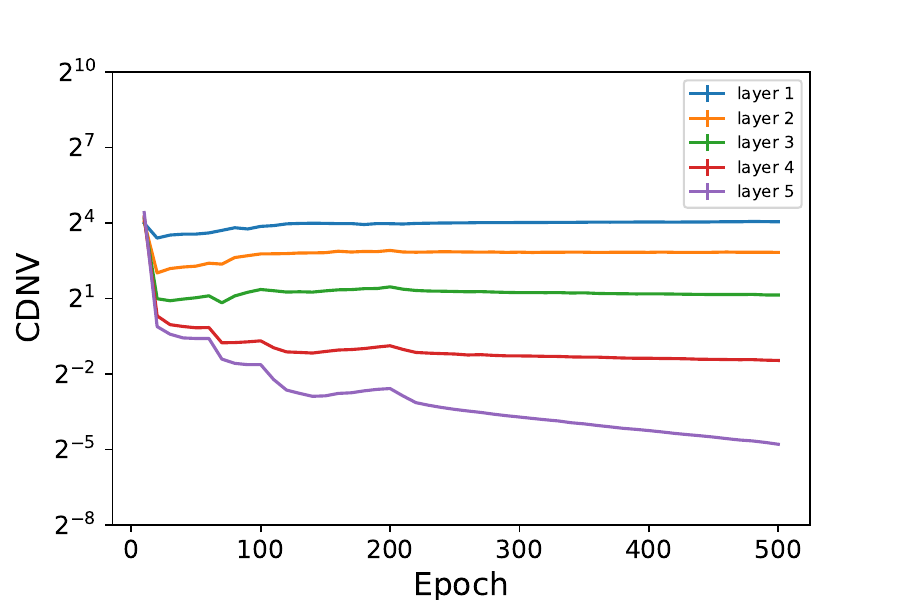}  &
     \includegraphics[width=0.2\linewidth]{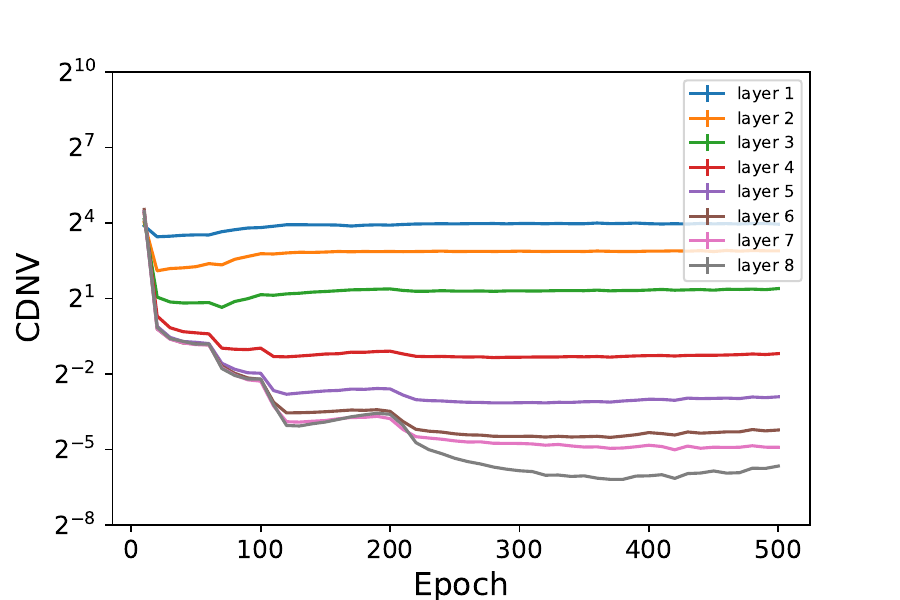}  &
     \includegraphics[width=0.2\linewidth]{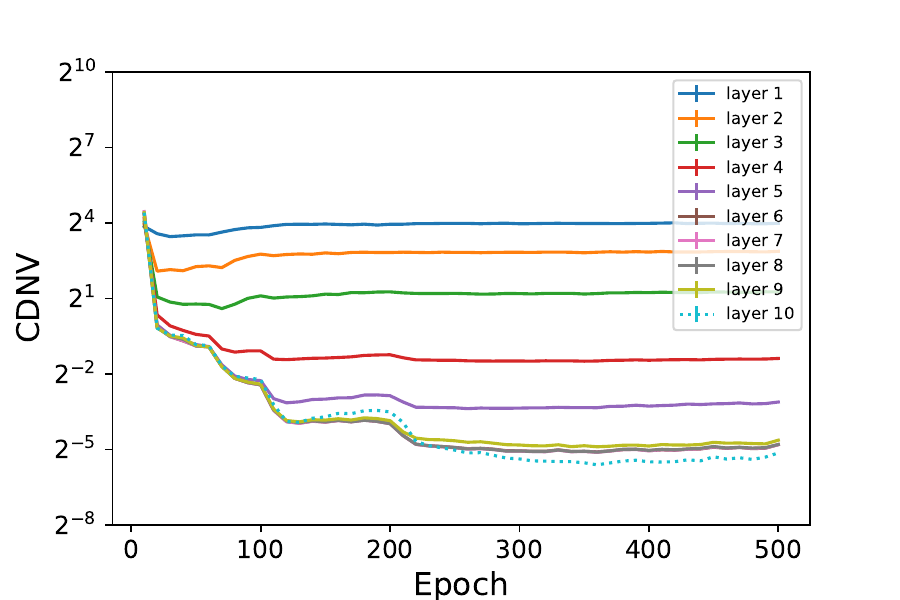}
     \\
     {\small$3$ layers}  & {\small$4$ layers} & {\small$5$ layers} & {\small$8$ layers} & {\small$10$ layers} \\
     \includegraphics[width=0.2\linewidth]{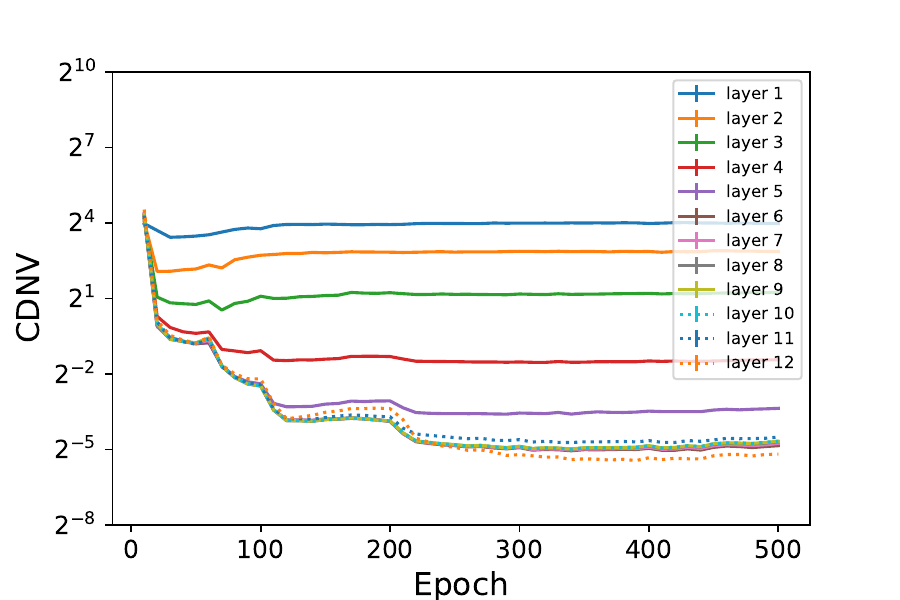}  & \includegraphics[width=0.2\linewidth]{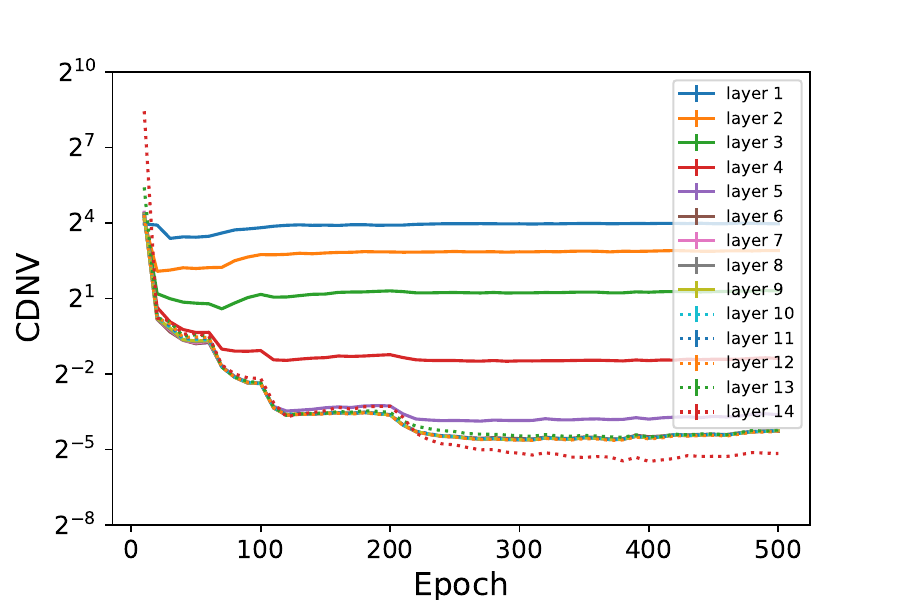}  & \includegraphics[width=0.2\linewidth]{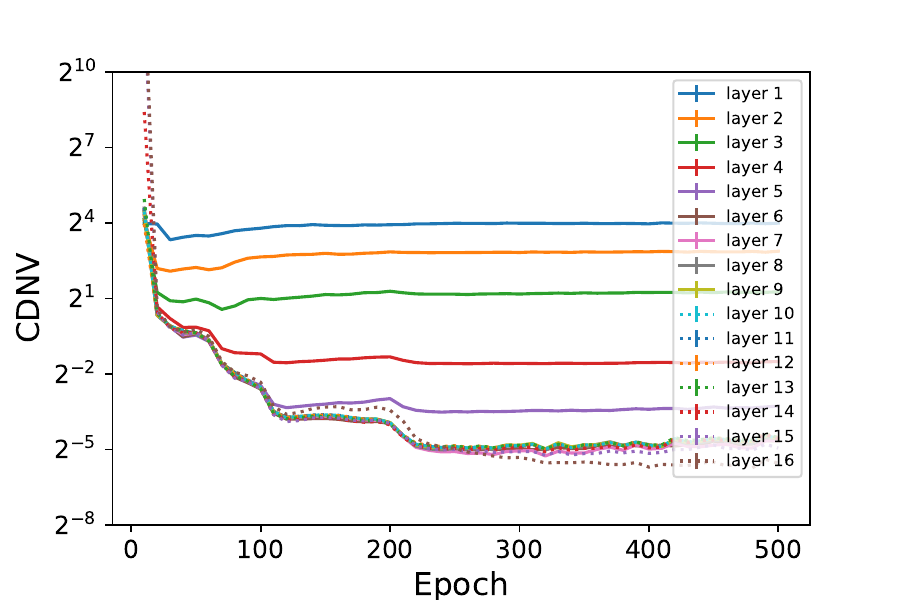}  &
     \includegraphics[width=0.2\linewidth]{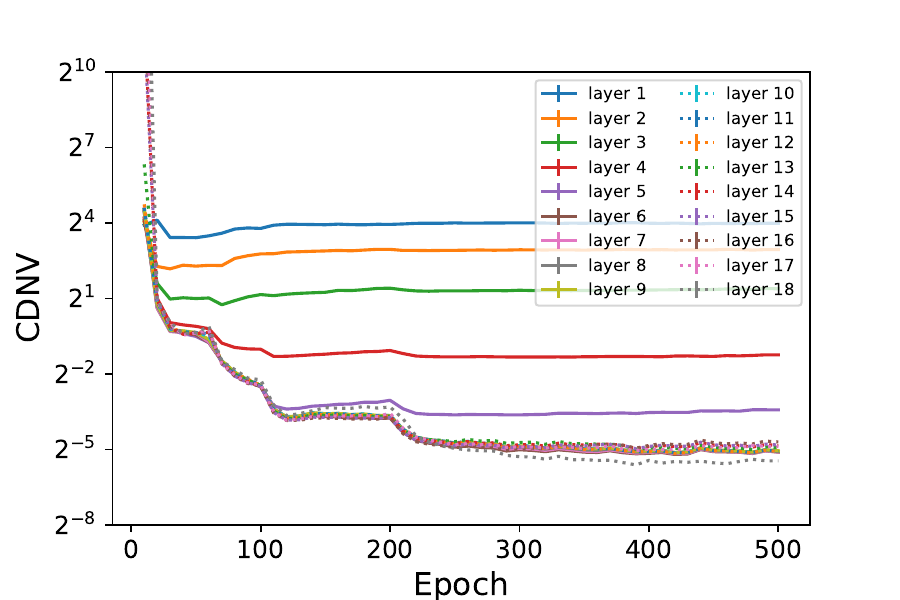}  &
     \includegraphics[width=0.2\linewidth]{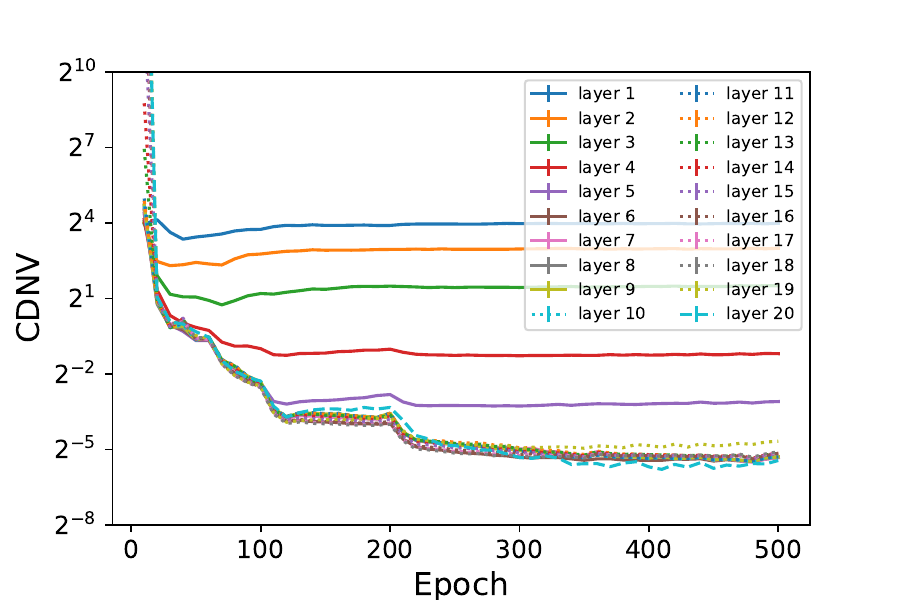}
     \\
     {\small$12$ layers}  & {\small$14$ layers} & {\small$16$ layers} & {\small$18$ layers} & {\small$20$ layers} \\
     \hline \\
     && {\small\bf NCC train accuracy} && \\
     \includegraphics[width=0.2\linewidth]{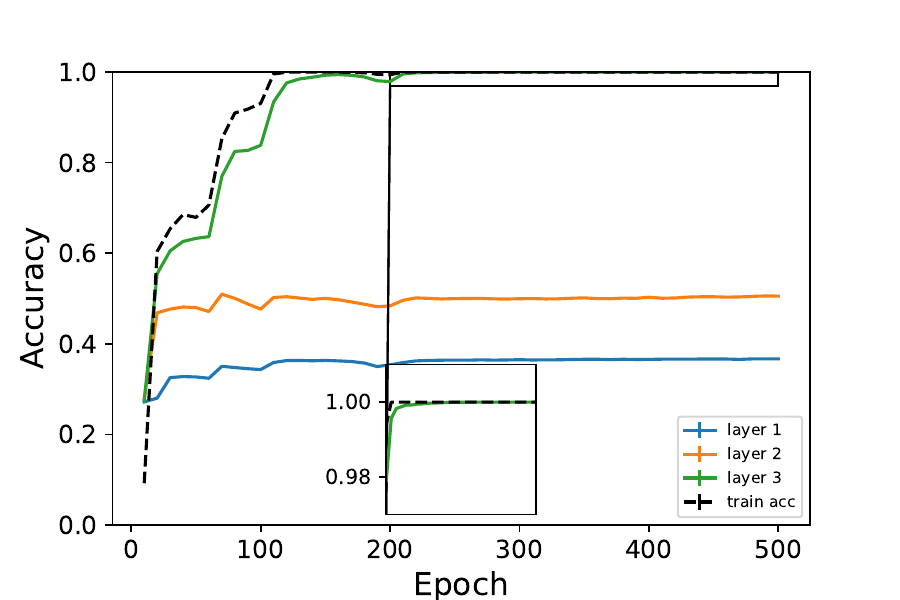}  & \includegraphics[width=0.2\linewidth]{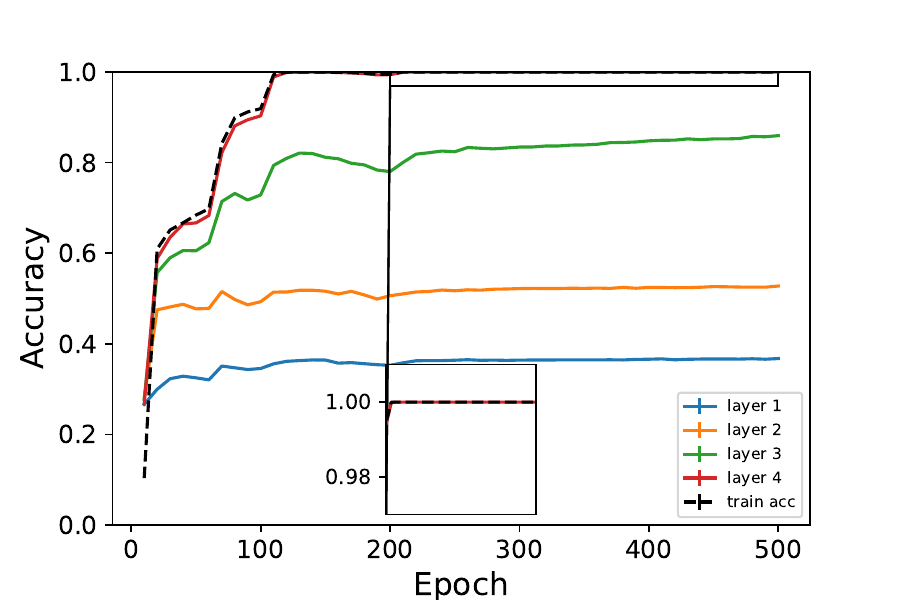} & \includegraphics[width=0.2\linewidth]{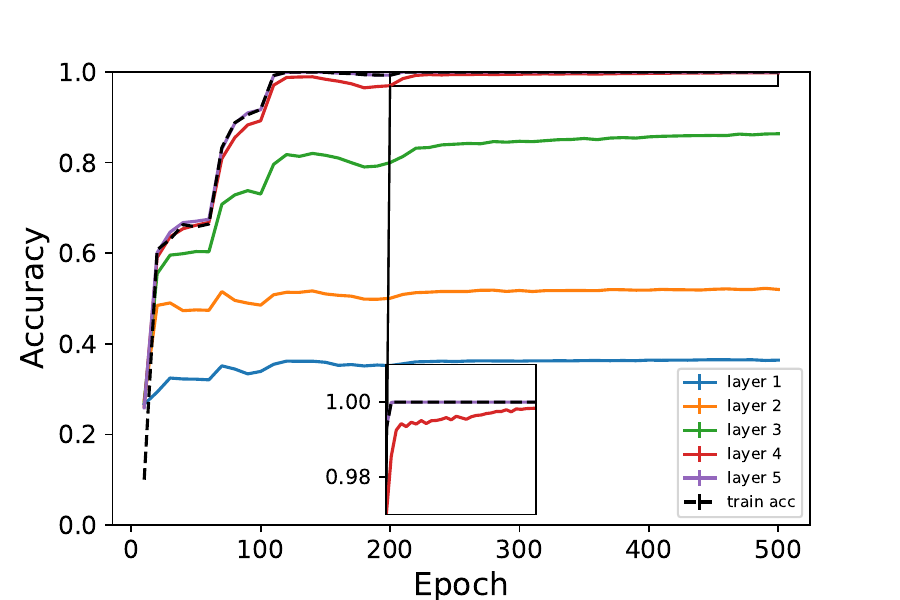}  &
     \includegraphics[width=0.2\linewidth]{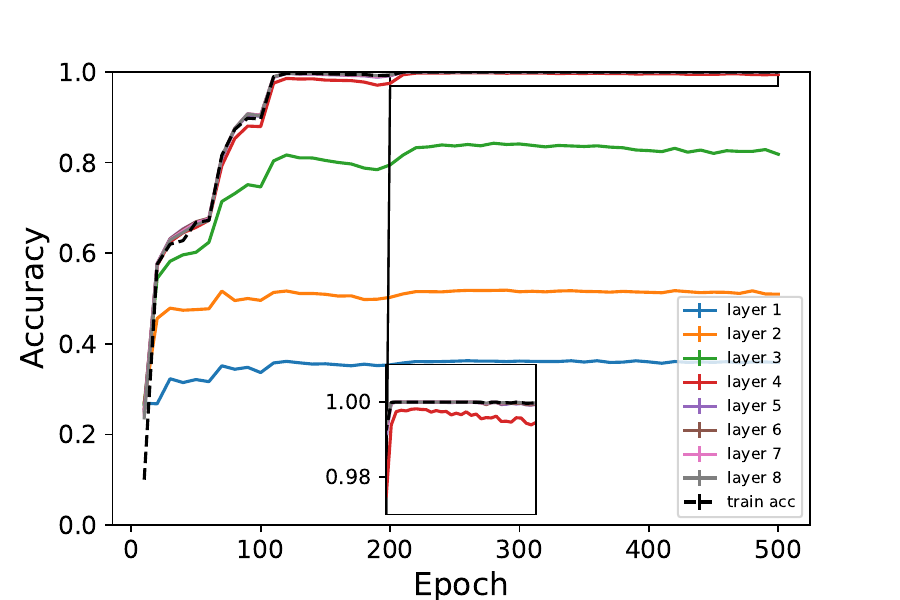}  &
     \includegraphics[width=0.2\linewidth]{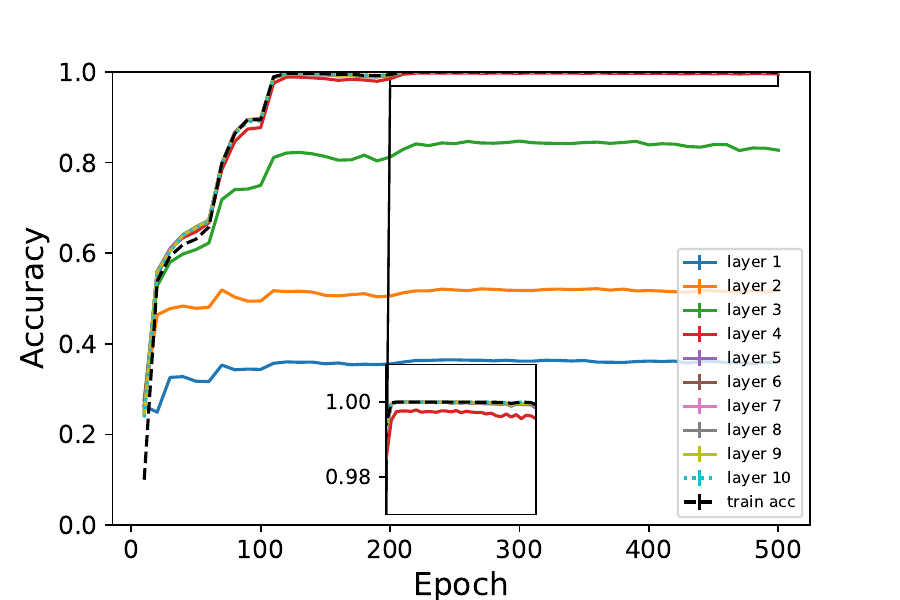} 
     \\
     {\small$3$ layers}  & {\small$4$ layers} & {\small$5$ layers} & {\small$8$ layers} & {\small$10$ layers} \\
    \includegraphics[width=0.2\linewidth]{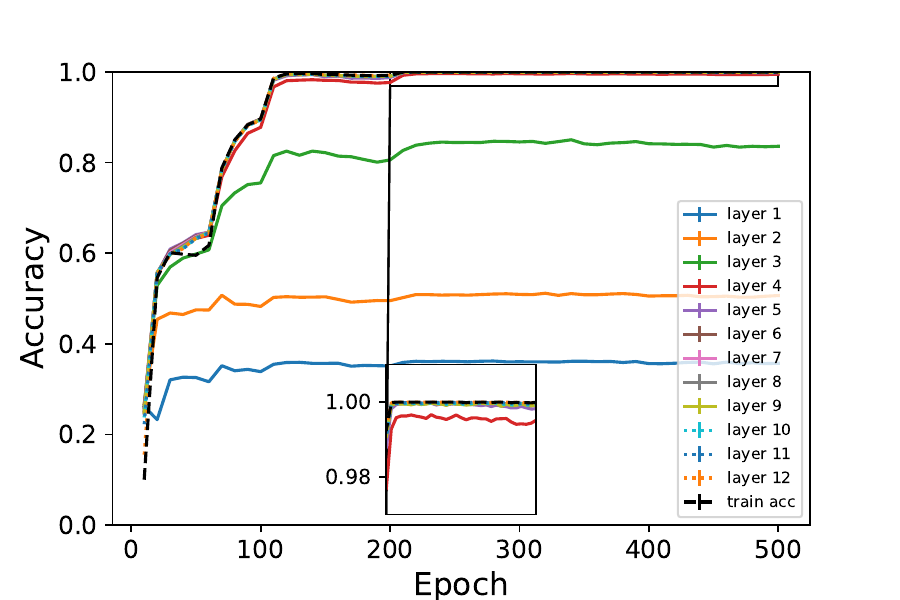}  & \includegraphics[width=0.2\linewidth]{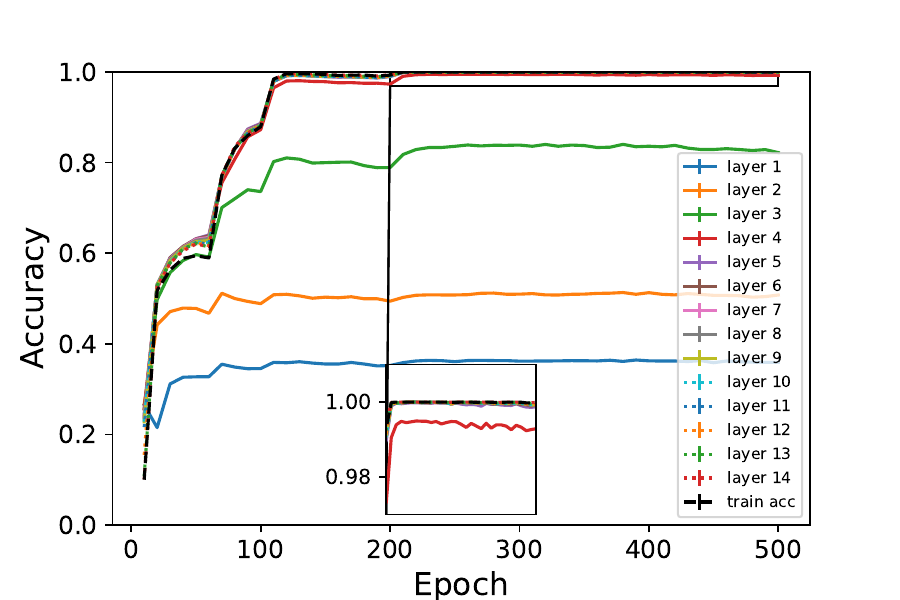} & \includegraphics[width=0.2\linewidth]{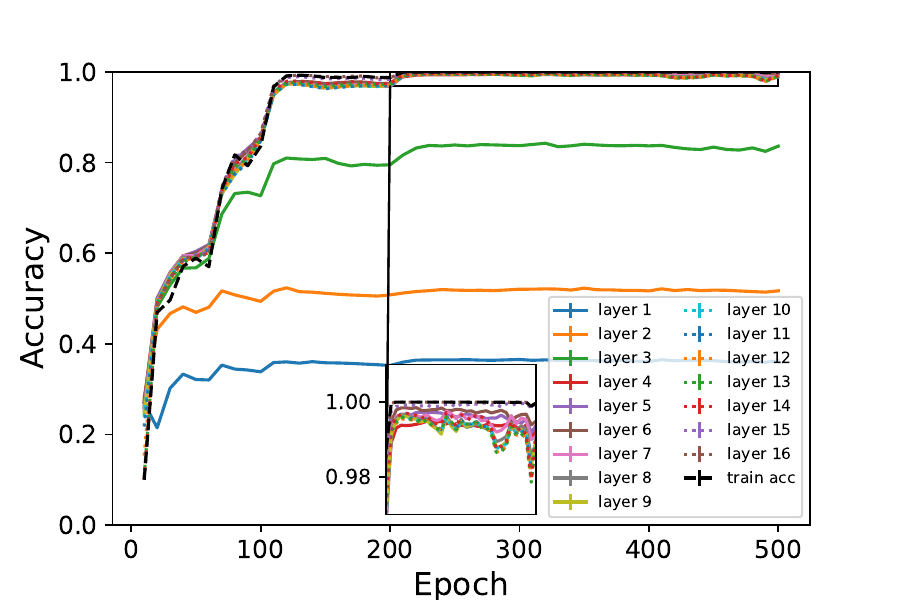}  &
     \includegraphics[width=0.2\linewidth]{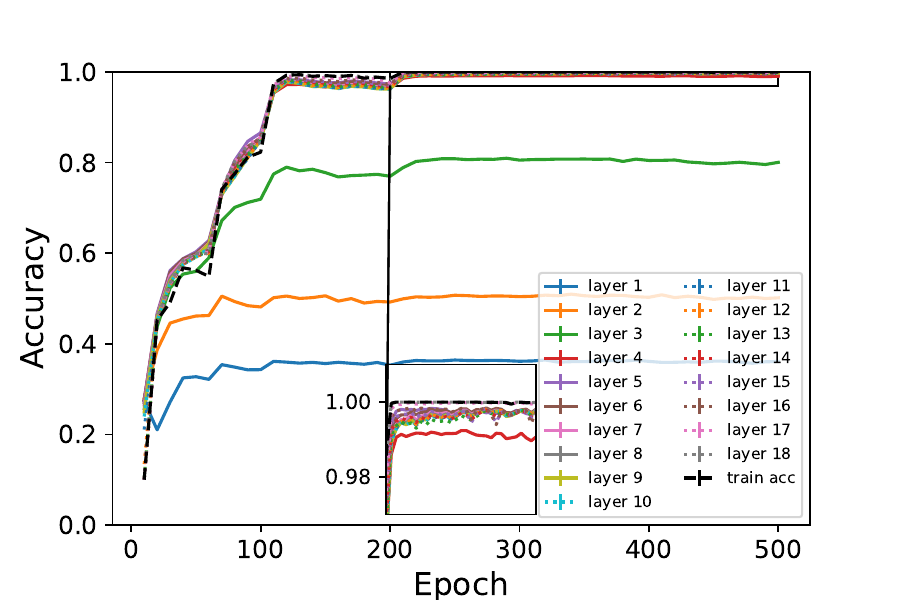}  &
     \includegraphics[width=0.2\linewidth]{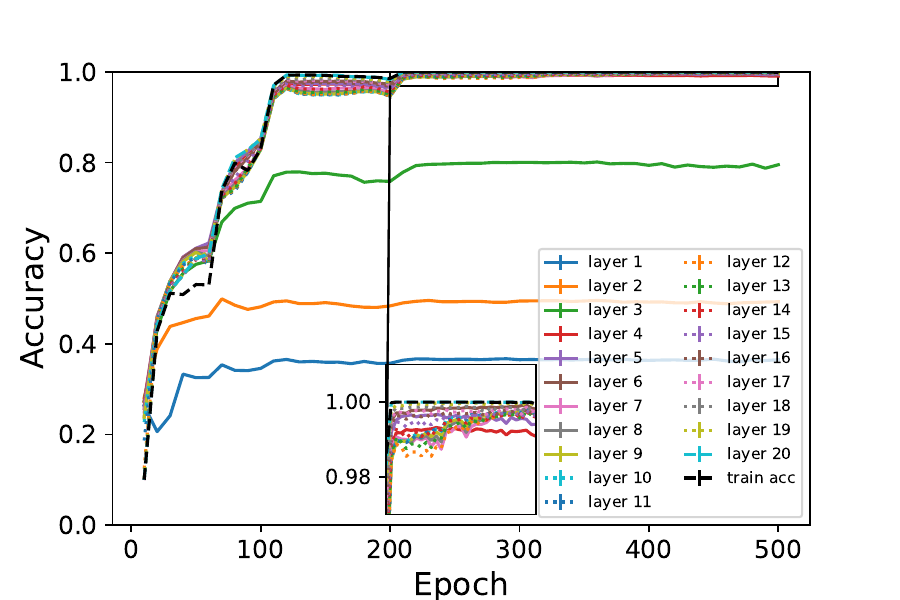} 
     \\
     {\small$12$ layers}  & {\small$14$ layers} & {\small$16$ layers} & {\small$18$ layers} & {\small$20$ layers} \\
     \hline \\
     && {\small\bf CDNV - Test} && \\
  \includegraphics[width=0.2\linewidth]{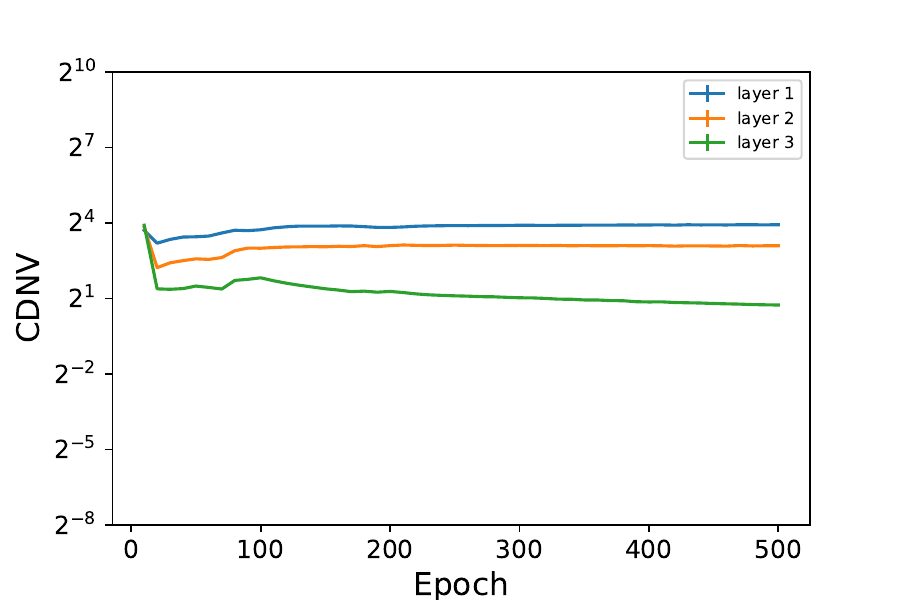}  & \includegraphics[width=0.2\linewidth]{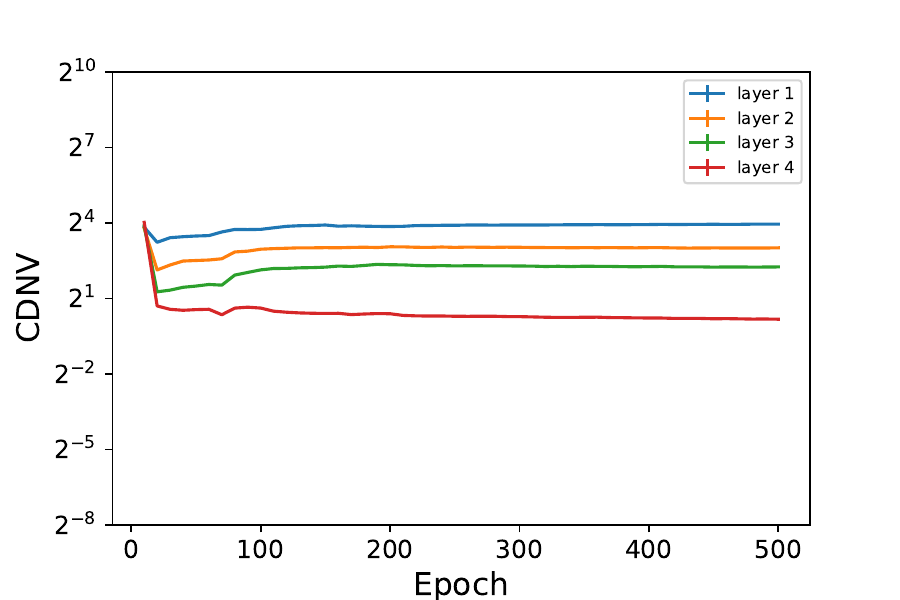}  & \includegraphics[width=0.2\linewidth]{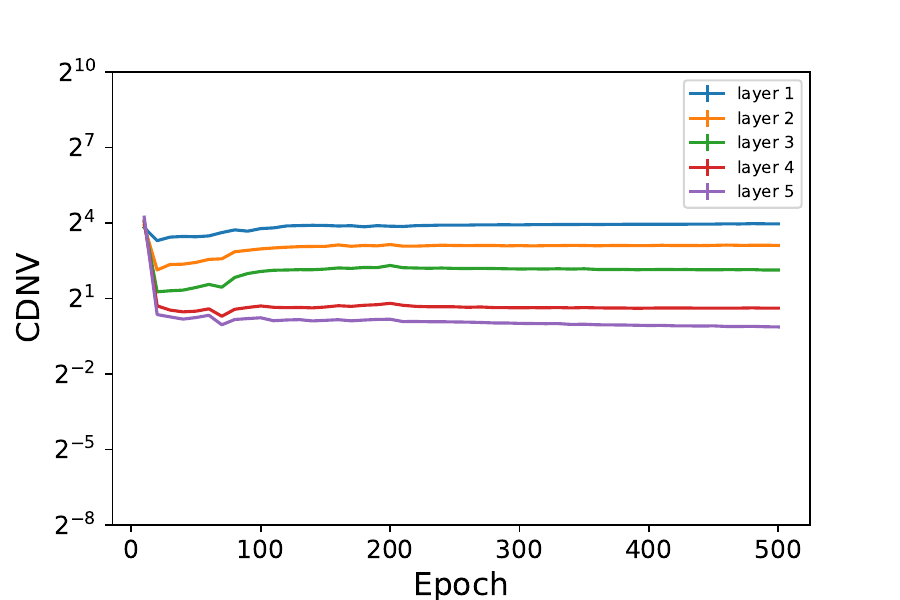}  &
     \includegraphics[width=0.2\linewidth]{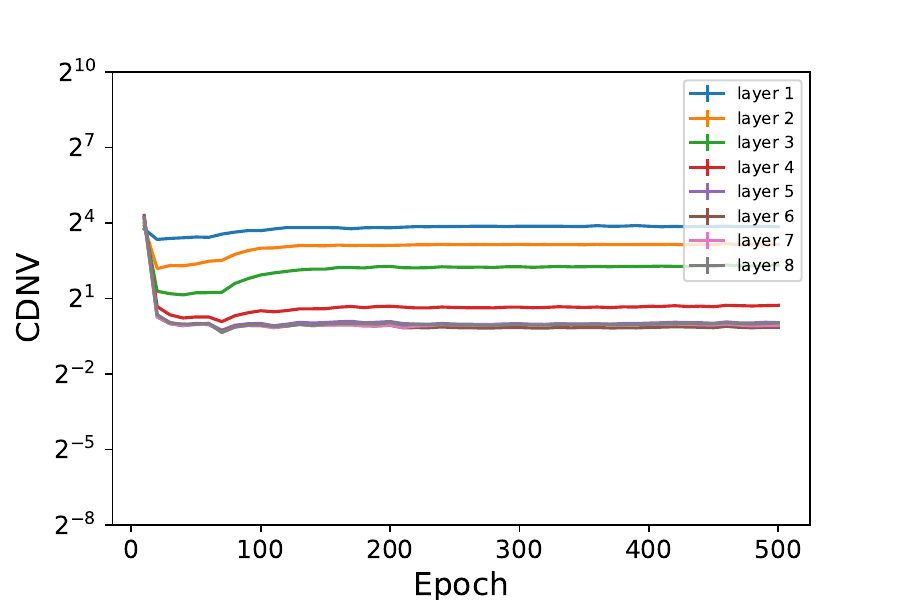}  &
     \includegraphics[width=0.2\linewidth]{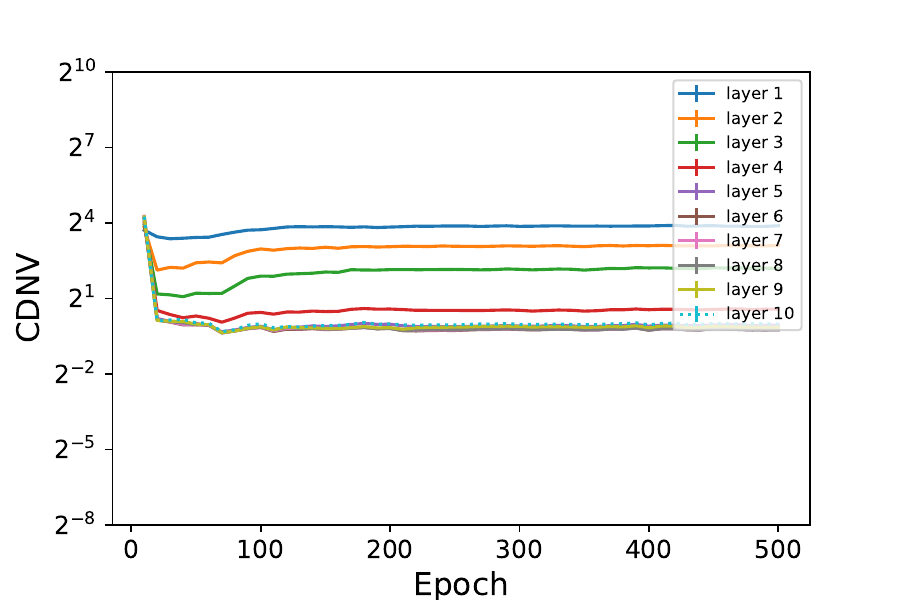}
     \\
     {\small$3$ layers}  & {\small$4$ layers} & {\small$5$ layers} & {\small$8$ layers} & {\small$10$ layers} \\
     \includegraphics[width=0.2\linewidth]{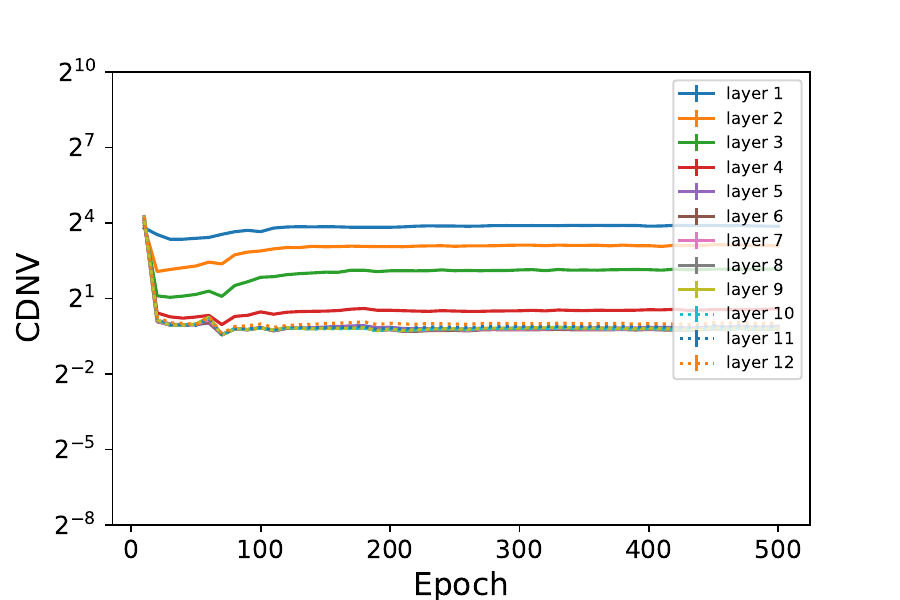}  & \includegraphics[width=0.2\linewidth]{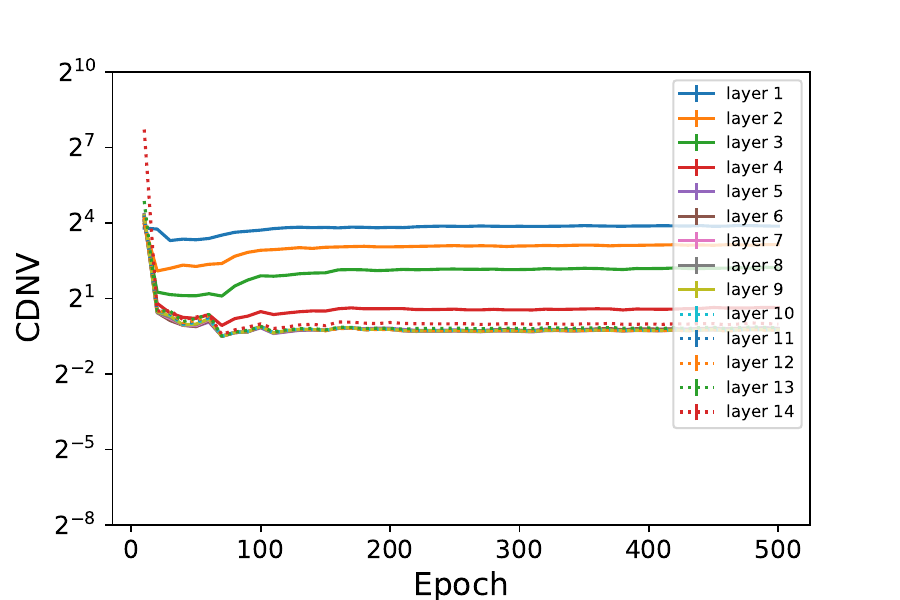}  & \includegraphics[width=0.2\linewidth]{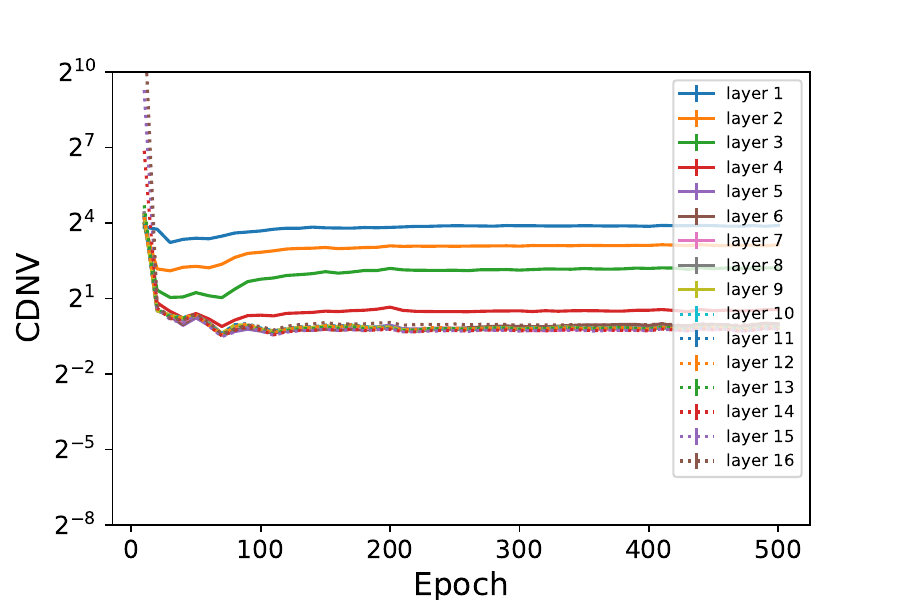}  &
     \includegraphics[width=0.2\linewidth]{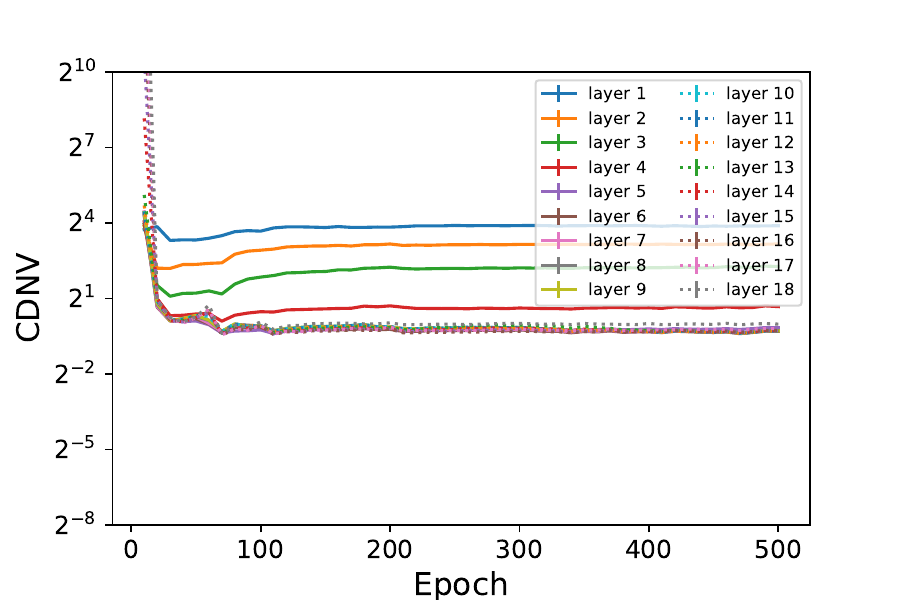}  &
     \includegraphics[width=0.2\linewidth]{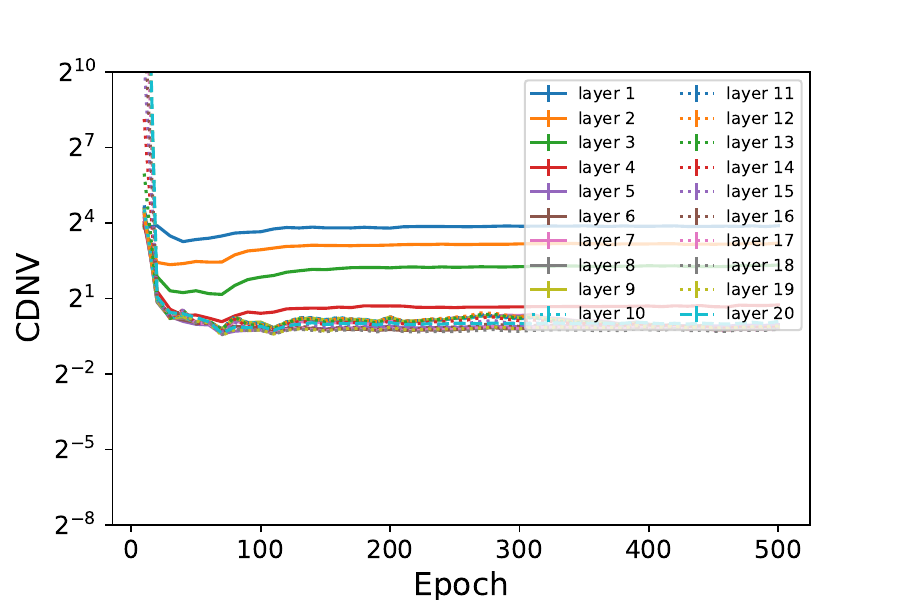}
     \\
     {\small$12$ layers}  & {\small$14$ layers} & {\small$16$ layers} & {\small$18$ layers} & {\small$20$ layers} \\
     \hline \\
     && {\small\bf NCC test accuracy} && \\
     \includegraphics[width=0.2\linewidth]{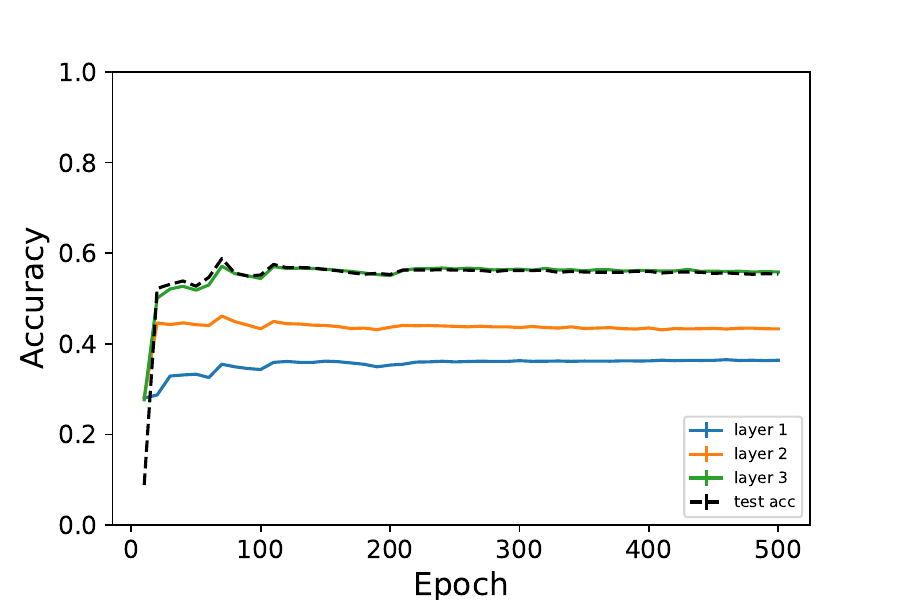}  & \includegraphics[width=0.2\linewidth]{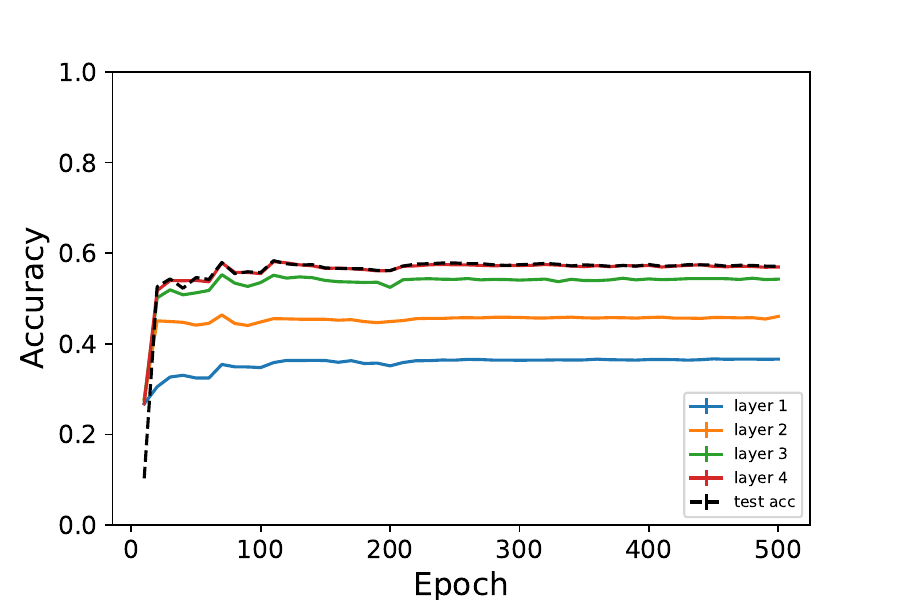} & \includegraphics[width=0.2\linewidth]{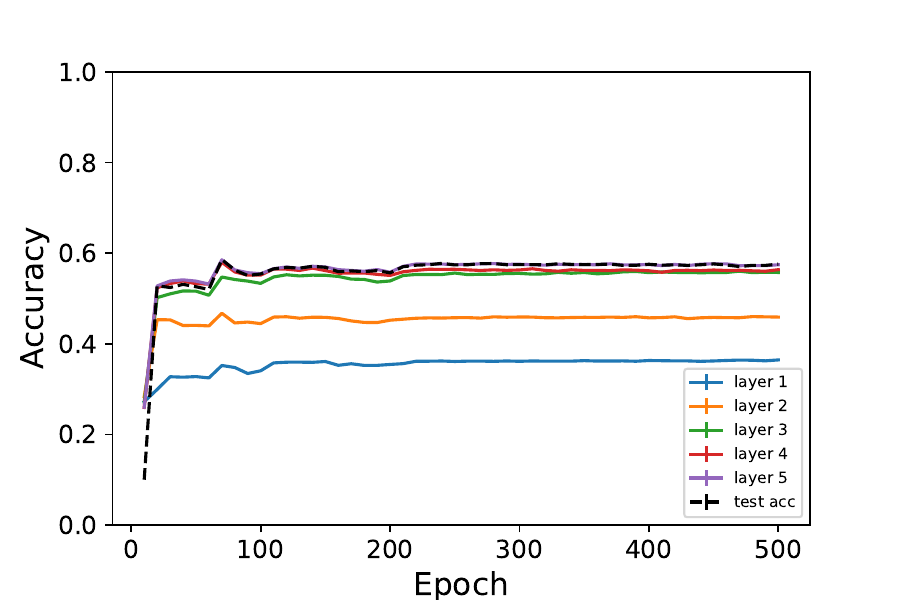}  &
     \includegraphics[width=0.2\linewidth]{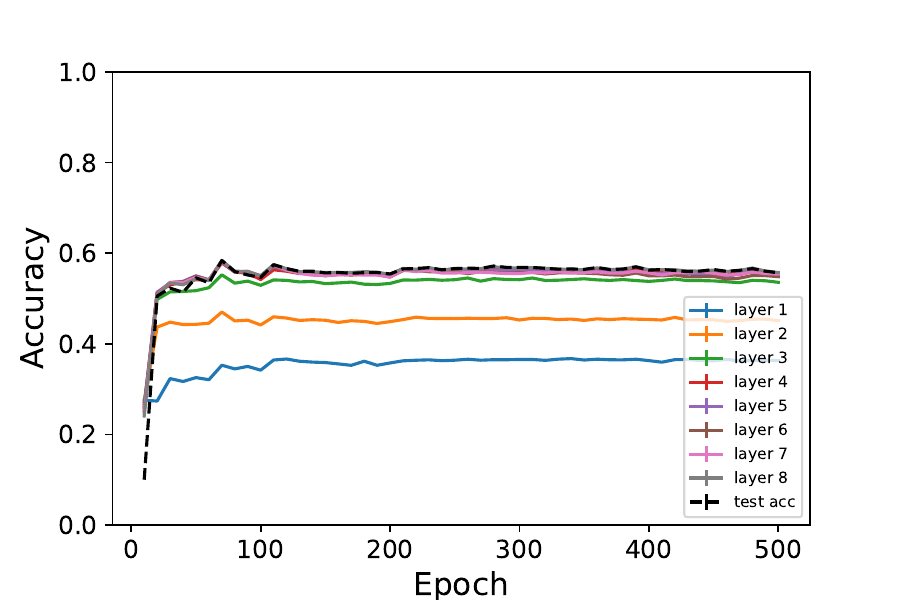}  &
     \includegraphics[width=0.2\linewidth]{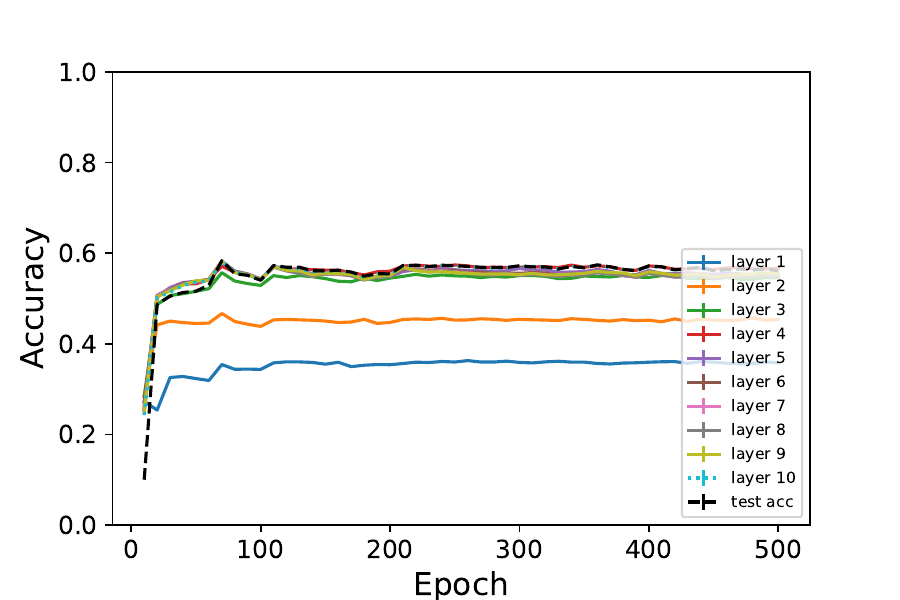} 
     \\
     {\small$3$ layers}  & {\small$4$ layers} & {\small$5$ layers} & {\small$8$ layers} & {\small$10$ layers} \\
    \includegraphics[width=0.2\linewidth]{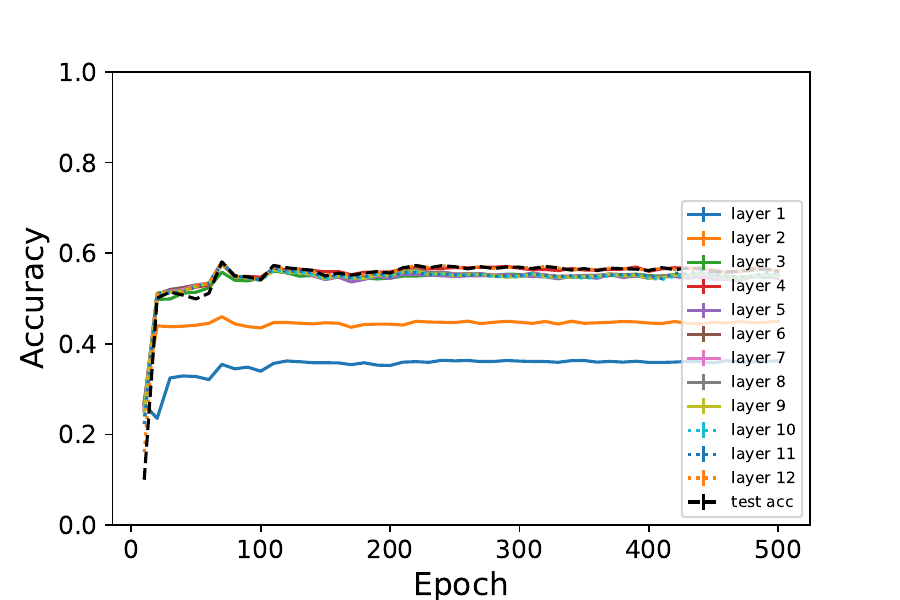}  & \includegraphics[width=0.2\linewidth]{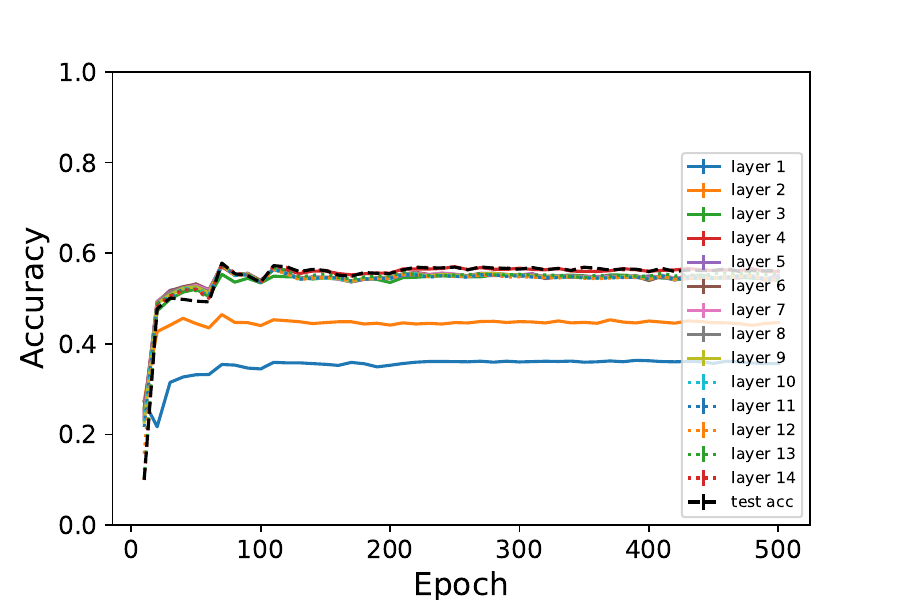} & \includegraphics[width=0.2\linewidth]{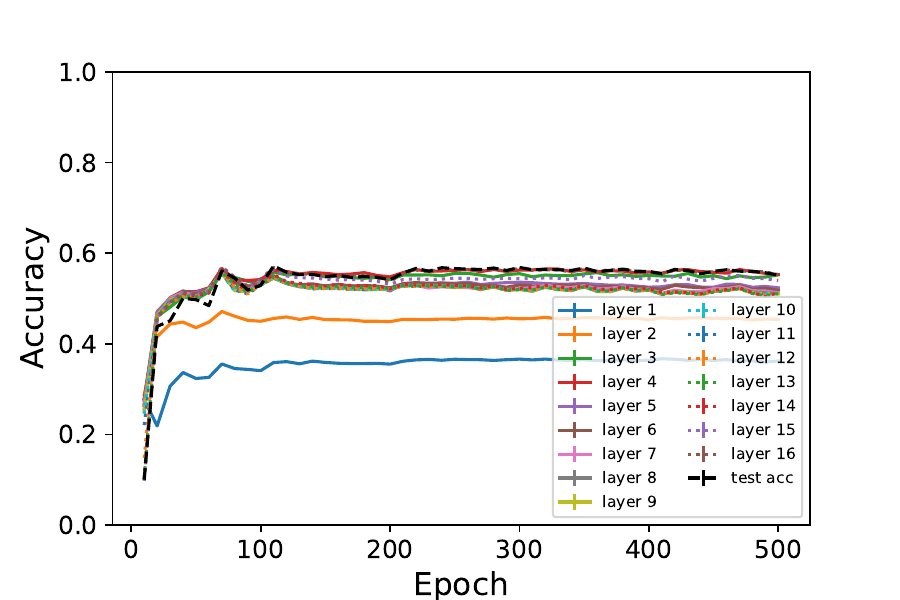}  &
     \includegraphics[width=0.2\linewidth]{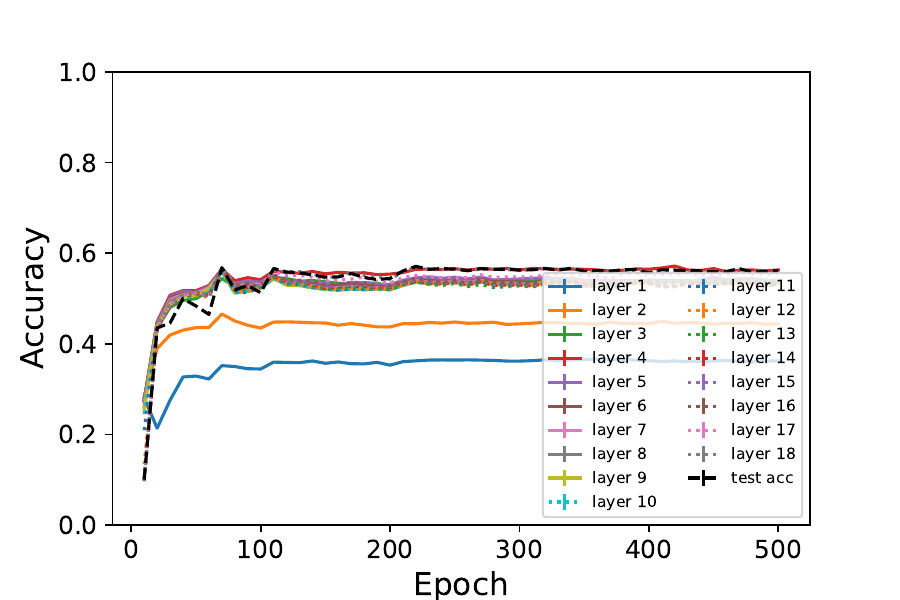}  &
     \includegraphics[width=0.2\linewidth]{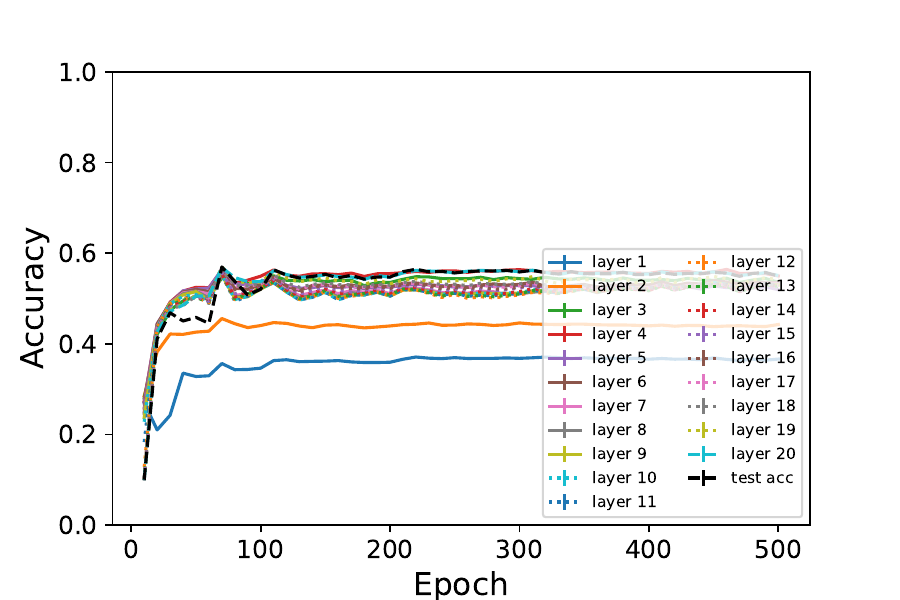} 
     \\
     {\small$12$ layers}  & {\small$14$ layers} & {\small$16$ layers} & {\small$18$ layers} & {\small$20$ layers} \\
     \hline
  \end{tabular}
  
 \caption{{\bf Intermediate neural collapse of MLP-$L$-300 trained on CIFAR10.} See  Fig. \ref{fig:cifar10_convnet_400} in the main text for details.}
 \label{fig:cifar10_mlp_300}
\end{figure*} 

\begin{figure*}[t]
  \centering
  \begin{tabular}{c@{}c@{}c@{}c@{}c}
     \hline \\
     && {\bf\small CDNV - Train} && \\
     \includegraphics[width=0.2\linewidth]{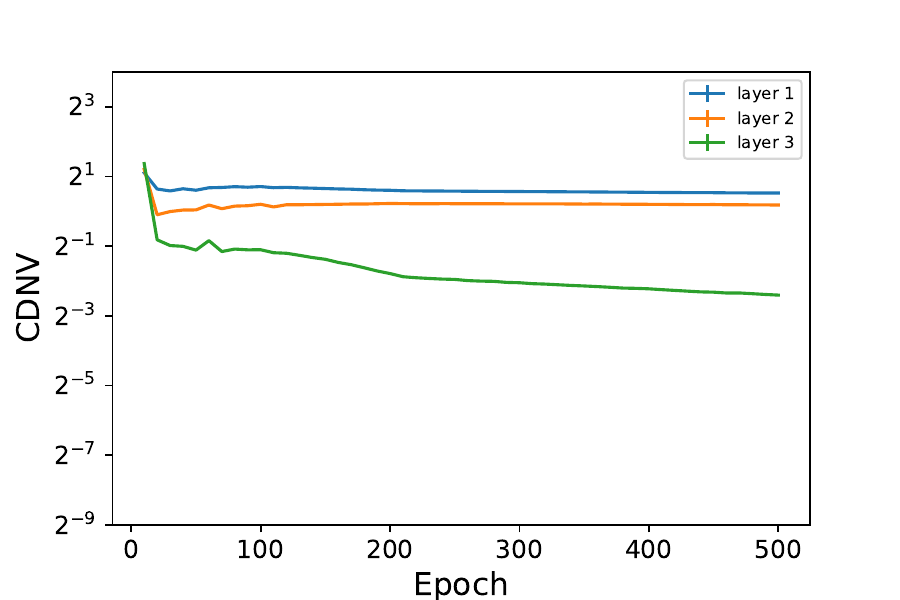}  &
     \includegraphics[width=0.2\linewidth]{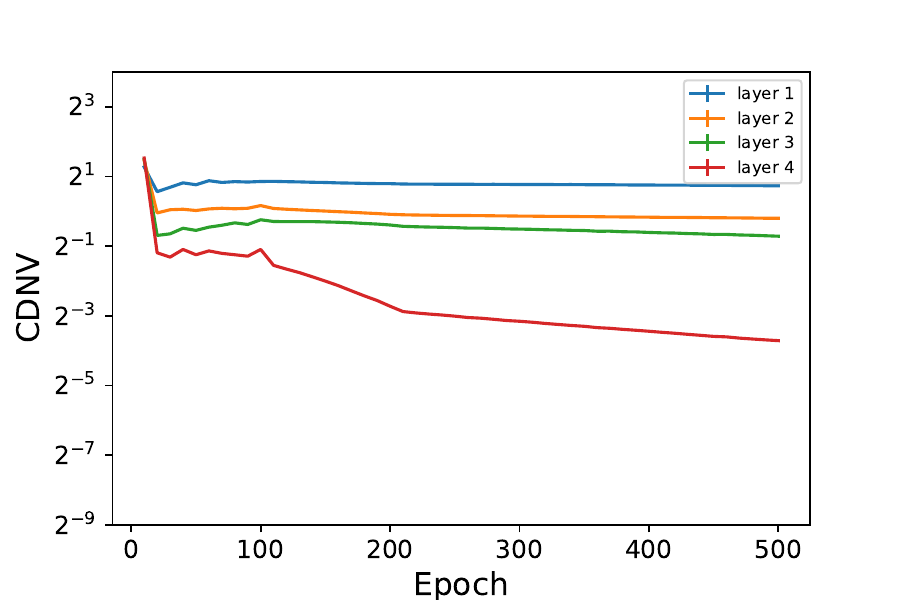}  & \includegraphics[width=0.2\linewidth]{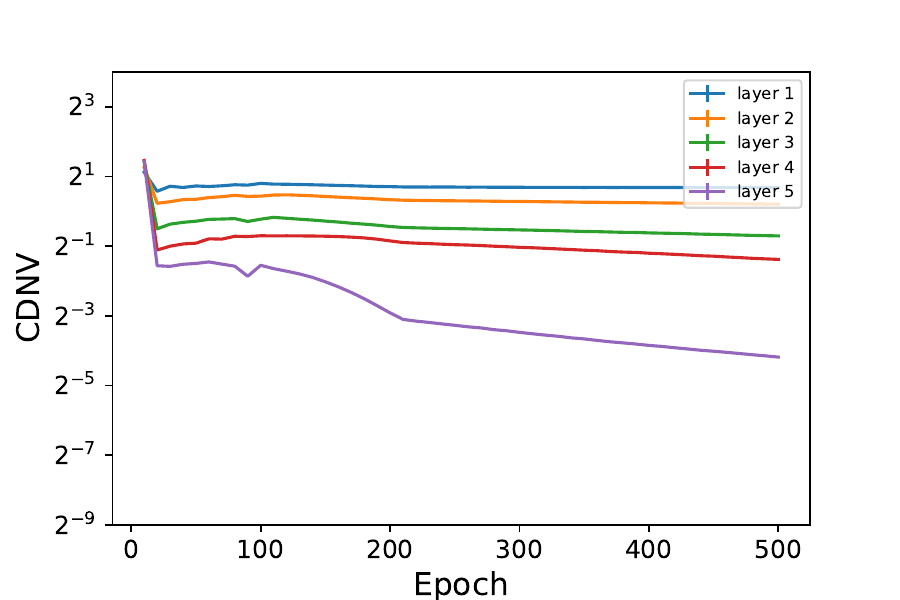}  &
     \includegraphics[width=0.2\linewidth]{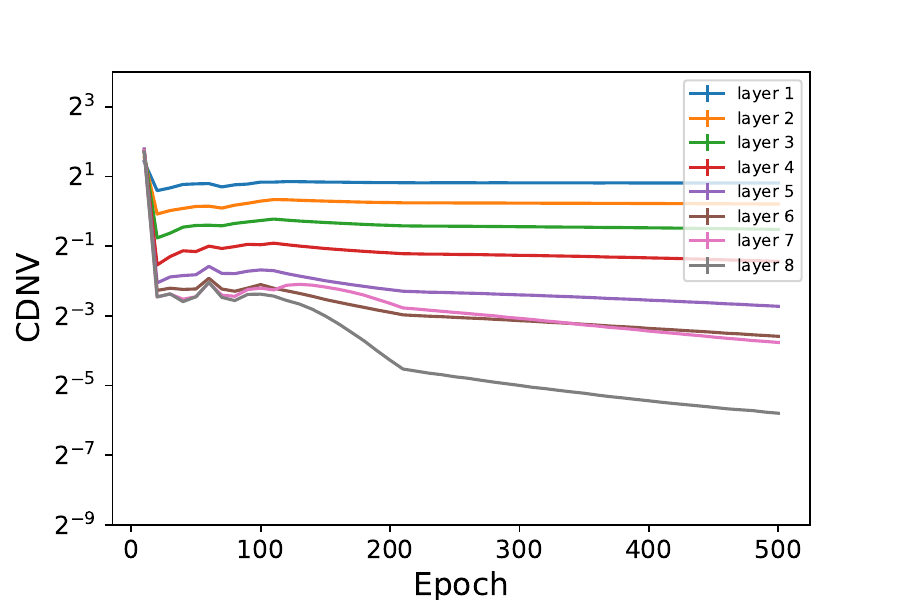}  &
     \includegraphics[width=0.2\linewidth]{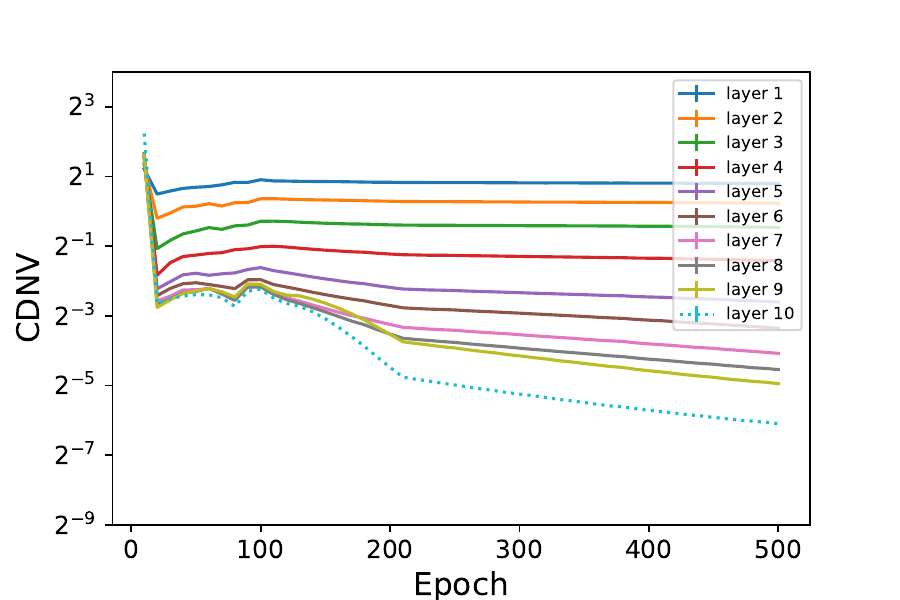}
     \\
     {\small$3$ layers}  & {\small$4$ layers} & {\small$5$ layers} & {\small$8$ layers} & {\small$10$ layers} \\
     \includegraphics[width=0.2\linewidth]{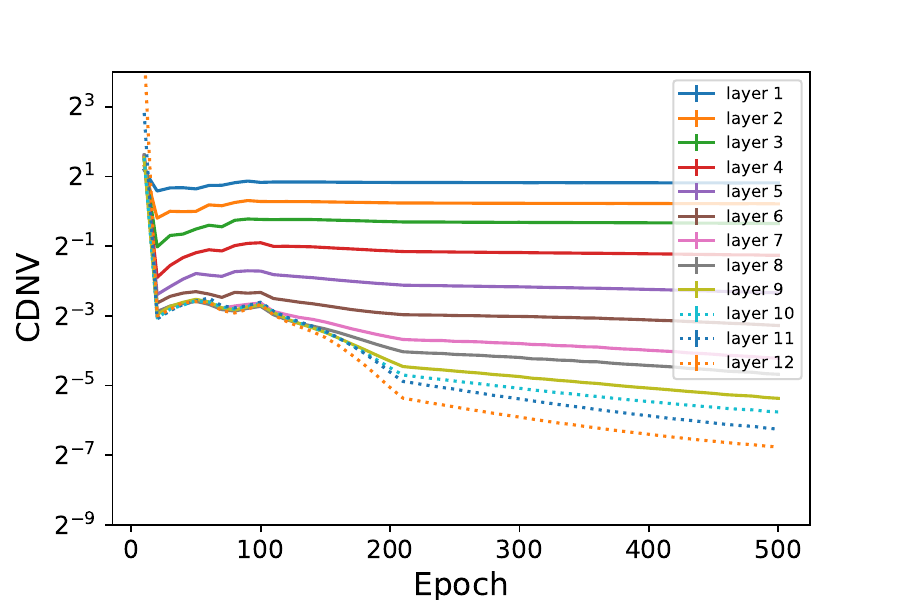}  &
     \includegraphics[width=0.2\linewidth]{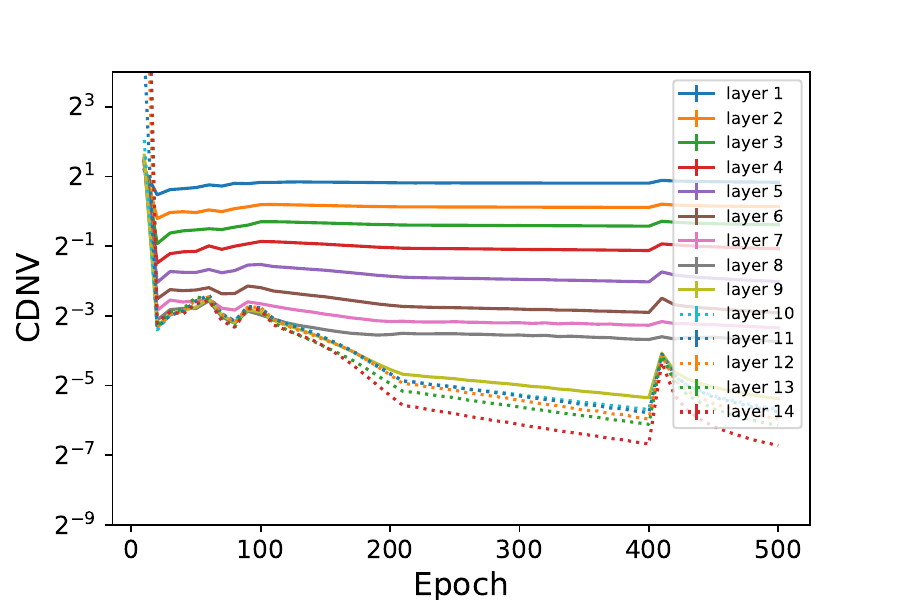}  & \includegraphics[width=0.2\linewidth]{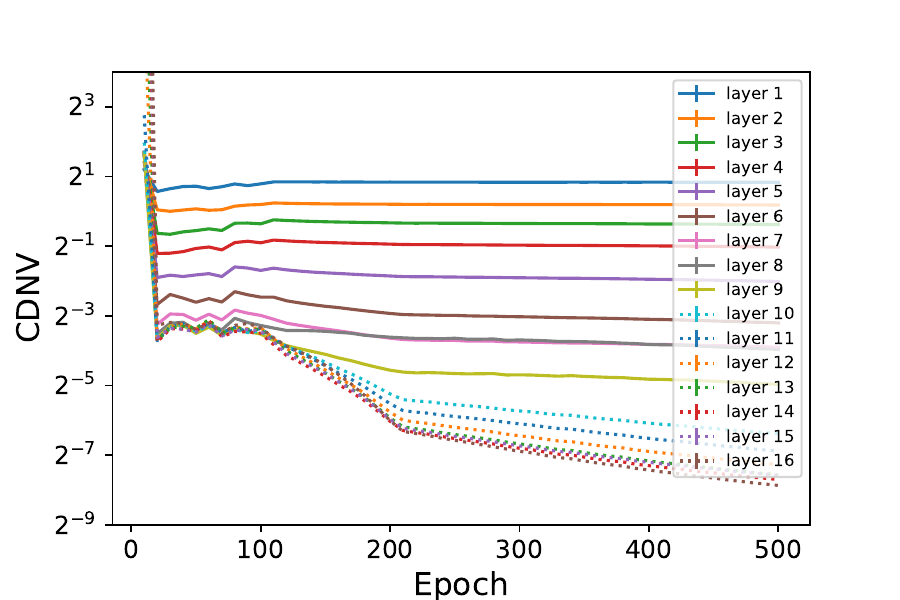}  &
     \includegraphics[width=0.2\linewidth]{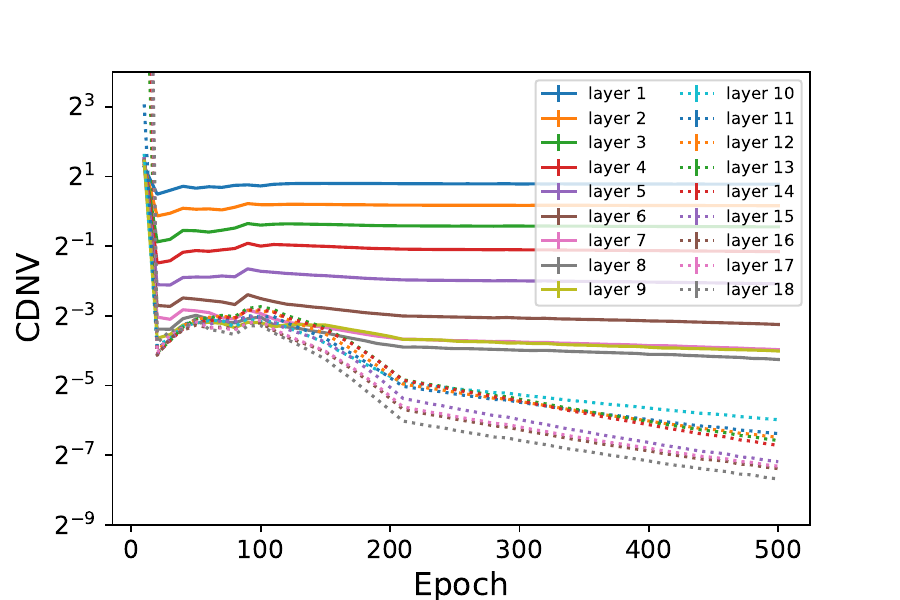}  &
     \includegraphics[width=0.2\linewidth]{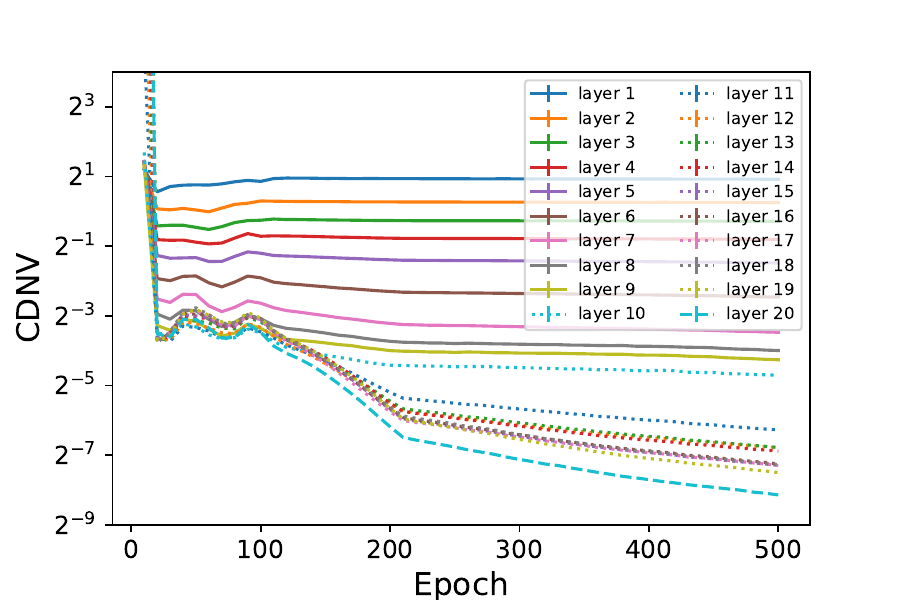}
     \\
     {\small$12$ layers}  & {\small$14$ layers} & {\small$16$ layers} & {\small$18$ layers} & {\small$20$ layers} \\
     \hline \\
     && {\bf\small NCC train accuracy} && \\
     \includegraphics[width=0.2\linewidth]{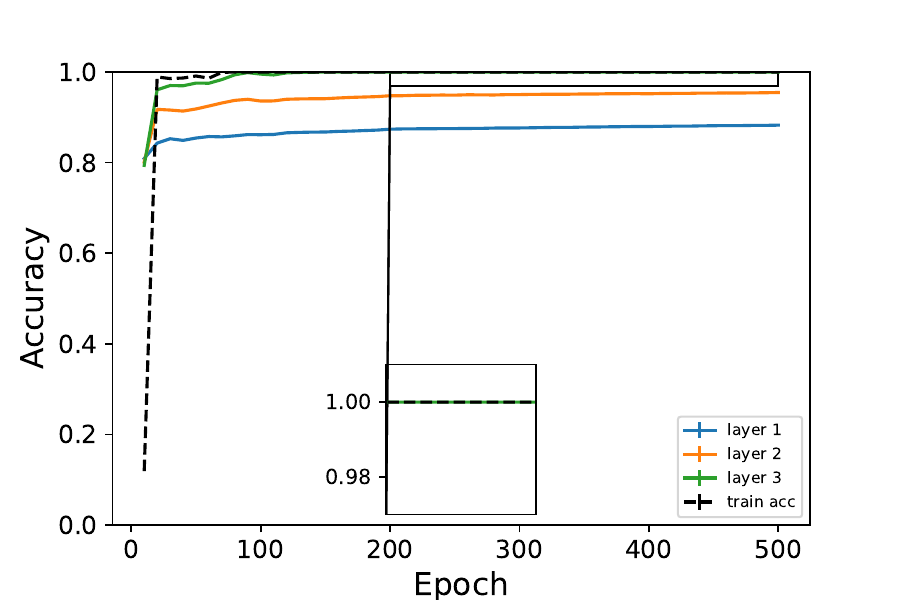}  & \includegraphics[width=0.2\linewidth]{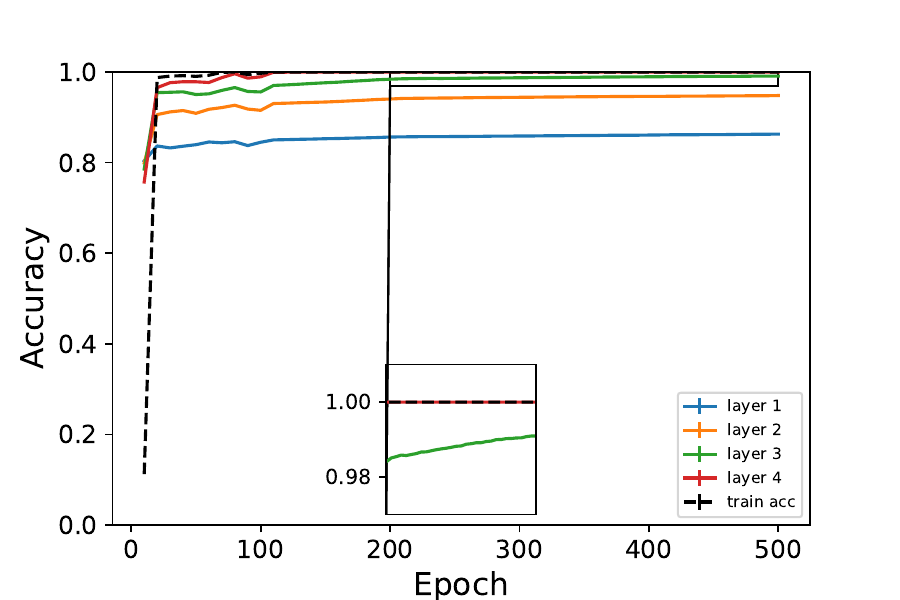}  & \includegraphics[width=0.2\linewidth]{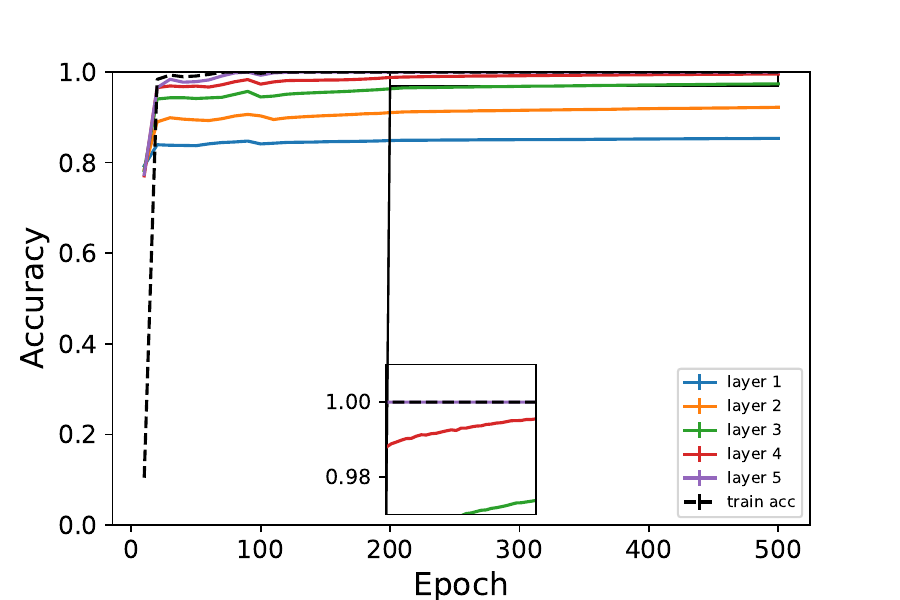}  &
     \includegraphics[width=0.2\linewidth]{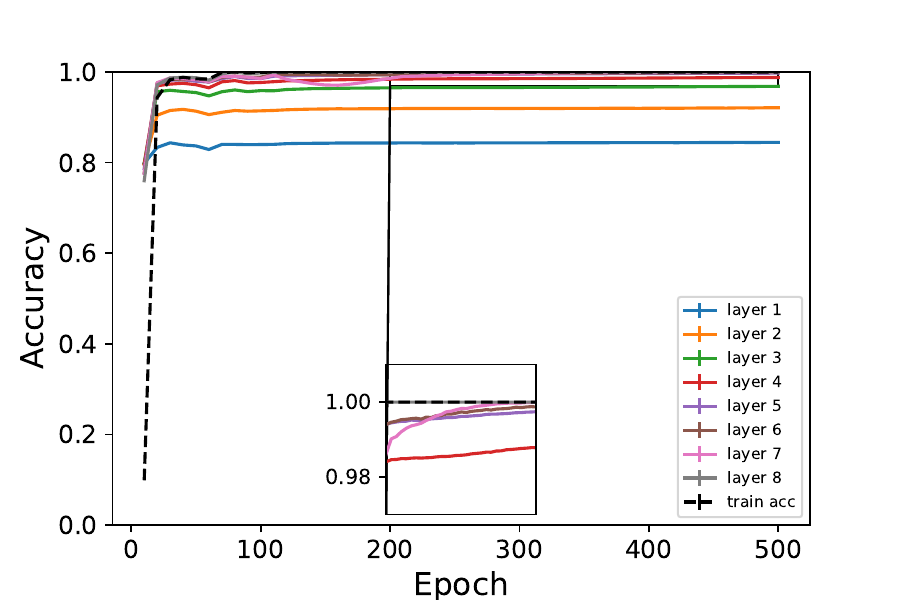}  &
     \includegraphics[width=0.2\linewidth]{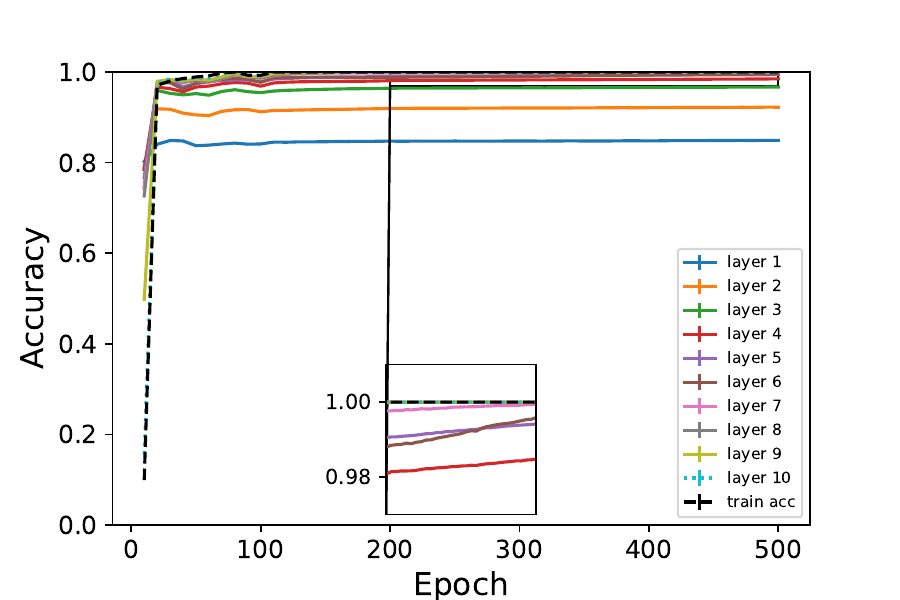}
     \\
     {\small$3$ layers}  & {\small$4$ layers} & {\small$5$ layers} & {\small$8$ layers} & {\small$10$ layers} \\
     \includegraphics[width=0.2\linewidth]{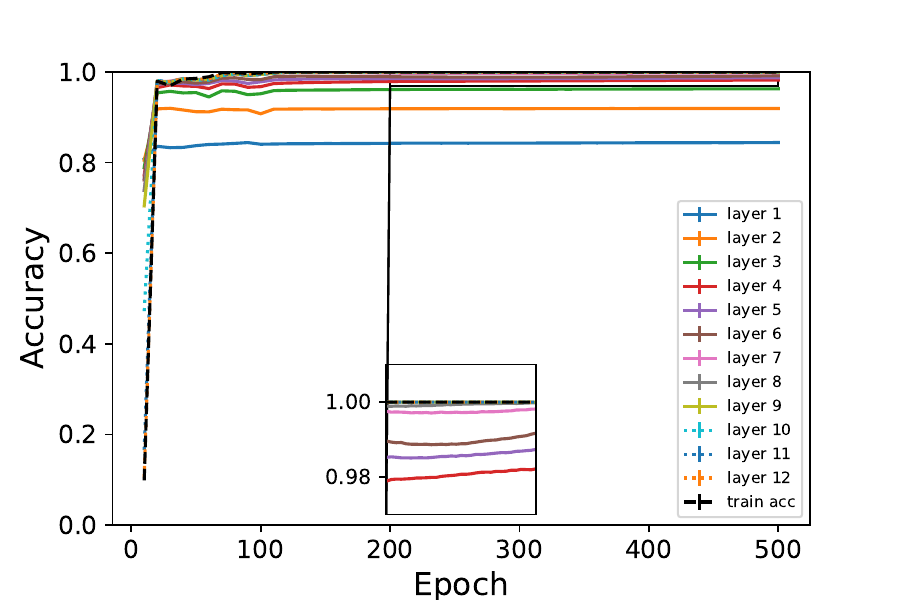}  & \includegraphics[width=0.2\linewidth]{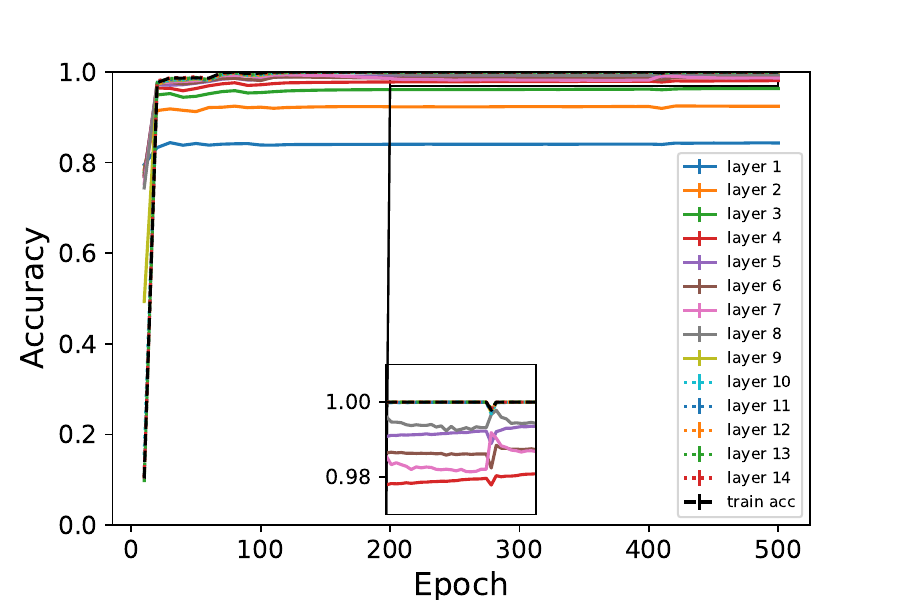}  & \includegraphics[width=0.2\linewidth]{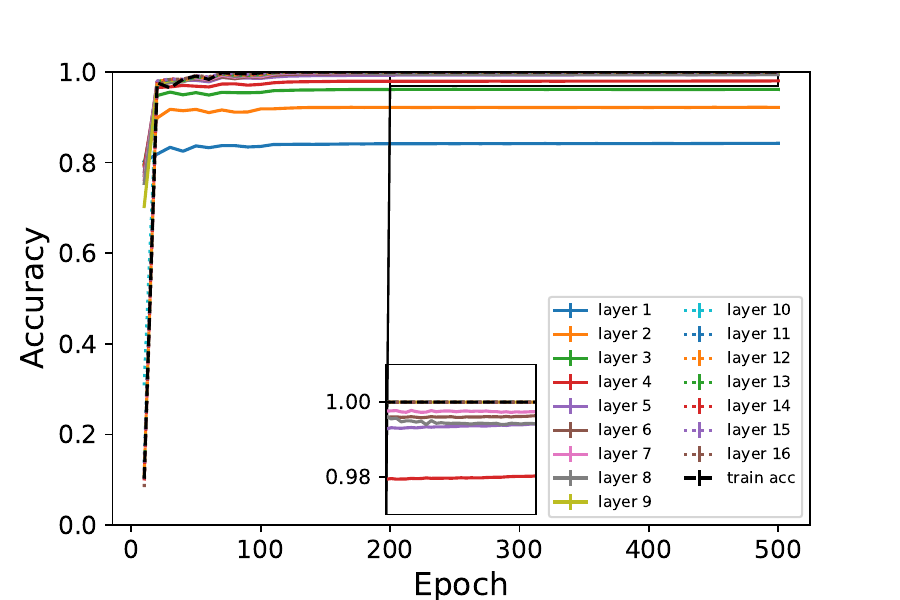}  &
     \includegraphics[width=0.2\linewidth]{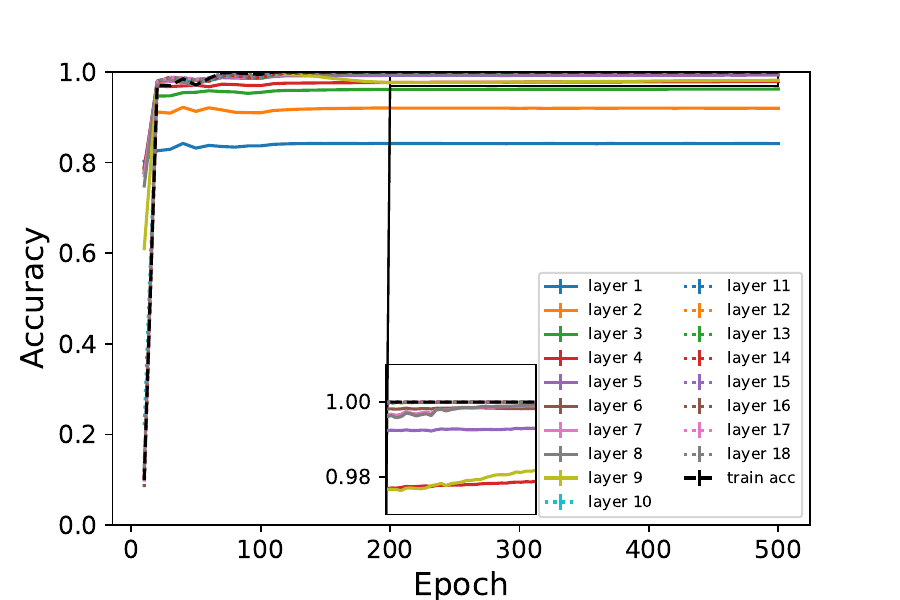}  &
     \includegraphics[width=0.2\linewidth]{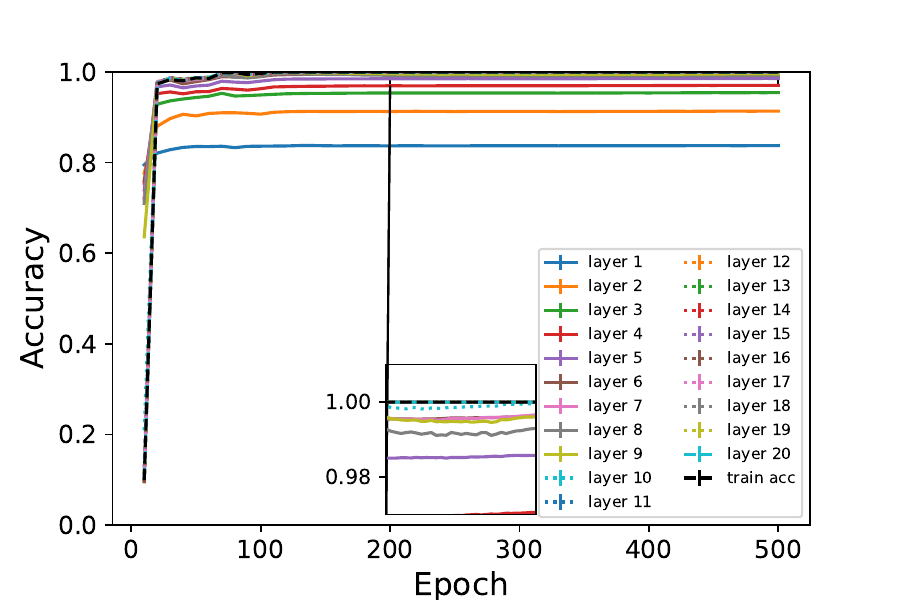}
     \\
     {\small$12$ layers}  & {\small$14$ layers} & {\small$16$ layers} & {\small$18$ layers} & {\small$20$ layers} \\
     \hline \\
     && {\bf\small CDNV - Test} && \\
     \includegraphics[width=0.2\linewidth]{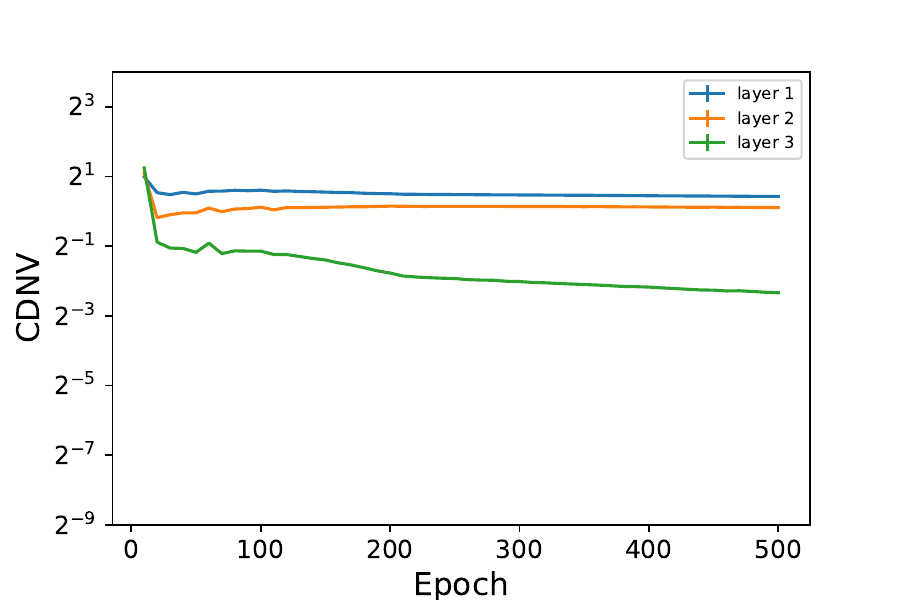}  &
     \includegraphics[width=0.2\linewidth]{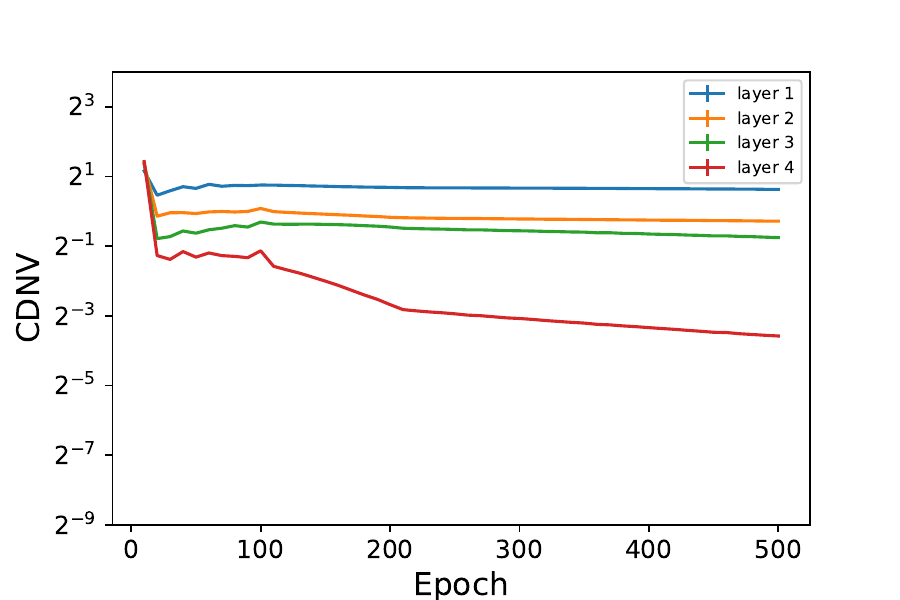}  & \includegraphics[width=0.2\linewidth]{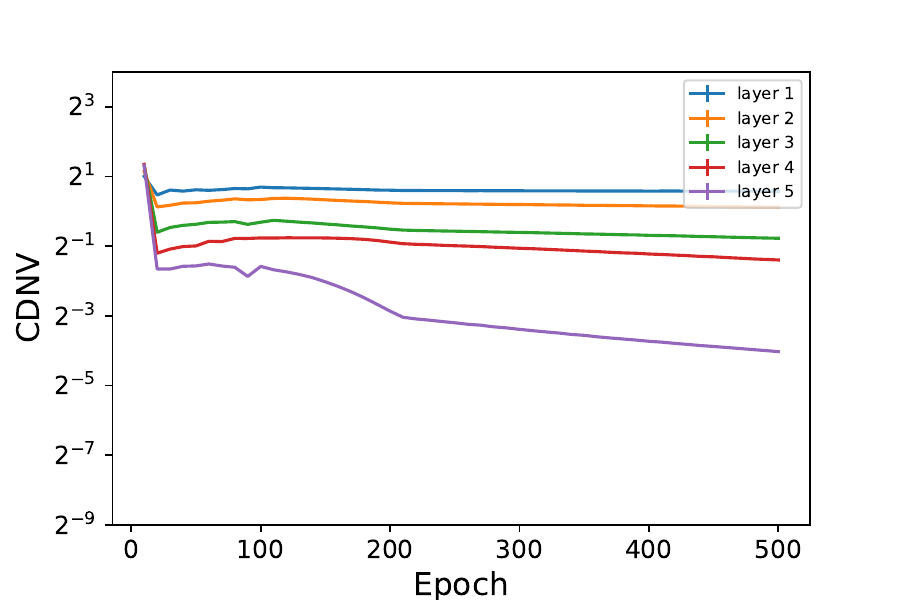}  &
     \includegraphics[width=0.2\linewidth]{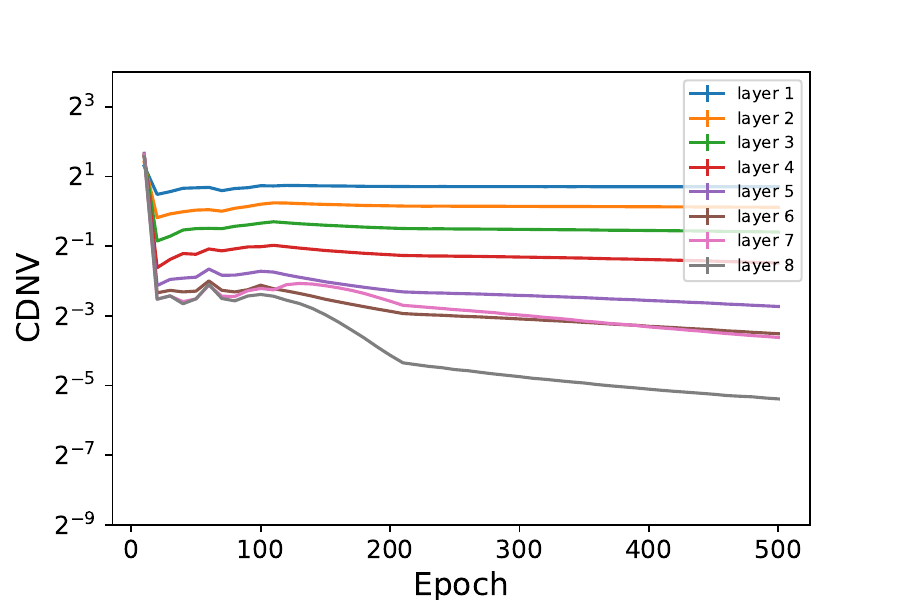}  &
     \includegraphics[width=0.2\linewidth]{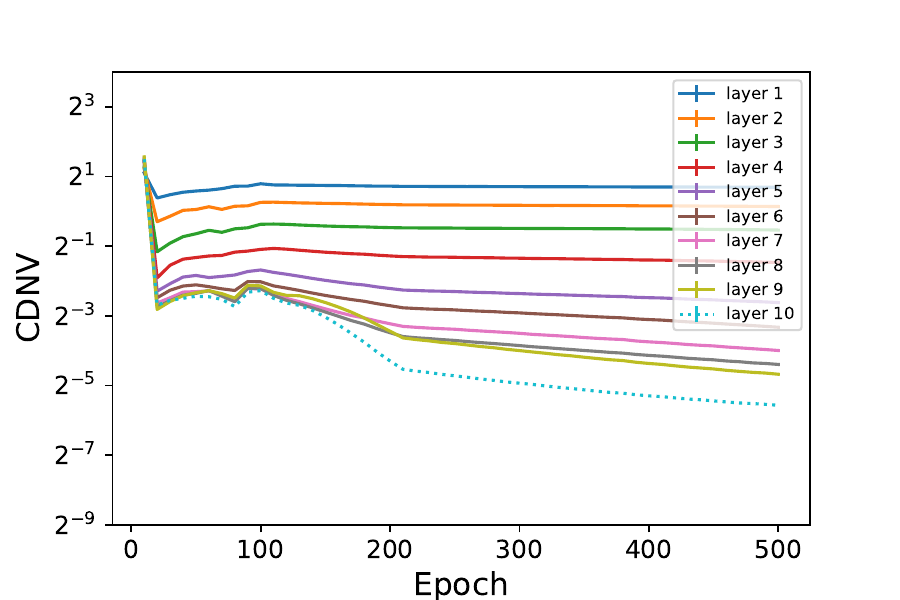}
     \\
     {\small$3$ layers}  & {\small$4$ layers} & {\small$5$ layers} & {\small$8$ layers} & {\small$10$ layers} \\
     \includegraphics[width=0.2\linewidth]{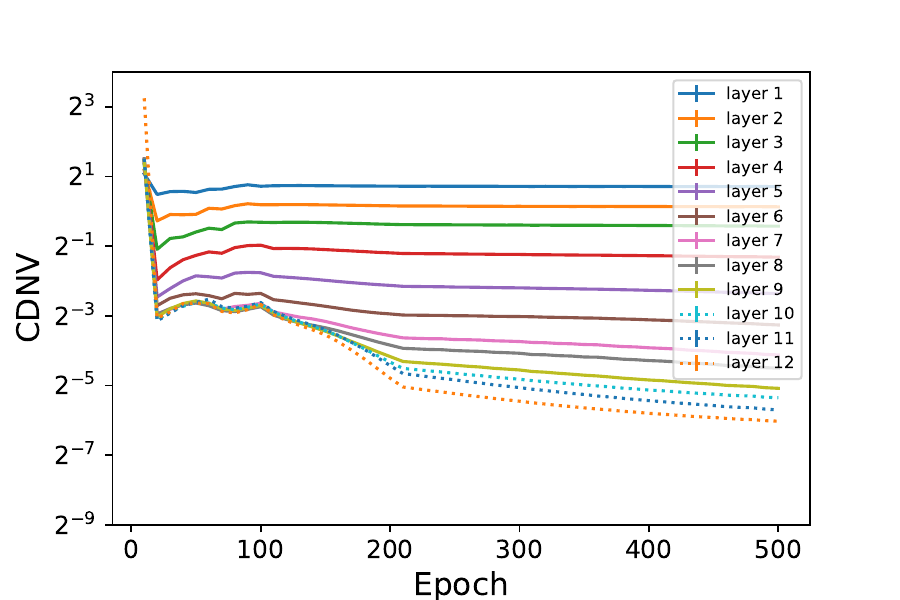}  &
     \includegraphics[width=0.2\linewidth]{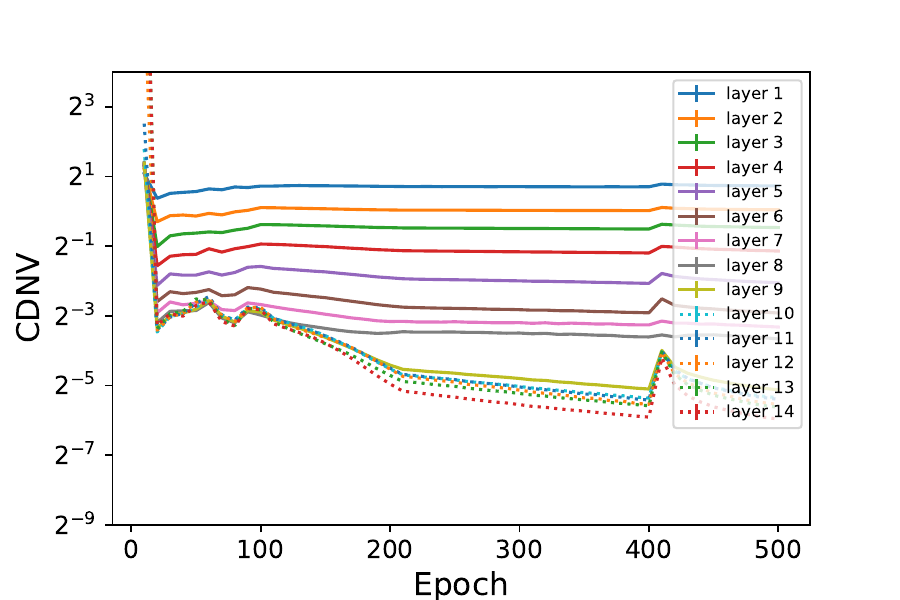}  & \includegraphics[width=0.2\linewidth]{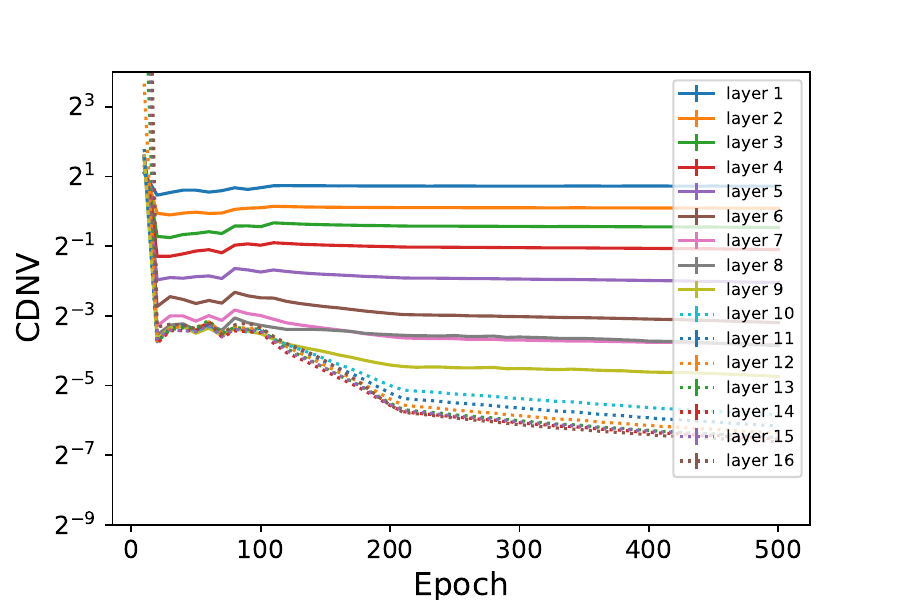}  &
     \includegraphics[width=0.2\linewidth]{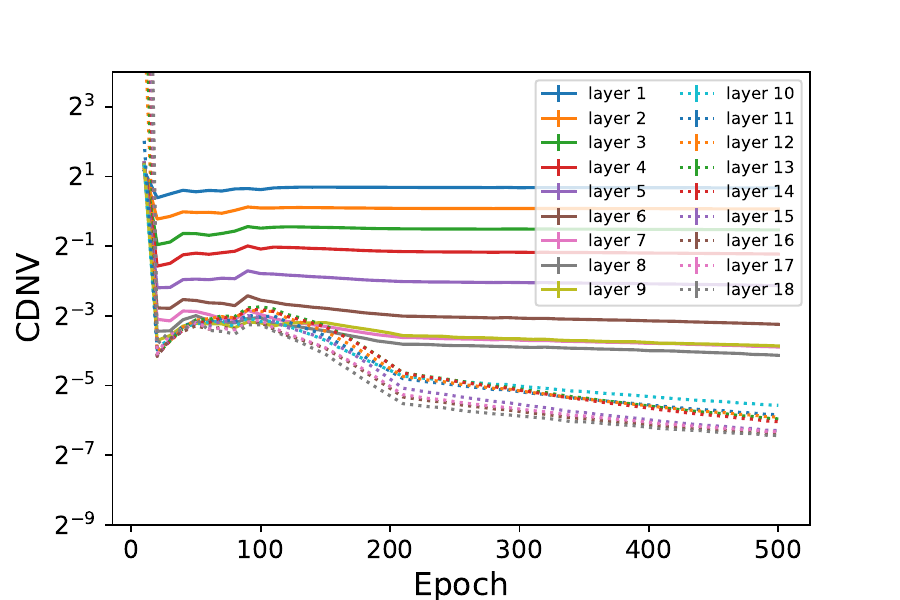}  &
     \includegraphics[width=0.2\linewidth]{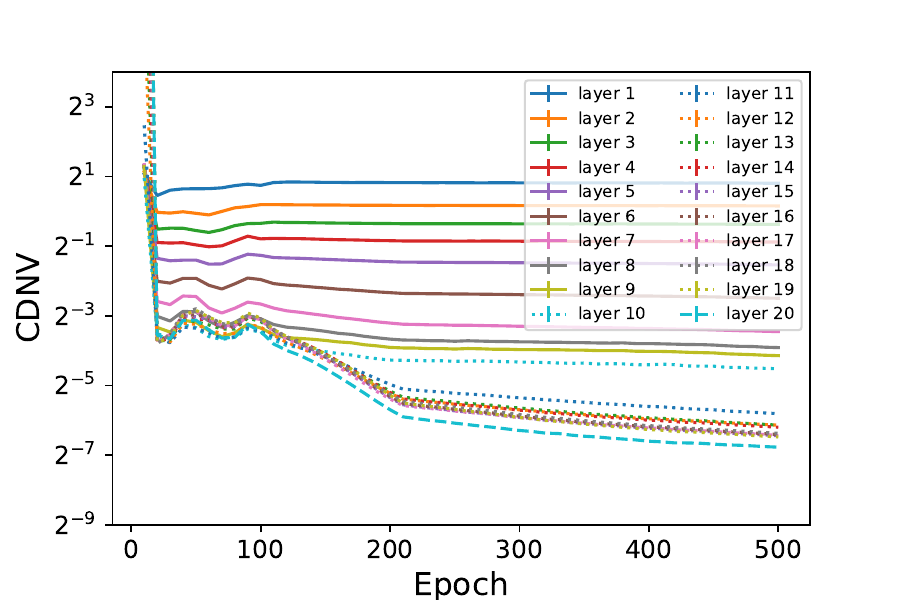}
     \\
     {\small$12$ layers}  & {\small$14$ layers} & {\small$16$ layers} & {\small$18$ layers} & {\small$20$ layers} \\
     \hline \\
     && {\bf\small NCC test accuracy} && \\
     \includegraphics[width=0.2\linewidth]{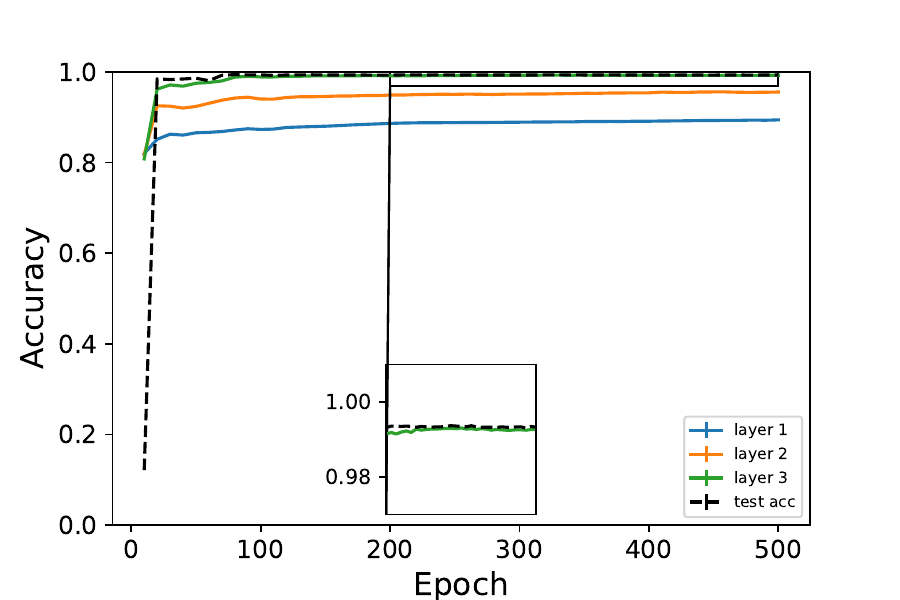}  & \includegraphics[width=0.2\linewidth]{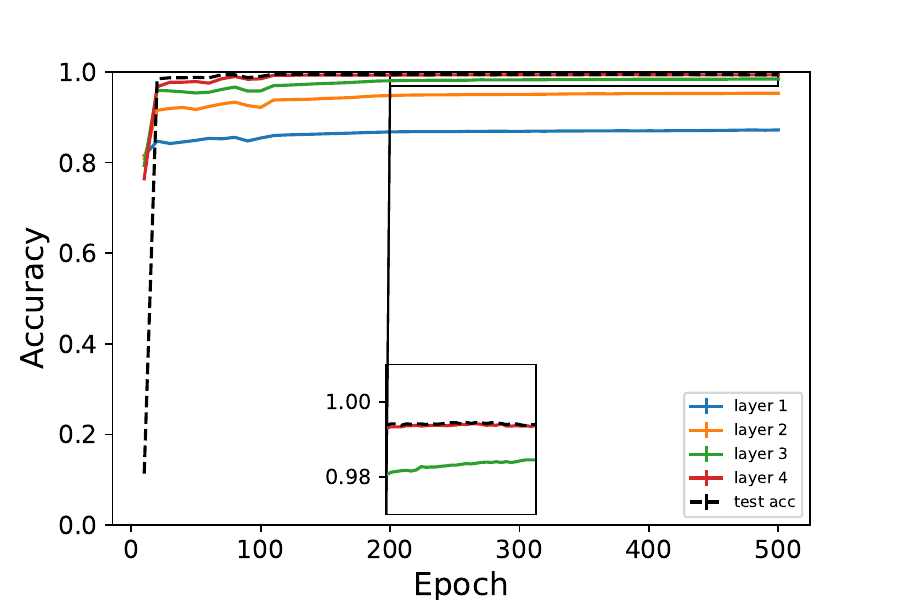}  & \includegraphics[width=0.2\linewidth]{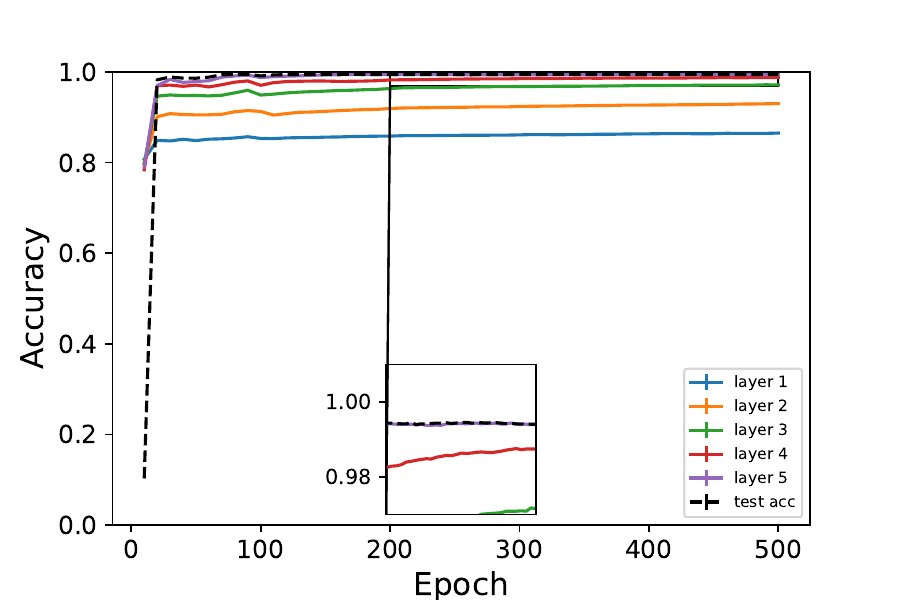}  &
     \includegraphics[width=0.2\linewidth]{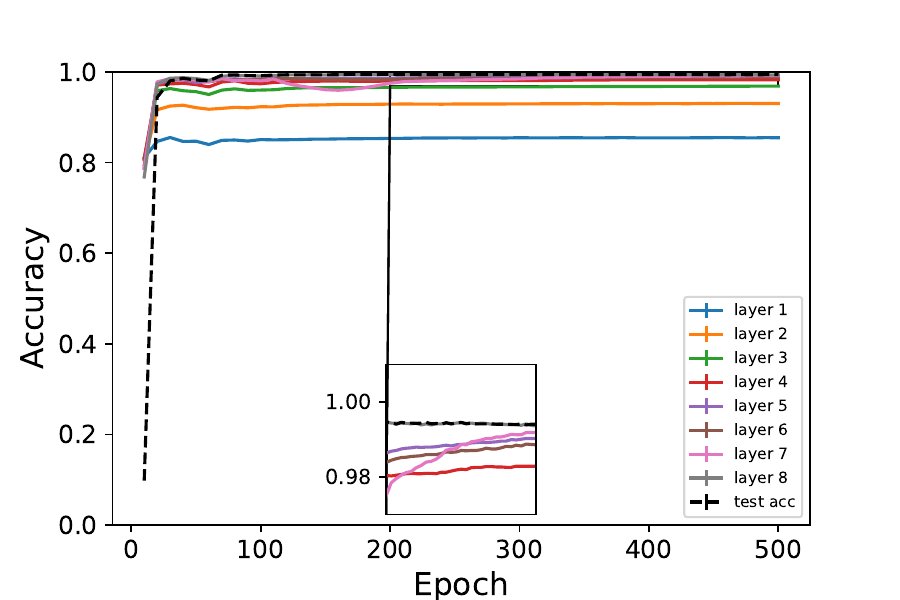}  &
     \includegraphics[width=0.2\linewidth]{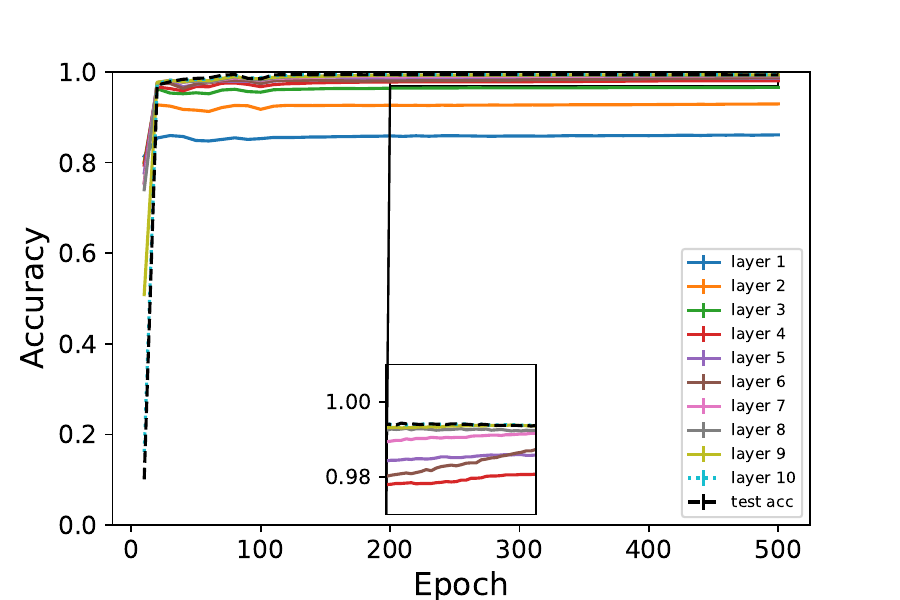}
     \\
     {\small$3$ layers}  & {\small$4$ layers} & {\small$5$ layers} & {\small$8$ layers} & {\small$10$ layers} \\
     \includegraphics[width=0.2\linewidth]{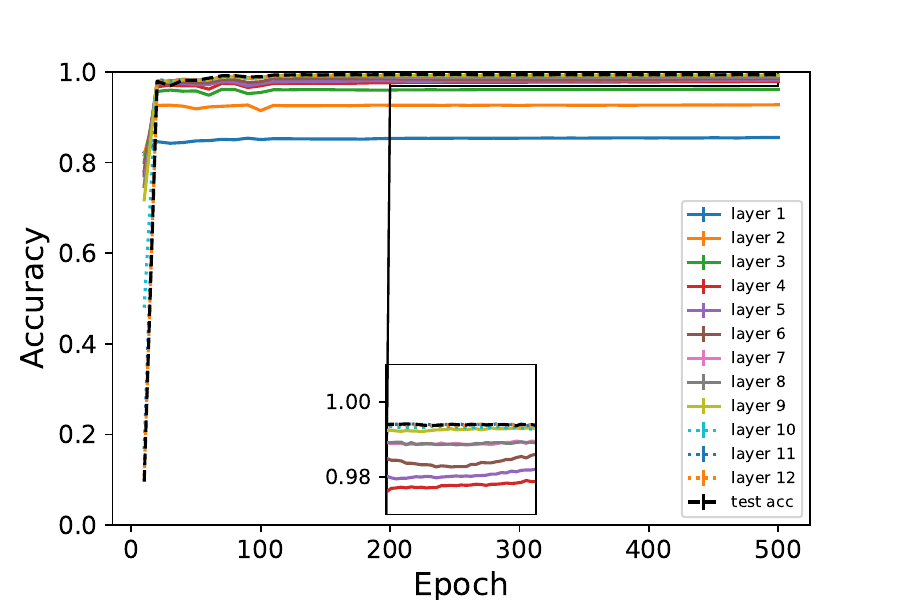}  & \includegraphics[width=0.2\linewidth]{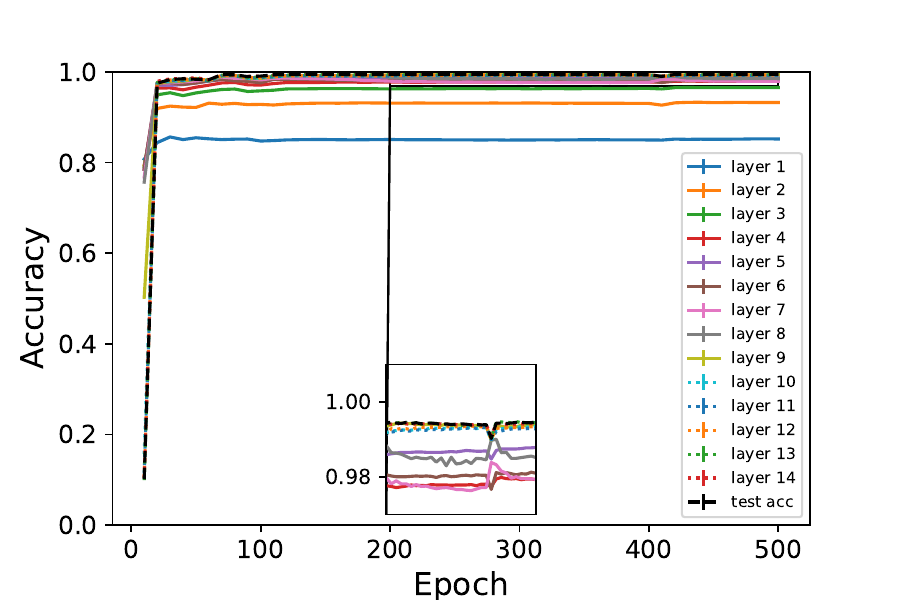}  & \includegraphics[width=0.2\linewidth]{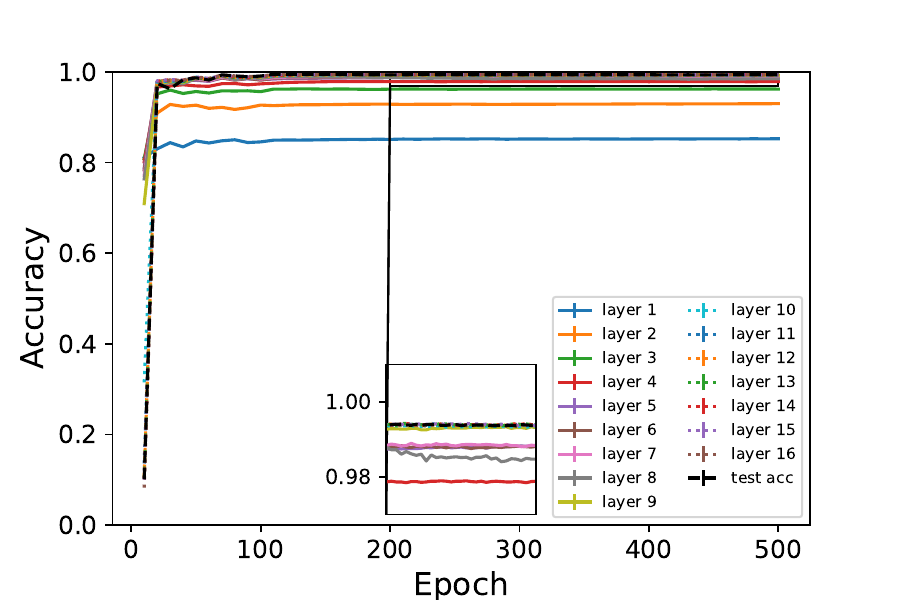}  &
     \includegraphics[width=0.2\linewidth]{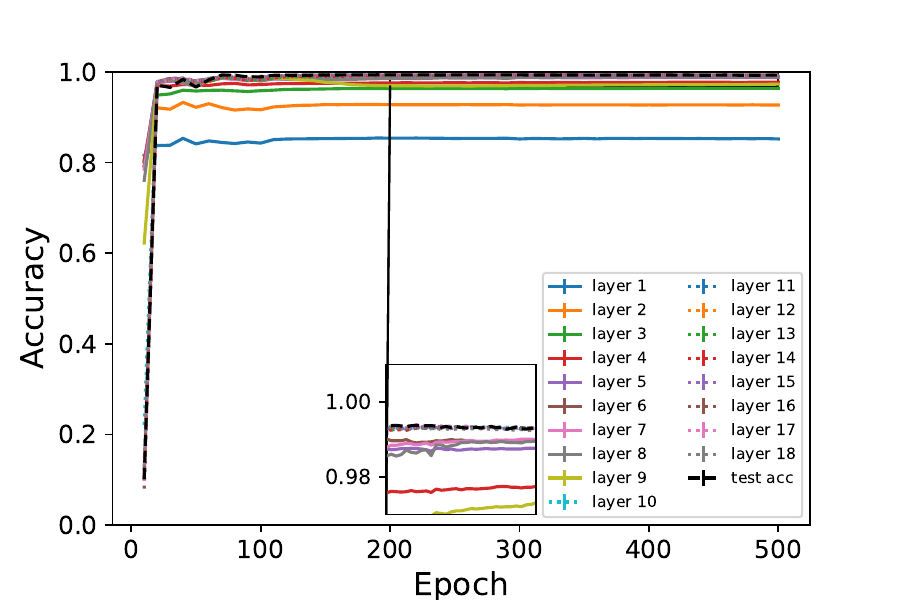}  &
     \includegraphics[width=0.2\linewidth]{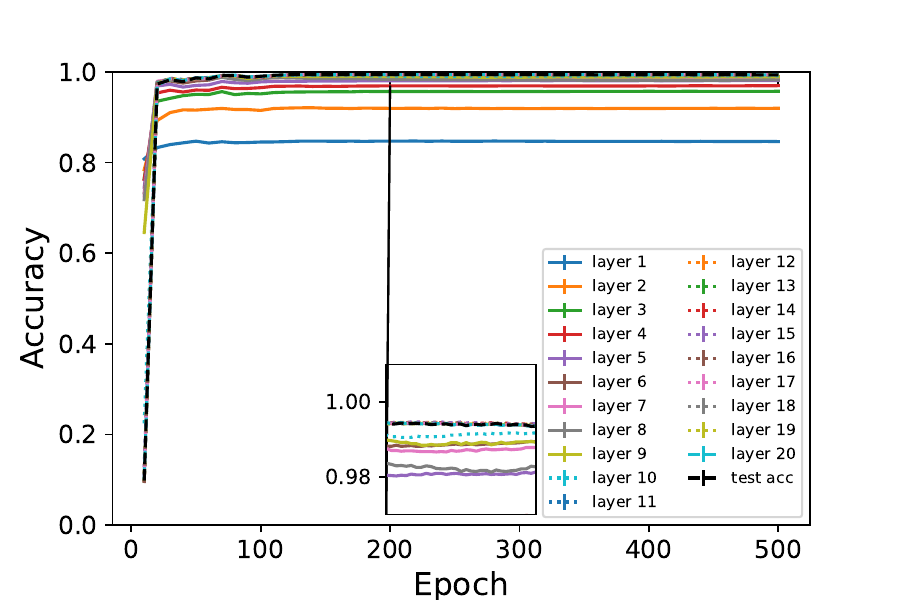}
     \\
     {\small$12$ layers}  & {\small$14$ layers} & {\small$16$ layers} & {\small$18$ layers} & {\small$20$ layers} \\
     \hline
  \end{tabular}
 \caption{{\bf Intermediate neural collapse of CONV-$L$-50 trained on MNIST.} See Fig. \ref{fig:cifar10_convnet_400} in the main text for details.}
 \label{fig:mnist_convnet_50}
\end{figure*}

\begin{figure*}[t]
  \centering
  \begin{tabular}{c@{}c@{}c@{}c@{}c}
  \hline \\
  && {\small\bf CDNV - Train} && \\
     \includegraphics[width=0.2\linewidth]{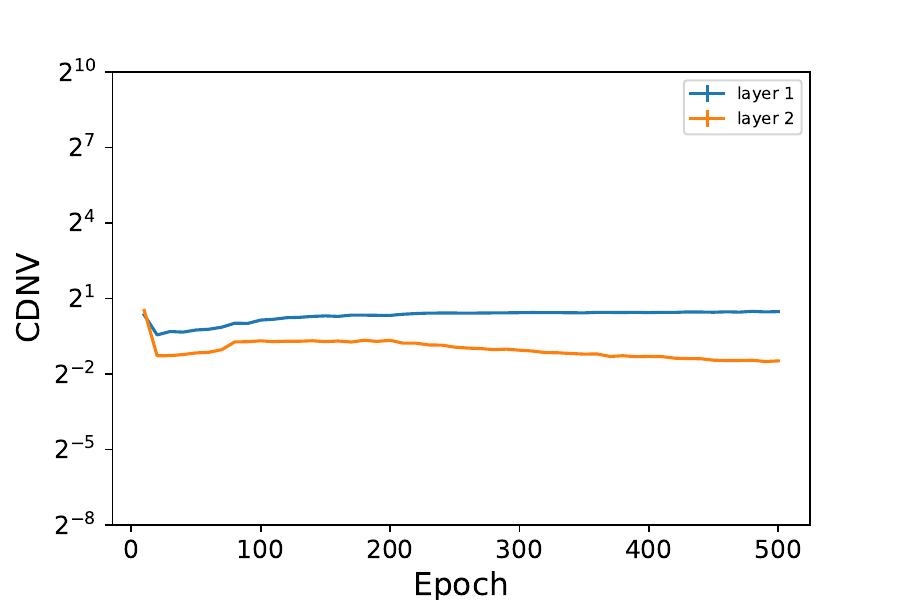}  & 
     \includegraphics[width=0.2\linewidth]{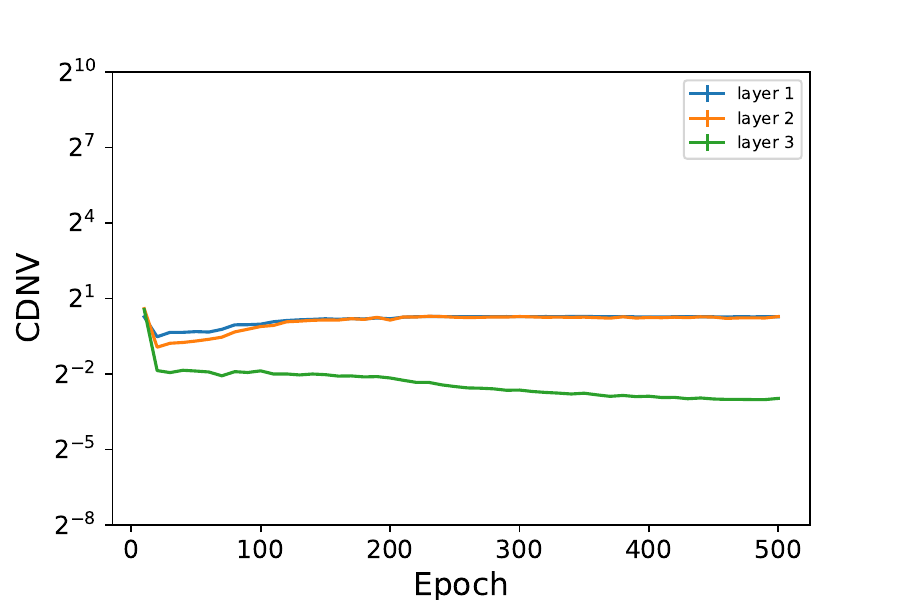}
     & 
    \includegraphics[width=0.2\linewidth]{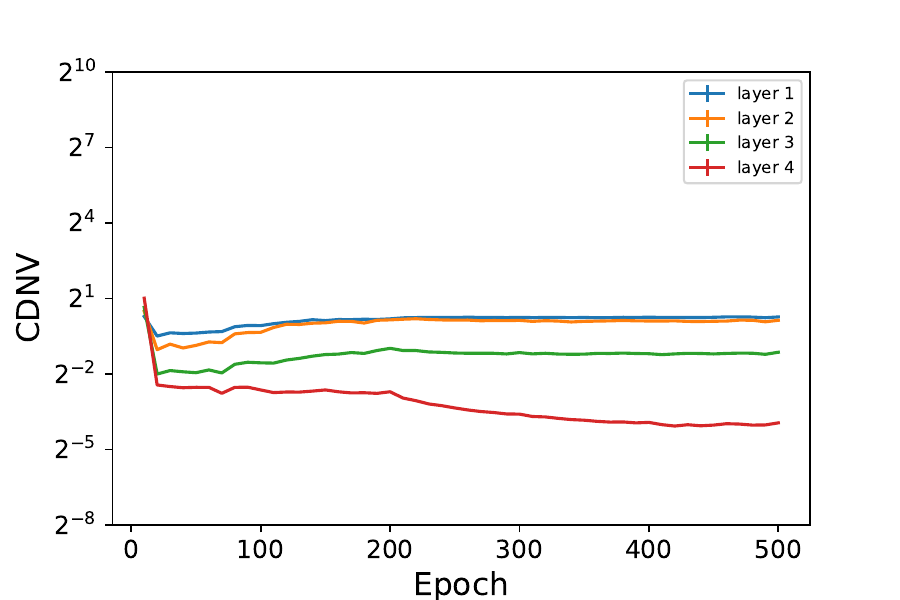} 
    &
     \includegraphics[width=0.2\linewidth]{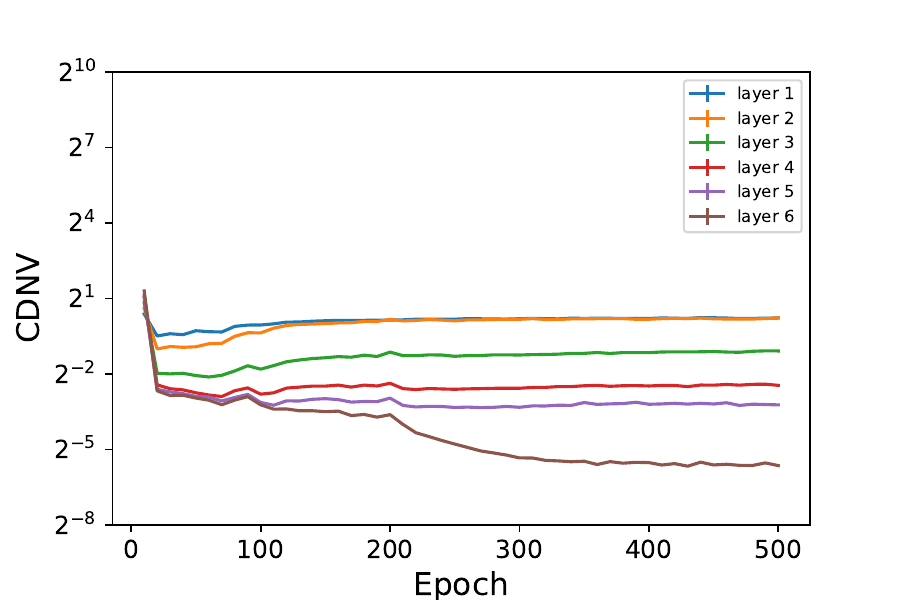}  &
     \includegraphics[width=0.2\linewidth]{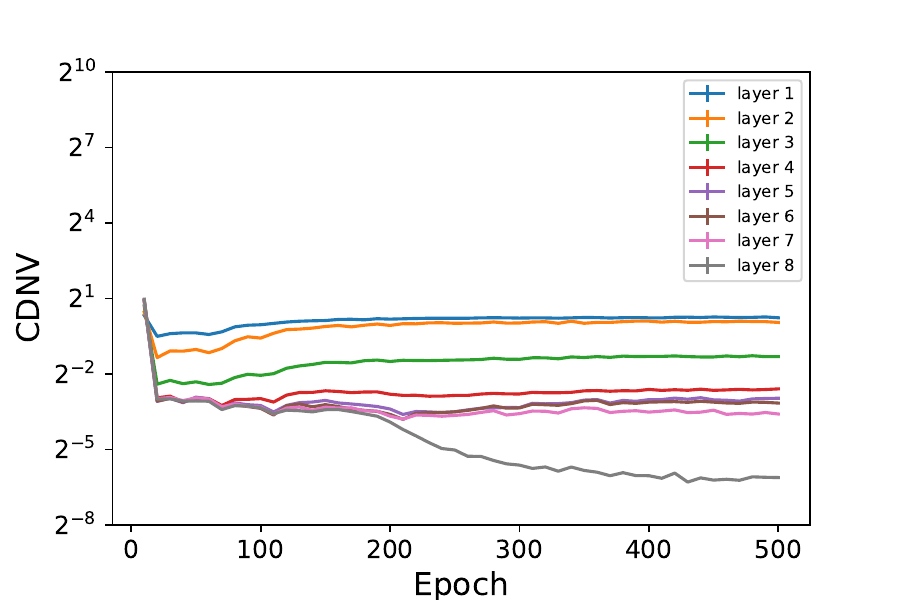}
     \\
     {\small 2 hidden layers}  & {\small 3 hidden layers} & {\small 4 hidden layers} & {\small 6 hidden layers} & {\small 8 hidden layers} \\
     \includegraphics[width=0.2\linewidth]{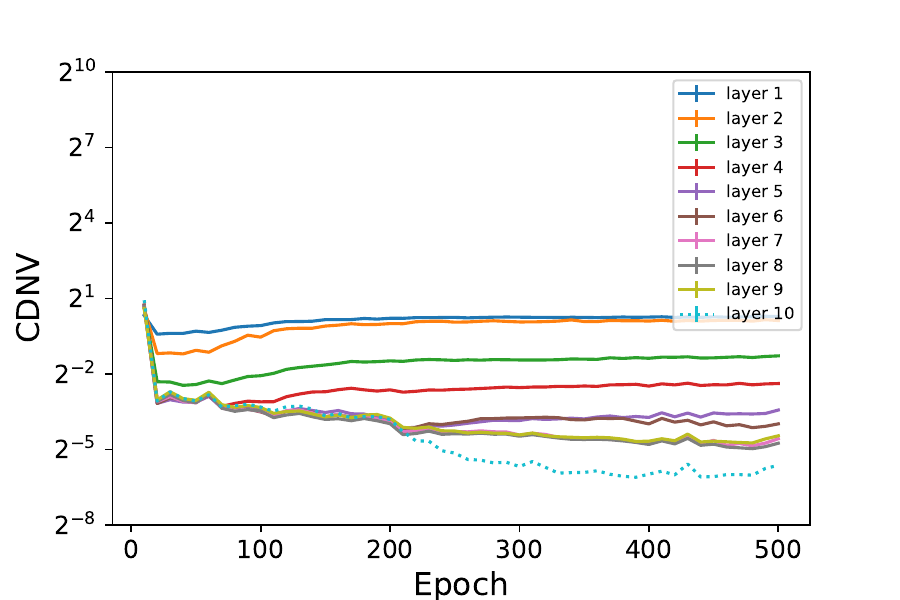}  & 
     \includegraphics[width=0.2\linewidth]{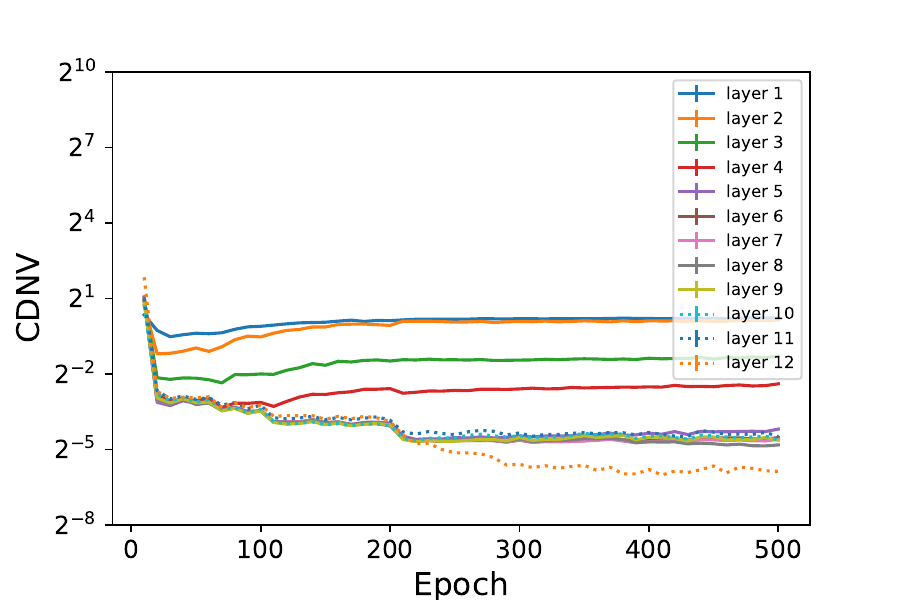}
     & 
    \includegraphics[width=0.2\linewidth]{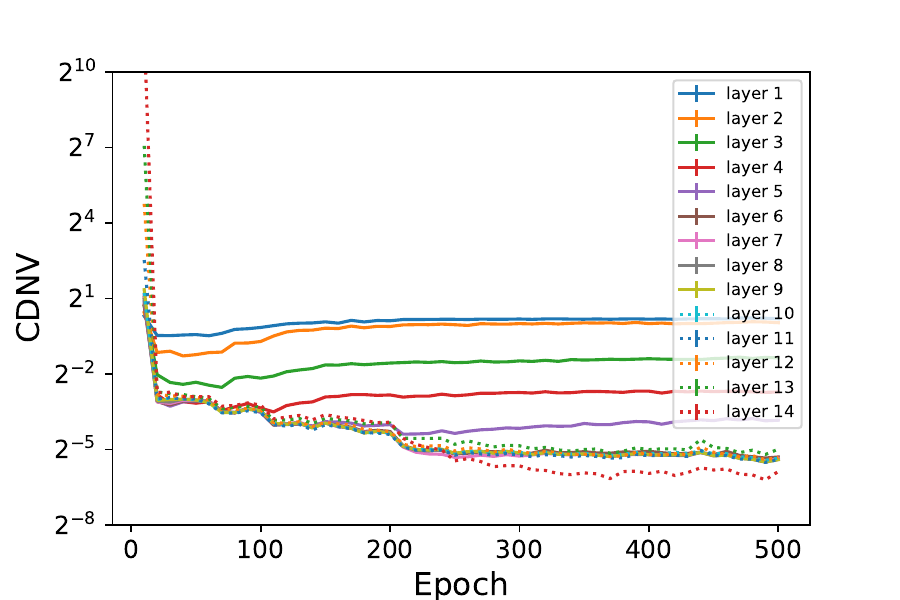} 
    &
     \includegraphics[width=0.2\linewidth]{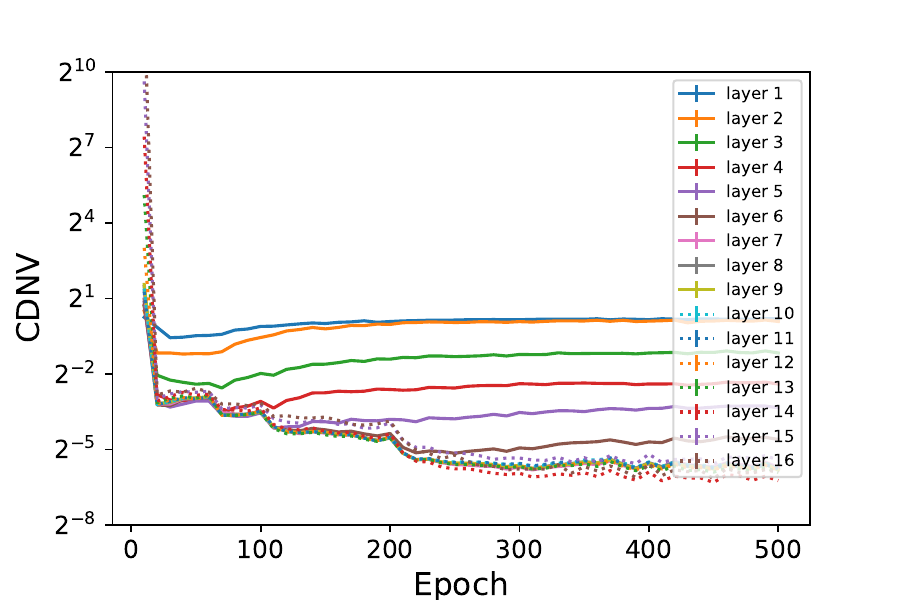}  &
     \includegraphics[width=0.2\linewidth]{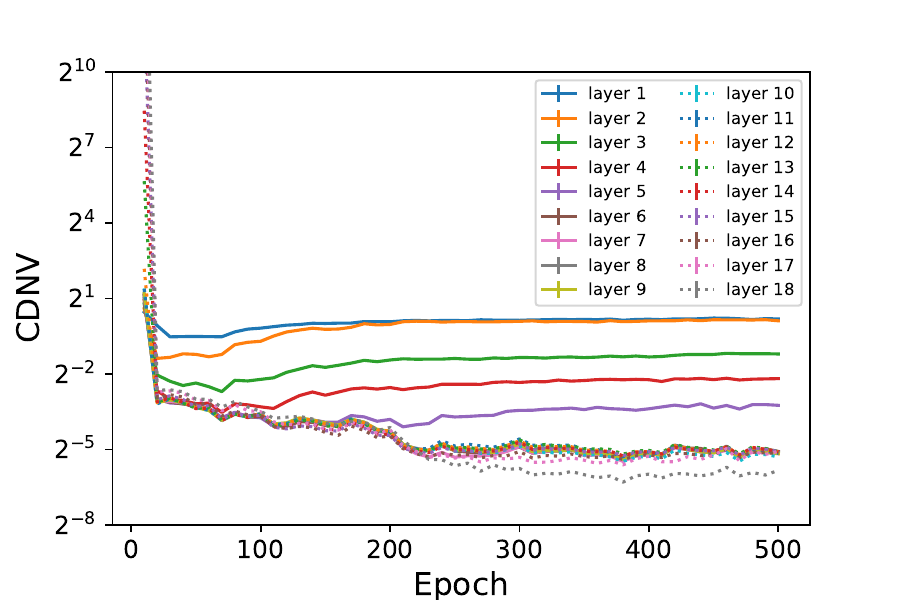}
     \\
     {\small 10 hidden layers} & {\small 12 hidden layers} & {\small 14 hidden layers} & {\small 16 hidden layers} & {\small 18 hidden layers} \\
     \hline \\
     && {\small\bf NCC train accuracy} && \\
     \includegraphics[width=0.2\linewidth]{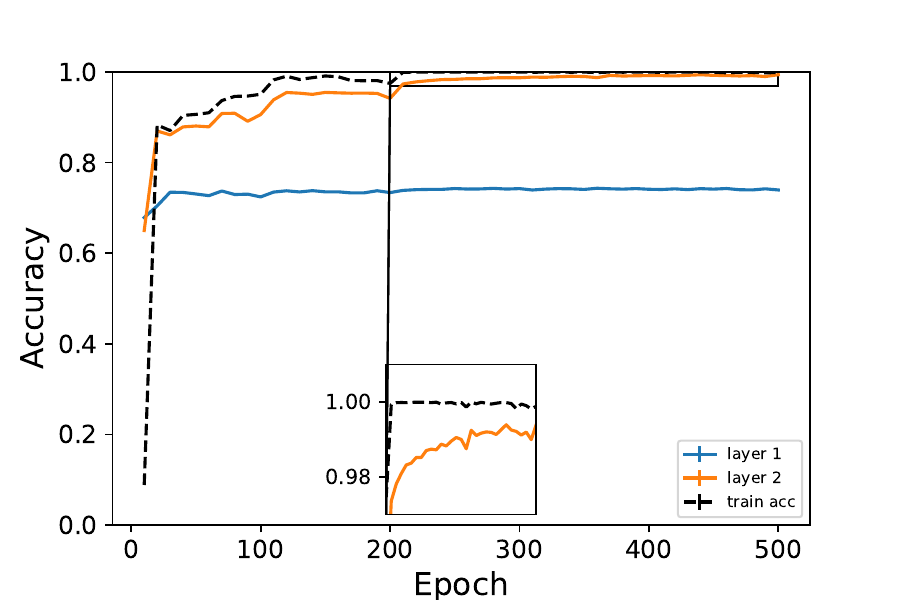}  & \includegraphics[width=0.2\linewidth]{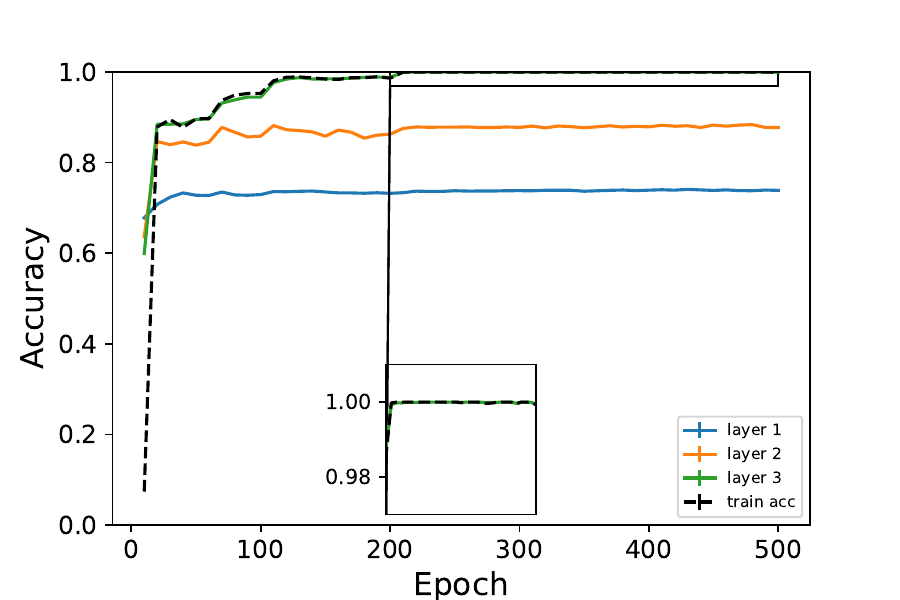}  & \includegraphics[width=0.2\linewidth]{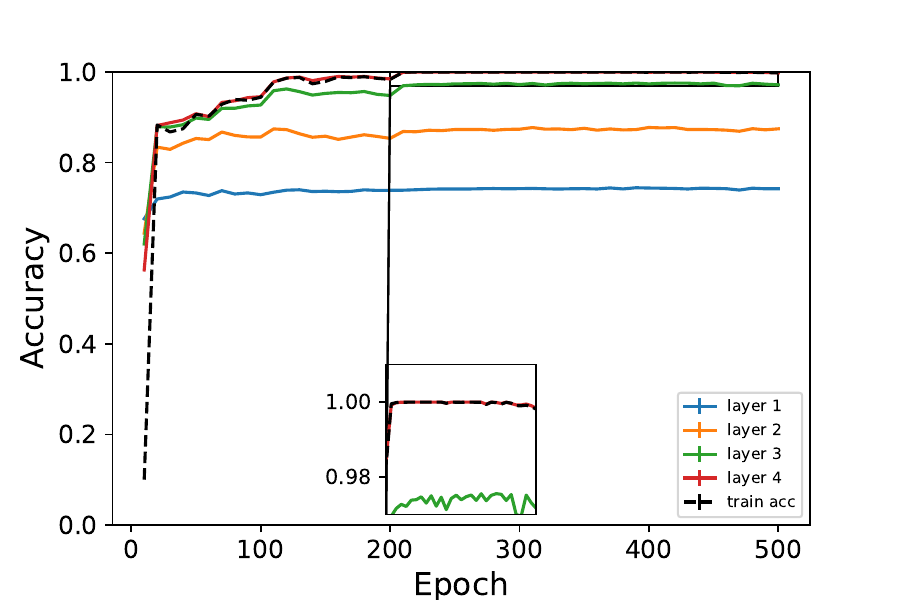}  &
     \includegraphics[width=0.2\linewidth]{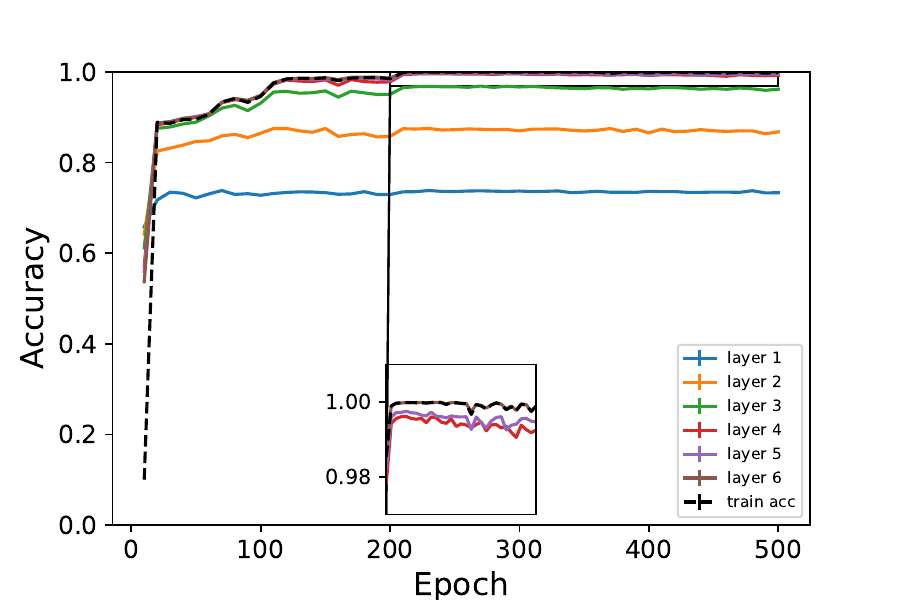}  &
     \includegraphics[width=0.2\linewidth]{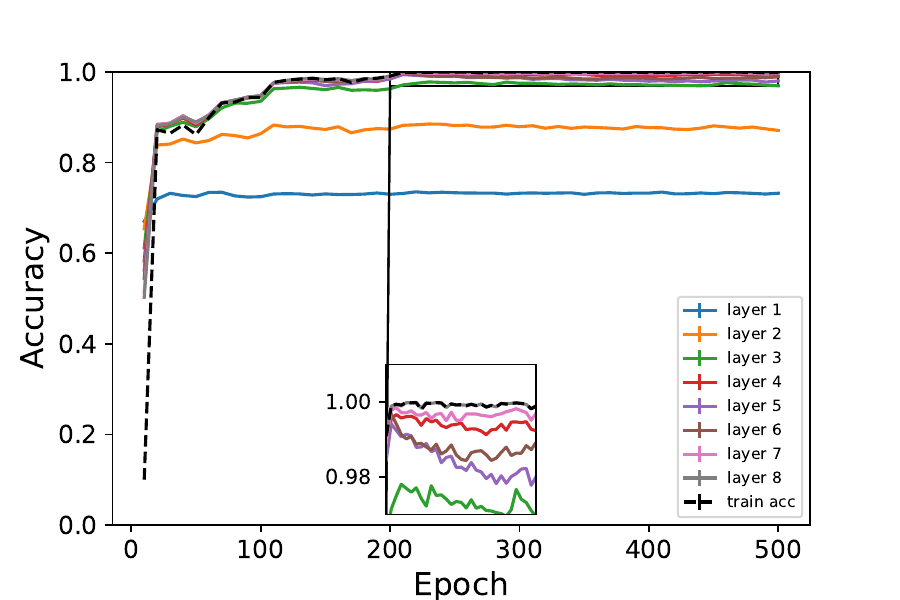}
     \\
     {\small 2 hidden layers} & {\small 3 hidden layers} & {\small 4 hidden layers} & {\small 6 hidden layers} & {\small 8 hidden layers} \\
     \includegraphics[width=0.2\linewidth]{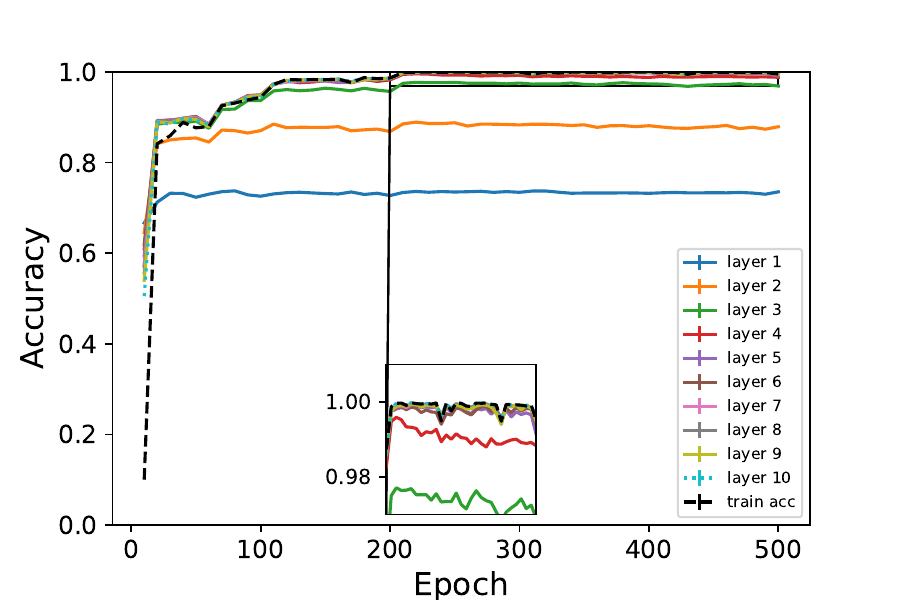}  & \includegraphics[width=0.2\linewidth]{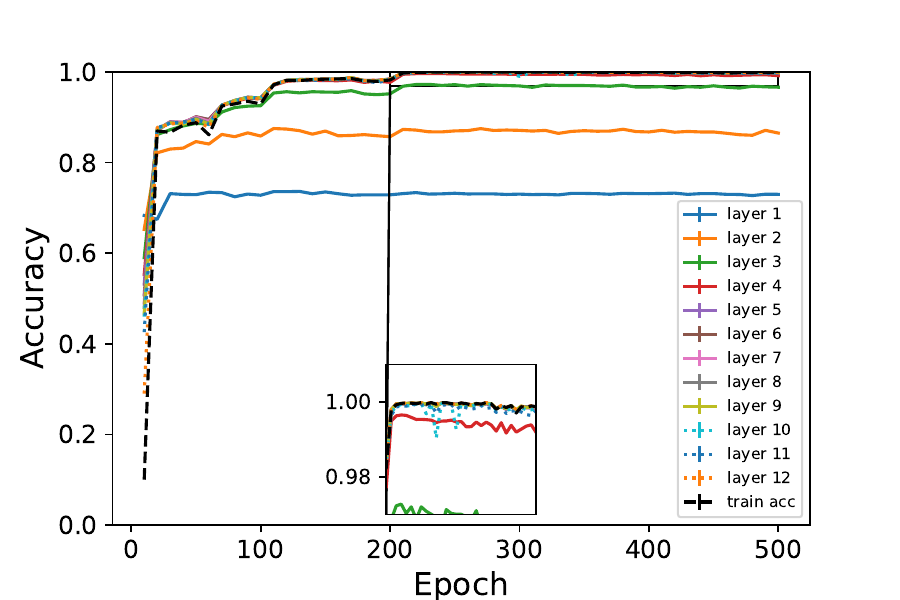}  & \includegraphics[width=0.2\linewidth]{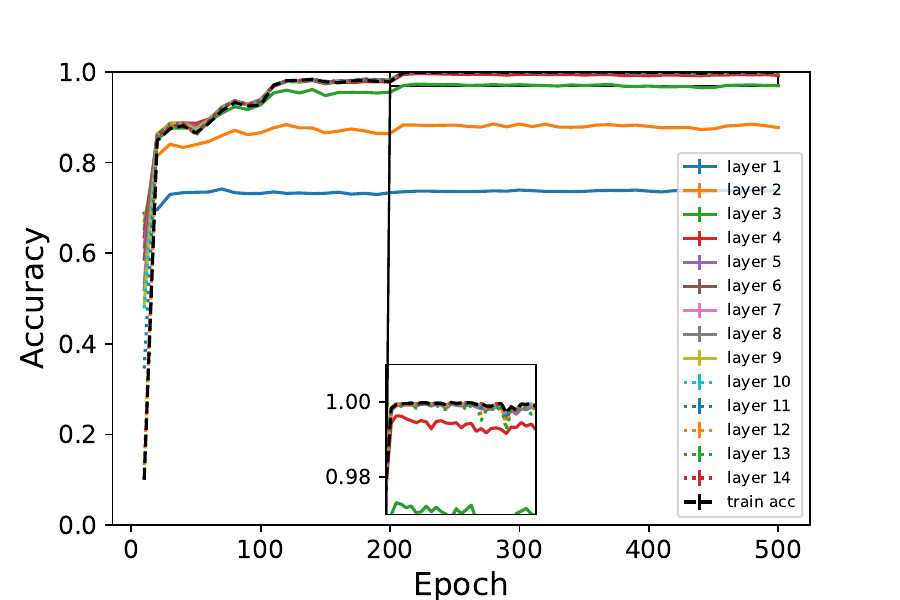}  &
     \includegraphics[width=0.2\linewidth]{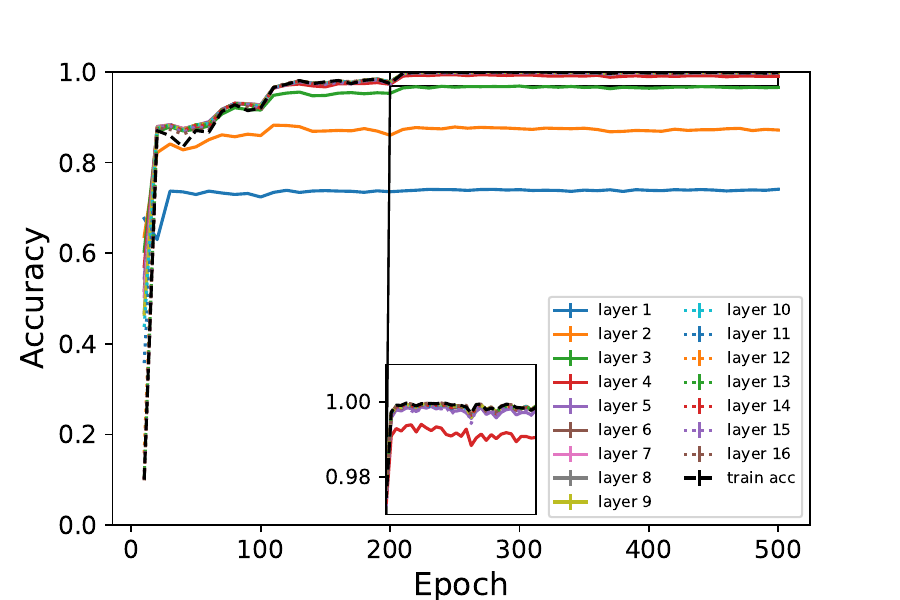}  &
     \includegraphics[width=0.2\linewidth]{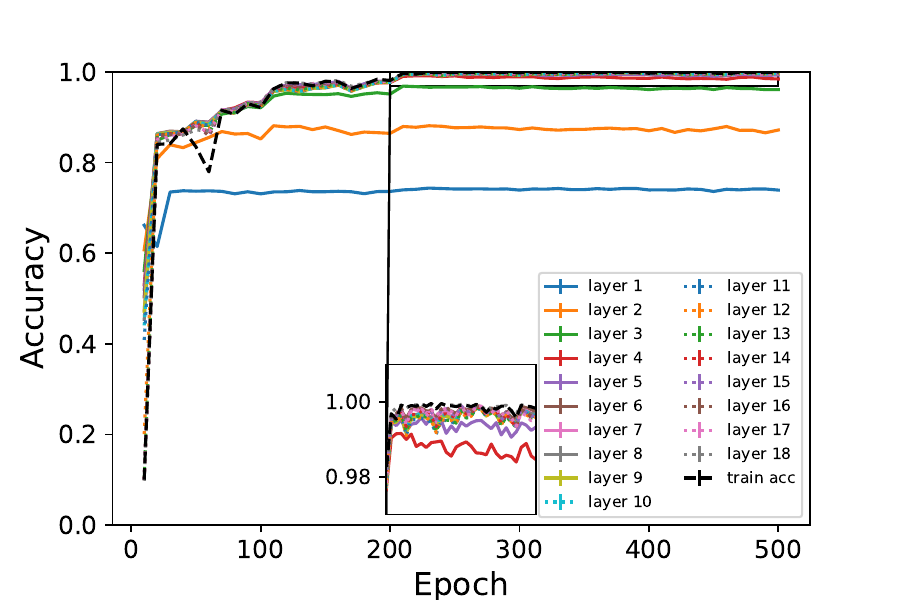}
     \\
     {\small 10 hidden layers} & {\small 12 hidden layers} & {\small 14 hidden layers} & {\small 16 hidden layers} & {\small 18 hidden layers} \\
  \hline \\
  && {\small\bf CDNV - Test} && \\
     \includegraphics[width=0.2\linewidth]{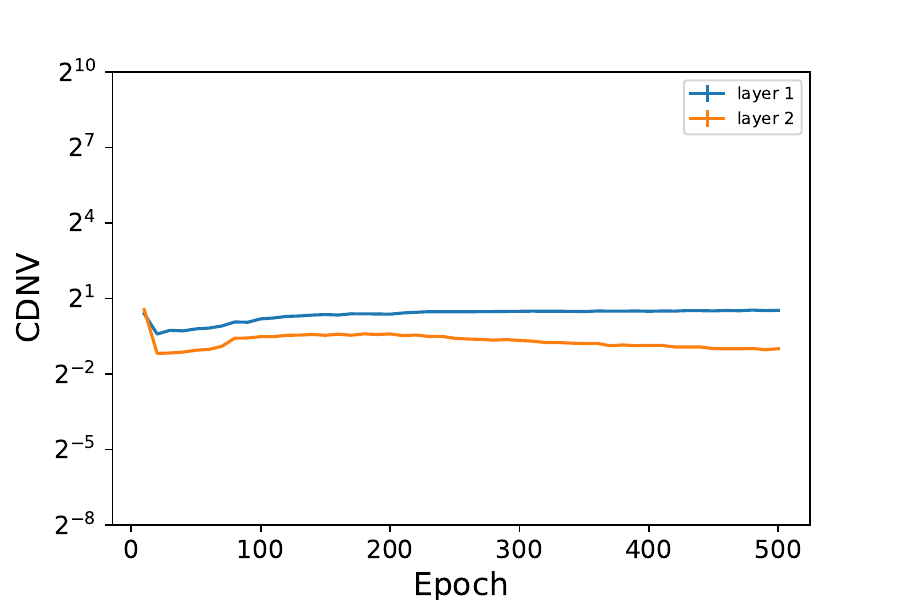}  & 
     \includegraphics[width=0.2\linewidth]{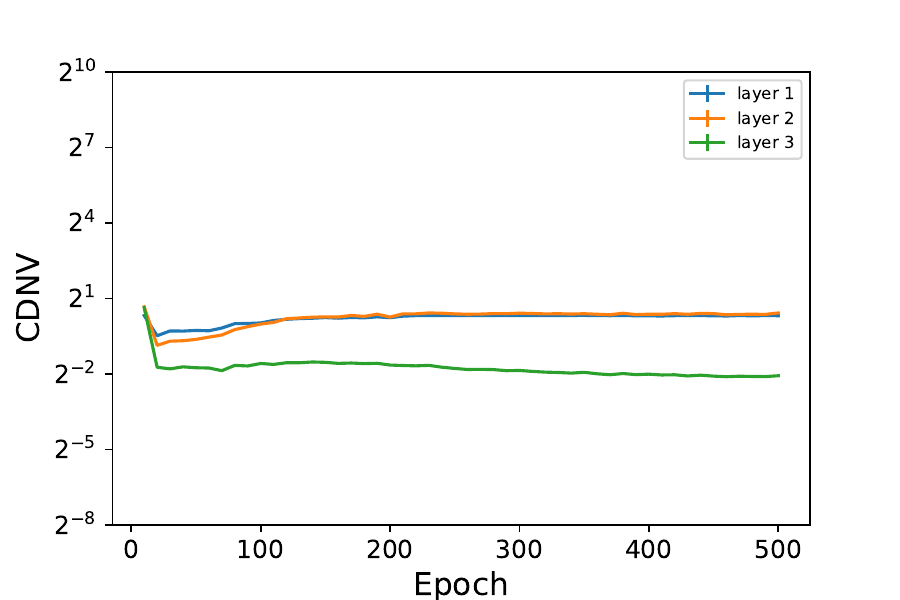}
     & 
    \includegraphics[width=0.2\linewidth]{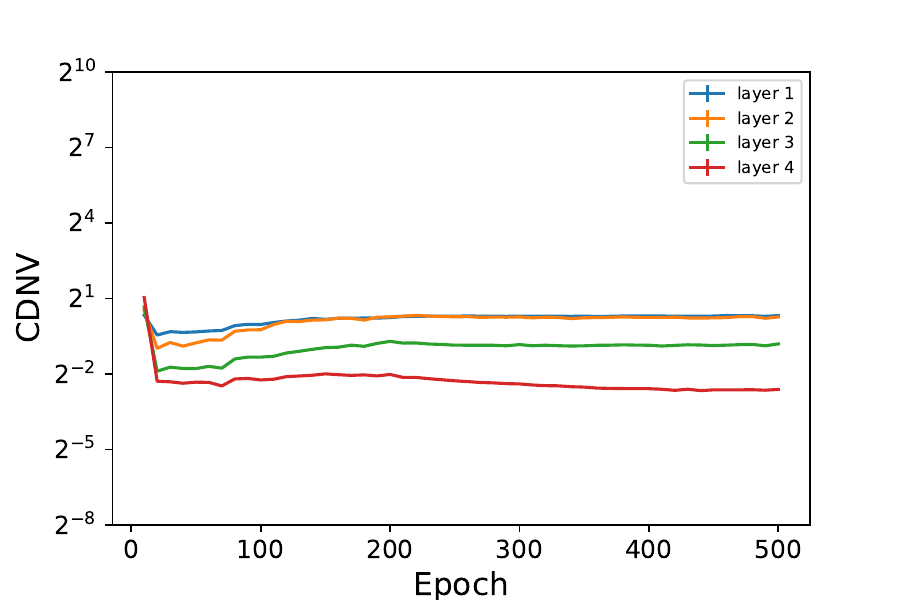} 
    &
     \includegraphics[width=0.2\linewidth]{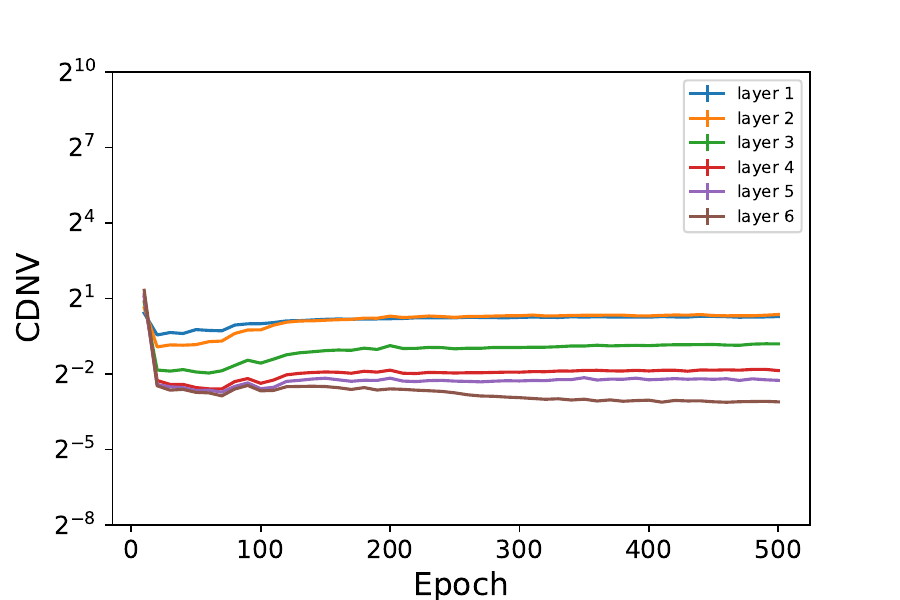}  &
     \includegraphics[width=0.2\linewidth]{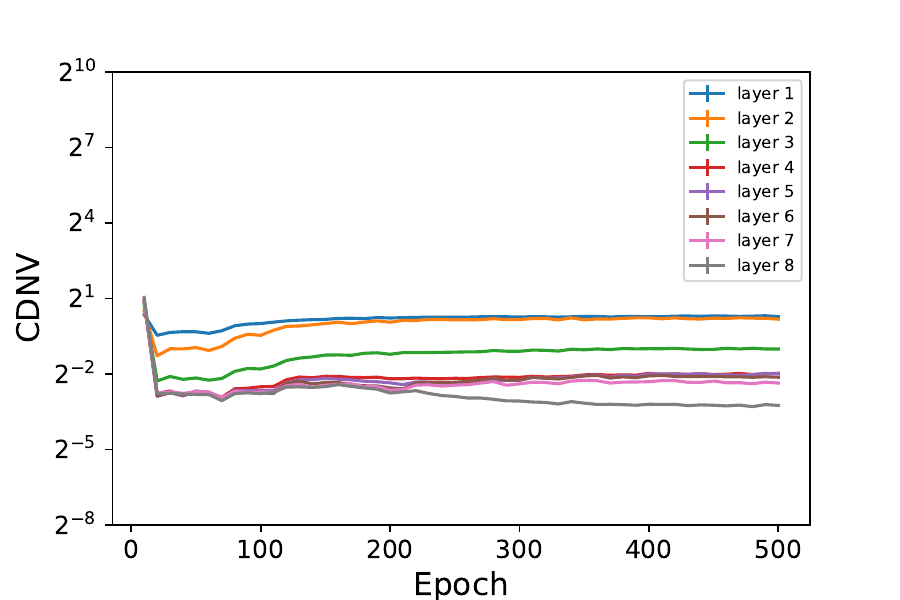}
     \\
     {\small 2 hidden layers}  & {\small 3 hidden layers} & {\small 4 hidden layers} & {\small 6 hidden layers} & {\small 8 hidden layers} \\
     \includegraphics[width=0.2\linewidth]{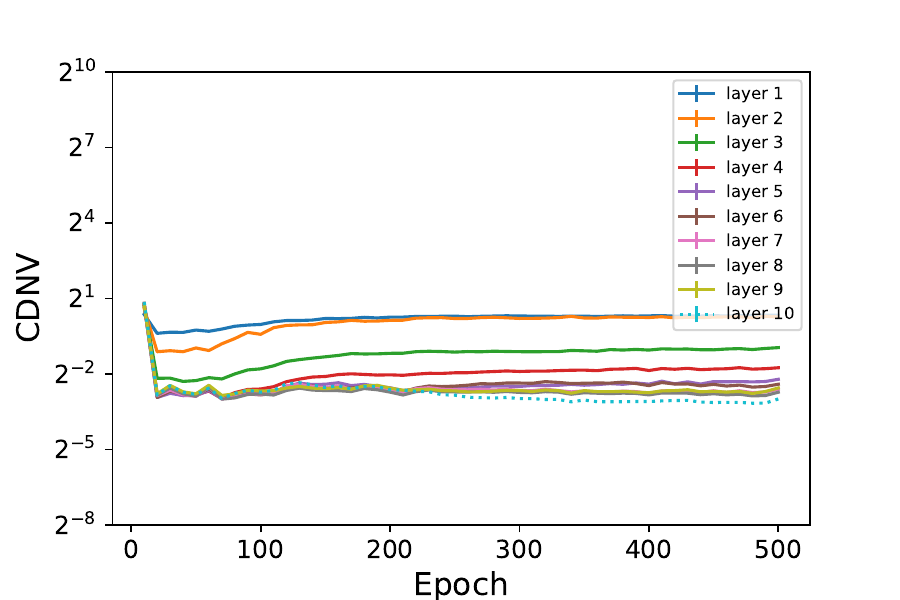}  & 
     \includegraphics[width=0.2\linewidth]{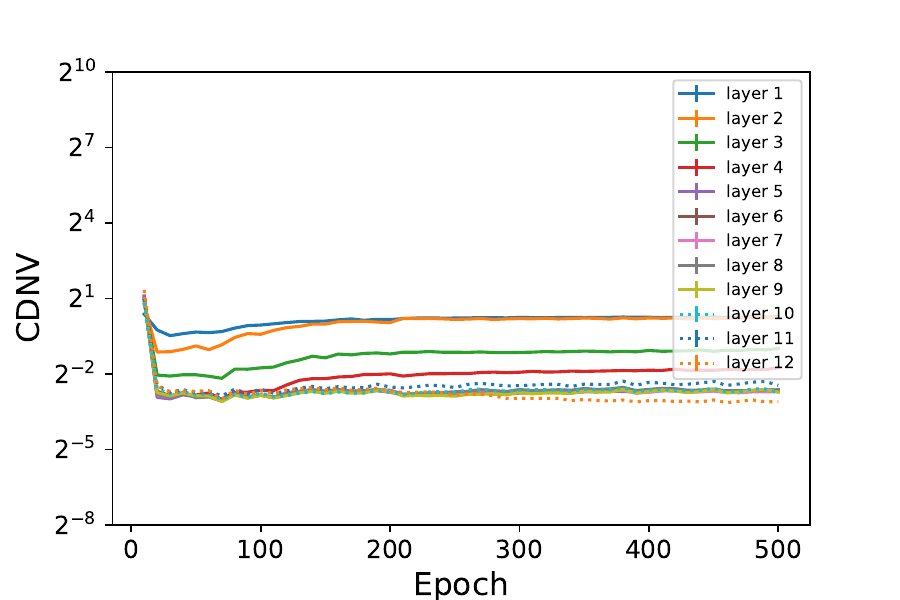}
     & 
    \includegraphics[width=0.2\linewidth]{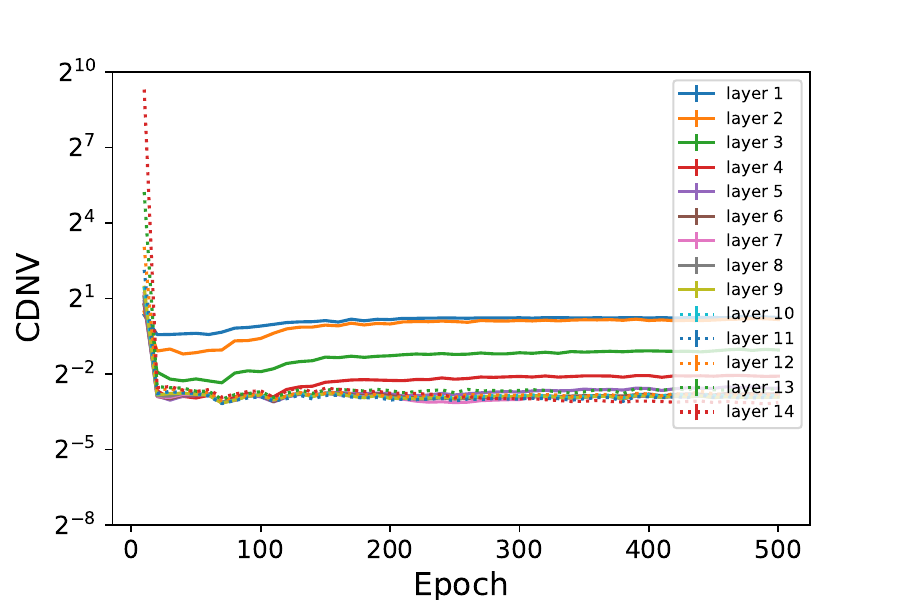} 
    &
     \includegraphics[width=0.2\linewidth]{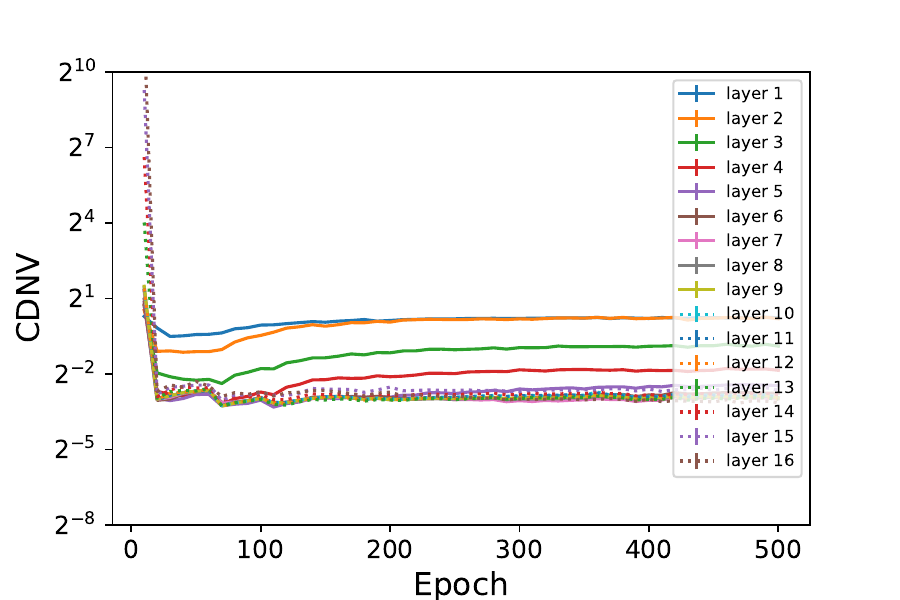}  &
     \includegraphics[width=0.2\linewidth]{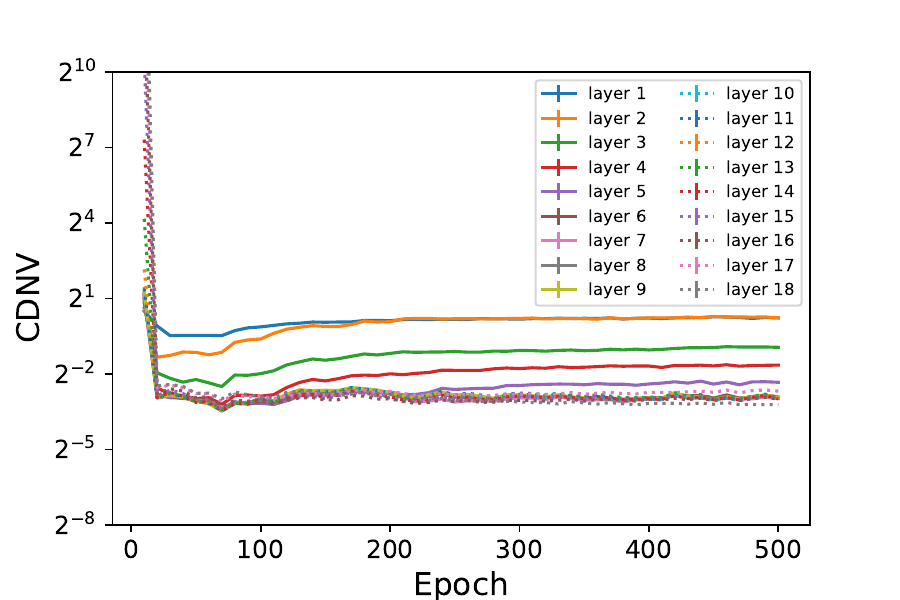}
     \\
     {\small 10 hidden layers} & {\small 12 hidden layers} & {\small 14 hidden layers} & {\small 16 hidden layers} & {\small 18 hidden layers} \\
     \hline \\
     && {\small\bf NCC test accuracy} && \\
     \includegraphics[width=0.2\linewidth]{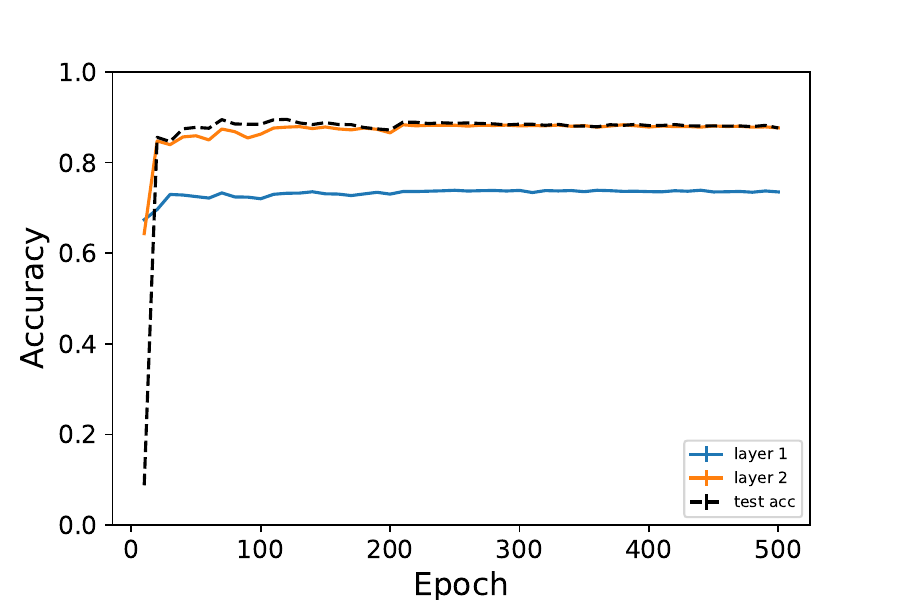}  & \includegraphics[width=0.2\linewidth]{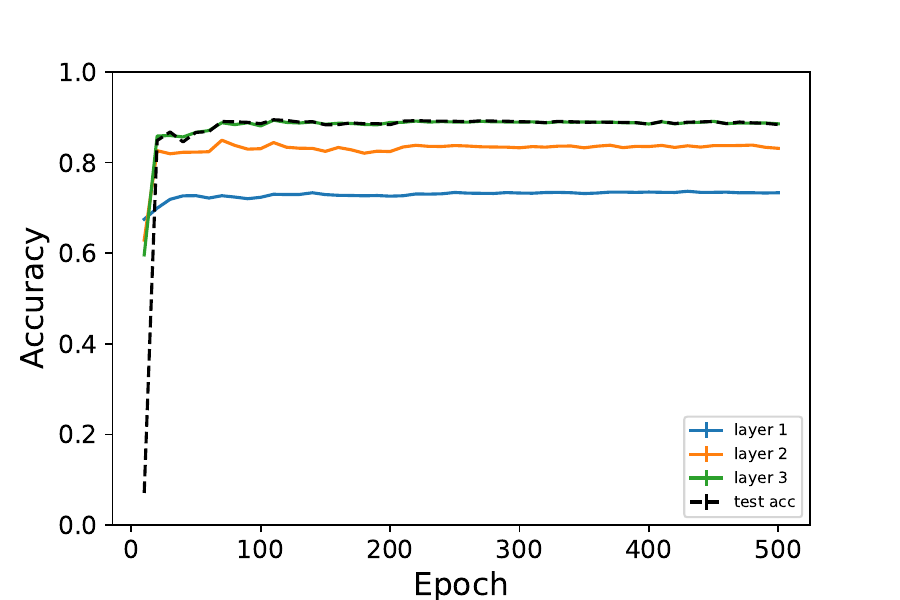}  & \includegraphics[width=0.2\linewidth]{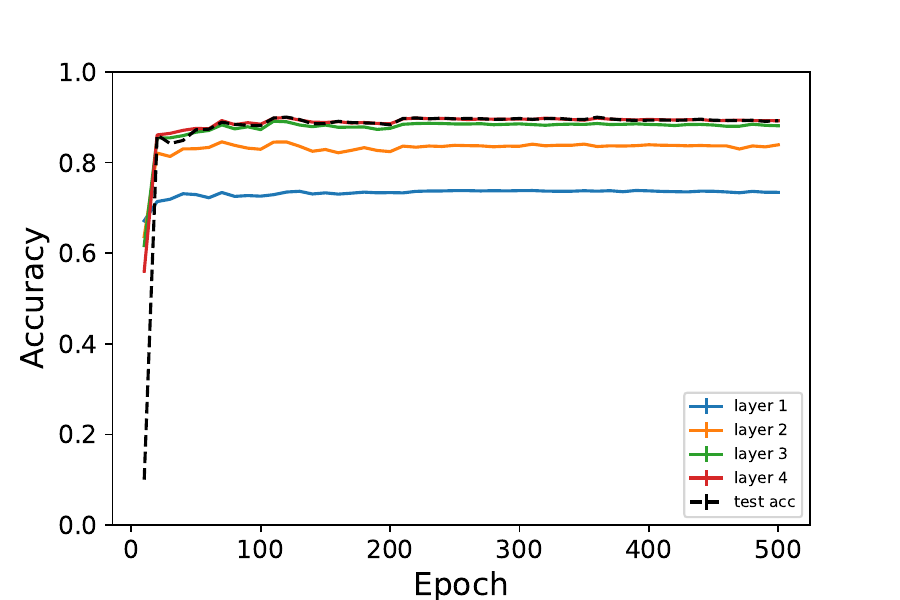}  &
     \includegraphics[width=0.2\linewidth]{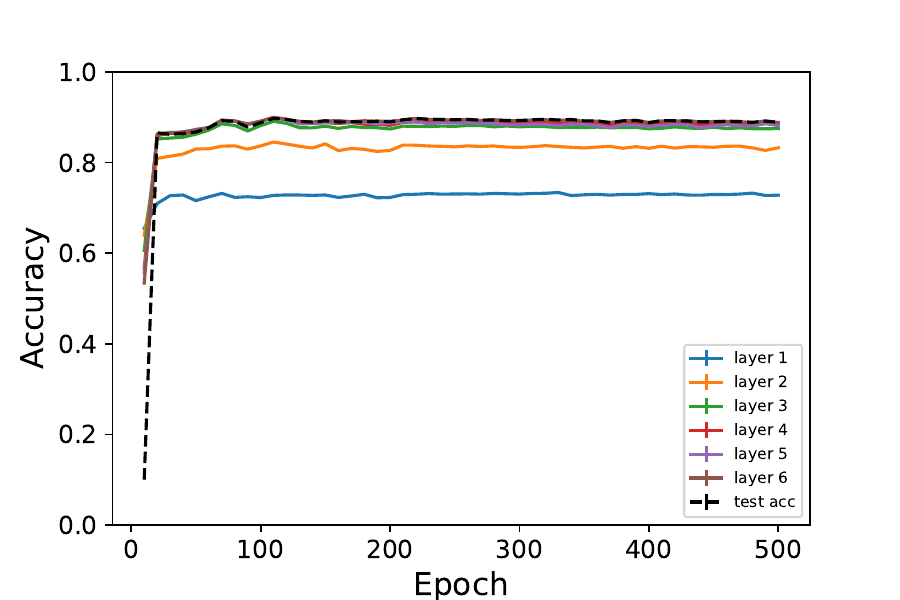}  &
     \includegraphics[width=0.2\linewidth]{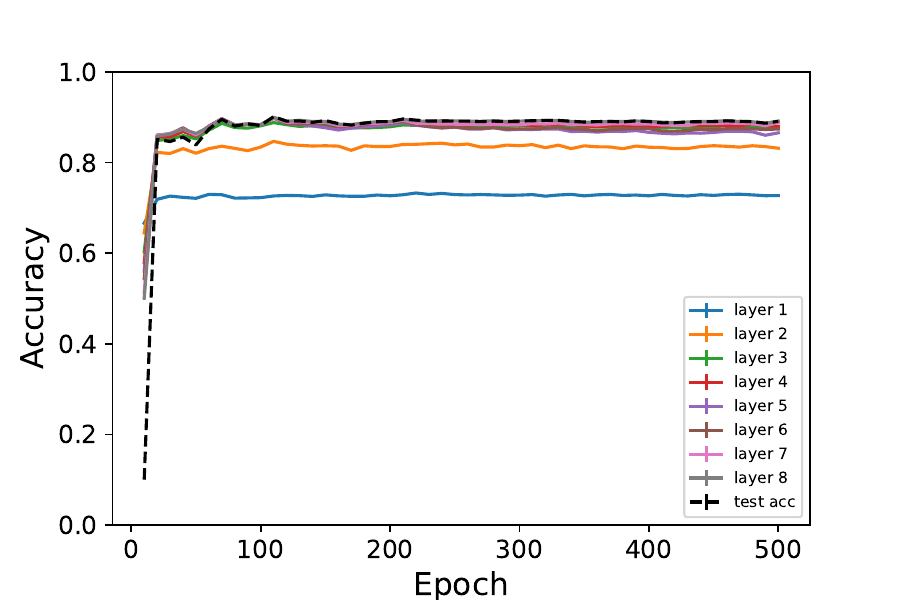}
     \\
     {\small 2 hidden layers} & {\small 3 hidden layers} & {\small 4 hidden layers} & {\small 6 hidden layers} & {\small 8 hidden layers} \\
     \includegraphics[width=0.2\linewidth]{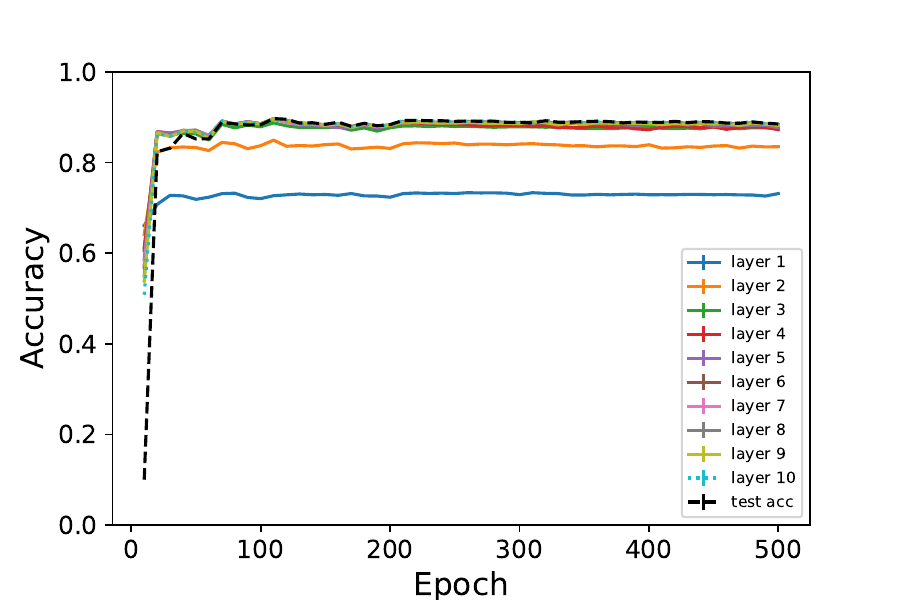}  & \includegraphics[width=0.2\linewidth]{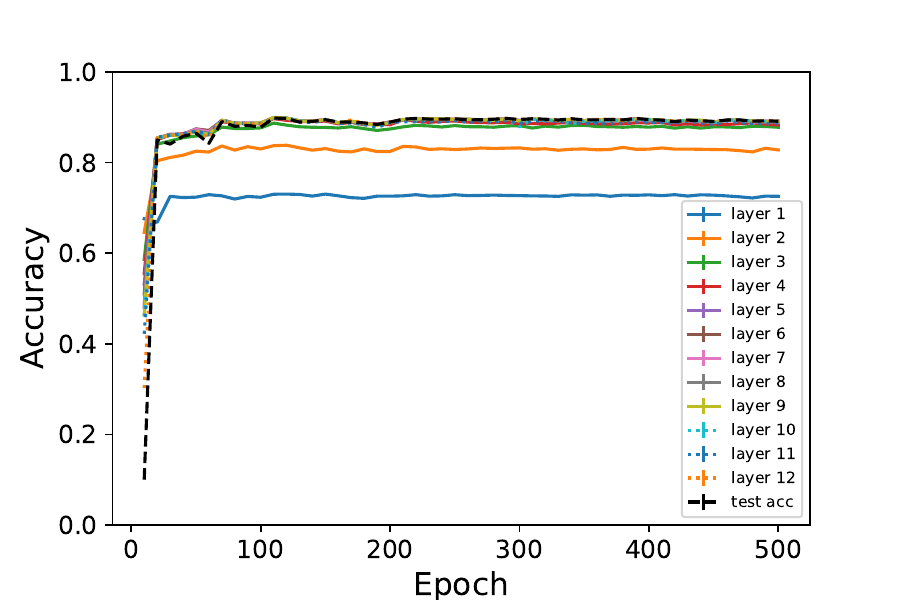}  & \includegraphics[width=0.2\linewidth]{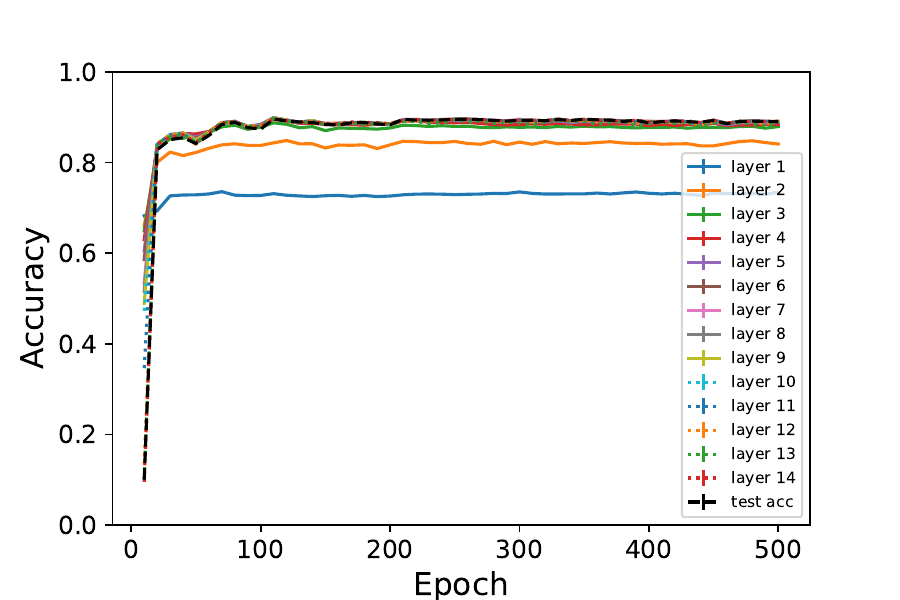}  &
     \includegraphics[width=0.2\linewidth]{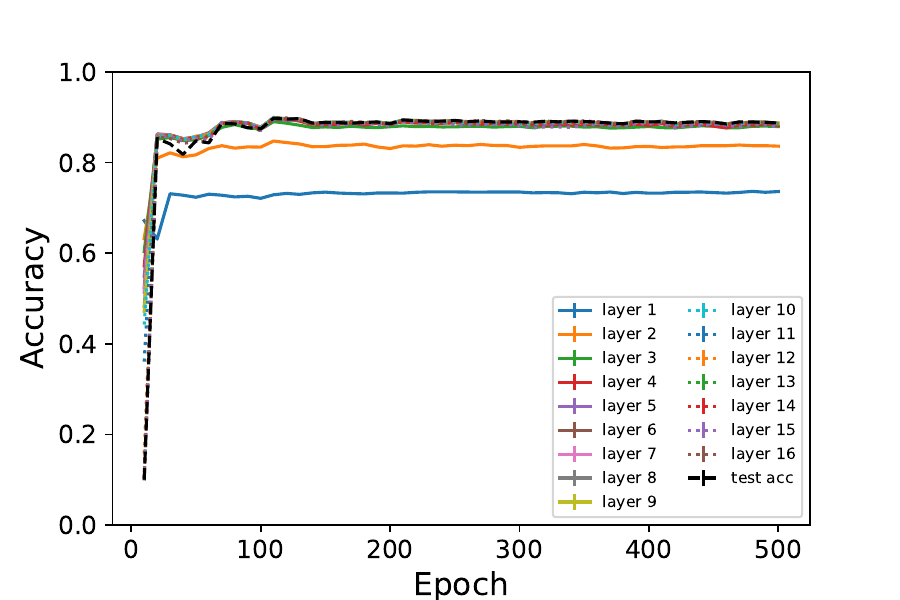}  &
     \includegraphics[width=0.2\linewidth]{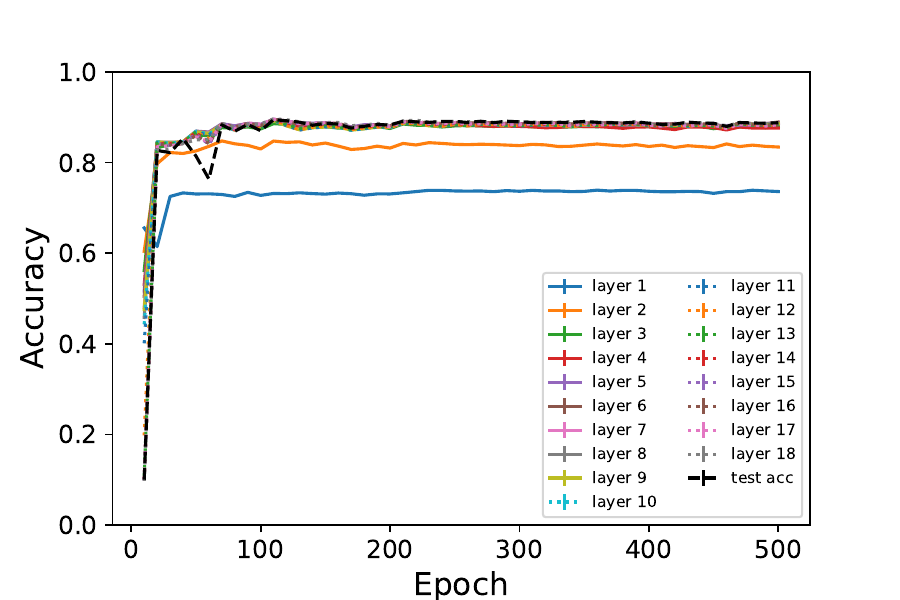}
     \\
     {\small 10 hidden layers} & {\small 12 hidden layers} & {\small 14 hidden layers} & {\small 16 hidden layers} & {\small 18 hidden layers} \\
     \hline
  \end{tabular}
  
 \caption{{\bf Intermediate neural collapse of MLP-$L$-100 trained on Fashion MNIST.} See Fig. \ref{fig:cifar10_convnet_400} in the main text for details.} 
 \label{fig:fashion_mlp_100}
\end{figure*}

\begin{figure*}[t]
  \centering
  \begin{tabular}{c@{}c@{}c@{}c@{}c@{}c}
    \rotatebox[origin=c]{90}{{\tiny CDNV - Train}}
    \raisebox{-0.5\height}{\includegraphics[width=0.2\linewidth]{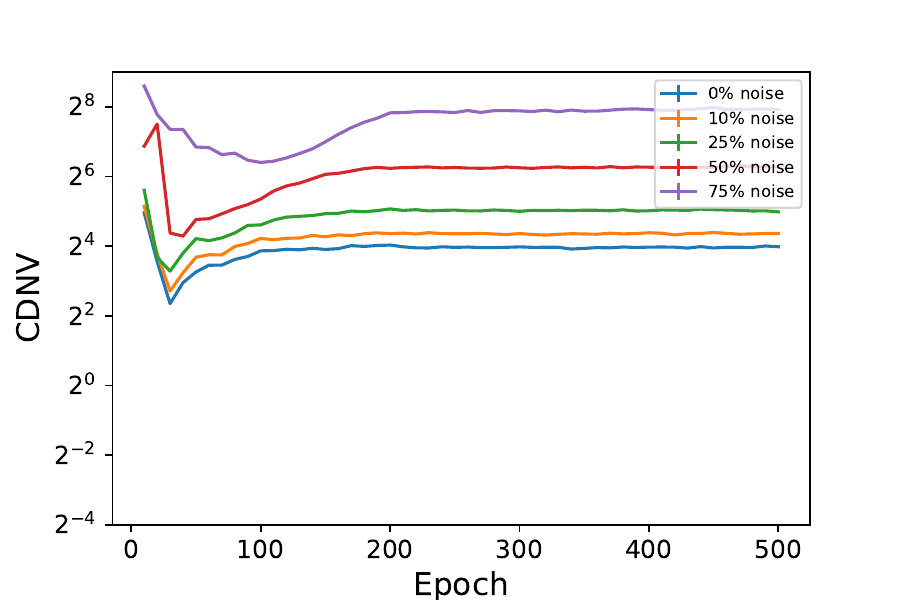}}  &
    \raisebox{-0.5\height}{\includegraphics[width=0.2\linewidth]{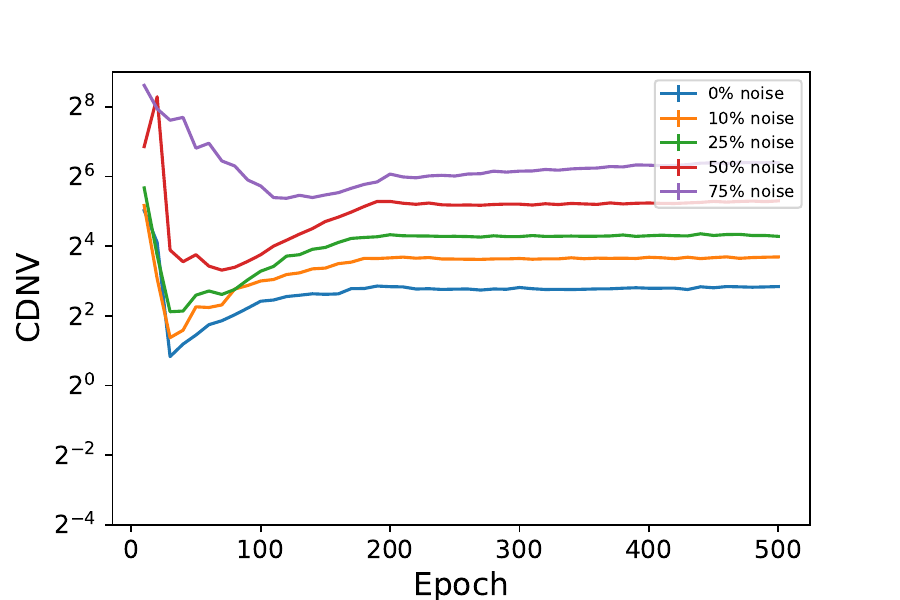}}  & \raisebox{-0.5\height}{\includegraphics[width=0.2\linewidth]{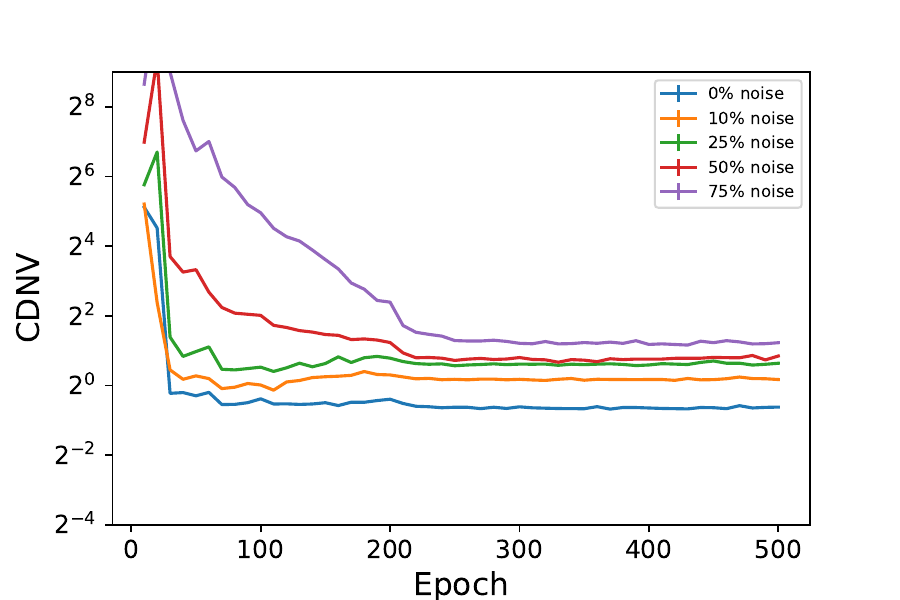}}  & \raisebox{-0.5\height}{\includegraphics[width=0.2\linewidth]{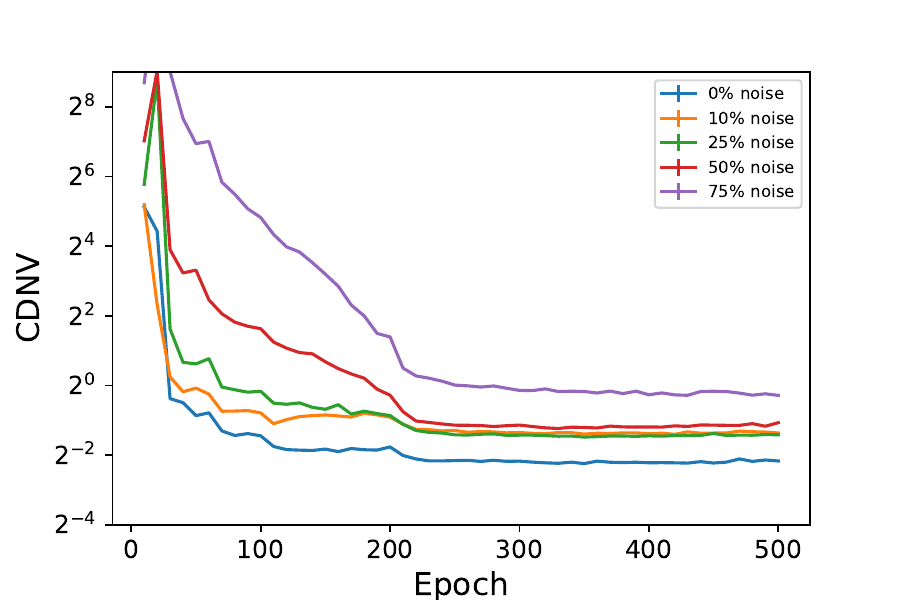}}  &
    \raisebox{-0.5\height}{\includegraphics[width=0.2\linewidth]{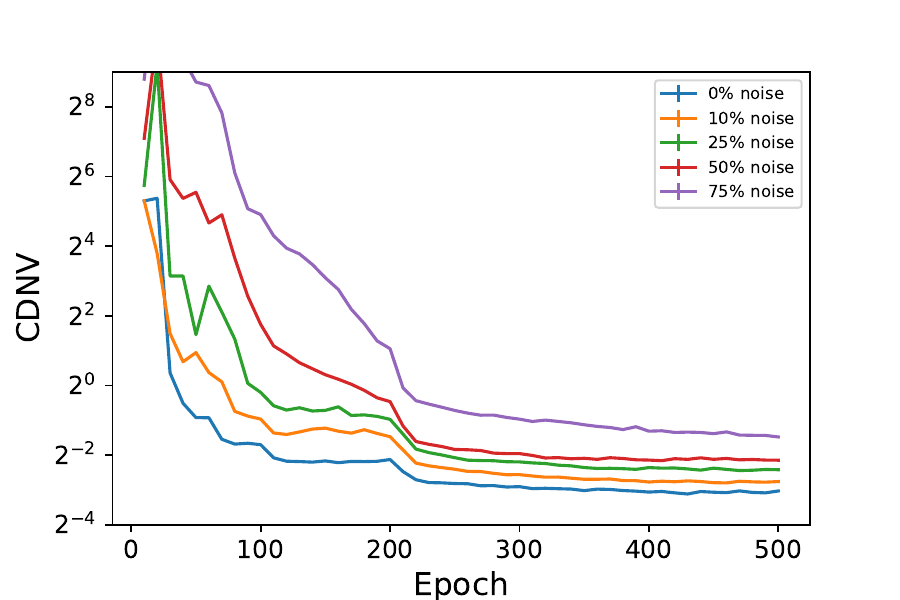}}
    \\
    
     {\tiny layer 3} & {\tiny layer 4} & {\tiny layer 6} & {\tiny layer 8} & {\tiny layer 10} \\
     \rotatebox[origin=c]{90}{{\tiny NCC train acc}}
     \raisebox{-0.5\height}{\includegraphics[width=0.2\linewidth]{plots/cifar10/convnet_width_500_noise/train_emb_acc_ncc_10_0.pdf}}  &
     \raisebox{-0.5\height}{\includegraphics[width=0.2\linewidth]{plots/cifar10/convnet_width_500_noise/train_emb_acc_ncc_10_10.pdf}}  & \raisebox{-0.5\height}{\includegraphics[width=0.2\linewidth]{plots/cifar10/convnet_width_500_noise/train_emb_acc_ncc_10_25.pdf}}& \raisebox{-0.5\height}{\includegraphics[width=0.2\linewidth]{plots/cifar10/convnet_width_500_noise/train_emb_acc_ncc_10_50.pdf}} &
     \raisebox{-0.5\height}{\includegraphics[width=0.2\linewidth]{plots/cifar10/convnet_width_500_noise/train_emb_acc_ncc_10_75.pdf}} 
     \\
     {\tiny $0\%$ noise} & {\tiny $10\%$ noise} & {\tiny $25\%$ noise} & {\tiny $50\%$ noise} & {\tiny $75\%$ noise} \\
     \rotatebox[origin=c]{90}{{\tiny CDNV- Test}}
     \raisebox{-0.5\height}{\includegraphics[width=0.2\linewidth]{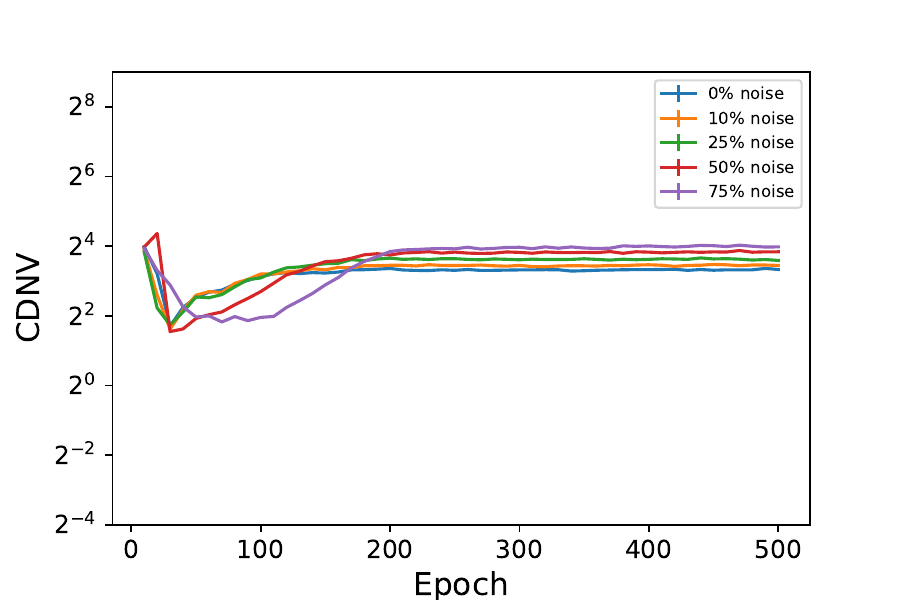}}  &
     \raisebox{-0.5\height}{\includegraphics[width=0.2\linewidth]{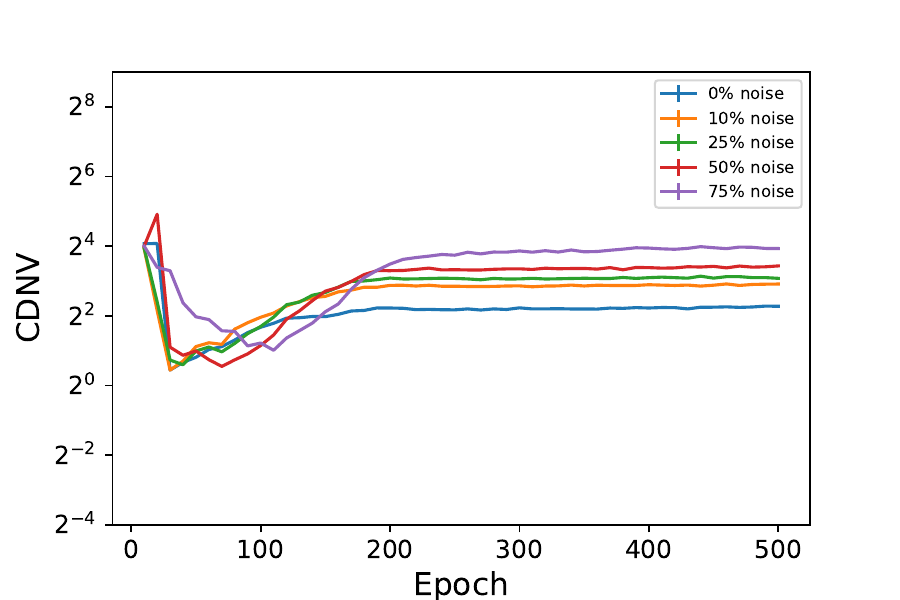}}  & \raisebox{-0.5\height}{\includegraphics[width=0.2\linewidth]{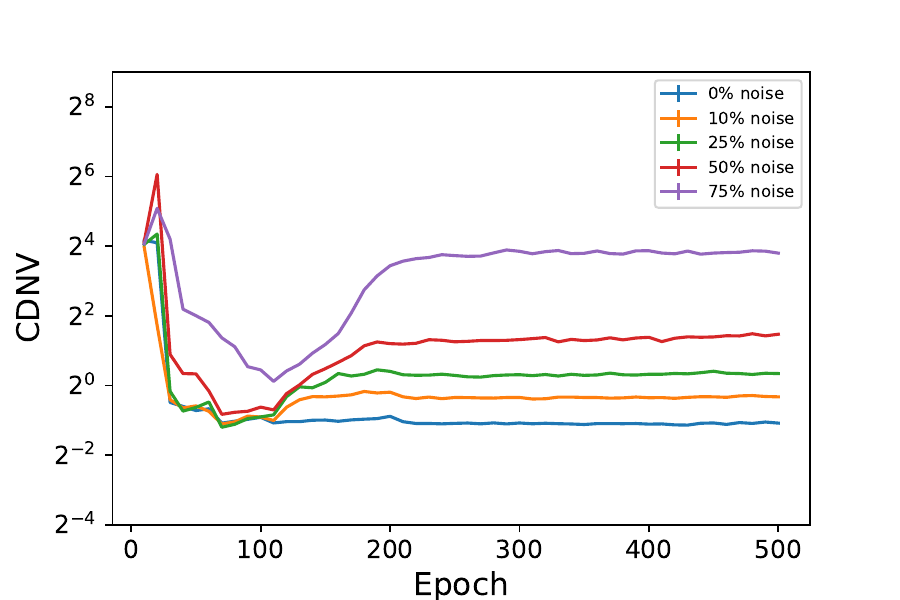}}  & \raisebox{-0.5\height}{\includegraphics[width=0.2\linewidth]{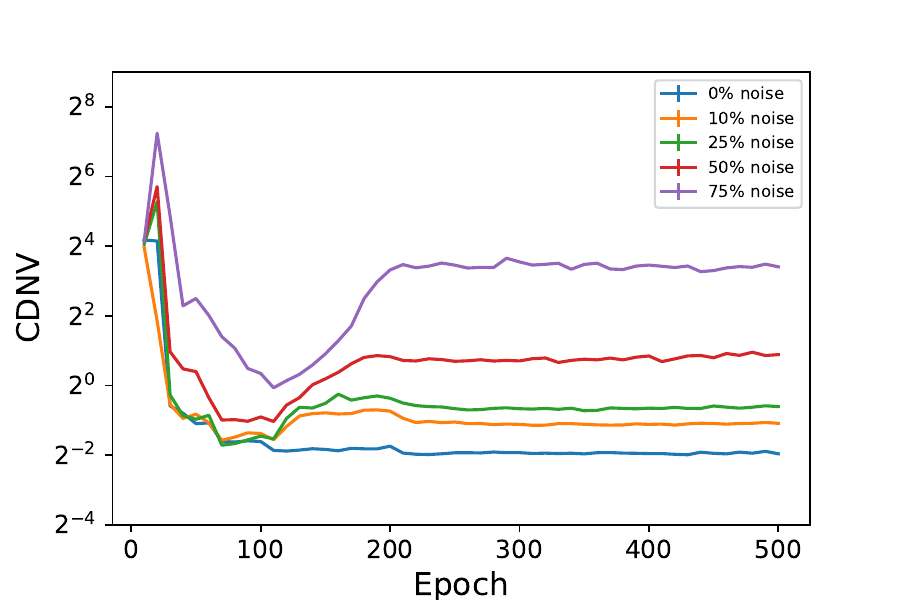}}  &
     \raisebox{-0.5\height}{\includegraphics[width=0.2\linewidth]{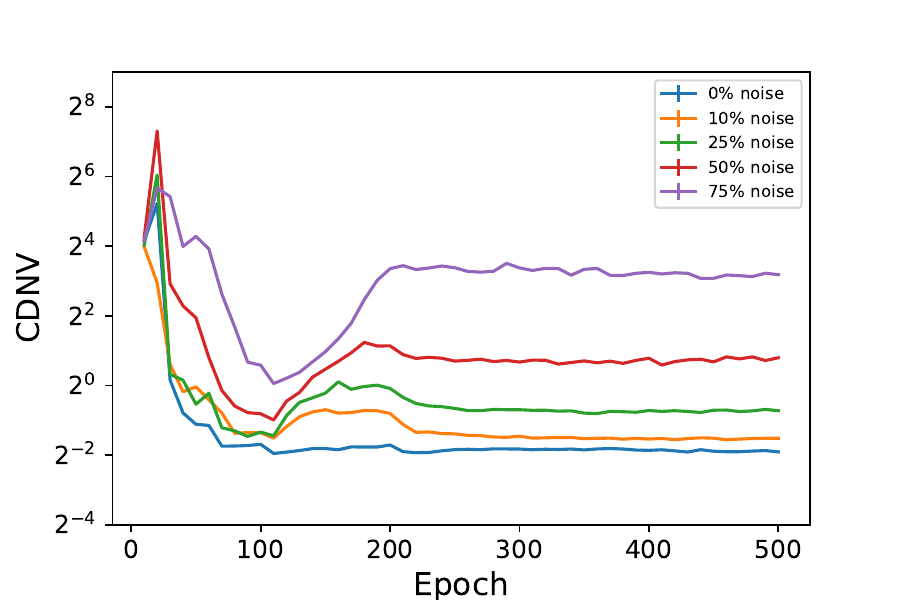}} 
     \\
     {\tiny layer 3} & {\tiny layer 4} & {\tiny layer 6} & {\tiny layer 8} & {\tiny layer 10} \\
    
     \rotatebox[origin=c]{90}{{\tiny NCC test acc}} 
     \raisebox{-0.5\height}{\includegraphics[width=0.2\linewidth]{plots/cifar10/convnet_width_500_noise/test_emb_acc_ncc_10_0.pdf}}  &
     \raisebox{-0.5\height}{\includegraphics[width=0.2\linewidth]{plots/cifar10/convnet_width_500_noise/test_emb_acc_ncc_10_10.pdf}}  & \raisebox{-0.5\height}{\includegraphics[width=0.2\linewidth]{plots/cifar10/convnet_width_500_noise/test_emb_acc_ncc_10_25.pdf}}& \raisebox{-0.5\height}{\includegraphics[width=0.2\linewidth]{plots/cifar10/convnet_width_500_noise/test_emb_acc_ncc_10_50.pdf}} &
     \raisebox{-0.5\height}{\includegraphics[width=0.2\linewidth]{plots/cifar10/convnet_width_500_noise/test_emb_acc_ncc_10_75.pdf}} 
     \\
     {\tiny $0\%$ noise} & {\tiny $10\%$ noise}  & {\tiny $25\%$ noise} & {\tiny $50\%$ noise} & {\tiny $75\%$ noise} \\
  \end{tabular}
  
 \caption{{\bf Intermediate neural collapse of CONV-10-400 trained on CIFAR10 with partially corrupted labels.} In the first (third) row, we plot the CDNV on the train (test) data for intermediate layers of networks trained with varying amounts of corrupted labels (see legend). In the second (fourth) row, we plot the NCC accuracy rates of the various layers of a network trained with a certain amount of corrupted labels (see titles).}
 \label{fig:cifar10_noisy_conv_appendix}
\end{figure*}

\begin{figure*}[t]
  \centering
  \begin{tabular}{c@{}c@{}c@{}c@{}c@{}c@{}c}
     \rotatebox[origin=c]{90}{{\tiny CDNV - Train}} &
     \raisebox{-0.5\height}{\includegraphics[width=0.2\linewidth]{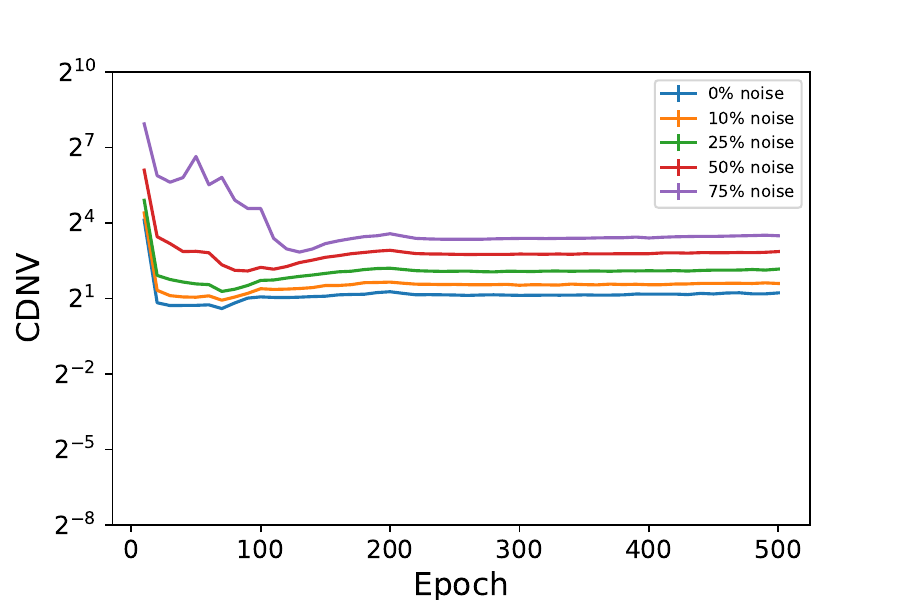}}  & 
     \raisebox{-0.5\height}{\includegraphics[width=0.2\linewidth]{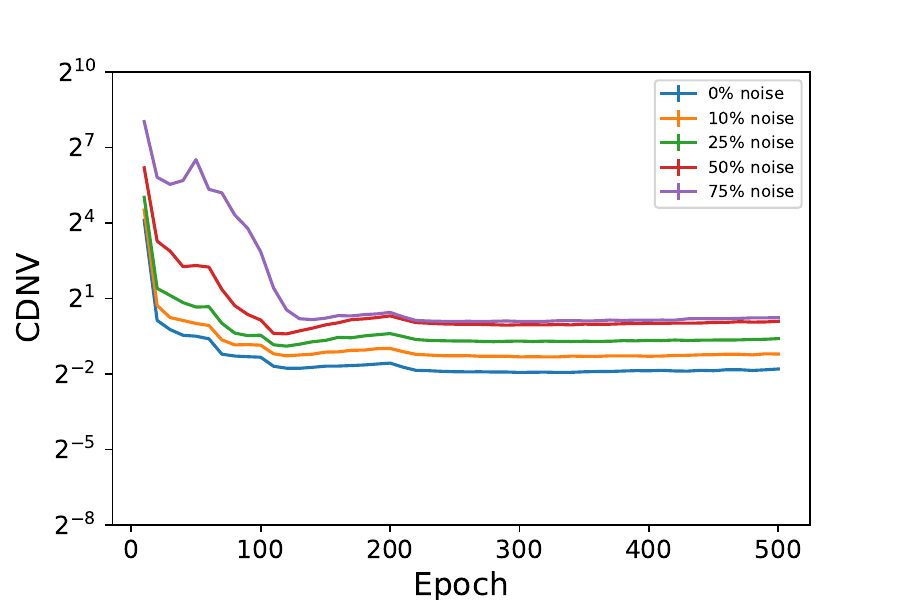}}  & 
     \raisebox{-0.5\height}{\includegraphics[width=0.2\linewidth]{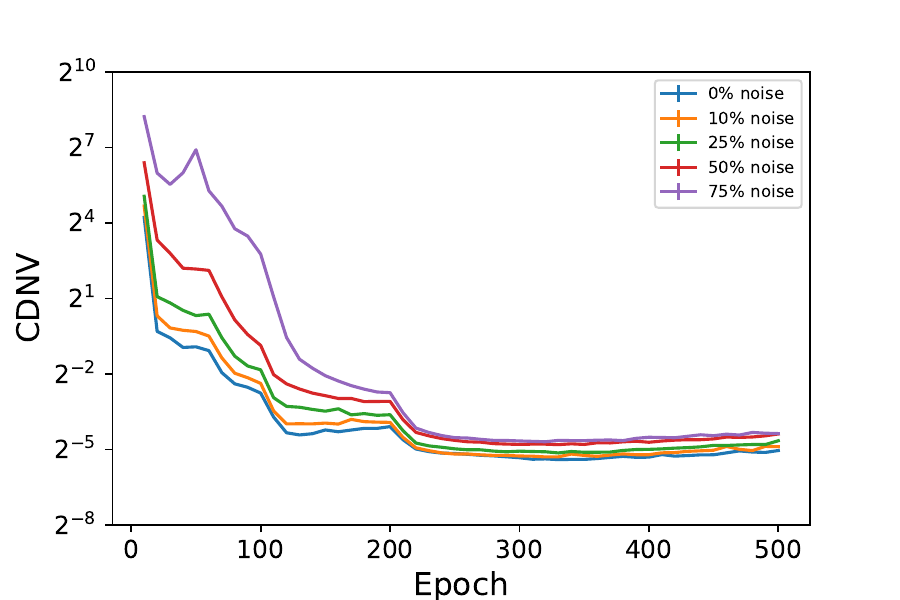}}  & 
     \raisebox{-0.5\height}{\includegraphics[width=0.2\linewidth]{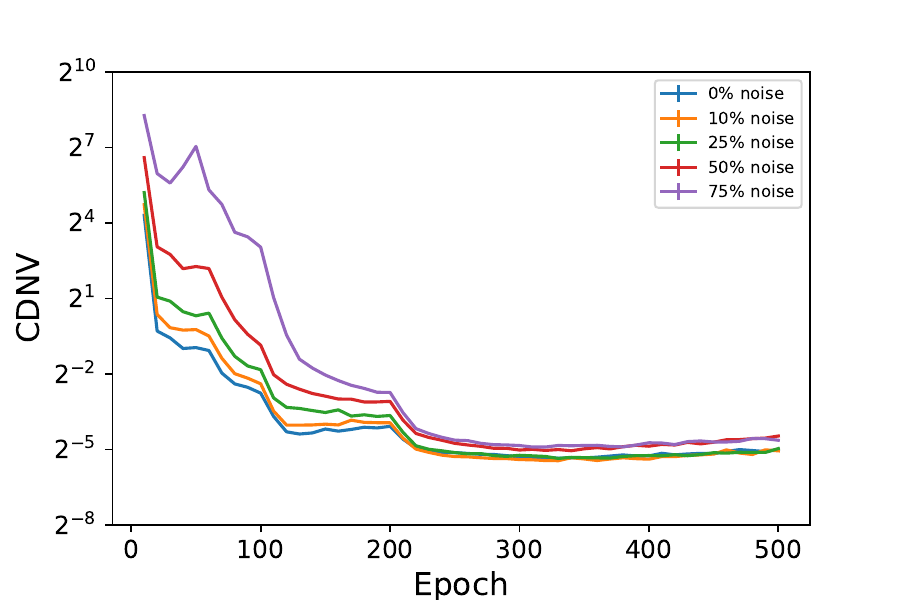}}  &
     \raisebox{-0.5\height}{\includegraphics[width=0.2\linewidth]{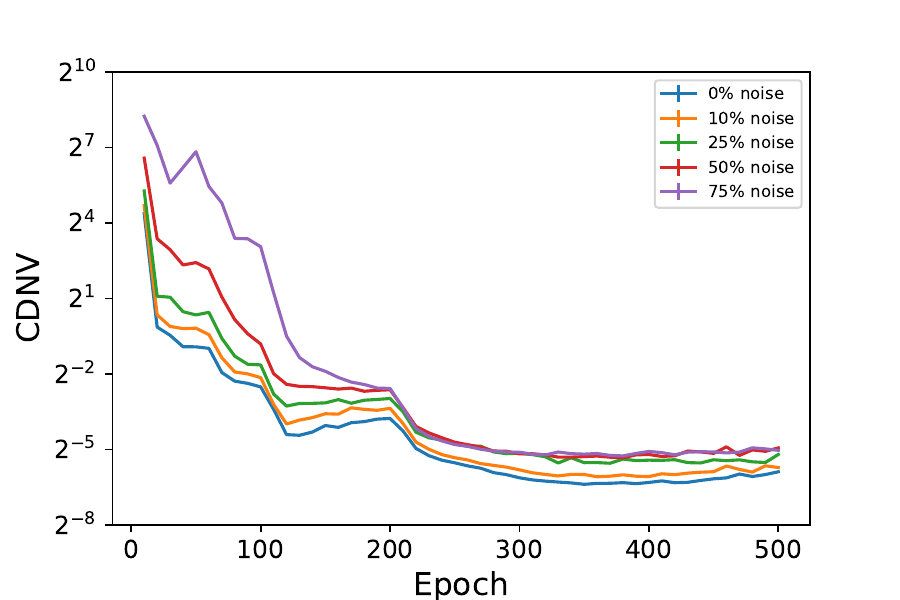}} 
     \\
     & {\small layer 3} & {\small layer 4} & {\small layer 6} & {\small layer 8} & {\small layer 10} \\
     \rotatebox[origin=c]{90}{{\tiny NCC train acc}} &
     \raisebox{-0.5\height}{\includegraphics[width=0.2\linewidth]{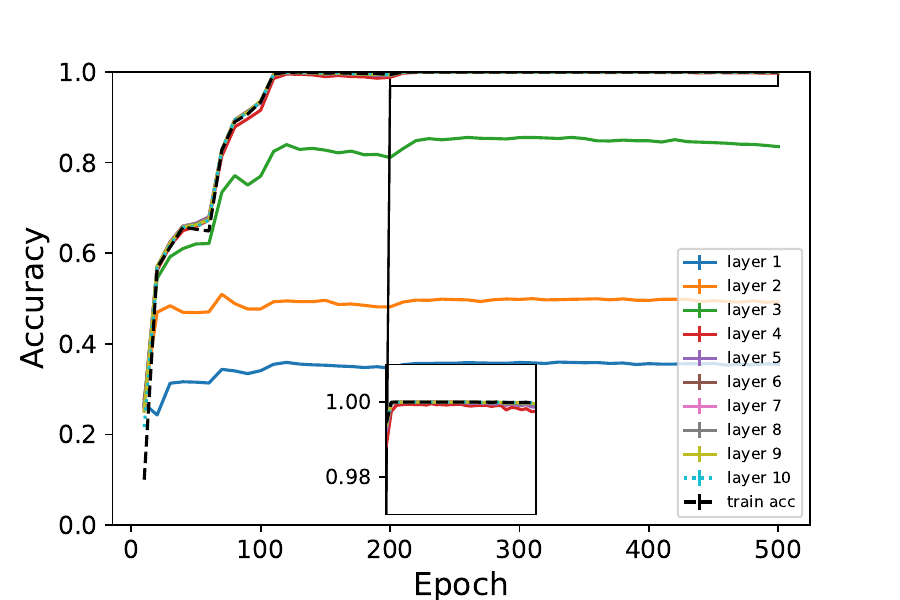}}    &
     \raisebox{-0.5\height}{\includegraphics[width=0.2\linewidth]{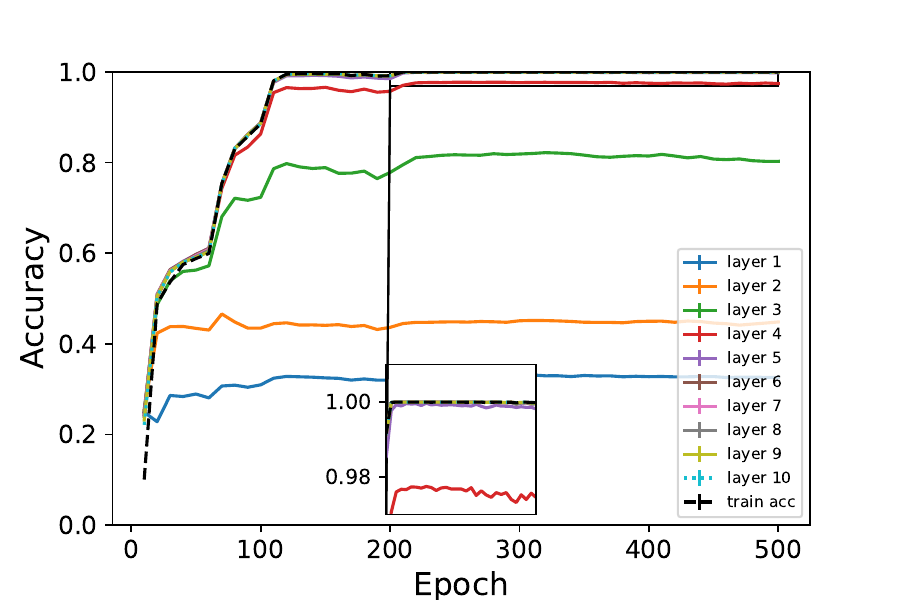}}    & \raisebox{-0.5\height}{\includegraphics[width=0.2\linewidth]{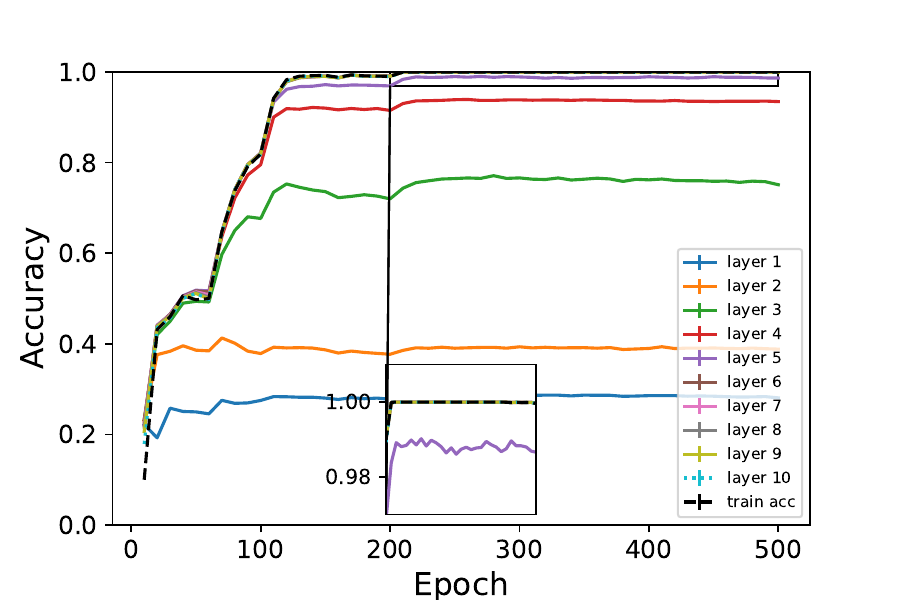}}  & \raisebox{-0.5\height}{\includegraphics[width=0.2\linewidth]{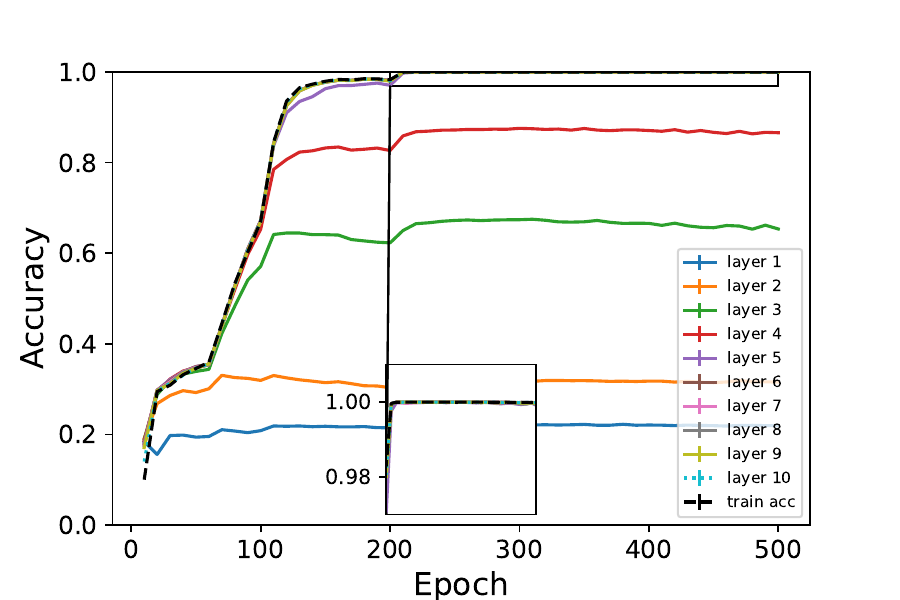}}   &
     \raisebox{-0.5\height}{\includegraphics[width=0.2\linewidth]{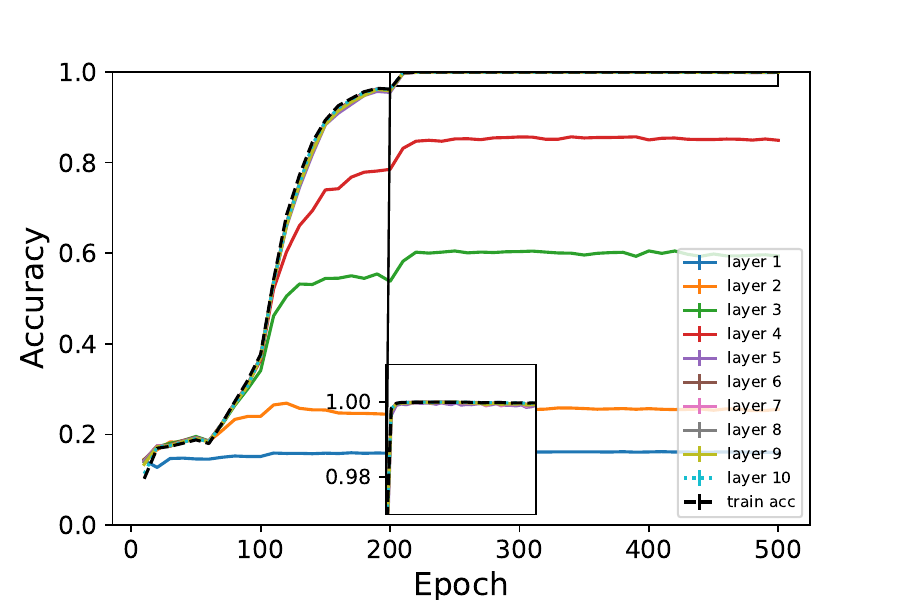}} 
     \\
     & {\small $0\%$ noise} & {\small $10\%$ noise}  & {\small $25\%$ noise} & {\small $50\%$ noise} & {\small $75\%$ noise} \\
     \rotatebox[origin=c]{90}{{\tiny CDNV - Test}} &
     \raisebox{-0.5\height}{\includegraphics[width=0.2\linewidth]{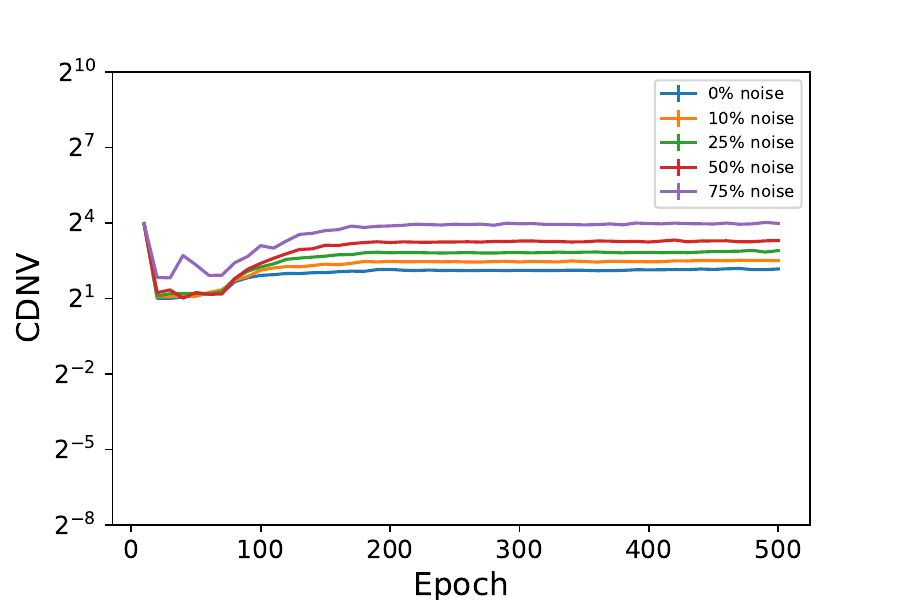}}  & 
     \raisebox{-0.5\height}{\includegraphics[width=0.2\linewidth]{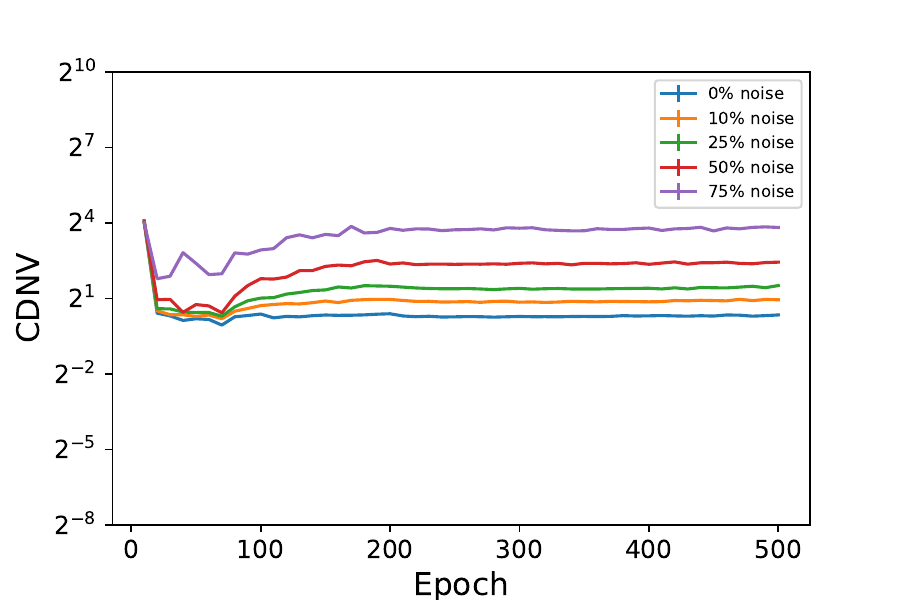}}  & 
     \raisebox{-0.5\height}{\includegraphics[width=0.2\linewidth]{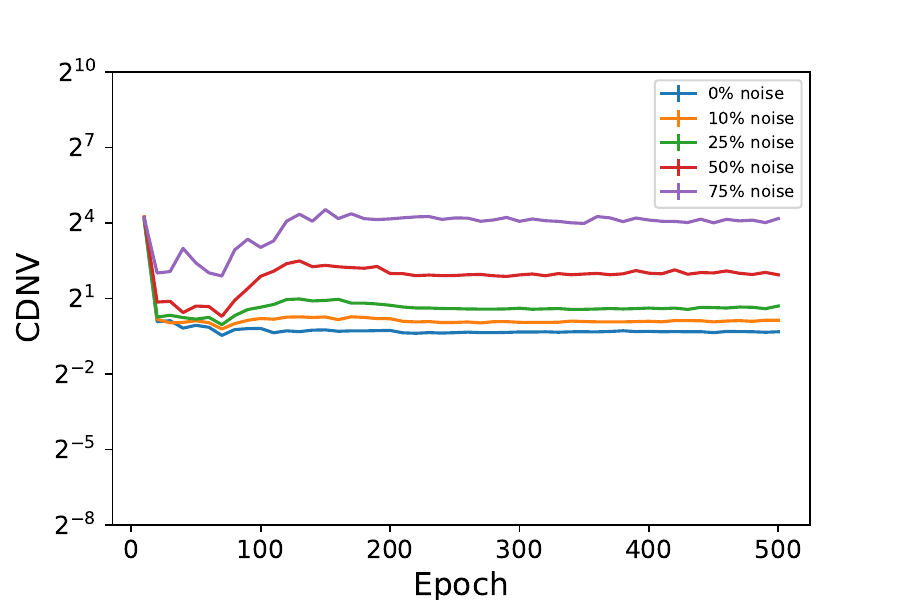}}  & 
     \raisebox{-0.5\height}{\includegraphics[width=0.2\linewidth]{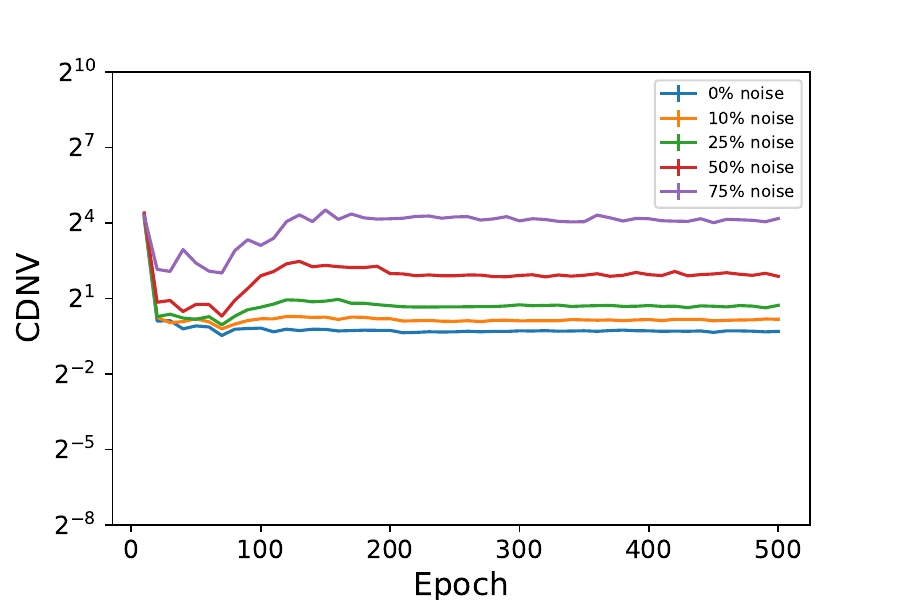}}  &
     \raisebox{-0.5\height}{\includegraphics[width=0.2\linewidth]{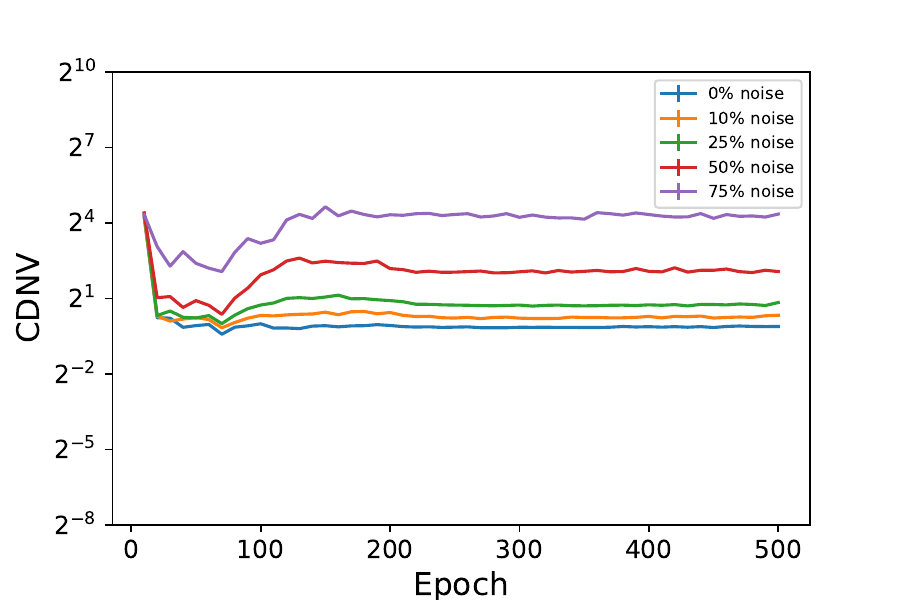}} 
     \\
     & {\small layer 3} & {\small layer 4} & {\small layer 6} & {\small layer 8} & {\small layer 10} \\
     \rotatebox[origin=c]{90}{{\tiny NCC test acc}} &
     \raisebox{-0.5\height}{\includegraphics[width=0.2\linewidth]{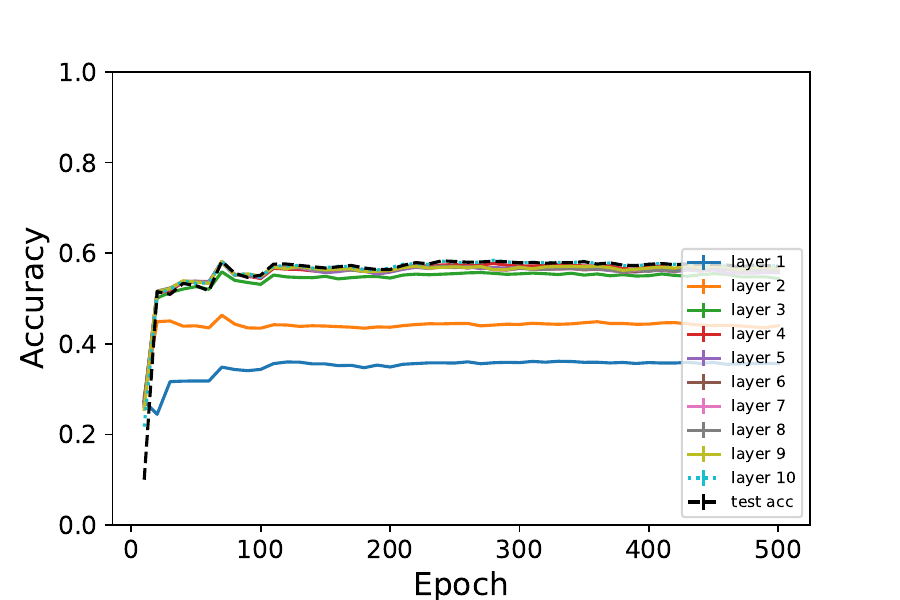}}    & 
     \raisebox{-0.5\height}{\includegraphics[width=0.2\linewidth]{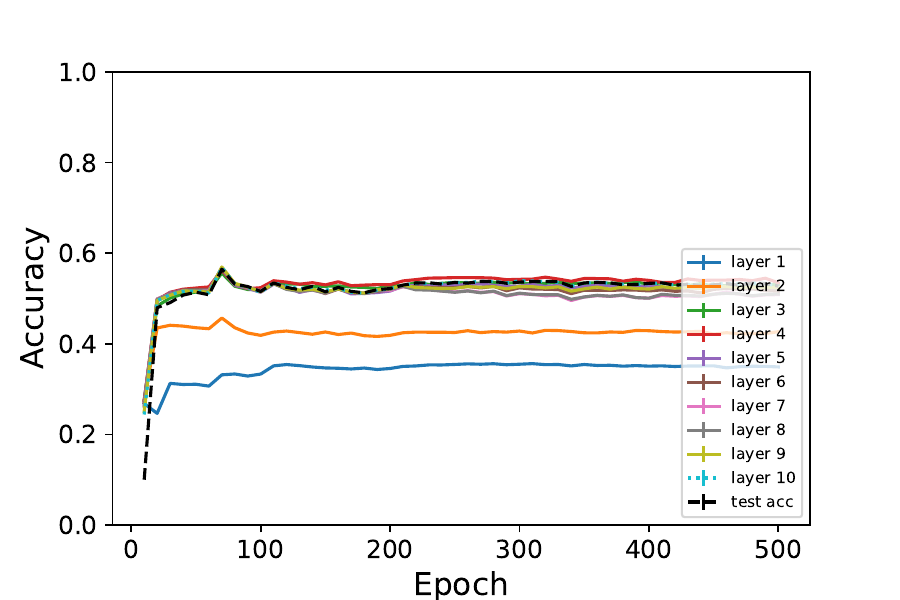}}    & 
     \raisebox{-0.5\height}{\includegraphics[width=0.2\linewidth]{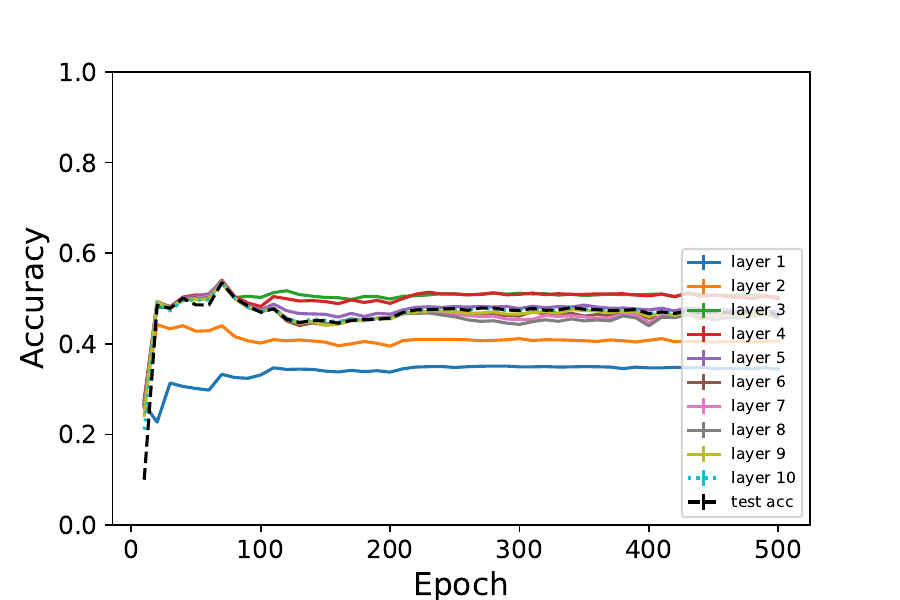}}  & \raisebox{-0.5\height}{\includegraphics[width=0.2\linewidth]{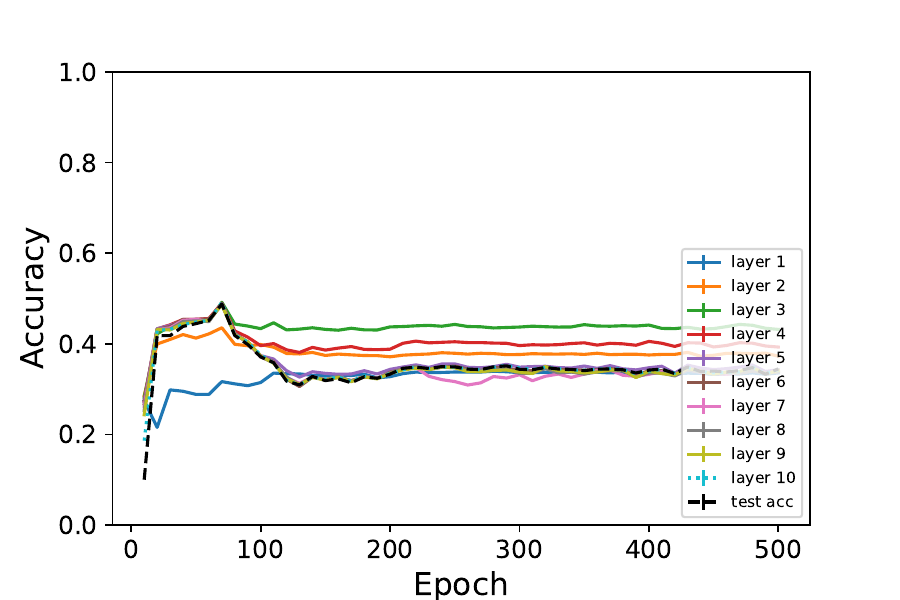}}   &
     \raisebox{-0.5\height}{\includegraphics[width=0.2\linewidth]{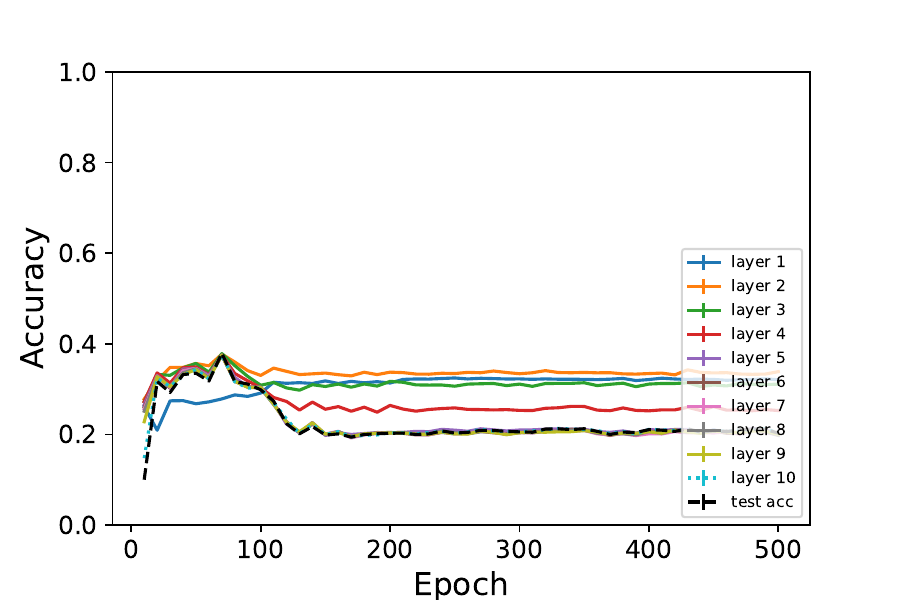}}
     \\
     & {\small $0\%$ noise} & {\small $10\%$ noise}  & {\small $25\%$ noise} & {\small $50\%$ noise} & {\small $75\%$ noise} 
  \end{tabular}
  
 \caption{{\bf Intermediate neural collapse of MLP-10-500 trained on CIFAR10 with noisy labels.} See Fig \ref{fig:cifar10_noisy_conv} in the main text for details.
 }
 \label{fig:cifar10_noisy_mlp}
\end{figure*} 

\begin{figure*}[t]
  \centering
  \begin{tabular}{c@{}c@{}c@{}c@{}c@{}c}
    \rotatebox[origin=c]{90}{{\tiny CDNV - Train}} &
     \raisebox{-0.5\height}{\includegraphics[width=0.2\linewidth]{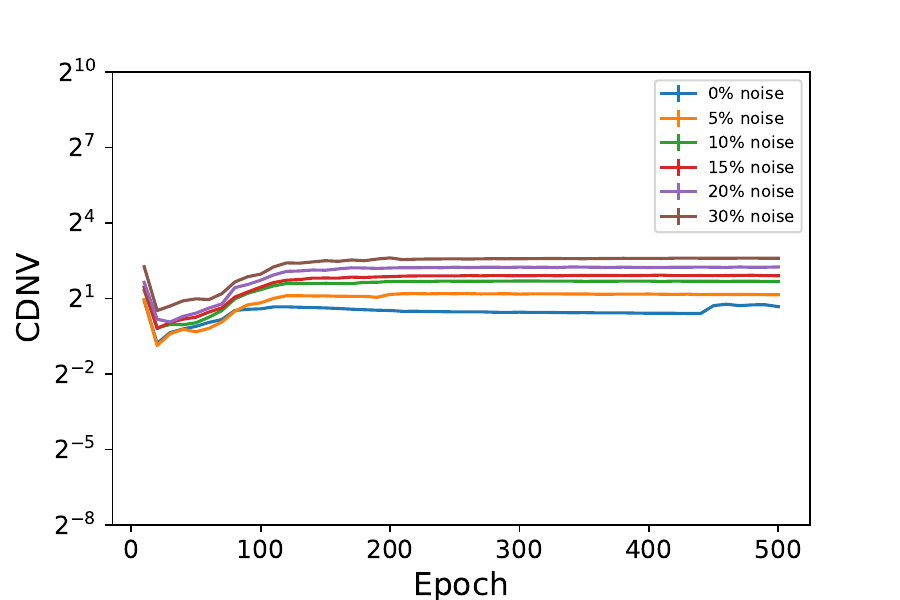}}  &
     \raisebox{-0.5\height}{\includegraphics[width=0.2\linewidth]{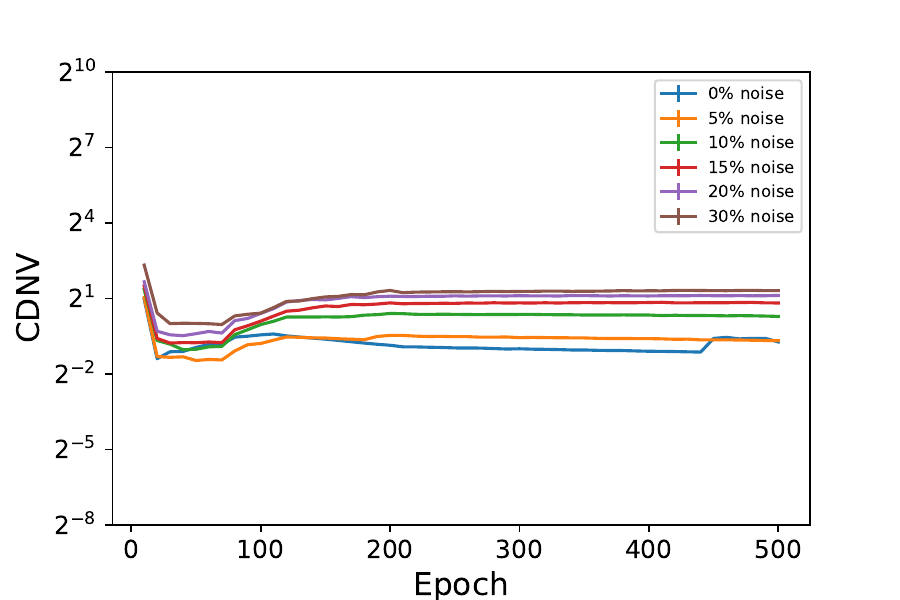}}  & \raisebox{-0.5\height}{\includegraphics[width=0.2\linewidth]{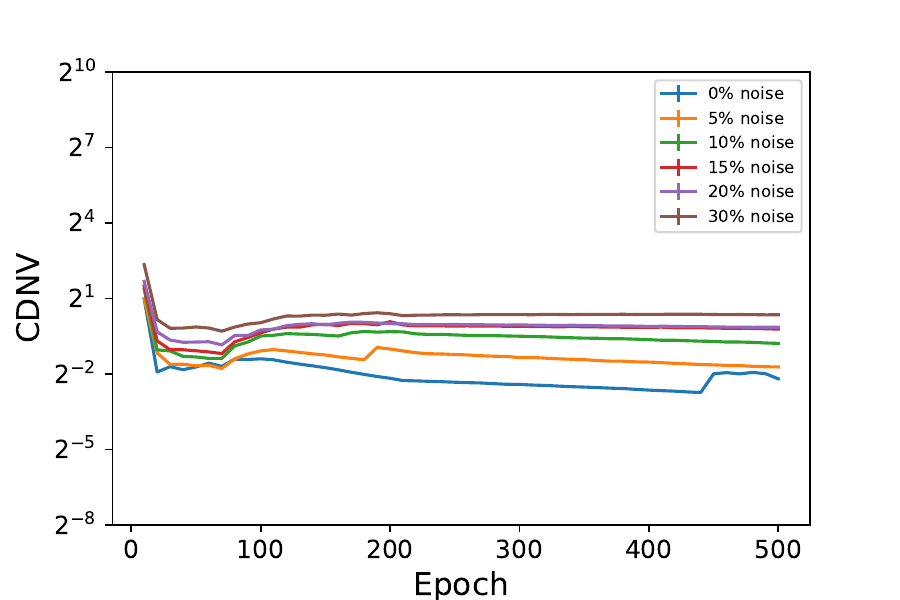}}  & \raisebox{-0.5\height}{\includegraphics[width=0.2\linewidth]{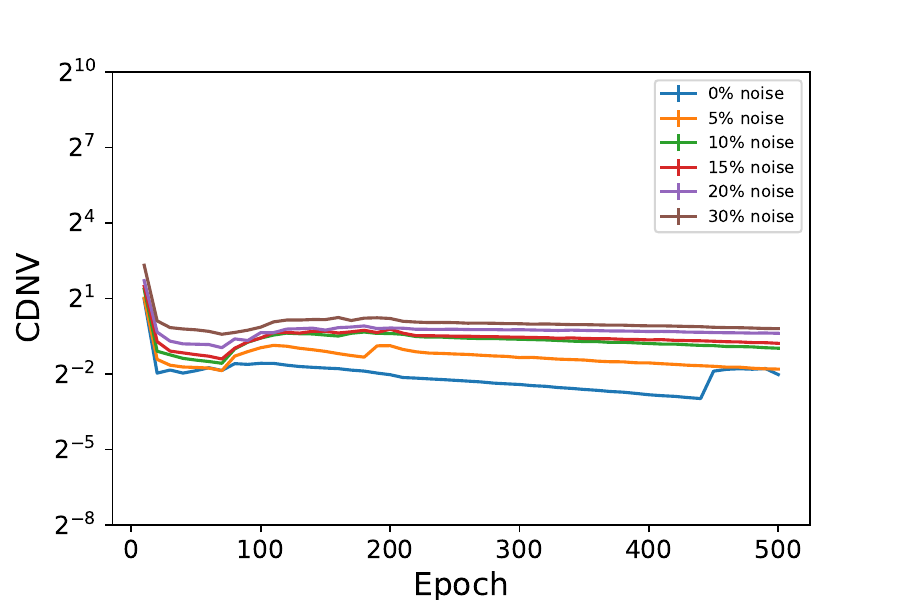}}  &
     \raisebox{-0.5\height}{\includegraphics[width=0.2\linewidth]{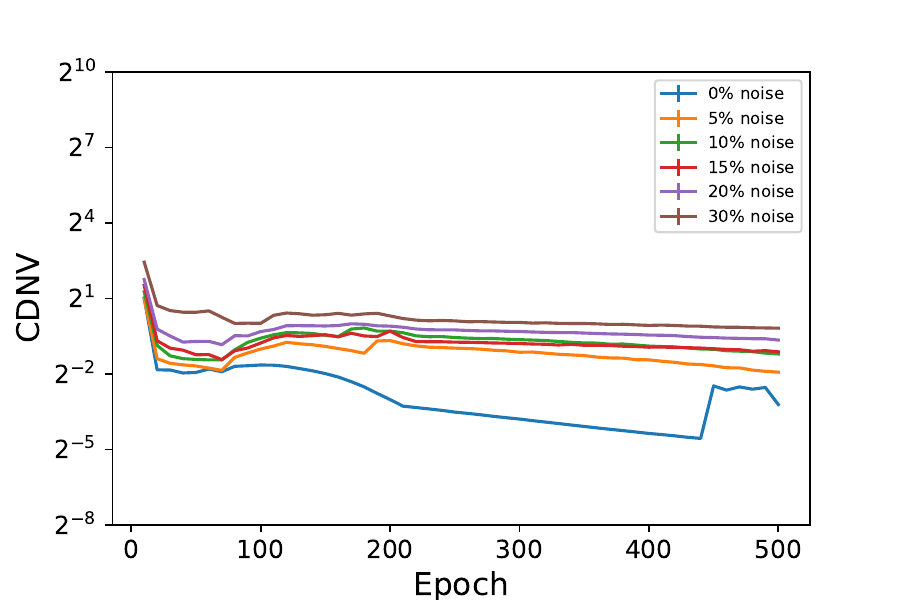}} 
     \\
     & {\small layer 4} & {\small layer 6} & {\small layer 8} & {\small layer 9} &  {\small layer 10} \\
     \rotatebox[origin=c]{90}{{\tiny NCC train acc}} &
     \raisebox{-0.5\height}{\includegraphics[width=0.2\linewidth]{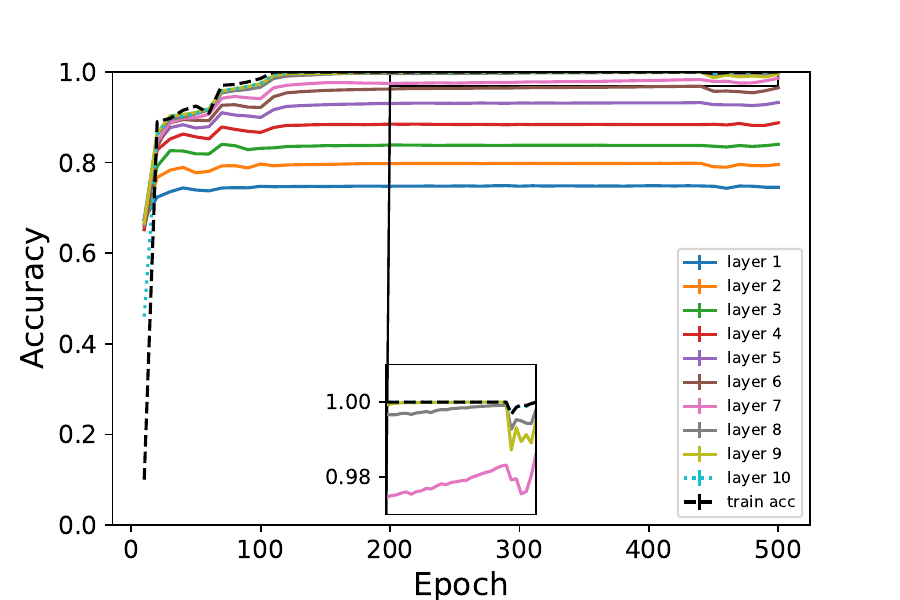}}  & \raisebox{-0.5\height}{\includegraphics[width=0.2\linewidth]{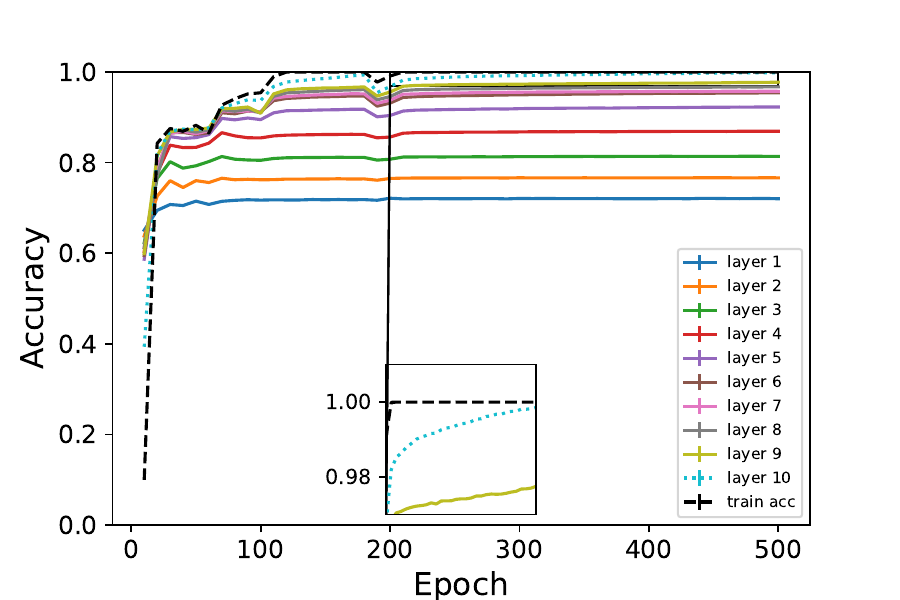}}  & \raisebox{-0.5\height}{\includegraphics[width=0.2\linewidth]{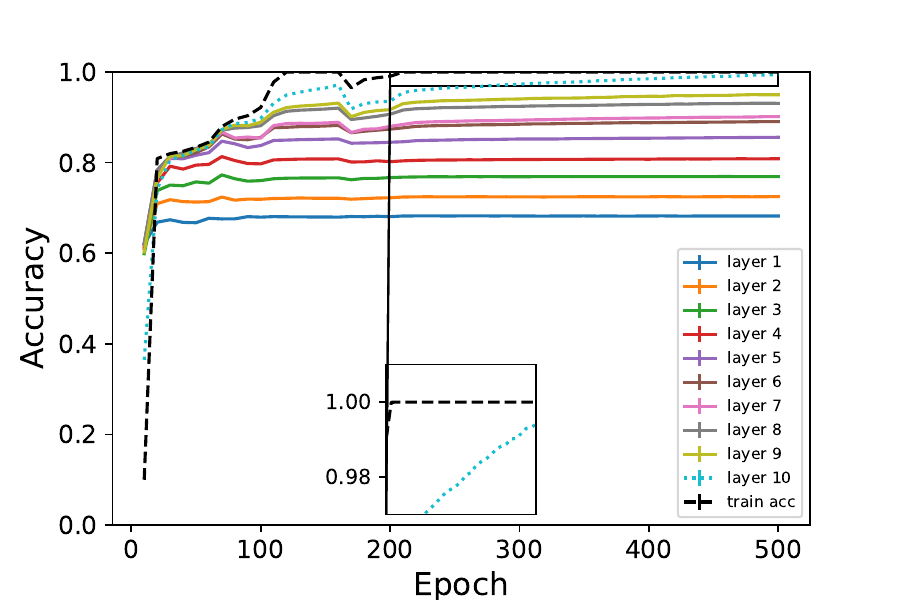}} &
     \raisebox{-0.5\height}{\includegraphics[width=0.2\linewidth]{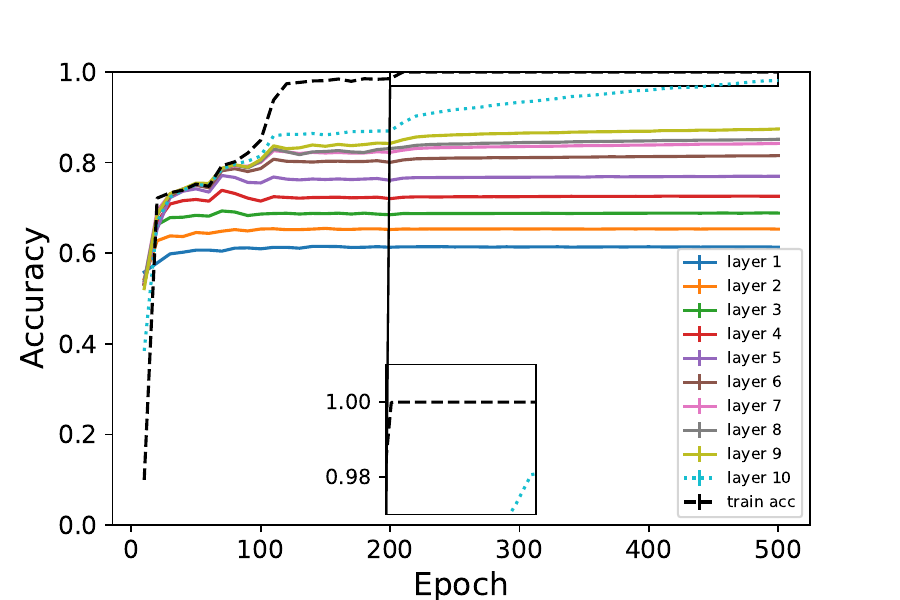}} &
     \raisebox{-0.5\height}{\includegraphics[width=0.2\linewidth]{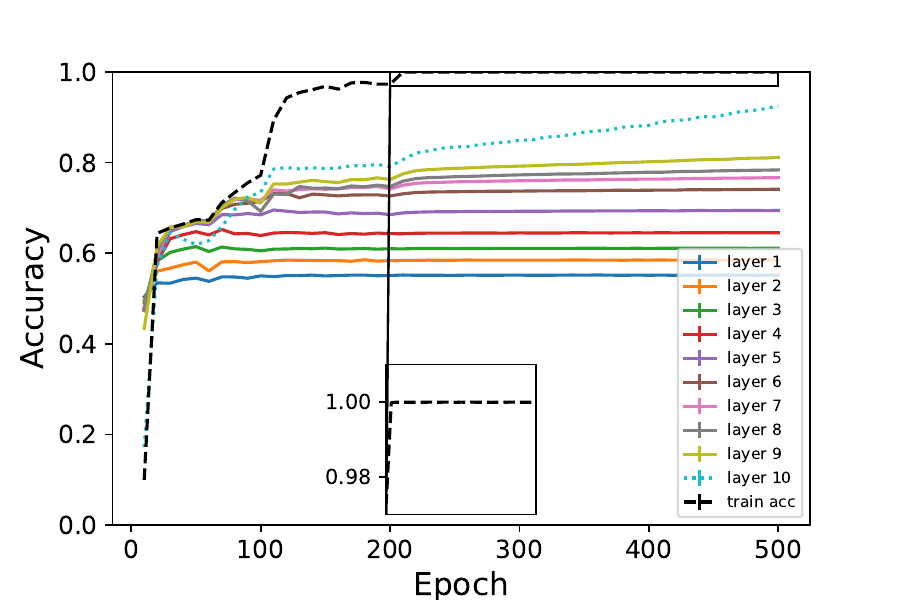}} 
     \\
     & {\small $0\%$ noise}  & {\small $5\%$ noise} & {\small $10\%$ noise} & {\small $20\%$ noise} & {\small $30\%$ noise} \\
     \rotatebox[origin=c]{90}{{\tiny CDNV - Test}} &
     \raisebox{-0.5\height}{\includegraphics[width=0.2\linewidth]{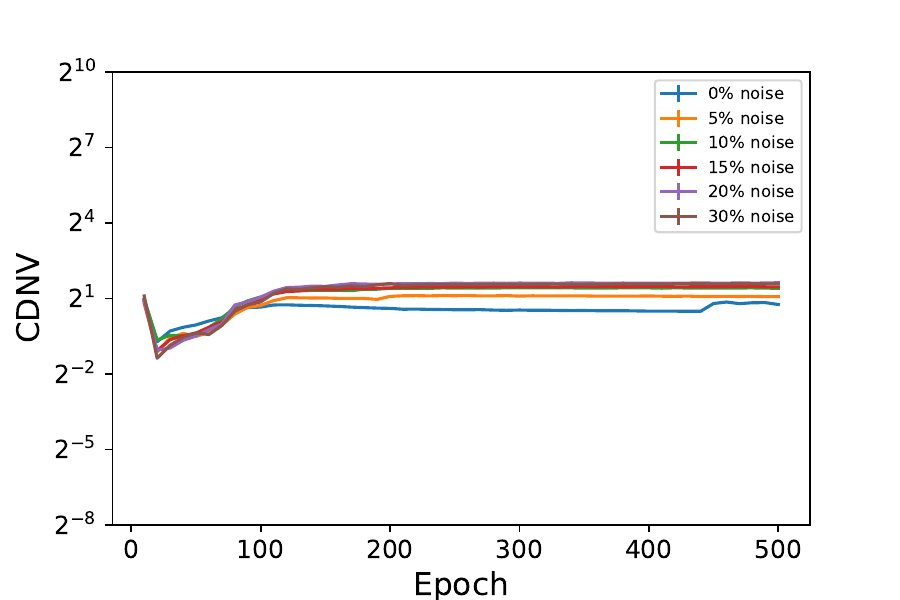}}  &
     \raisebox{-0.5\height}{\includegraphics[width=0.2\linewidth]{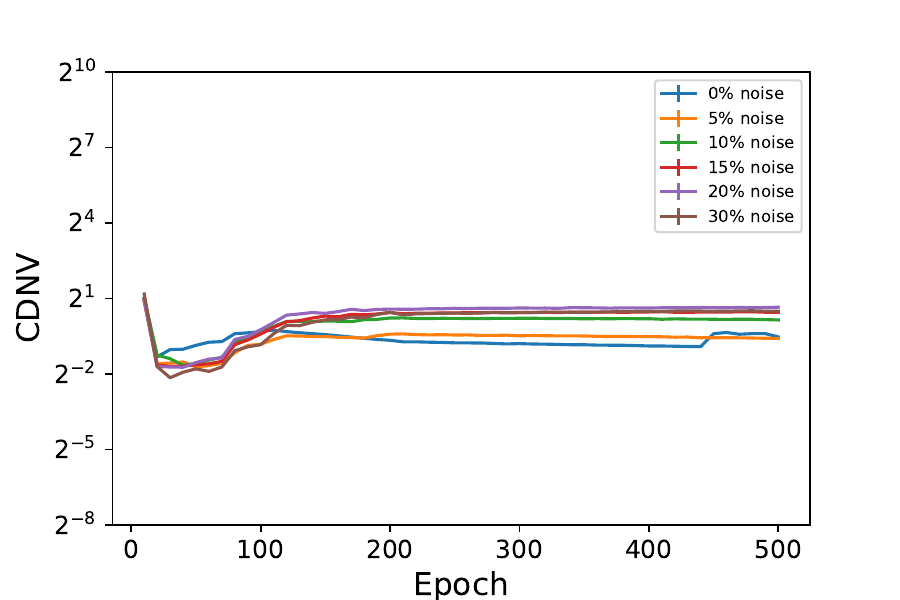}}  & \raisebox{-0.5\height}{\includegraphics[width=0.2\linewidth]{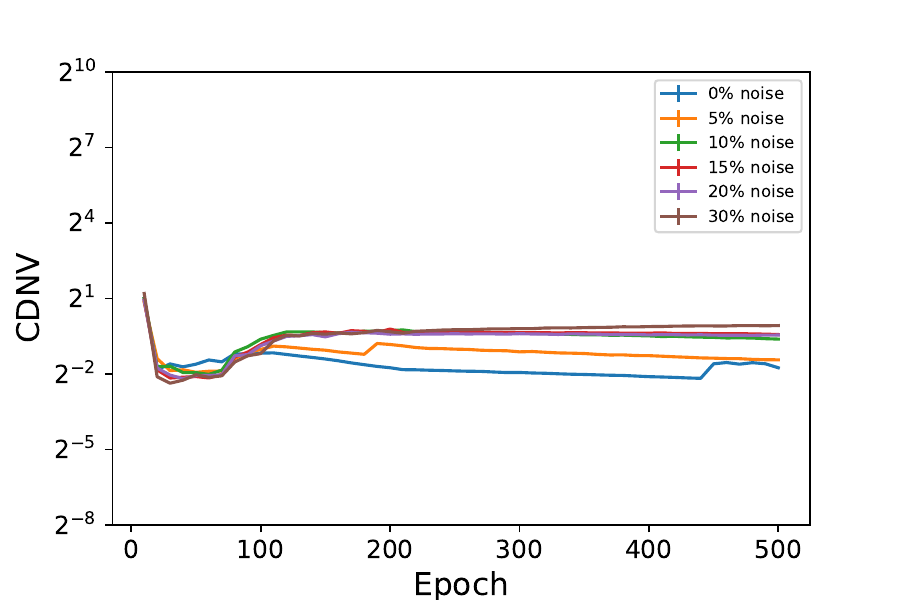}}  & \raisebox{-0.5\height}{\includegraphics[width=0.2\linewidth]{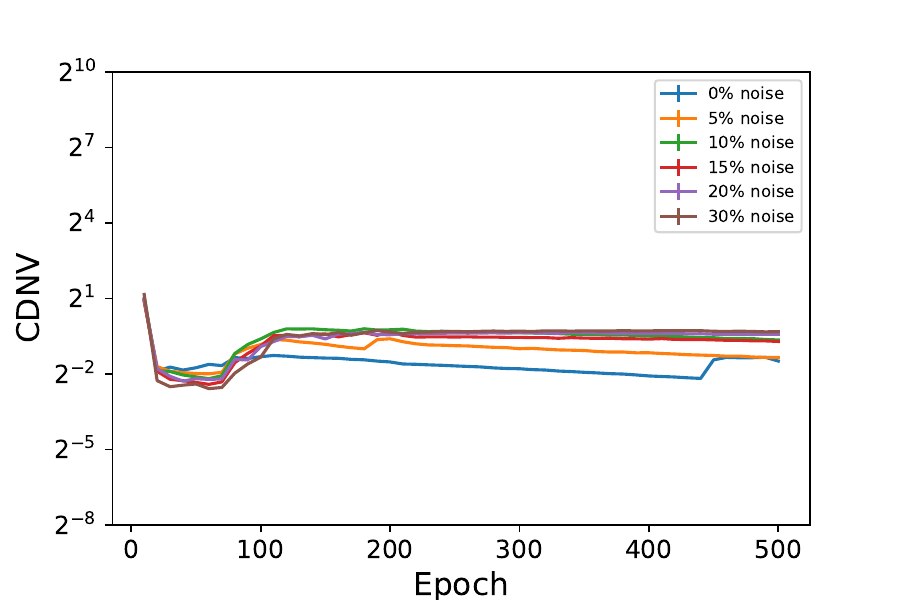}}  &
     \raisebox{-0.5\height}{\includegraphics[width=0.2\linewidth]{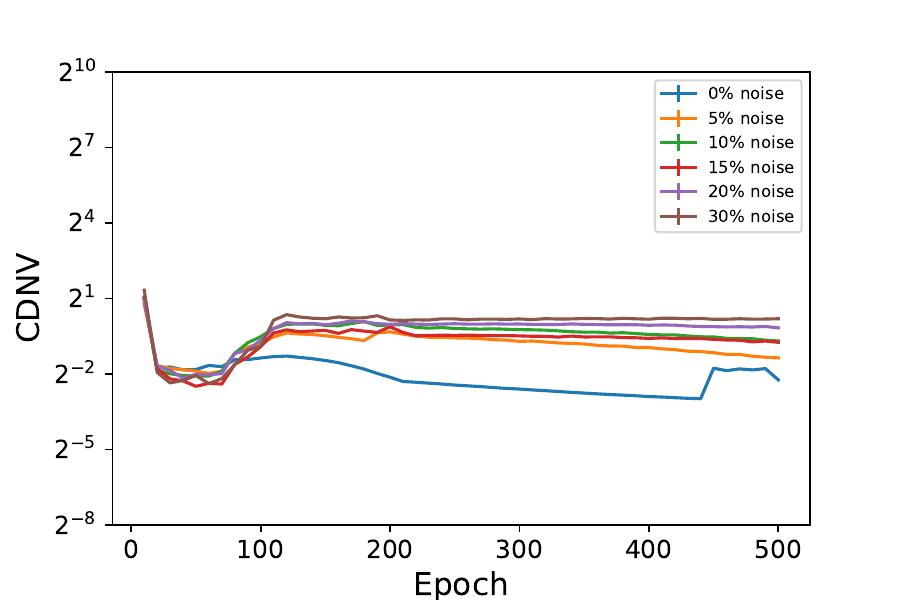}} 
     \\
     & {\small layer 4} & {\small layer 6} & {\small layer 8} & {\small layer 9} &  {\small layer 10} \\
     \rotatebox[origin=c]{90}{{\tiny NCC test acc}} &
     \raisebox{-0.5\height}{\includegraphics[width=0.2\linewidth]{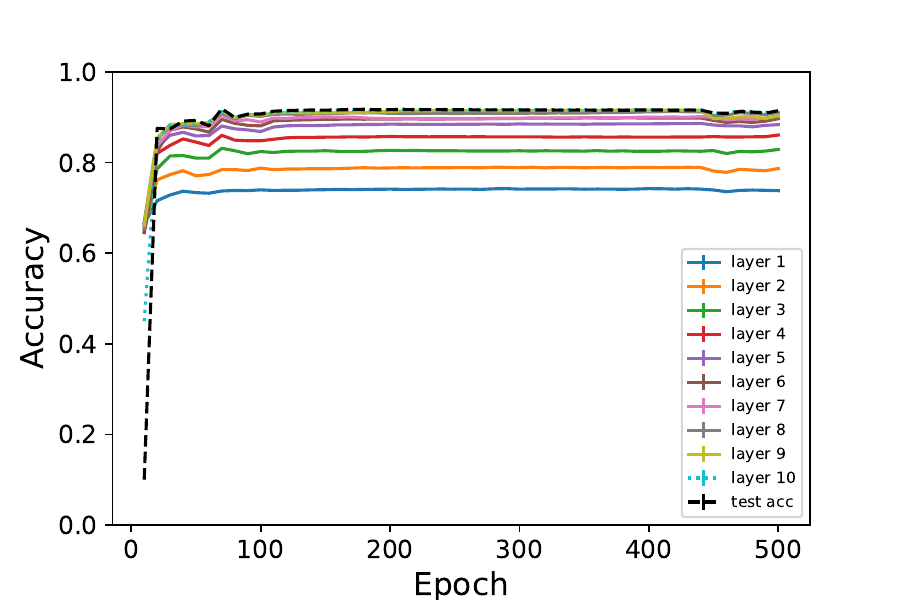}}  & \raisebox{-0.5\height}{\includegraphics[width=0.2\linewidth]{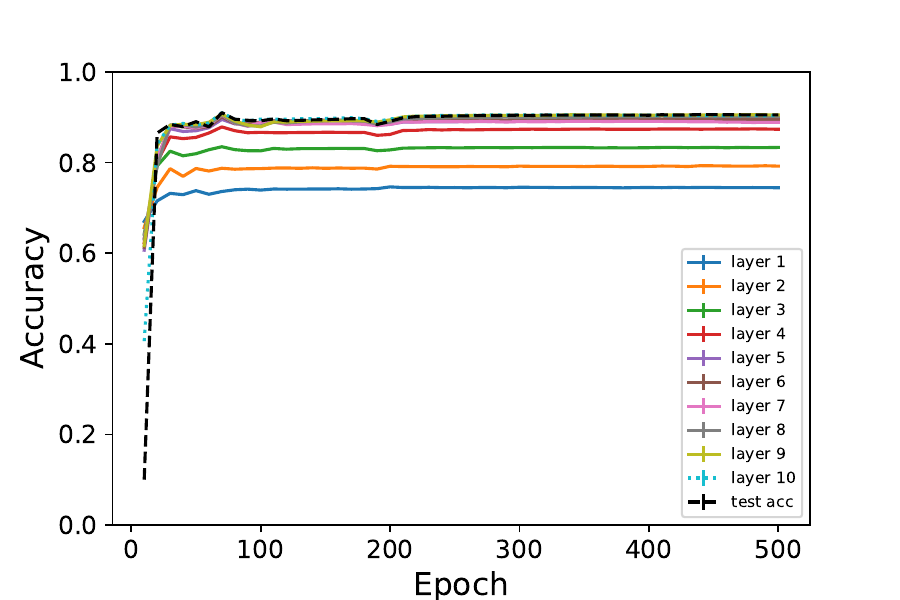}}  & \raisebox{-0.5\height}{\includegraphics[width=0.2\linewidth]{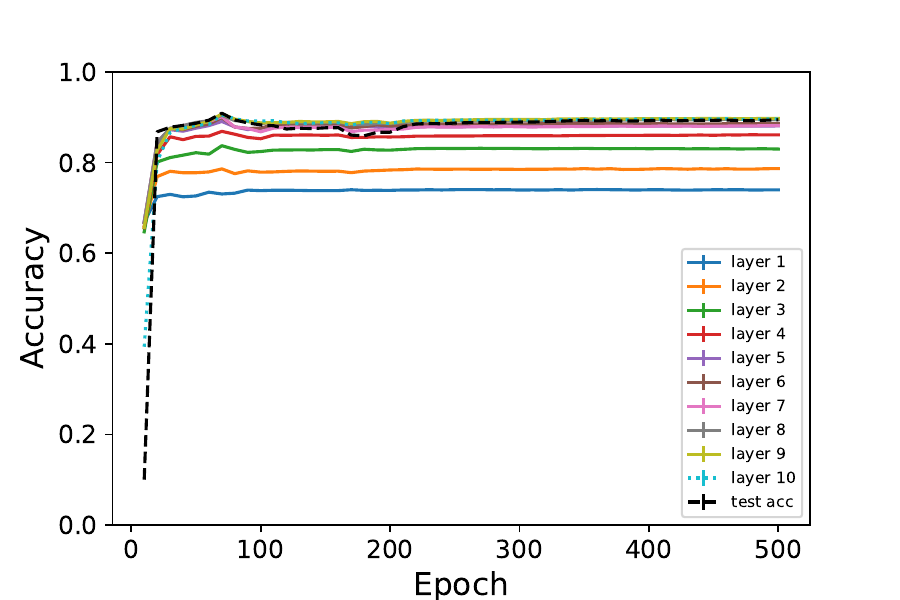}}  &
     \raisebox{-0.5\height}{\includegraphics[width=0.2\linewidth]{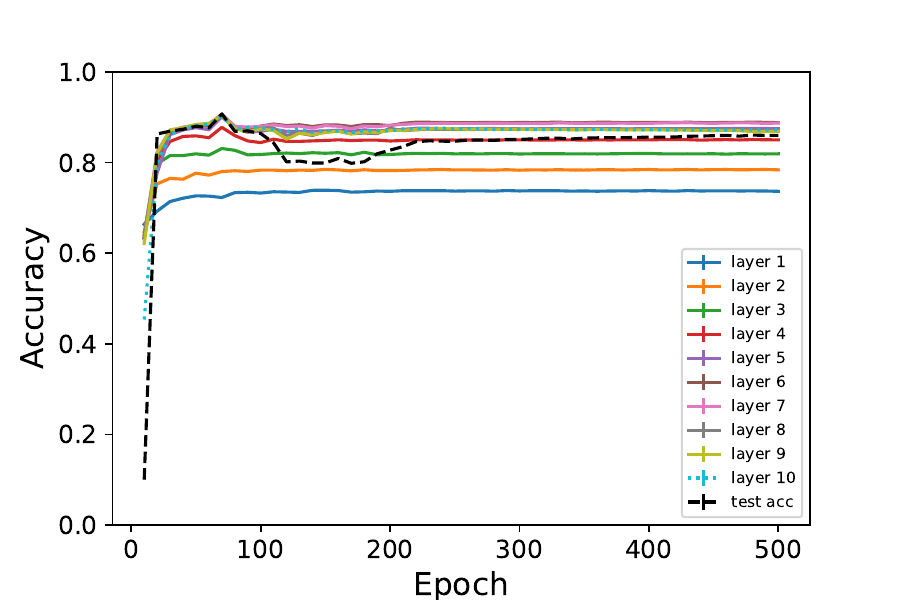}}  &
     \raisebox{-0.5\height}{\includegraphics[width=0.2\linewidth]{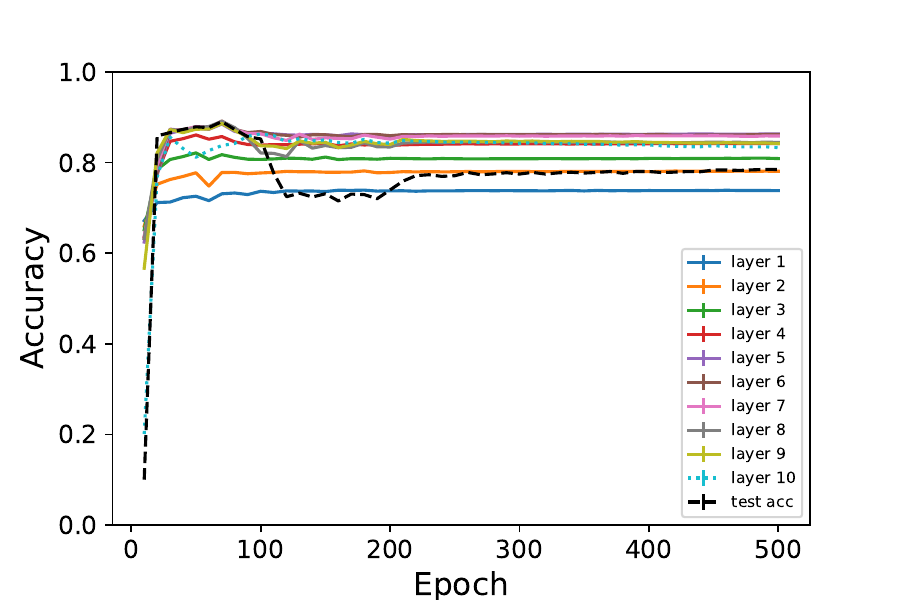}} 
     \\
     & {\small $0\%$ noise}  & {\small $5\%$ noise} & {\small $10\%$ noise} & {\small $20\%$ noise} & {\small $30\%$ noise}
  \end{tabular}
  
 \caption{{\bf Intermediate neural collapse of CONV-10-100 trained on Fashion MNIST with noisy labels.} See Fig. \ref{fig:cifar10_noisy_conv} in the main text for details.}
 \label{fig:fashion_noisy_conv}
\end{figure*}

\begin{figure*}[t]
  \centering
  \begin{tabular}{c@{}c@{}c@{}c@{}c@{}c}
     \rotatebox[origin=c]{90}{{\tiny CDNV - Train}} &
     \raisebox{-0.5\height}{\includegraphics[width=0.2\linewidth]{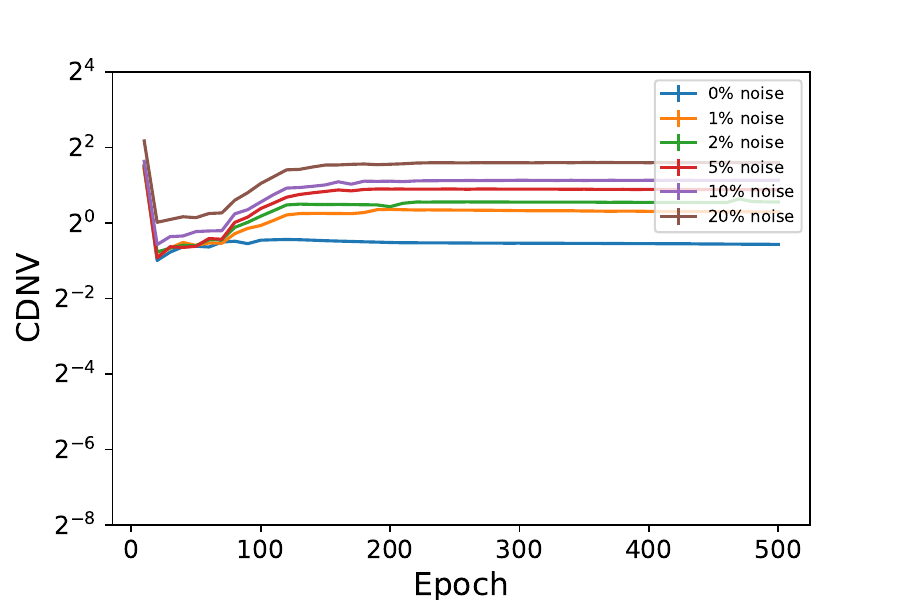}}  &
     \raisebox{-0.5\height}{\includegraphics[width=0.2\linewidth]{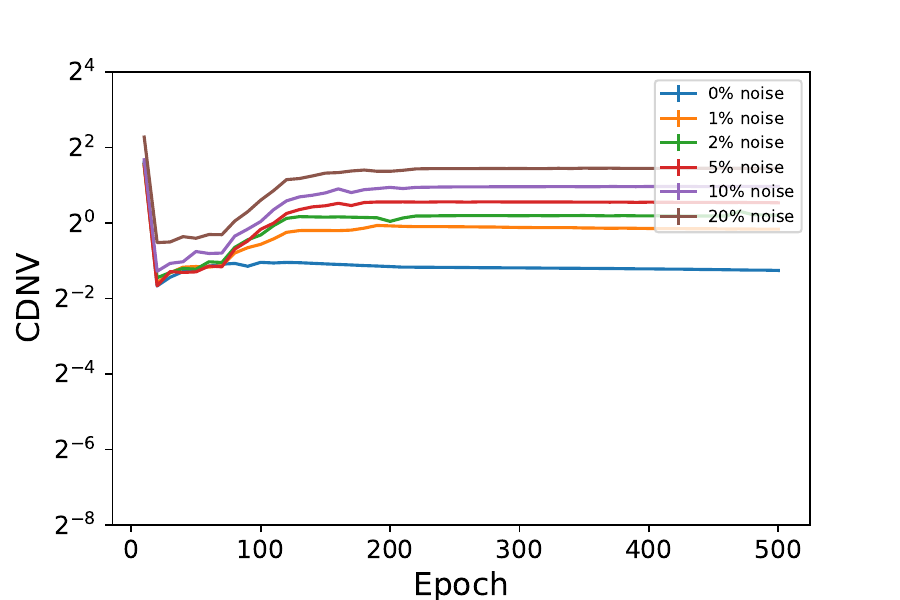}}  &
     \raisebox{-0.5\height}{\includegraphics[width=0.2\linewidth]{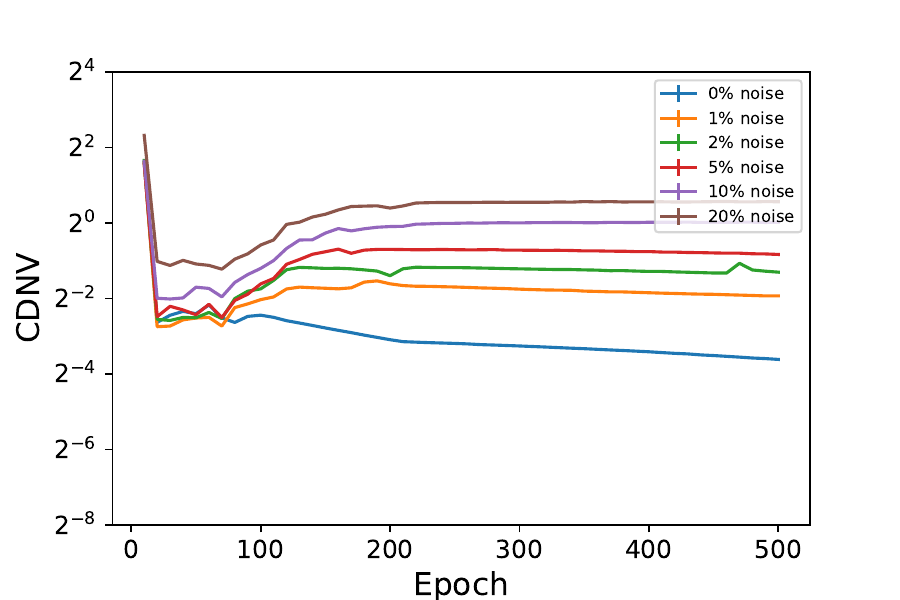}}  & \raisebox{-0.5\height}{\includegraphics[width=0.2\linewidth]{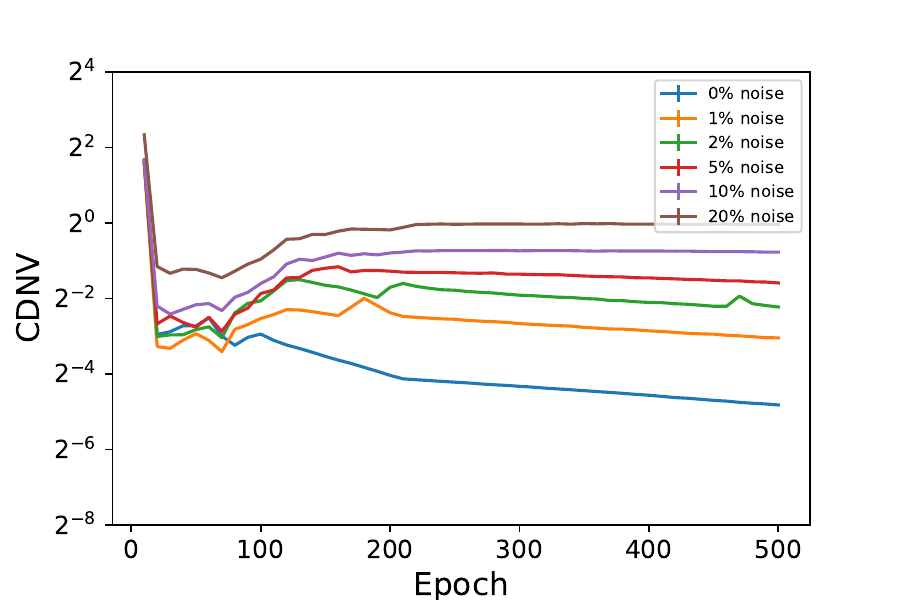}}  &
     \raisebox{-0.5\height}{\includegraphics[width=0.2\linewidth]{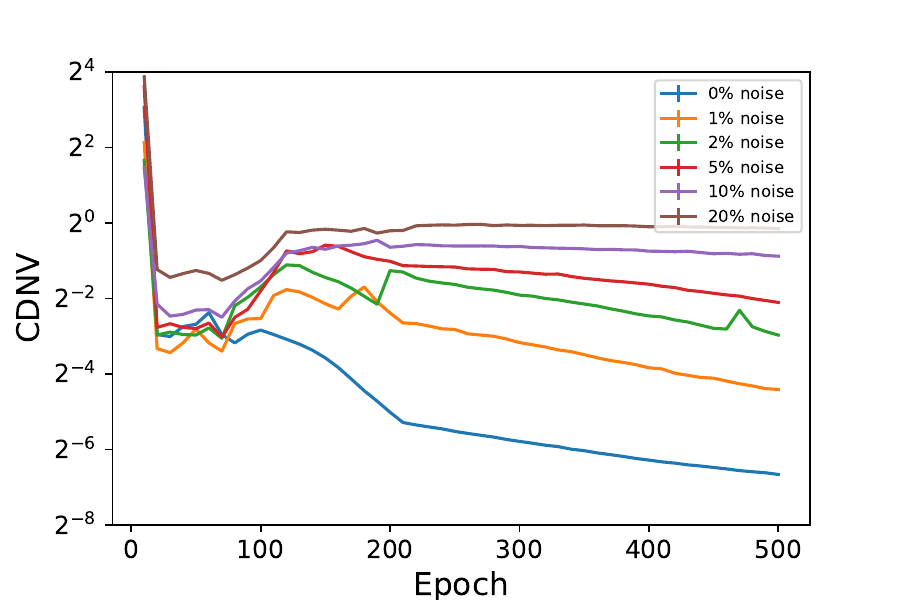}} 
     
     \\
     & {\tiny layer 3} & {\tiny layer 4} & {\tiny layer 6} & {\tiny layer 8} & {\tiny layer 10} \\
     \rotatebox[origin=c]{90}{{\tiny NCC train acc}} &
     \raisebox{-0.5\height}{\includegraphics[width=0.2\linewidth]{plots/mnist/convnet_width_50_noise/train_emb_acc_ncc_10_0.pdf}}  &
     \raisebox{-0.5\height}{\includegraphics[width=0.2\linewidth]{plots/mnist/convnet_width_50_noise/train_emb_acc_ncc_10_1.pdf}}  &
     \raisebox{-0.5\height}{\includegraphics[width=0.2\linewidth]{plots/mnist/convnet_width_50_noise/train_emb_acc_ncc_10_2.pdf}} &
     \raisebox{-0.5\height}{\includegraphics[width=0.2\linewidth]{plots/mnist/convnet_width_50_noise/train_emb_acc_ncc_10_5.pdf}} & \raisebox{-0.5\height}{\includegraphics[width=0.2\linewidth]{plots/mnist/convnet_width_50_noise/train_emb_acc_ncc_10_10.pdf}}
     \\
     & {\tiny $0\%$ noise}  & {\tiny $1\%$ noise} & {\tiny $2\%$ noise} & {\tiny $5\%$ noise} & {\tiny $10\%$ noise} \\
     \rotatebox[origin=c]{90}{{\tiny CDNV - Test}} &
     \raisebox{-0.5\height}{\includegraphics[width=0.2\linewidth]{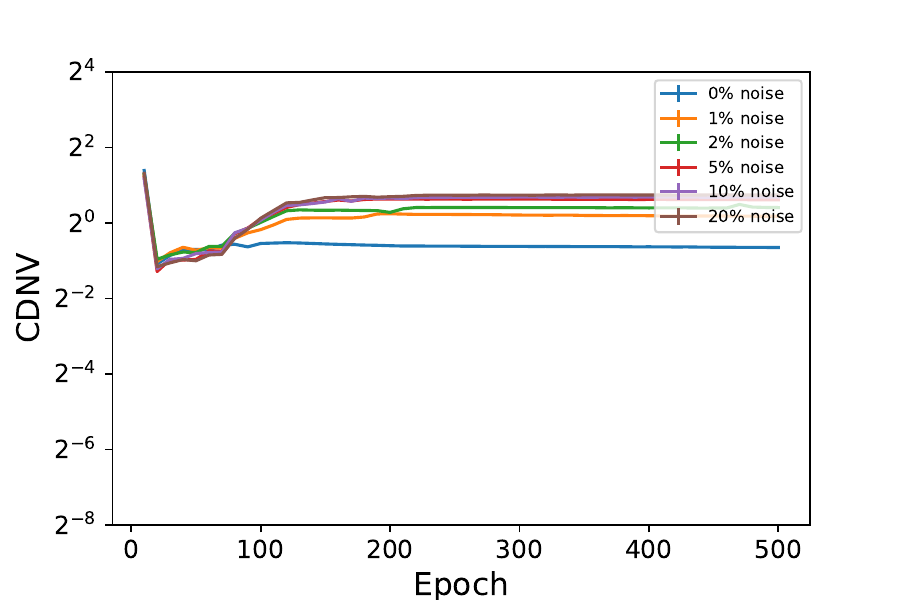}}  &
     \raisebox{-0.5\height}{\includegraphics[width=0.2\linewidth]{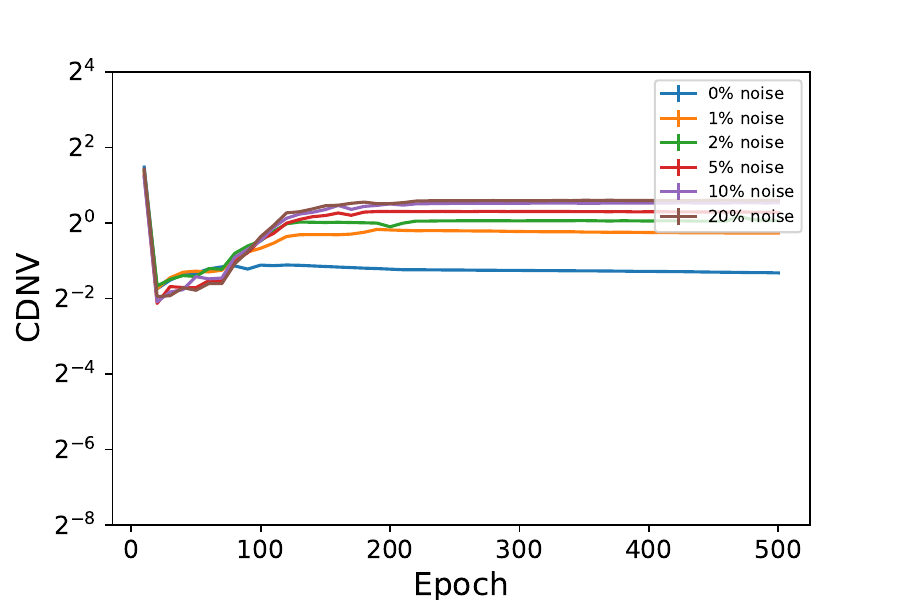}}  &
     \raisebox{-0.5\height}{\includegraphics[width=0.2\linewidth]{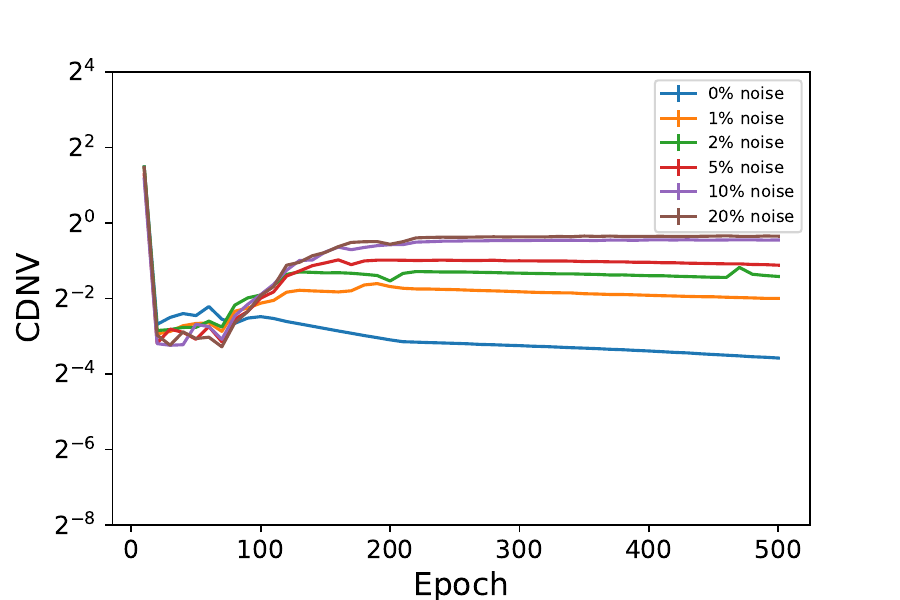}}  & \raisebox{-0.5\height}{\includegraphics[width=0.2\linewidth]{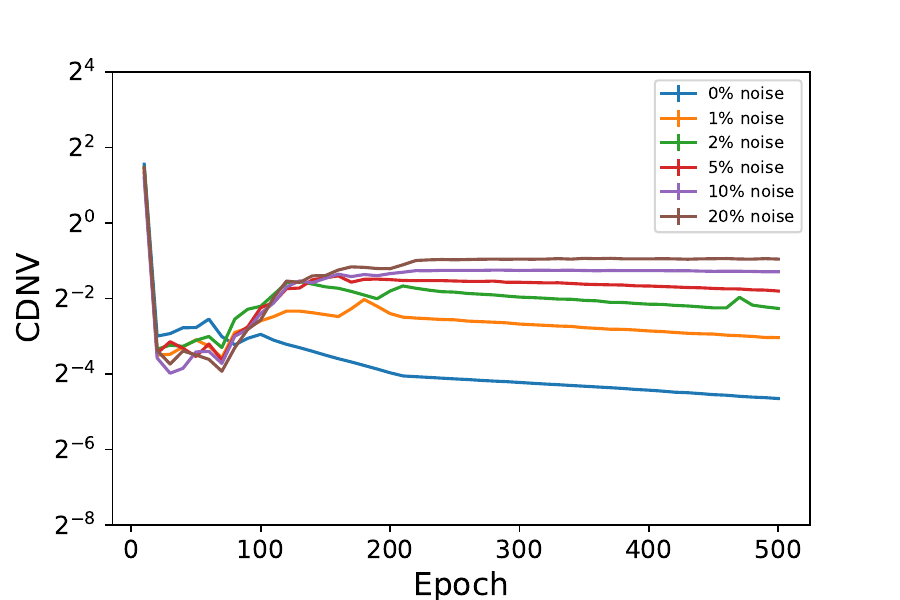}}  &
     \raisebox{-0.5\height}{\includegraphics[width=0.2\linewidth]{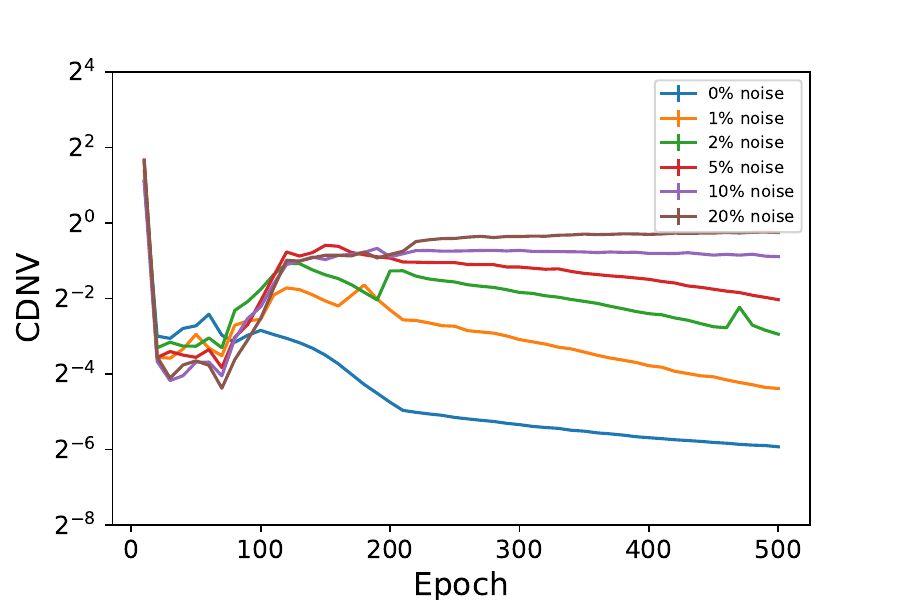}} 
     \\
     & {\tiny layer 3} & {\tiny layer 4} & {\tiny layer 6} & {\tiny layer 8} & {\tiny layer 10} \\
    \\
     \rotatebox[origin=c]{90}{{\tiny NCC test acc}} &
     \raisebox{-0.5\height}{\includegraphics[width=0.2\linewidth]{plots/mnist/convnet_width_50_noise/test_emb_acc_ncc_10_0.pdf}}  &
     \raisebox{-0.5\height}{\includegraphics[width=0.2\linewidth]{plots/mnist/convnet_width_50_noise/test_emb_acc_ncc_10_1.pdf}}  &
     \raisebox{-0.5\height}{\includegraphics[width=0.2\linewidth]{plots/mnist/convnet_width_50_noise/test_emb_acc_ncc_10_2.pdf}} &
     \raisebox{-0.5\height}{\includegraphics[width=0.2\linewidth]{plots/mnist/convnet_width_50_noise/test_emb_acc_ncc_10_5.pdf}} & \raisebox{-0.5\height}{\includegraphics[width=0.2\linewidth]{plots/mnist/convnet_width_50_noise/test_emb_acc_ncc_10_10.pdf}}
     \\
     & {\tiny $0\%$ noise} & {\tiny $1\%$ noise}  & {\tiny $2\%$ noise} & {\tiny $5\%$ noise} & {\tiny $10\%$ noise}
  \end{tabular}
  
 \caption{{\bf Intermediate neural collapse of CONV-10-50 trained on MNIST with partially corrupted labels.} See Fig.~\ref{fig:cifar10_noisy_conv_appendix} for details.}
 \label{fig:mnist_noisy_conv_appendix}
\end{figure*}

\clearpage

\section{Proofs}

\bound*

\begin{proof}
Let $S_1=\{(x^1_i,y^1_i)\}^{m}_{i=1}$ and $S_2 = \{(x^2_i,y^2_i)\}^{m}_{i=1}$ be two balanced datasets. Let $\epsilon >0$, $p>0$ and $q \geq (1+\alpha)~ p$. Let $\tiY_2$ and $\hat{Y}_2$ be a uniformly selected set of labels that disagree with $Y_2$ on $p m$ and $q m$ randomly selected labels (resp.). We denote by $\tiS_2$ and $\hat{S}_2$ the relabeling of $S_2$ with the labels in $\tiY_2$ and in $\hat{Y}_2$ (resp.). We define four different events,
\begin{equation}
\begin{aligned}
A_1 ~&=~ \{(S_1,S_2,\tiY_2) \mid \exists~q \geq (1+\alpha)~p:~\cd^{\epsilon}_{\min}(\cG,S_1\cup \tiS_2) > \E_{\hat{Y}_2}[\cd^{\epsilon}_{\min}(\cG,S_1\cup \hat{S}_2)]\} \\
A_2 ~&=~ \{(S_1,S_2) \mid \textnormal{the mistakes of } h^{\gamma}_{S_1} \textnormal{ are not uniform over } S_2\} \\
A_3 ~&=~ \{(S_1,S_2,\tiY_2) \mid (S_1,S_2,\tiY_2)\notin A_1\cup A_2 \textnormal{ and } \E_{\gamma}[\cd^{\epsilon}_{S_1}(h^{\gamma}_{S_1})] ~<~ \cd^{\epsilon}_{\min}(\cG,S_1\cup \tiS_2) \} \\
A_4 ~&=~ \{(S_1,S_2,\tiY_2) \mid (S_1,S_2,\tiY_2)\notin A_1\cup A_2 \textnormal{ and } \E_{\gamma}[\cd^{\epsilon}_{S_1}(h^{\gamma}_{S_1})] ~\geq~ \cd^{\epsilon}_{\min}(\cG,S_1\cup \tiS_2) \} \\
B_1 ~&=~ \{(S_1,S_2,\tiY_2) \mid \E_{\gamma}[\cd^{\epsilon}_{S_1}(h^{\gamma}_{S_1})] ~\geq~ \cd^{\epsilon}_{\min}(\cG,S_1\cup \tiS_2) \} \\
\end{aligned}
\end{equation}
By the law of total expectation
\begin{equation}
\begin{aligned}
\E_{S_1}\E_{\gamma}[\textnormal{err}_{P}(h^{\gamma}_{S_1})] ~&=~ \E_{S_1,S_2}\E_{\gamma}[\textnormal{err}_{S_2}(h^{\gamma}_{S_1})] \\
~&=~ \sum^{4}_{i=1} \mathbb{P}[A_i] \cdot \E_{S_1,S_2,\tiY_2}[\E_{\gamma}[\textnormal{err}_{S_2}(h^{\gamma}_{S_1})] \mid A_i] \\
~&\leq~ \mathbb{P}[A_1] + \mathbb{P}[A_2] + \E_{S_1,S_2,\tiY_2}[\textnormal{err}_{S_2}(h^{\gamma}_{S_1}) \mid A_3] + \mathbb{P}[B_1],
\end{aligned}
\end{equation}
where the last inequality follows from $\textnormal{err}_{S_2}(h^{\gamma}_{S_1})\leq 1$, $\mathbb{P}[A_3]\leq 1$ and $A_4 \subset B_1$. 

We would like to upper bound each one of the above terms. First, we notice that since the mistakes of the network are $\delta^1_m$-uniform, $\mathbb{P}[A_2]\leq \delta^1_m$. In addition, by definition $\mathbb{P}[A_1] \leq \delta^2_{m,p,\alpha}$.

As a next step, we upper bound $\E_{S_1,S_2,\tiY_2}[\textnormal{err}_{S_2}(h^{\gamma}_{S_1}) \mid A_3]$. Assume that $(S_1,S_2,\tiY_2) \in A_3$. Hence, $(S_1,S_2,\tiY_2) \notin A_1\cup A_2$. Then, the mistakes of $h^{\gamma}_{S_1}$ over $S_2$ are uniformly distributed (with respect to the selection of $\gamma$). Assume by contradiction that $q_m := \textnormal{err}_{S_2}(h^{\gamma}_{S_1}) > (1+\alpha)~ p$ for some initialization $\gamma$. Then, since the mistakes of $h^{\gamma}_{S_1}$ over $S_2$ are uniformly distributed, $q_m = \textnormal{err}_{S_2}(h^{\gamma}_{S_1}) > (1+\alpha)~ p$ for all initializations $\gamma$. Therefore, we have 
\begin{equation*}
\E_{\hat{Y_2}}[\cd^{\epsilon}_{\min}(\cF,S_1\cup \hat{S}_2)] ~\leq~ \E_{\gamma}[\cd^{\epsilon}_{S_1}(h^{\gamma}_{S_1})] ~<~ \cd^{\epsilon}_{\min}(\cG,S_1\cup \tiS_2),
\end{equation*}
where the first inequality follows from the definition of $\cd^{\epsilon}_{\min}(\cF,S_1\cup \hat{S}_2)$ and the second one by the assumption that $(S_1,S_2,\tiY_2) \in A_3$. However, this inequality contradicts the fact that $(S_1,S_2,\tiY_2) \notin A_1$. Therefore, we conclude that in this case, $q = \E_{\gamma}[\textnormal{err}_{S_2}(h^{\gamma}_{S_1})] \leq (1+\alpha)~p$ and $\E_{S_1,S_2,\tiY_2}[\textnormal{err}_{S_2}(h_{S_1}) \mid A_3] \leq (1+\alpha)~p$.
\end{proof}

\begin{restatable}{proposition}{bound_sec}\label{prop:bound2} Let $m \in \mathbb{N}$, $p\in (0,1/2)$, $\alpha \in (0,1)$ and $\epsilon\in (0,1)$. Assume that the error of the learning algorithm is $\delta^1_m$-uniform. Let $S_1,S_2,S^i_1,S^i_2 \sim P_B(m)$ (for $i \in [k]$). Let $\tiY^i_2=\{\tiy_i\}^{m}_{i=1}$ be a set of labels that disagrees with $Y^i_2$ on uniformly selected $p m$ labels and $\tiS^i_2$ is a relabeling of $S_2$ with the labels in $\tiY^i_2$. Let $h^{\gamma}_{S_1}$ be the output of the learning algorithm given access to a dataset $S_1$ and initialization $\gamma$. Then, with probability at least $1-\delta$ over the selection of $\{(S^i_1,S^i_2,\tiY^i_2)\}^{k}_{i=1}$, we have
\begin{equation*}
\begin{aligned}
\E_{S_1}\E_{\gamma}[\textnormal{err}_P(h^{\gamma}_{S_1})] ~\leq~
&\frac{1}{k}\sum^{k}_{i=1}\bI\left[\E_{\gamma}[\cd^{\epsilon}_{S^i_1}(h^{\gamma}_{S^i_1})] ~\geq~ \cd^{\epsilon}_{\min}(\cG,S^i_1\cup \tiS^i_2)\right] \\
& + (1+\alpha)~ p + \delta^1_m + \delta^2_{m,p,\alpha} + \sqrt{\frac{\log(2/\delta)}{2k}}.
\end{aligned}
\end{equation*}
\end{restatable}

\begin{proof}
By Prop.~\ref{prop:bound}, we have
\begin{equation*}
\begin{aligned}
\E_{S_1}\E_{\gamma}[\textnormal{err}_P(h^{\gamma}_{S_1})] ~\leq~& \mathbb{P}_{S_1,S_2,\tiY_2}\left[\E_{\gamma}[\cd^{\epsilon}_{S_1}(h^{\gamma}_{S_1})] ~\geq~ \cd^{\epsilon}_{\min}(\cG,S_1\cup \tiS_2)\right] \\
&+ (1+\alpha)~p_m + \delta^1_m + \delta^2_{m,p,\alpha}
\end{aligned}
\end{equation*}
We define i.i.d. random variables 
\begin{equation}
V_i ~=~ \bI\left[\E_{\gamma}[\cd^{\epsilon}_{S^i_1}(h^{\gamma}_{S^i_1})] ~\geq~ \cd^{\epsilon}_{\min}(\cG,S^i_1\cup \tiS^i_2)\right].
\end{equation}
Therefore, we can rewrite,
\begin{equation}
\mathbb{P}_{S_1,S_2,\tiY_2}\left[\E_{\gamma}[\cd^{\epsilon}_{S_1}(h^{\gamma}_{S_1})] ~\geq~ \cd^{\epsilon}_{\min}(\cG,S_1\cup \tiS_2) \right] ~=~ \E[V_1]
\end{equation}
By Hoeffding's inequality,
\begin{equation}
\mathbb{P}\left[\left\vert k^{-1}\sum^{k}_{i=1}V_i - \E[V_1] \right\vert ~\geq~ \epsilon\right] ~\leq~ 2\exp(-2k\epsilon^2).
\end{equation}
By choosing $\epsilon = \sqrt{\log(1/2\delta)/2k}$, we obtain that with probability at least $1-\delta$, we have
\begin{equation}
\E[V_1] ~\leq~ \frac{1}{k}\sum^{k}_{i=1} V_i + \sqrt{\log(1/2\delta)/2k}.
\end{equation}
When combined with Prop.~\ref{prop:bound}, we obtain the desired bound.
\end{proof}

\end{document}